\definecolor{verylightgray}{rgb}{0.9, 0.9, 0.9}
\definecolor{lightyellow}{rgb}{0.97, 0.97, 0.6}
\definecolor{lightcyan}{rgb}{0.8, 1, 1}
\definecolor{peach}{rgb}{1, 0.8, 0.6}
\definecolor{lightpurple}{rgb}{0.9, 0.7,0.7}
\newcommand*\circled[1]{\tikz[baseline=(char.base)]{
            \node[shape=circle,draw,inner sep=0.5pt] (char) {#1};}}
\acrodef{llm}[LLM]{large language model}
\acrodef{ood}[OOD]{out-of-distribution}
\acrodef{qa}[QA]{question answering}
\acrodef{gd}[GD]{gradient descent}
\acrodef{mlp}[MLP]{multi-layer perceptron}
\acrodef{pt}[PT]{pre-training}
\acrodef{ft}[FT]{fine-tuning}
\acrodef{svd}[SVD]{singular value decomposition}
\acrodef{ntp}[NTP]{next token prediction}
\let\oldfrac\frac
\renewcommand{\frac}[2]{%
	\mathchoice
	{\oldfrac{#1}{#2}}
	{#1/#2}
	{#1/#2}
	{#1/#2}
}
\DeclareMathOperator{\softmax}{softmax}
\titlespacing*{\section}{0pt}{6pt}{0pt}
\titlespacing*{\subsection}{0pt}{6pt}{0pt}
\DeclareMathOperator{\poly}{poly}
\newtheorem{lemma}{Lemma}
\newtheorem{theorem}{Theorem}
\newtheorem{proposition}{Proposition}
\newtheorem{condition}{Condition}
\theoremstyle{remark}
\newtheorem{remark}{Remark}
\titlespacing*{\paragraph}{0pt}{3.2pt plus 1pt minus 1pt}{1em}
\title{Provable Knowledge Acquisition and Extraction in One-Layer Transformers}
\author{
{Ruichen Xu}\thanks{Correspondence to: rcxu642@gmail.com}\qquad
{Kexin Chen}
}
\date{}
\begin{document}

\maketitle 

\vspace{-3mm}

\begin{abstract}%
Large language models may encounter factual knowledge during pre-training yet fail to reliably use that knowledge after fine-tuning. 
Despite growing empirical evidence that MLP layers store factual associations and fine-tuning affects factual recall, the training-dynamics mechanisms linking next-token pre-training, knowledge storage, and post-fine-tuning extraction remain poorly understood. 
We study this problem in a stylized one-layer transformer with self-attention and MLP modules, trained by next-token prediction and subsequently fine-tuned on question-answering data.
Under suitable regularity conditions, we first prove that the model reaches near-optimal pre-training loss while learning structured attention patterns and relation-specific feature directions, giving a mechanism for factual knowledge acquisition. 
We then show that fine-tuning can turn the Q\&A prompt format into a trigger for pre-trained relation features, enabling the model to extract facts that are not revisited during fine-tuning. 
Our analysis yields a \textbf{relation-covering characterization} of knowledge extraction: fine-tuning need not revisit every stored subject-answer pair, but it must cover enough latent relation-template directions through which facts were encoded during pre-training. 
Consequently, extraction improves with pre-training multiplicity and fine-tuning coverage, but becomes harder as the relation-template universe grows. 
Conversely, insufficient coverage leads to a failure regime in which facts may be stored but remain inaccessible, providing a stylized mechanism for hallucination.
The theory applies to \emph{both full and low-rank fine-tuning}, offering insight into why low-rank adaptation can recover pre-trained factual knowledge when relation coverage is sufficient. 
Experiments on synthetic data and PopQA-based GPT-2/Llama models support the predicted trends.
\end{abstract}

\allowdisplaybreaks

\section{Introduction}
Modern Large Language Models (LLMs) function as powerful knowledge repositories, demonstrating a remarkable ability to perform a wide range of tasks, from answering complex questions to writing creative text \citep{brown2020language}. 
These capabilities are not explicitly programmed but emerge from the models' training process. 
During their training phase, these models implicitly encode vast amounts of information from massive datasets directly into their billions of internal parameters. 
When a user provides a prompt, it acts as a key to navigate this complex parameter space, causing the model to synthesize the relevant information by predicting a statistically probable sequence of text based on the prompt's context and its encoded knowledge.

A critical challenge, however, is that knowledge encoded during training is not always accessible at inference time.
Even when an LLM has successfully learned a piece of information, it can fail to retrieve or correctly apply that knowledge when prompted \citep{allen2023physics1,kandpal2023large}.
This retrieval failure often leads the model to generate plausible but incorrect information, a phenomenon widely known as "hallucination." Therefore, demystifying the precise mechanics of how these models acquire and extract knowledge is fundamental to building genuinely reliable and trustworthy AI systems.

Research into the knowledge mechanisms of \acp{llm} has progressed from empirical observation to theoretical modeling. 
Empirically, studies have identified the MLP layers as the primary location for knowledge storage, where they act as key-value memories \citep{geva2020transformer,dong2025attention,yao2024knowledge}.
Others have explored and established approaches for probing and steering the factual knowledge stored within an LLM's parameters \citep{allen2023physics1,allen2023physics2,meng2022locating}.
However, while these empirical findings reveal the mechanisms at play in knowledge storage, they don't fully explain why these knowledge mechanisms emerge from the training process. 
To bridge this gap, a growing body of theoretical research now seeks to formally characterize the knowledge acquisition process by analyzing knowledge acquisition mechanisms with simplified models at distinct phases.
For example, \citet{nichani2024understanding} investigated the storage capacities of one-layer transformers in conjunction with associative memories.
\citet{ghosal2024understanding} studied the gradients of one-layer attention-only transformers during fine-tuning with subject-answer pairs.
These works provide important insights into storage capacity and local fine-tuning behavior, but they leave open the generalization question central to knowledge extraction: why can fine-tuning on some facts unlock other facts acquired during pre-training, and when does this transfer fail?

Answering this question requires an end-to-end account of how factual knowledge is acquired during pre-training, how it is represented in model features, and how fine-tuning connects a new question-answering format to those pre-trained representations. Such an account must jointly address three ingredients that are central to modern LLMs but difficult to analyze together:

\begin{itemize}
\item \textbf{Interplay of self-attention and MLP.}
Since analyzing a full transformer is analytically challenging, the majority of existing theoretical analyses on training dynamics concentrate on attention-only transformer architectures.\footnote{See Section \ref{subsec: related} for a detailed discussion.}
However, this simplification overlooks the critical role of the MLP, which extensive empirical evidence identifies as the primary component for knowledge storage in transformer-based language models (e.g., \cite{geva2020transformer, meng2022locating, dong2025attention, yao2024knowledge}).  
To address this limitation, we examine a simplified one-layer transformer comprising both self-attention and \ac{mlp} modules. 
Although this model is simplified from a full transformer, elucidating its knowledge acquisition mechanisms remains highly nontrivial due to the intricate interplay between self-attention and \ac{mlp}: the output of the self-attention module directly feeds into the \ac{mlp}, resulting in tightly coupled parameters and entangled gradients.

\item \textbf{Fine-tuning for adaptivity.}
The capacity for transformers to generalize their knowledge to downstream tasks \citep{bubeck2023sparks,dubey2024llama} is typically unlocked via fine-tuning, a process encompassing methods from task-specific adaptation \citep{devlin2019bert,radford2018improving} to large-scale instruction tuning \cite{ouyang2022training}.
However, a central challenge lies in characterizing the dynamics of this adaptation process.
In contrast to training from scratch, which is extensively studied, fine-tuning dynamics are critically dependent on the rich features inherited from the pre-trained model.
This interdependency necessitates a joint theoretical consideration of both the pre-training and fine-tuning phases.

\item \textbf{Next-token prediction.}
\ac{ntp} is the standard loss function for large language models pre-training \citep{achiam2023gpt,dubey2024llama}.
Exploring the implicit bias introduced by \ac{ntp} is necessary for understanding modern LLMs. 
However, analyzing \ac{ntp} is challenging because of its non-i.i.d. data structure.
\ac{ntp} diverges from simpler, well-studied learning paradigms in two fundamental ways: first, the strong correlations among data samples drawn from the same sequence; and second, the multi-label nature of the samples, where a single context may admit multiple valid subsequent tokens.
\end{itemize}

In this paper, we seek to fill the above gap to answer the following key open research questions:
\begin{enumerate}
    \item \textit{How do transformers acquire knowledge during \ac{pt} via \ac{ntp} and extract it after \ac{ft}?}
    \item \textit{Under which conditions can pre-trained transformers perform generalization after full and low-rank \ac{ft}?}
\end{enumerate}

To answer these questions, we analyze the feature-learning dynamics of one-layer ``self-attention + MLP'' transformers across both next-token pre-training and question-answering fine-tuning. Our main contributions are as follows.
\begin{itemize}
\item \textbf{A training-dynamics framework for knowledge acquisition and extraction.}
We develop a theoretical framework for tracking how self-attention and MLP features evolve during pre-training and fine-tuning. Under suitable regularity conditions, we prove that the model reaches near-optimal next-token prediction loss during pre-training. Our analysis shows that self-attention learns to suppress irrelevant context, while the MLP stores structured subject-relation-answer features.

\item \textbf{A relation-covering characterization of post-fine-tuning extraction.}
We show that successful knowledge extraction after fine-tuning is governed by a relation-covering condition. Fine-tuning need not revisit every stored subject-answer pair; instead, it must cover enough latent relation-template directions through which facts were encoded during pre-training. When this condition holds, the fine-tuned model achieves low generalization error on factual knowledge acquired during pre-training but not revisited during fine-tuning.

\item \textbf{Success and failure regimes for factual extraction.}
We provide explicit generalization bounds showing that extraction improves with pre-training multiplicity $K$ and fine-tuning size $\widetilde{N}_f$, but becomes harder as the relation-template universe $|\mathcal{R}|$ grows. In particular, sufficient relation coverage yields exponentially small error, whereas insufficient coverage, e.g., $\widetilde{N}_fK=\mathcal{O}(|\mathcal{R}|)$, can leave stored facts inaccessible under the downstream prompt format, producing a stylized hallucination regime.

\item \textbf{A theoretical explanation for low-rank fine-tuning.}
We extend the analysis to low-rank fine-tuning and show that the fine-tuning gradients are dominated by a low-rank component aligned with the question-answering format. This explains why low-rank adaptation can transfer the downstream prompt format to pre-trained relation features under stronger coverage requirements.
\end{itemize}

We complement the theory with experiments on synthetic data and PopQA-based settings using GPT-2 and Llama-3.2-1B, which support the predicted dependence on pre-training multiplicity, fine-tuning coverage, and the size of the relation-template universe.


\subsection{Related Work}\label{subsec: related}
Our work is most closely related to theoretical analyses of transformer training dynamics. 
We review the recent literature on this topic below. 
Due to the space constraints, a broader discussion of related topics, including transformer memorization, feature learning, and empirical studies of knowledge acquisition and extraction, is deferred to Appendix \ref{app: related_work}.
Extensive theoretical research has examined the (pre-)training and fine-tuning of transformers, often focusing on training dynamics within simplified architectural settings. 
For example, the dynamics of one-layer attention-only transformers have been explored for tasks such as next-token prediction in long sequences \citep{tian2023scan} and image classification \citep{jelassi2022vision}.
Additionally, the full training dynamics of linear one-layer attention-only transformers have been studied for factual recall \citep{nichani2024understanding} and in-context learning \citep{zhang2024trained}. 
Other architectures have also been explored.
For example, \citet{cabannes2024learning} analyzed the training dynamics of learning associative memories with a single linear layer.

A complementary line of research examines transformer training through gradients and critical points. 
For example, \citet{bietti2023birth} investigated gradients in two-layer attention-only transformers to elucidate the emergence of induction heads, while \citet{ghosal2024understanding} analyzed single-step gradients in one-layer attention-only transformers during fine-tuning. 
However, for non-convex transformers, analyses limited to critical points and gradients are insufficient to capture the full training dynamics or the properties of the converged solution.

Our work distinguishes itself by characterizing the full joint training dynamics of self-attention and MLP components during \ac{ntp}-based \ac{pt}, as well as the full and low-rank joint dynamics during subsequent \ac{ft}. 
Moreover, we evaluate the generalization performance of the fine-tuned transformers. 
A detailed comparison of our approach and characterizations with prior work is presented in Table \ref{table: comparison}.

\subsection{Notation}
We use lowercase letters for scalars, lowercase boldface letters for vectors, and uppercase boldface letters for matrices. 
The set $\{1,\cdots,m\}$ is denoted by $[m]$.
Given two sequences $\{x_n\}$ and $\{y_n\}$, we denote $x_n = \mathcal{O}(y_n)$ if $|x_n|\le C_1|y_n|$ for some positive constant $C_1$ and $x_n = \Omega(y_n)$ if $|x_n|\ge C_2|y_n|$ for some positive constant $C_2$.
We use $x_n = \Theta(y_n)$ if $x_n = \mathcal{O}(y_n)$ and $x_n = \Omega(y_n)$ both hold.
The notations $\tilde{\mathcal{O}}(\cdot), \tilde{\Theta}(\cdot),$ and $\tilde{\Omega}(\cdot)$ are used to suppress logarithmic factors.
For a matrix $\mathbf{A}$, both $A_{i,j}$ and $(\mathbf{A})_{i,j}$ denote the element of $\mathbf{A}$ located at the $i^{th}$ row and the $j^{th}$ column.

\section{Simplified One-Layer Transformer Architecture} \label{sec: model}

\begin{figure}
    \centering
	\includegraphics[width=.5\linewidth]{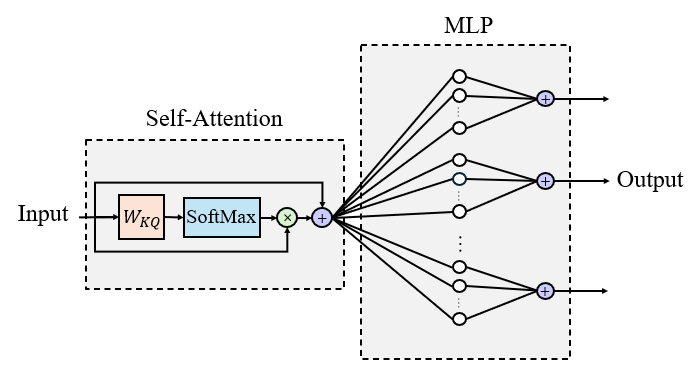}
	\caption{Illustration of the simplified one-layer transformer.}
	\label{fig: transformer}
\end{figure}

Standard one-layer transformers typically comprise seven weight matrices and two normalization operations. 
Analyzing their learning dynamics is challenging due to the complex interactions among these matrices and the non-smooth, non-convex loss landscape. 
To address these issues and enhance analytical tractability, we introduce several simplifications to the standard architecture. 
The resulting model, as empirically validated (Table \ref{table: ood1} in Appendix \ref{app: table}), retains the core mechanisms of knowledge acquisition and extraction. 
An overview of the architecture is depicted in Figure \ref{fig: transformer}. 
We describe its components below.

\paragraph{Self-Attention.} 
The attention mechanism computes scores for an input sequence $\mathbf{X} \in \mathbb{R}^{d\times L}$ (where $d$ is the embedding dimension and $L$ is the sequence length), using the query derived from the last token.
The attention scores are computed by:
\begin{align}
	\boldsymbol{\alpha}(\mathbf{Z}, \mathbf{X})
	 = \!\text{softmax}\!\left(\frac{\mathbf{X}^\top\overbrace{\mathbf{W}_K^\top\mathbf{W}_Q}^{\mathbf{Z}}\mathbf{X}[-1]}{\sqrt{d}}\right),
\end{align}

where $\mathbf{X}[-1] \in \mathbb{R}^{d \times 1}$ denotes the embedding of the last token.\footnote{We consider the standard setting where the transformer predicts the next token using only the output at the final position. 
In a one-layer transformer, this equates to querying solely from the last token.} 
Following established simplifications \cite{tian2023scan}, we reparameterize $\mathbf{W}_K^\top \mathbf{W}_Q$ as a single matrix $\mathbf{Z} \in \mathbb{R}^{d \times d}$. 
The self-attention output $\mathbf{x}_a \in \mathbb{R}^{d}$ is then:
\begin{equation}
	\begin{aligned}
		\mathbf{x}_a(\mathbf{Z},\mathbf{X}) = \mathbf{X}\boldsymbol{\alpha}(\mathbf{Z}, \mathbf{X}) + \mathbf{X}[-1].
	\end{aligned}
\end{equation}

\paragraph{Multi-Layer Perceptron.}
The \ac{mlp} processes the self-attention output $\mathbf{x}_a$. 
With trainable parameters $\mathbf{W} \in \mathbb{R}^{d m \times d}$ (first layer) and fixed parameters $\mathbf{W}_2 \in \mathbb{R}^{d \times d m}$ (second layer), the MLP output is:
\begin{align}
	\mathbf{x}_f(\mathbf{W},\mathbf{Z}, \mathbf{X}) =\mathbf{W}_2 \sigma(\mathbf{W}\mathbf{x}_a(\mathbf{Z},\mathbf{X})),
\end{align}
where $\sigma(\cdot)$ denotes the ReLU activation function.
We fix the matrix $\mathbf{W}_2$ as:
\begin{equation}
	\mathbf{W}_2 = 
	\begin{bmatrix}
		\oldfrac{1}{m}\mathbf{1}^\top & \mathbf{0}^\top & \cdots & \mathbf{0}^\top\\
		\vdots & \vdots & \vdots & \vdots\\
		\mathbf{0}^\top & \mathbf{0}^\top & \cdots & \oldfrac{1}{m}\mathbf{1}^\top
	\end{bmatrix},
\end{equation}
where $\mathbf{0},\mathbf{1}\in \mathbb{R}^m$ are all-zeros and all-ones vectors, respectively.
This block-diagonal structure, commonly employed in MLP analyses \citep{allen2020towards, cao2022benign}, decouples neurons across output dimensions, thereby simplifying the theoretical treatment.

\paragraph{Linear Layer.}
Then, the MLP output is fed into a final linear layer:
\begin{align}\label{equ: output}
	\mathbf{x}^\text{output}(\mathbf{W},\mathbf{Z}, \mathbf{X}) = \lambda\mathbf{I}\mathbf{x}_f(\mathbf{W},\mathbf{Z}, \mathbf{X}),
\end{align}
where $\lambda$ is a constant scaling factor and $\mathbf{I}$ is the identity matrix. 
This scaling mimics learnable gains or fixed normalization in layers like RMSNorm, which typically scales inputs to have an L2 norm of $\sqrt{d}$.

\begin{remark} 
    Our simplified transformer omits positional encodings, compelling the model to rely exclusively on semantic content for next-token prediction. 
\end{remark}


\section{Data}  \label{sec:data}
In this section, we define a synthetic data generation process designed to capture the essential characteristics of sentences, including the representation of factual knowledge. 
Our synthetic data aims to mimic structures commonly observed in real-world text. 
A pre-training example is:
\begin{align}\nonumber
	&\underbrace{\text{ \colorbox{peach}{XXX (Irrelevant information) Here is a record}}}_\text{Context}
	\underbrace{ \colorbox{yellow}{.}}_\text{Ending}\underbrace{\text{\colorbox{lime}{Alice}}}_\text{Subject}\text{ } \underbrace{\text{\colorbox{lightcyan}{was born in}}}_\text{Relation phrase}\text{ } \underbrace{\text{\colorbox{lightpurple}{California}}}_\text{Answer},
\end{align}
and a corresponding \ac{ft} example is:
\begin{align}\nonumber
	&\underbrace{\colorbox{verylightgray}{\textbf{Question}:} }_\text{Q\&A format 1}\underbrace{\text{\colorbox{lime}{Alice}}}_\text{Subject}\underbrace{\colorbox{verylightgray}{\text{'s hometown is} \underline{\hspace{0.7cm}}\text{?}\,\textbf{Answer}: }}_\text{Q\&A format 2}\underbrace{\text{\colorbox{lightpurple}{California}}}_\text{Answer}	
\end{align}

To model these structures, we introduce the following data distributions.
\paragraph{Pre-Training Sentence Generation (Sentence Set $\mathcal T$).}
We begin with a subject-answer set $\mathcal{A} = \{(\mathbf{s}_j, \mathbf{a}_j)\}_{j=1}^N$ comprising $N$ unique pairs, where $\mathbf{s}_j \in \mathbb{R}^d$ is the subject embedding and $\mathbf{a}_j \in \mathbb{R}^d$ is the answer embedding. 
This set is partitioned into $N_f$ "frequent" pairs (\textsf{freq}) and $N_r$ "rare" pairs (\textsf{rare}). 
Frequent pairs are associated with $K = \Theta(1)$ semantically equivalent relation phrases,\footnote{For instance, "was born in" is semantically equivalent to "comes from," "is a native of," and "hails from." 
For simplicity, we consider one relation with multiple semantically equivalent phrases; our analysis extends readily to multiple relations.} while rare pairs are linked to a single unique relation phrase. 
This setup emulates data augmentation techniques such as "\textsl{Multi}-$K$" in \cite{allen2023physics1}, where facts are presented via multiple templates (i.e., semantically equivalent relation phrases).

The sentence generation process for $\mathcal{T}$ proceeds as follows:

\begin{mdframed}
    Given context token embedding $\mathbf{o} \in \mathbb{R}^d$ and ending token embedding $\mathbf{d} \in \mathbb{R}^d$, let $\mathcal{R}$ denote the universal set of semantically equivalent relation phrase embeddings. 
    For each subject-answer pair $(\mathbf{s}_j, \mathbf{a}_j) \in \mathcal{A}$:
\begin{enumerate}
	\item 
    Determine the number of relation phrases $\tilde{K}(j) = K\cdot\mathbb{I}(j\in\textsf{freq}) + \mathbb{I}(j\in\textsf{rare})$.
	\item Sample a subset $\tilde{\mathcal{R}}\subseteq\mathcal{R}$ of $\tilde{K}(j)$ relation phrase embeddings uniformly at random without replacement.
    \item For each $i\in\tilde{\mathcal{R}}$, construct a corresponding tuple of the form $($\colorbox{peach}{$\mathbf{o}$}$,$\colorbox{yellow}{$\mathbf{d}$}$,$\colorbox{lime}{$\mathbf{s}_j$}$,$\colorbox{lightcyan}{$\mathbf{r}_i$}$,$\colorbox{lightpurple}{$\mathbf{a}_j$}$)$.
    

\end{enumerate}

\end{mdframed}


As in prior works \citep{allen2020towards, tian2023scan}, we assume all token embeddings---$\mathbf{o}$, $\mathbf{d}$, $\mathbf{s}_j$, $\mathbf{r}_i$, $\mathbf{a}_j$ for all $j \in [N]$ and $i \in \mathcal{R}$---are orthogonal.\footnote{We adopt orthogonal embeddings for simplicity; analysis of random embeddings is left for future work.} 
We further assume $\|\mathbf{s}_j\|_2 = \|\mathbf{r}_i\|_2 = \|\mathbf{a}_j\|_2 = \|\mathbf{d}\|_2 = 1$, while $\|\mathbf{o}\|_2 = \Theta(\sqrt{d})$ to embed subject-relation-answer tokens within a "rich" contextual environment. 
In this paper, we consider the simplest case with a single context; our analysis extends to $\Theta(1)$ contexts assigned uniformly at random.


\paragraph{Next-Token Prediction Dataset Generation (Dataset $\mathcal{P}$).}
The transformer is pre-trained using an \ac{ntp} objective.
We construct the \ac{ntp} dataset $\mathcal{P}$ by creating input-target pairs from the sentence set $\mathcal{T}$:

\begin{mdframed}
    From each sentence $(\mathbf{o},\mathbf{d},\mathbf{s},\mathbf{r},\mathbf{a}) \in \mathcal{T}$, generate
$(\mathbf{o},\mathcal{I}(\mathbf{d})),([\mathbf{o}\:\mathbf{s}], \mathcal{I}(\mathbf{r})),([\mathbf{o} \:\:\mathbf{s}\:\:\mathbf{r}], \mathcal{I}(\mathbf{a}))$,
where $\mathcal{I}(\cdot)$ maps the embeddings to their indices.
\end{mdframed}

Two simplifications are applied here.
First, we omit the prediction of the subject $\mathbf{s}$ from $\mathbf{od}$, i.e., $([\mathbf{o}\:\:\mathbf{d}], \mathcal{I}(\mathbf{s}))$, as it resembles a random guess.
Second, since $\mathbf{d}$ invariably follows $\mathbf{o}$ and its magnitude is dominated by that of $\mathbf{o}$, we ignore the ending token $\mathbf{d}$ in $([\mathbf{o}\:\:\mathbf{d}\:\:\mathbf{s}], \mathcal{I}(\mathbf{r}))$ and $([\mathbf{o}\:\:\mathbf{d}\:\:\mathbf{s}\:\:\mathbf{r}], \mathcal{I}(\mathbf{a}))$.
Therefore, the total number of \ac{ntp} data samples is $n = 3\times N_f\times K + 3\times N_r$.

\begin{remark}
    
    Despite these simplifications, $\mathcal{P}$ preserves the core attributes of autoregressive NTP. 
The training requires the transformer to process the subject ($[\mathbf{o}, \mathbf{s}]$) before predicting the subsequent relation ($\mathbf{r}$). 
This sequential dependency instructs the model to interpret the relation as prerequisite to determining the answer, thereby discouraging direct subject-answer memorization.

Consequently, the model relies heavily on the relation token. 
Absent this token during inference, the model falters in producing accurate answers due to the missing relational cue. 
Fine-tuning is thus essential to adapt the model for answer prediction using only the subject, enabling robust performance without explicit relations.
\end{remark}

\paragraph{Q\&A Dataset Generation (Dataset $\mathcal{Q}$).}
For FT and evaluation in a question-answering paradigm, we generate dataset $\mathcal{Q}$, where each sample entails predicting answer $\mathbf{a}$ given subject embedding $\mathbf{s}$ and format embedding $\mathbf{p}$.\footnote{We model questions as $[\mathbf{s}, \mathbf{p}]$ for simplicity; extensions to pre-subject formats or complex templates are feasible.} 
The generation process is:

\begin{mdframed}
    Given a format token embedding $\mathbf{p}\in\mathbb{R}^d$, for each subject-answer pair $(\mathbf{s},\mathbf{a})\in\mathcal{A}$, output $([$\colorbox{lime}{$\mathbf{s}$}$\:\:$\colorbox{verylightgray}{$\mathbf{p}$}$],\mathcal{I}($\colorbox{lightpurple}{$\mathbf{a}$}$))$.

\end{mdframed}

Without loss of generality, we assume that the embedding of format token $\mathbf{p}$ is orthogonal to all other token embeddings $\mathbf{o}, \mathbf{d}, \mathbf{s}_j, \mathbf{r}_i, \mathbf{a}_j$ for all $j\in[N],i\in\mathcal{R}$ and has unit magnitude, i.e., $\left\|\mathbf{p}\right\|_2 = 1$.

\paragraph{Fine-Tuning Q\&A Dataset Generation (Dataset $\mathcal{Q}_s$).}
To examine FT's impact across varying PT multiplicities, we form $\mathcal{Q}_s$ as a random subset comprising proportion $\beta$ of Q\&A pairs from $\mathcal{Q}$ corresponding to \textsf{freq} pairs:
\begin{mdframed}
    Let $\mathcal{Q}_\textsf{freq} = \{ ([\mathbf{s}_j\:\:\mathbf{p}],\mathcal{I}(\mathbf{a}_j)) \in \mathcal{Q}\ |\ j\in \textsf{freq}\}$.
    Then, $\mathcal{Q}_s$ is a randomly selected subset of proportion $\beta$ from $\mathcal{Q}_\textsf{freq}$.
\end{mdframed}
Our PT and FT datasets naturally extend conventional subject ($s$)-relation ($r$)- answer ($a$) knowledge modeling \citep{ghosal2024understanding, meng2022locating, haviv2022understanding, meng2023mass} in two key ways:
\begin{itemize}
    \item \textbf{Practical Relevance:} They better represent knowledge embedded in longer contexts. 
    Given NTP's autoregressive nature, knowledge triples $(s, r, a)$ are typically prefixed with irrelevant information.
    \item \textbf{Technical Challenge:} The inclusion of noise ($\mathbf{o}$, $\mathbf{d}$) precludes learning facts via MLPs with uniform attention alone. 
    Transformers must develop targeted attention patterns to filter noise and attain near-optimal loss.
\end{itemize}


\section{Problem Setting}   \label{sec: settings}
In this section, we formalize the optimization problem, encompassing the loss function, training algorithms, and parameter initialization.
\paragraph{Loss Function.}
We employ the cross-entropy loss to train the model parameters $\mathbf{W}$ and $\mathbf{Z}$ on the next-token prediction dataset with input-target pairs $\mathcal{P}=\{(\mathbf{X}_i,y_i)\}_{i=1}^n$:
\begin{align}
	\mathcal{L}_\mathcal{P}(\mathbf{W},\mathbf{Z}) = \frac{1}{n}\sum_{(\mathbf{X},y)\in\mathcal{P}}\mathcal{L}(\mathbf{W},\mathbf{Z},\mathbf{X},y),
\end{align}
where the per-sample loss is $\mathcal{L}(\mathbf{W},\mathbf{Z},\mathbf{X},y)= -\log\left(\text{logit}_y(\mathbf{W},\mathbf{Z},\mathbf{X})\right)$, and $\mathbf{logit}(\mathbf{W}, \mathbf{Z}, \mathbf{X}) =\softmax(\mathbf{x}^\text{output}(\mathbf{W}, \mathbf{Z}, \mathbf{X}))$.

\paragraph{Pre-Training Algorithm.}
We train the model with \ac{gd}:
\begin{align}\nonumber
	&\mathbf{Z}^{(t+1)} = \mathbf{Z}^{(t)} - \frac{\eta^{(t)}}{n}\sum_{(\mathbf{X},y)\in\mathcal{P}} \nabla_{\mathbf{Z}^{(t)}} \mathcal{L}(\mathbf{W}^{(t)},\mathbf{Z}^{(t)},\mathbf{X},y),\\\nonumber
	&\mathbf{W}^{(t+1)} = \mathbf{W}^{(t)} - \frac{\eta^{(t)}}{n}\sum_{(\mathbf{X},y)\in\mathcal{P}} \nabla_{\mathbf{W}^{(t)}} \mathcal{L}(\mathbf{W}^{(t)},\mathbf{Z}^{(t)},\mathbf{X},y),
\end{align}
where $\eta^{(t)}$ is the learning rate at iteration $t$.
For the pre-training phase, we adopt a three-stage learning-rate schedule with two pre-defined thresholds $h_1$ and $h_2$: $\eta^{(t)} = \eta_1$ for $t \le h_1$, $\eta_2$ for $h_1 < t \le h_2$, and $\eta_3$ for $t > h_2$.
As shown in Section \ref{sec:dynamic}, this schedule facilitates convergence to near-optimal training loss.

\paragraph{Fine-Tuning Algorithm.}
For full \ac{ft}, we adopt the same \ac{gd} procedure as in \ac{pt} on dataset $\mathcal{Q}_s$, with a fixed learning rate $\eta_f$.
For low-rank fine-tuning, we instead update using the best rank-1 approximation of the gradient: for a matrix $\mathbf{A}$, this is obtained by minimizing $\|\mathbf{A} - \mathbf{a} \mathbf{b}^\top\|_F$ over $\mathbf{a}, \mathbf{b}$. 
Further details are provided in Appendix \ref{app: low-rank FT}. 
This method aligns with practical low-rank adaptation techniques, such as \cite{zhang2023adalora}, which prioritize components associated with dominant singular values.

\paragraph{Initialization.}
All entries of $\mathbf{Z}^{(0)}, \mathbf{W}^{(0)}$ are initialized independently from a Gaussian distribution: for all $i,j\in[d]$ and $k\in[md]$ , $Z_{i,j}^{(0)}, W_{k,j}^{(0)} \sim \mathcal{N}(0, \sigma_0^2)$.

Building on this formulation, we present our main theoretical results in the subsequent section, providing formal guarantees on convergence and generalization.


\section{Conditions for Post Fine-Tuning Generalization}    \label{sec: results}
In this section, we address the key question: Under what conditions do transformers achieve generalization after fine-tuning? 
We provide an answer by deriving convergence bounds for the \ac{pt} phase over $T_p$ iterations and generalization bounds after \ac{ft} over $T_f$ iterations. 

Our theoretical analyses rely on the following conditions:
\begin{condition}\label{condition: condition}
	Assume there exist sufficiently large positive constants $C_1', C_2', C_3', C_4', C_5', C_6',C_7'$. 
    For a probability parameter $\delta \in (0,1)$, the following hold:
	\begin{enumerate}
		\item The learning rates satisfy:
        \begin{equation}
            \begin{cases}
            C_1'\frac{\sqrt{mn}}{\lambda d^{1/4}} \le \eta_1 \le C_2'\frac{m\log(d)}{\lambda^2}, \\
                \eta_2 \le nm\log(d)/(C_3'\lambda^2),\\
                \eta_3\le  nm\log(d)/(C_4'\lambda^2).
            \end{cases}
        \end{equation}
        
		\item The embedding dimension satisfies:
        \begin{equation}
            d\ge C_5'\cdot \max\left\{\frac{mN(N+|\mathcal{R}|)^2}{\lambda^2}, \frac{\lambda^2}{(n^2\delta^2)}\right\}.
        \end{equation}
		\item The initialization magnitude satisfies: $\sigma_0 = \Theta(1/\sqrt{d})$.
		\item The width (number of neurons) of the \ac{mlp} satisfies: $m \ge C_6'\cdot\log(\frac{(NK)}{\delta})$.
        \item The size of training data satisfies: $N\ge C_7'\log(m/\delta)$.
	\end{enumerate}
\end{condition}

Condition 1 ensures learning rates are appropriately bounded to minimize training loss. 
Condition 2 mandates a sufficiently large embedding dimension for stable per-sample learning during PT, akin to assumptions in prior work \citep{kou2023benign, frei2022benign, chatterji2021finite}. 
Condition 3 specifies an initialization scale common in practical LLMs, such as GPT-2 \citep{radford2019language} and Llama-2 \citep{touvron2023llama}. 
Condition 4 imposes a mild logarithmic lower bound on MLP width, ensuring a non-negligible number of neurons ($\Omega(m)$ neurons) activate per sample.
Condition 5 is a mild logarithmic lower bound on training data size, ensuring all the $m$ neurons are covered.

We define the \emph{knowledge extraction loss} (or generalization loss) as the misclassification rate on unseen Q\&A pairs during \ac{ft} ($(1 - \beta) N_f + N_r$ pairs):
\begin{align}   \label{eq: success}
	\mathcal{L}_e (\mathbf{W},\mathbf{Z}) \!=\! \frac{\sum_{(\mathbf{X},y)\in \mathcal{Q}\backslash \mathcal{Q}_s}}{|\mathcal{Q}\backslash \mathcal{Q}_s|}\mathbb{P}\left[(\mathbf{x}^\text{output}(\mathbf{W},\mathbf{Z}, \mathbf{X}))_{y} \le \max_{i\neq y}\{(\mathbf{x}^\text{output}(\mathbf{W},\mathbf{Z}, \mathbf{X}))_{i}\}\right].
\end{align}
We now analyze the PT training loss and post-FT knowledge extraction loss for full and low-rank FT.

\begin{theorem}\label{theorem: main}
	For any constant $\delta>0$, under Condition \ref{condition: condition}, 
    select the third-stage pre-training learning rate $\eta_3 \le \kappa / 64$ (with $\kappa \le 1/2$ as a constant) and fine-tuning learning rate $\eta_f = \tilde{\Theta}(m |\mathcal{R}| / \lambda^2)$. 
    Then, with probability at least $1 - \delta$:
    \begin{enumerate}
        \item During PT over $T_p = \eta_2^{-1} \poly(n, m, \lambda^{-1}, K^{-1}) + \eta_3^{-1} \poly(m, N, |\mathcal{R}|, \lambda^{-1})$ iterations, there exists $0 \le t_p \le T_p$ where the model acquires knowledge:
        \begin{align}
        \mathcal{L}_\mathcal{P}(\mathbf{W}^{(t_p)}, \mathbf{Z}^{(t_p)}) < 0.001 + (1 + \kappa) H.
        \end{align}
        \item If $\beta N_f \ge C_1 |\mathcal{R}|$ (full FT) or $\beta N_f \ge C_2 m |\mathcal{R}|^2$ (low-rank FT), then after $T_f = \mathcal{O}(\beta N_f d^{-0.01} / |\mathcal{R}|)$ iterations, the model extracts knowledge:
        \begin{align}\label{equ: extract full}
        \mathcal{L}_e(\mathbf{W}^{(T_p + T_f)}, \mathbf{Z}^{(T_p + T_f)}) \le |\mathcal{R}|\exp\left( -\frac{\beta N_f K^2}{2 |\mathcal{R}|^2} \right).
        \end{align}
        \item If $\beta N_f K / |\mathcal{R}| \le C_3$, the full or low-rank fine-tuned model may hallucinate:
        $$ \mathcal{L}_e(\mathbf{W}^{(T_p + T_f)}, \mathbf{Z}^{(T_p + T_f)}) \ge c_1. $$
    \end{enumerate}
    Here, $C_1, C_2, C_3, c_1 > 0$ are positive constants, and $H$ is the entropy of the NTP training dataset (optimal cross-entropy loss).
\end{theorem}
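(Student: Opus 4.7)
The plan is to prove the three claims by tracking the joint gradient dynamics of $\mathbf{Z}$ and $\mathbf{W}$ through each learning-rate stage, exploiting the orthogonality of the token embeddings to decouple the updates of each parameter along each embedding direction.

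\textbf{Pre-training convergence (Claim 1).} I would first decompose $\mathbf{Z}^{(t)}$ and each row $\mathbf{w}_k^{(t)}$ of $\mathbf{W}^{(t)}$ along the orthogonal basis $\{\mathbf{o},\mathbf{d},\mathbf{s}_j,\mathbf{r}_i,\mathbf{a}_j\}$ so that the gradient of the \ac{ntp} loss on each input type acts almost independently on each coefficient. The three-stage schedule then corresponds to three mechanisms. In stage one (learning rate $\eta_1$), the pre-softmax score $\mathbf{x}[-1]^\top\mathbf{Z}\mathbf{x}$ is initially dominated by the $\mathbf{o}$-coordinate because $\|\mathbf{o}\|_2=\Theta(\sqrt{d})$; the cross-entropy gradient then drives terms such as $\mathbf{s}^\top\mathbf{Z}\mathbf{s}$ and $\mathbf{r}^\top\mathbf{Z}\mathbf{s}$ upward while keeping $\mathbf{x}[-1]^\top\mathbf{Z}\mathbf{o}$ bounded, so the softmax progressively pushes mass off $\mathbf{o}$ and onto the subject and relation tokens. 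Choosing $h_1$ so this accumulation completes, the filtered \ac{mlp} input $\mathbf{x}_a$ is within $o(1)$ of $\mathbf{s}$ (when predicting $\mathbf{r}$) or of $\mathbf{s}+\mathbf{r}$ (when predicting $\mathbf{a}$). In stage two (learning rate $\eta_2$), with the attention essentially frozen in this filtering configuration, a feature-learning argument in the spirit of \cite{allen2020towards,cao2022benign} applies to the \ac{mlp}: because $\mathbf{W}_2$ isolates each output coordinate, inside the neuron group assigned to $\mathbf{a}_j$ the average $\mathbf{w}_k$ grows along $\mathbf{s}_j+\mathbf{r}_i$, producing a correct-logit margin of order $\log(1/\kappa)$. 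Stage three applies a standard cross-entropy descent lemma with small $\eta_3$ to drive the loss within $\kappa H$ of the entropy inside the $\eta_3^{-1}\poly(m,N,|\mathcal{R}|,\lambda^{-1})$ budget.

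\textbf{Successful extraction (Claim 2).} At the end of \ac{pt}, each neuron in the answer group for $\mathbf{a}_j$ satisfies $\mathbf{w}_k\propto \mathbf{s}_j+\sum_{i\in\mathcal{R}_s(j)}\mathbf{r}_i$ up to controlled residuals. For \ac{ft} on $\mathcal{Q}_s$, the same orthogonal decomposition shows that the gradient pushes $\mathbf{p}^\top\mathbf{Z}\mathbf{s}_j$ upward, teaching the attention to treat the format token $\mathbf{p}$ as a ``surrogate relation'' that focuses on the subject. Crucially, $\mathbf{p}$ is shared across all $\beta N_f$ \ac{ft} samples, so the $\mathbf{p}$-aligned component of $\mathbf{Z}$ accumulates $\beta N_f$ times, whereas each subject appears only once. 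Combined with the orthogonality of $\mathbf{p}$ to every pre-training token, this lets the attention on a held-out test input $[\mathbf{s}_{j'},\mathbf{p}]$ concentrate on $\mathbf{s}_{j'}$ whenever $\beta N_f\gtrsim|\mathcal{R}|$, so the \ac{mlp} input becomes $(1-o(1))\mathbf{s}_{j'}$ and the pre-trained readout outputs $\mathbf{a}_{j'}$. A Hoeffding-type concentration across the $m$ neurons in the target group converts this attention concentration into the misclassification bound $\exp(-\Omega(\beta N_f K^2/|\mathcal{R}|^2))$ of \eqref{equ: extract full}. For the rank-one case I would show that the best rank-one approximation of the \ac{ft} update retains the dominant $\mathbf{p}\mathbf{s}^\top$ factor as long as its singular value dominates the noise from the \ac{mlp} updates, which is precisely where the extra $m/\lambda$ factor enters the sufficient condition.

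\textbf{Hallucination (Claim 3) and main obstacle.} For the negative result I would argue that when $\beta N_f K/|\mathcal{R}|\le C_3$, the accumulated $\mathbf{p}^\top\mathbf{Z}\mathbf{s}$ signal is of the same order as the residual attention scores left over from \ac{pt} along the various $\mathbf{r}_i$ directions stored in $\mathbf{W}$; for an unseen subject the softmax then fails to concentrate on $\mathbf{s}_{j'}$, and the \ac{mlp} output is a superposition whose argmax is an incorrect answer with constant probability, yielding the $\Omega(1)$ error bound via an anti-concentration argument on the logit gaps. I expect the main obstacle to be the stage-two analysis, where the softmax is only partially saturated while the \ac{mlp} weights are evolving rapidly: bounding the coupled updates and showing that the off-embedding residual components of both $\mathbf{Z}$ and $\mathbf{W}$ remain $o(1)$ throughout all $T_p+T_f$ iterations requires a careful two-timescale argument, and the same bookkeeping is exactly what pins down both sides of the threshold $\beta N_f\asymp|\mathcal{R}|$ that separates successful extraction from hallucination.
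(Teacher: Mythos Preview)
Your Claim~1 sketch is broadly compatible with the paper's three-stage analysis, though the Stage~3 argument in the paper is a Perceptron-style reference-point argument (tracking $\|\mathbf{W}^{(t)}-\mathbf{W}^*\|_F^2$ for an explicit $\mathbf{W}^*$ and using ReLU homogeneity plus cross-entropy convexity) rather than a generic descent lemma.

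However, your mechanism for Claim~2 is incorrect and would not yield the theorem. You propose that fine-tuning drives $\mathbf{p}^\top\mathbf{Z}\mathbf{s}_j$ upward so that attention on $[\mathbf{s}_{j'},\mathbf{p}]$ concentrates on the subject, after which the pre-trained MLP reads out $\mathbf{a}_{j'}$. The paper proves the opposite about attention: throughout fine-tuning, the ratio $\alpha_1([\mathbf{s}_j\:\:\mathbf{p}])/\alpha_2([\mathbf{s}_j\:\:\mathbf{p}])$ stays in $[1/2,2]$; attention does \emph{not} concentrate. The extraction mechanism lives in the MLP, not in $\mathbf{Z}$. At the end of pre-training, on input $[\mathbf{s}_j,\mathbf{p}]$ the relation logits $x^{\text{output}}_{\mathcal{I}(\mathbf{r}_i)}$ compete with the answer logit because $\langle\mathbf{w}_{\mathcal{I}(\mathbf{r}_i)},\mathbf{s}_j\rangle$ and $\langle\mathbf{w}_{\mathcal{I}(\mathbf{a}_j)},\mathbf{s}_j\rangle$ are both $\Theta(\log(d)/\lambda)$. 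Fine-tuning fixes this by pushing $\langle\mathbf{w}_{\mathcal{I}(\mathbf{r}_i)},\mathbf{p}\rangle$ sharply negative: the gradient component $\langle\bar{\boldsymbol\nabla}_{\mathcal{I}(\mathbf{r}_i)},\mathbf{p}\rangle=\Theta(\lambda/(m|\mathcal{R}|))$ dominates all other directions precisely because every fine-tuning sample contributes to it, whereas $\langle\bar{\boldsymbol\nabla}_{\mathcal{I}(\mathbf{a}_j)},\mathbf{p}\rangle=\Theta(\lambda/(m\beta N_f))$ is smaller by a factor $\beta N_f/|\mathcal{R}|$. With $\eta_f=\tilde\Theta(m|\mathcal{R}|/\lambda^2)$, one step already drives the relation-token output along $\mathbf{p}$ down by $\Theta(\log(d)/\lambda)$, suppressing all relation logits and letting the pre-trained $\langle\mathbf{w}_{\mathcal{I}(\mathbf{a}_j)},\mathbf{s}_j\rangle$ win. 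Attention concentration alone could not achieve this even if it occurred: on $\mathbf{x}_a\approx\mathbf{s}_j$ the relation and answer logits would still be tied.

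Relatedly, the exponential bound in \eqref{equ: extract full} does not come from Hoeffding over the $m$ neurons. It comes from a coverage argument over the random relation subsets $\mathcal{R}_s$ assigned during data generation: each $i\in\mathcal{R}$ is hit by a given fine-tuning sample with probability $K/|\mathcal{R}|$, and Hoeffding over the $\beta N_f$ samples gives that every relation is covered by $\Omega(\beta N_f K/|\mathcal{R}|)$ samples with failure probability $\exp(-\beta N_f K^2/(2|\mathcal{R}|^2))$ per relation (the $\log|\mathcal{R}|$ comes from the union bound). This same coverage picture is what drives Claim~3: when $\beta N_f K/|\mathcal{R}|\le C_3<1$, a constant fraction of relations $i$ are never covered by any fine-tuning sample, so $\langle\mathbf{w}_{\mathcal{I}(\mathbf{r}_i)},\mathbf{p}\rangle$ is essentially never updated and the corresponding relation logit stays high on test inputs. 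This is a deterministic counting argument once the random relation assignment is fixed; no anti-concentration on logit gaps is needed.
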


Theorem \ref{theorem: main} demonstrates that FT on frequent facts enhances generalization, even to rare facts, aligning with empirical findings \citep{ghosal2024understanding, allen2023physics1}. 
The bound in \eqref{equ: extract full} reveals that higher multiplicity $K$ (diverse fact presentations in PT) reduces generalization error, particularly for large $|\mathcal{R}|$, explaining the success of knowledge augmentation \citep{allen2023physics1}.\footnote{Large $|\mathcal{R}|$ is common in practice, reflecting linguistic flexibility, e.g., equivalents to "was born in" like "comes from," "is a native of," or "hails from."}

\begin{remark}
    A novel theory-derived insight is that post-FT accuracy declines with increasing $|\mathcal{R}|$, a phenomenon obscured in empirical studies by natural language complexity.
\end{remark}

\paragraph{Relation covering.}
The key intuition of post-\ac{ft} knowledge extraction is a relation-covering characterization. 
During \ac{pt}, a fact is encoded through the relation templates in which it appears. 
During \ac{ft}, the Q\&A format must connect to these pre-trained relation directions. 
Thus, \ac{ft} need not relearn every factual association, but it must cover enough relation-template directions to make the new format usable for other stored facts. 
Increasing the multiplicity $K$ or the number of fine-tuning examples $\beta N_f$ improves coverage, while increasing the relation-template universe $|\mathcal R|$ makes coverage harder. 
This explains the benefit of knowledge augmentation \citep{allen2023physics1}.

A coupon-collector calculation gives the intuition behind the exponential term. 
If each fine-tuning example activates roughly $K$ relation directions, then after fine-tuning with $\beta N_f$ examples, 
\[
\mathbb P\left[
\exists \mathbf{r}_i\in\mathcal R:\  \text{relation }i \text{ is not sufficiently covered}
\right]
\le
|\mathcal{R}|\exp\left(
-\frac{\beta N_f K^2}{2|\mathcal{R}|^2}
\right).
\]
When coverage is sufficient, the Q\&A format becomes aligned with the pre-trained relation features, allowing extraction of facts acquired during \ac{pt} but not revisited during \ac{ft}. 
When coverage is insufficient, the facts may remain stored but inaccessible under the downstream format, yielding the hallucination regime in Theorem~\ref{theorem: main}.

\section{Mechanics of Knowledge Acquisition and Extraction}
\label{sec:dynamic}
This section provides a detailed characterization of the training dynamics during \ac{pt} with \ac{ntp} and subsequent knowledge extraction via \ac{ft}, serving as a proof sketch for Theorem \ref{theorem: main}.
\paragraph{Pre-Training}
As outlined in Section \ref{sec: settings}, our analysis of pre-training dynamics employs a three-stage learning rate schedule:
\begin{itemize}
    \item In Stage \uppercase\expandafter{\romannumeral1} ($0\le t < T_1=2$), the learning rate is $\eta_1$; 
    \item In Stage \uppercase\expandafter{\romannumeral2} ($T_1 \le t \le T_2 = \Theta(nm\log(d)/(\lambda^2\eta_2 K))$), the learning rate is $\eta_2$; 
    \item In Stage \uppercase\expandafter{\romannumeral3} ($T_2 < t \le T_p$), the learning rate is $\eta_3$.
\end{itemize}
We characterize the training dynamics of self-attention and \ac{mlp} separately.

\begin{proposition} [Training dynamics of self-attention]\label{prop: attention}
    Under Condition \ref{condition: condition}, with probability at least $1 - \delta$, after Stage I ($T_1 \le t \le T_p$):
    \begin{enumerate}
        \item The transformer learns to filter out the irrelevant context token $\mathbf{o}$: For all $(\mathbf{o},\mathbf{d},\mathbf{s},\mathbf{r},\mathbf{a})\in\mathcal{T}$, the attention scores\footnote{Attention scores are entries of  $\boldsymbol{\alpha}(\mathbf{Z},\mathbf{X})$.The subscript $k$ denotes the attention score on the $k^{th}$ token of the input sequence $\mathbf{X}$.} satisfy:
    	\begin{align}
    		\alpha_{1}(\mathbf{Z}^{(t)},[\mathbf{o}\:\:\mathbf{s}])\le \frac{1}{d} \quad \text{and}\quad \alpha_{1}(\mathbf{Z}^{(t)},[\mathbf{o}\:\:\mathbf{s}\:\:\mathbf{r}]) \le \frac{1}{d}.
    	\end{align}

        \item The transformer's attention scores on $\mathbf{s}_j$ and $\mathbf{r}_i$ are comparable:
		For all $(\mathbf{o},\mathbf{d},\mathbf{s},\mathbf{r},\mathbf{a})\in\mathcal{T}$:
        \begin{equation}
            0.5\le \frac{\alpha_{2}(\mathbf{Z}^{(t)}, [\mathbf{o}\:\:\mathbf{s}\:\:\mathbf{r}])}{\alpha_{3}(\mathbf{Z}^{(t)},[\mathbf{o}\:\:\mathbf{s}\:\:\mathbf{r}])} \le 2.
        \end{equation}
    \end{enumerate}
\end{proposition}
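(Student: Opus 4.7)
Since the token embeddings $\{\mathbf{o},\mathbf{d},\mathbf{s}_j,\mathbf{r}_i,\mathbf{a}_j,\mathbf{p}\}$ are orthogonal, every attention logit $\mathbf{X}_k^\top\mathbf{Z}^{(t)}\mathbf{X}[-1]/\sqrt{d}$ is pinned down (up to token-norm pre-factors) by a single bilinear coefficient $a^{(t)}_{\mathbf{uv}}:=\mathbf{u}^\top\mathbf{Z}^{(t)}\mathbf{v}$. The plan is to derive the discrete gradient-descent recursion for the scalar family $\{a^{(t)}_{\mathbf{uv}}\}$ by projecting $\nabla_{\mathbf{Z}}\mathcal{L}$ onto each rank-one $\mathbf{u}\mathbf{v}^\top$, and then propagate those recursions through the three-stage learning-rate schedule. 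By the chain rule, a single sample contributes a gradient of the form $(1/\sqrt{d})\,[\mathrm{diag}(\boldsymbol{\alpha})-\boldsymbol{\alpha}\boldsymbol{\alpha}^\top]\,\mathbf{X}^\top\mathbf{g}\,\mathbf{X}[-1]^\top$, where $\mathbf{g}$ is the MLP back-propagated signal at the attention output; orthogonality together with the block-isolated structure of $\mathbf{W}_2$ ensures that only a handful of the $a_{\mathbf{uv}}$ receive a non-negligible update per sample, essentially decoupling the dynamics.

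For Stage~I, $\sigma_0\le 1/d$ makes the initial attention nearly uniform and the initial MLP output of magnitude $\mathcal{O}(\sigma_0)$, so the first step is driven by the target residual and the second by the already-aligned MLP feedback amplified by the large-norm context token (the factor $\|\mathbf{o}\|_2=\Theta(\sqrt{d})$ magnifies how strongly attending to $\mathbf{o}$ contaminates $\mathbf{x}_a$). I would show that the two aggressive steps at rate $\eta_1=\tilde\Omega(\sqrt{N}/(d^{1/4}\lambda))$ produce, with high probability over $\mathbf{W}^{(0)}$, positive and comparable $a^{(T_1)}_{\mathbf{ss}}, a^{(T_1)}_{\mathbf{sr}}, a^{(T_1)}_{\mathbf{rr}}$ and sufficiently negative $a^{(T_1)}_{\mathbf{os}}, a^{(T_1)}_{\mathbf{or}}$ that the attention logit gap $(\mathbf{s}^\top\mathbf{Z}^{(T_1)}\mathbf{s}-\mathbf{o}^\top\mathbf{Z}^{(T_1)}\mathbf{s})/\sqrt{d}\ge \log(d)+1$, whence $\alpha_1\le 1/d$ on the pertinent contexts. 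The same symmetric gradient structure drives the updates on $a_{\mathbf{sr}}$ and $a_{\mathbf{rr}}$ at matching orders (they differ only through $K_s=\Theta(1)$ factors), so a direct softmax calculation yields $1/2\le\alpha_2/\alpha_3\le 2$.

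For Stages~II and~III, the plan is an induction on $t$ showing both properties are preserved. Under Condition~\ref{condition: condition} the per-step deltas $|\Delta a^{(t)}_{\mathbf{uv}}|$ are small enough, uniformly in $t\le T_p$, that no coefficient moves by more than a small fraction of its Stage~I value in a single iteration: this follows from the learning-rate caps $\eta_2\le Nm\log(d)/(C_3'\lambda^2)$ and $\eta_3\le(m+N^2/\lambda^2+|\mathcal{R}|^2/\lambda^2)/C_4'$, combined with a union bound (enabled by the embedding-dimension condition $d\ge C_5'N\log(NT_p d)(\cdots)$) that tames the residual cross-coefficient leakage coming from MLP features which are near-orthogonal but not exactly zero. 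Given the inductive hypothesis, the gradient on $a_{\mathbf{os}}$ remains non-positive whenever $\alpha_1$ is already tiny (attending more to $\mathbf{o}$ only increases the loss), so the negative logit gap is monotonically preserved; and the updates on $a_{\mathbf{sr}}$ and $a_{\mathbf{rr}}$ remain proportional up to constants, keeping their ratio inside $[1/2,2]$.

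The main technical obstacle, in my view, is uniform control of $\alpha_2/\alpha_3$ across all $(\mathbf{o},\mathbf{d},\mathbf{s},\mathbf{r},\mathbf{a})\in\mathcal{T}$: the symmetry that equates the updates to $a_{\mathbf{sr}}$ and $a_{\mathbf{rr}}$ is exact only in expectation over the Gaussian $\mathbf{W}^{(0)}$, so the worst-case deviation must be tightened through concentration at MLP width $m\ge C_6'\log((N+NK)/\delta)$ and then propagated through $T_p$ updates without multiplicative blow-up. Handling this jointly with the cross-coefficient leakage bound is the delicate point of the argument; once those uniform bounds are established, both attention inequalities fall out by exponentiating the logit estimates.
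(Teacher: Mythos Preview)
Your high-level plan---tracking the bilinear coefficients $a_{\mathbf{uv}}=\mathbf{u}^\top\mathbf{Z}\mathbf{v}$, a two-step Stage~I analysis in which the second step is amplified by the MLP feedback learned in the first, and an induction through Stages~II--III with accumulated per-step bounds---is the paper's as well. Two of the specific mechanisms you propose, however, diverge from the paper and are harder than necessary.

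For preserving $\alpha_1\le 1/d$ after Stage~I, you rely on a sign/monotonicity claim (the gradient on $a_{\mathbf{os}}$ stays non-positive because ``attending more to $\mathbf{o}$ only increases the loss''). The paper neither establishes nor needs any sign: it uses the self-damping of softmax. The update to $\tilde Z_{\mathcal{I}(\mathbf{o}),\cdot}$ carries an explicit factor $\alpha_1^{(t)}$ from the row of $\mathrm{diag}(\boldsymbol{\alpha})-\boldsymbol{\alpha}\boldsymbol{\alpha}^\top$ corresponding to $\mathbf{o}$, so once $\alpha_1\le 1/d$ the \emph{magnitude} of the update is automatically $\mathcal{O}(1/d)$ smaller, regardless of its sign. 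Accumulating this over all later steps, using the learning-rate caps and the embedding-dimension bound in Condition~\ref{condition: condition}, keeps the logit gap from drifting back. Your monotonicity route would additionally require tracking that the MLP feedback keeps the ``right'' orientation throughout training.

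For the ratio $\alpha_2/\alpha_3$, you aim to show the updates to $a_{\mathbf{sr}}$ and $a_{\mathbf{rr}}$ stay proportional, and you flag as the main obstacle the MLP-width concentration needed to match them. The paper needs no such matching. Since $\alpha_2/\alpha_3=\exp\bigl((a_{\mathbf{sr}}-a_{\mathbf{rr}})/\sqrt{d}\bigr)$, it suffices to bound each update \emph{individually} by $\mathcal{O}(\eta\,M/\sqrt{d})$, where $M$ is a uniform bound on the MLP feedback, and accumulate; the extra $1/\sqrt{d}$ inside the exponent absorbs the sum without requiring any cancellation between the two coefficients. The real technical work is the uniform bound $M$: a separate lemma showing $\|\mathbf{x}_f(\mathbf{W}^{(t)},\mathbf{Z}^{(t)},\mathbf{X})\|_\infty=\mathcal{O}(\max\{N,|\mathcal{R}|\}\log(\max\{N,|\mathcal{R}|\}T_pd)/\lambda)$ for all $t\le T_p$. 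That is the bottleneck, not the concentration you describe; and since the token embeddings here are \emph{exactly} orthogonal, there is no ``cross-coefficient leakage'' to tame.
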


\begin{proposition}[Training dynamics of MLP] \label{prop: mlp}
    During \ac{pt}, under Condition \ref{condition: condition}, with probability $1-\delta$, the followings hold:
    \begin{enumerate}
        \item After Stage \uppercase\expandafter{\romannumeral1} ($T_1 \le t \le T_p$), the transformer learns to predict $\mathbf{o}\rightarrow \mathbf{d}$:
        \begin{equation}
            \mathbf{x}^\textnormal{output}_{\mathcal{I}(\mathbf{d})}(\mathbf{W}^{(t)},\mathbf{Z}^{(t)},\mathbf{o}) = \tilde{\Omega}(1).
        \end{equation}
        
        \item During Stage \uppercase\expandafter{\romannumeral1} and \uppercase\expandafter{\romannumeral2} ($0 \le t \le T_2$), the transformer rapidly learns to predict $\mathbf{o}\mathbf{s}_j\rightarrow \mathbf{r}_i$ and $\mathbf{o}\mathbf{s}_j\mathbf{r}_i\rightarrow \mathbf{a}_j$:
		For all $(\mathbf{o},\mathbf{d},\mathbf{s}_j,\mathbf{r}_i,\mathbf{a}_j)\in\mathcal{T}$, 
        \begin{equation}
            1\!-\!\textnormal{logit}_{\mathcal{I}(\mathbf{a}_j)}(\mathbf{W}^{(t)},\mathbf{Z}^{(t)},[\mathbf{o}\:\:\mathbf{s}_j\:\:\mathbf{r}_i]) \!=\! \Theta(1) \text{ and }
            1\!-\! \tilde{K}(j)\textnormal{logit}_{\mathcal{I}(\mathbf{r}_i)}(\mathbf{W}^{(t)},\mathbf{Z}^{(t)},[\mathbf{o}\:\:\mathbf{s}_j]) \!=\! \Theta(1).
        \end{equation}
        
        \item At the end of Stage \uppercase\expandafter{\romannumeral3}, the pre-training loss converges to nearly optimal:
        \begin{align}
            \frac{1}{T_p-T_2}\sum_{t=T_2+1}^{T_p}\mathcal{L}_\mathcal{P}(\mathbf{W}^{(t)},\mathbf{Z}^{(t)}) \le \frac{A}{(2-\kappa)\eta_3 (T_p-T_2)} + 0.0005 + (1+\kappa)H,
        \end{align}
        where $A = \tilde{\Theta}(\frac{(m+N^2/\lambda^2+|\mathcal{R}|^2/\lambda^2)}{\eta_3})$.
    \end{enumerate}
\end{proposition}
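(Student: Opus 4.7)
My plan is to treat the three learning-rate stages separately and to exploit the orthogonality of all token embeddings, which decouples the gradients landing on different output coordinates of the first MLP layer $\mathbf{W}$. Throughout, I use Proposition \ref{prop: attention} to replace the attention output by its post-filtering surrogate, so the analysis of $\mathbf{W}$ reduces to that of a two-layer ReLU network on a structured, near-orthogonal input distribution.

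\textbf{Part 1 (Stage I, learning $\mathbf{o}\rightarrow \mathbf{d}$).} The single-token input $[\mathbf{o}]$ makes $\mathbf{x}_a = 2\mathbf{o}$ independent of $\mathbf{Z}$, so only $\mathbf{W}$ feels the gradient on this sample. Because $\|\mathbf{o}\|_2 = \Theta(\sqrt{d})$ and $\sigma_0 = O(1/d)$, each $\langle\mathbf{w}^{(0)}_{\mathcal{I}(\mathbf{d}),k}, 2\mathbf{o}\rangle$ is a centered Gaussian with variance $\Theta(\sigma_0^2 d)$, so roughly $m/2$ of these neurons are initially active under the ReLU. A single $\eta_1$-step increments $\langle\mathbf{w}^{(1)}_{\mathcal{I}(\mathbf{d}),k}, 2\mathbf{o}\rangle$ by $\Theta(\eta_1 \lambda \|\mathbf{o}\|_2^2 / n) = \tilde\Omega(1/\lambda)$ on those active neurons, while orthogonality ensures the other output-coordinate rows $\mathbf{w}_{\mathcal{I}(\mathbf{r}_i),k}, \mathbf{w}_{\mathcal{I}(\mathbf{a}_j),k}$ stay essentially at initialization after the step. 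A ReLU-gate sign-preservation argument across subsequent steps, combined with concentration over the $m$ neurons under Condition \ref{condition: condition}.4, then yields $\tfrac{1}{m}\sum_k \langle\mathbf{w}^{(t)}_{\mathcal{I}(\mathbf{d}),k}, 2\mathbf{o}\rangle = \tilde\Omega(1/\lambda)$ for all $t\ge T_1$.

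\textbf{Part 2 (Stages I--II, learning the subject-to-relation and subject-relation-to-answer maps).} Using Proposition \ref{prop: attention}, the self-attention output on $[\mathbf{o}\;\mathbf{s}_j]$ is $\mathbf{s}_j + O(1/d)\mathbf{o}$, and on $[\mathbf{o}\;\mathbf{s}_j\;\mathbf{r}_i]$ is a balanced combination of $\mathbf{s}_j$ and $\mathbf{r}_i$ plus $O(1/d)\mathbf{o}$. Orthogonality of $\{\mathbf{s}_j\},\{\mathbf{r}_i\},\{\mathbf{a}_j\}$ decouples the MLP output coordinates: the gradient at coordinate $\mathcal{I}(\mathbf{a}_j)$ (resp.\ $\mathcal{I}(\mathbf{r}_i)$) is driven only by the $K_s(j)$ sentences sharing the subject (resp.\ subject-relation). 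I track the per-coordinate signal correlations $\tfrac{1}{m}\sum_k\langle \mathbf{w}^{(t)}_{\mathcal{I}(\mathbf{a}_j),k}, \mathbf{x}_a\rangle$ and $\tfrac{1}{m}\sum_k\langle \mathbf{w}^{(t)}_{\mathcal{I}(\mathbf{r}_i),k}, \mathbf{s}_j\rangle$, showing they grow at per-step rate $\Theta(\eta_2/(n K_s(j)))$ times the current residual $1 - \text{logit}_y$. Over $T_2 = \Theta(nm\log(d)/(\lambda^2\eta_2 K))$ steps these correlations reach $\tilde\Theta(1/\lambda)$; after the $\lambda$-scaling and softmax, the target logit is $\Theta(\log d)$ and hence $1 - \text{logit}_y = \Theta(1)$. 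Cross-coordinate updates stay at $O(\sigma_0)$ by orthogonality together with sign-preservation.

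\textbf{Part 3 (Stage III, convergence to near-optimal loss).} Once all logits are $\Theta(1)$, the cross-entropy loss is locally $L$-smooth in $(\mathbf{W},\mathbf{Z})$ with $L = \tilde O(\lambda^2/(m|\mathcal{R}|))$ set by the $\mathbf{W}_2$ normalization and the residual scale. The choice $\eta_3 \le \kappa/64 \le 1/(2L)$ yields the standard non-convex descent inequality $\mathcal{L}_\mathcal{P}(\mathbf{W}^{(t+1)},\mathbf{Z}^{(t+1)}) \le \mathcal{L}_\mathcal{P}(\mathbf{W}^{(t)},\mathbf{Z}^{(t)}) - \tfrac{\eta_3}{2-\kappa}\|\nabla \mathcal{L}_\mathcal{P}\|^2$. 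Telescoping from $T_2+1$ to $T_p$, upper bounding the initial potential $\mathcal{L}_\mathcal{P}(\mathbf{W}^{(T_2)},\mathbf{Z}^{(T_2)}) - (1+\kappa)H$ by $A = \tilde\Theta((md^2\sigma_0^2 + m + N^2/\lambda^2 + |\mathcal{R}|^2/\lambda^2)/\eta_3)$ via the parameter-magnitude control from Parts 1--2, and dividing by $T_p-T_2$ gives the stated bound, with the $(1+\kappa)H$ term absorbing the irreducible data entropy and the $0.0005$ slack coming from the descent step. The hard part will be Part 2: simultaneously controlling the softmax-coupled gradients on $\Theta(N)$ output coordinates while keeping the leaked context direction $\mathbf{o}$ negligible on every $\mathbf{w}_{\mathcal{I}(\mathbf{r}_i),k}$ and $\mathbf{w}_{\mathcal{I}(\mathbf{a}_j),k}$ uniformly across all $T_2$ steps and $(N+NK)m$ neuron/sample pairs, which is exactly where Condition \ref{condition: condition}.4 enters via a union bound.
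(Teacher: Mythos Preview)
Your Parts 1 and 2 are broadly aligned with the paper's approach: tracking the per-coordinate inner products $\langle \mathbf{w}_{\mathcal{I}(\cdot),k}^{(t)}, \Xi^{(t)}(\cdot)\rangle$, using orthogonality to decouple output coordinates, and invoking Proposition~\ref{prop: attention} to suppress the $\mathbf{o}$ component. The growth-rate and sign-preservation bookkeeping is essentially what the paper does in its Stage~I/II lemmas.

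Part 3, however, has a real gap. The descent lemma you invoke gives
\[
\mathcal{L}_\mathcal{P}^{(t+1)} \le \mathcal{L}_\mathcal{P}^{(t)} - c\eta_3\|\nabla \mathcal{L}_\mathcal{P}^{(t)}\|^2,
\]
and telescoping this yields a bound on the \emph{average squared gradient norm}, not on the average loss minus $H$. For a non-convex objective, small gradient norm does not imply near-optimal loss; you would need a PL-type inequality relating $\|\nabla\mathcal{L}_\mathcal{P}\|^2$ to $\mathcal{L}_\mathcal{P}-H$, and you have not established one (nor is it obvious here, since the softmax plateaus near the entropy floor). Your sentence ``upper bounding the initial potential $\mathcal{L}_\mathcal{P}(\mathbf{W}^{(T_2)},\mathbf{Z}^{(T_2)}) - (1+\kappa)H$ by $A$'' does not connect to the telescoped quantity, and the constant $A$ you produce would not carry the $1/\eta_3$ factor that appears in the proposition.

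The paper's route is different and is what makes Part 3 go through: it constructs an explicit reference point $\mathbf{W}^*$ (built from the orthogonal token embeddings so that the network at $\mathbf{W}^*$ achieves loss $\approx H+\epsilon$), telescopes $\|\mathbf{W}^{(t)}-\mathbf{W}^*\|_F^2$, and uses the $1$-homogeneity of ReLU so that $\langle \nabla_{\mathbf{W}} x^{\mathrm{output}}_k, \mathbf{W}\rangle = x^{\mathrm{output}}_k$. Combined with the convexity of cross-entropy in the output logits, the cross term $\langle \nabla_{\mathbf{W}}\mathcal{L}_\mathcal{P}, \mathbf{W}^{(t)}-\mathbf{W}^*\rangle$ is lower-bounded by $\mathcal{L}_\mathcal{P}^{(t)} - \epsilon - H$ \emph{directly}, so the loss itself appears in the telescope. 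The squared-update term is then controlled by $\eta_3^2\mathcal{L}_\mathcal{P}^{(t)}$ (via $\|\nabla\|^2 \lesssim (1-\text{logit})^2 \le \mathcal{L}$), and choosing $\eta_3 \le \kappa/64$ absorbs it, yielding
\[
\frac{1}{T_p-T_2}\sum_{t>T_2}\mathcal{L}_\mathcal{P}^{(t)} \le \frac{\|\mathbf{W}^{(T_2)}-\mathbf{W}^*\|_F^2}{(2-\kappa)\eta_3(T_p-T_2)} + (1+\kappa)(H+2\epsilon).
\]
This Perceptron/online-learning style argument is the missing ingredient; replace your smoothness-based descent with it.
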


Propositions \ref{prop: attention} and \ref{prop: mlp} elucidate the transformer's functional division: self-attention filters irrelevant context (visualized in Appendix \ref{app: attention visual}), while the MLP memorizes the filtered features. 
The stages unfold as follows: 
Stage \uppercase\expandafter{\romannumeral1} rapidly establishes attention patterns to filter out noise with a relatively large learning rate $\eta_1$;
Stage \uppercase\expandafter{\romannumeral2} develops subject and relation features with a moderate learning rate $\eta_2$;
Stage \uppercase\expandafter{\romannumeral3} refines the loss to near-optimality with a small learning rate $\eta_3$ (and thus small $\kappa$). 
This progression underscores the importance of learning rate scheduling in optimization.

\paragraph{Fine-Tuning.}
During the \ac{ft} phase, the transformer integrates the Q\&A format embedding $\mathbf{p}$ into its pre-learned features.
We formalize this process below.
\begin{proposition}   \label{prop: ft gradient}
	For all $i\in\mathcal{R}$ and $j\in[N]$, when $\beta N_f > C_1|\mathcal{R}|$, the gradient for $\mathbf{W}$ in the first iteration of \ac{ft} satisfies:
    \begin{align*} \label{lemma: ft grad}
        & \circled{1}\ \left\langle \Bar{\boldsymbol{\nabla}}_{\mathcal{I}(\mathbf{r}_i)},\mathbf{p}\right\rangle = \Theta\left(\frac{\lambda}{m|\mathcal{R}|}\right), 
        & \circled{2}\ \left\langle \Bar{\boldsymbol{\nabla}}_{\mathcal{I}(\mathbf{a}_j)},\mathbf{p}\right\rangle = - \Theta\left(\frac{\lambda}{m\beta N_f}\right),\\
        & \circled{3}\ \left\langle \Bar{\boldsymbol{\nabla}}_{\mathcal{I}(\mathbf{r}_i)},\mathbf{s}_j\right\rangle = \Theta\left(\frac{\lambda}{m\beta N_f}\right),
        & \circled{4}\ \left\langle \Bar{\boldsymbol{\nabla}}_{\mathcal{I}(\mathbf{a}_j)}, \mathbf{s}_j\right\rangle =-\Theta\left(\frac{\lambda}{m\beta N_f}\right),
	\end{align*}
    where $\Bar{\boldsymbol{\nabla}}_{\mathcal{I}(\cdot)} = \frac{1}{m}\sum_{l=1}^m \nabla_{\mathbf{w}^{(T_p)}_{\mathcal{I}(\cdot),l}} \mathcal{L}_{\mathcal{Q}_s}(\mathbf{W}^{(T_p)},\mathbf{Z}^{(T_p)})$.
\end{proposition}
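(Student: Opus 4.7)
The key observation is that the format token embedding $\mathbf{p}$ never appears during pre-training, so the pre-trained attention module treats it as a near-orthogonal direction. I would first argue that $\mathbf{Z}^{(T_p)}\mathbf{p}=\mathbf{Z}^{(0)}\mathbf{p}$ up to terms of order $\sigma_0\sqrt{d}$, because every pre-training update to $\mathbf{Z}$ is a linear combination of outer products formed from $\mathbf{o}, \mathbf{d}, \mathbf{s}_j, \mathbf{r}_i, \mathbf{a}_j$, all orthogonal to $\mathbf{p}$. Under Condition \ref{condition: condition} this residual is $o(1)$, so both attention logits for the input $[\mathbf{s}_j\:\:\mathbf{p}]$ are negligible and $\alpha_1\approx\alpha_2\approx 1/2$; together with the residual connection this yields $\mathbf{x}_a(\mathbf{Z}^{(T_p)},[\mathbf{s}_j\:\:\mathbf{p}])\approx \tfrac{1}{2}\mathbf{s}_j+\tfrac{3}{2}\mathbf{p}$.

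Next I would decompose $\mathbf{w}_{k,l}^{(T_p)}$ in the token basis. The same orthogonality argument shows its $\mathbf{p}$-component is $\tilde{\mathcal{O}}(1/d)$, while Proposition \ref{prop: mlp} and its supporting lemmas pin down the $\mathbf{s}_{j'}$- and $\mathbf{r}_{i'}$-components: for $k=\mathcal{I}(\mathbf{r}_i)$ the neurons that were active during the pre-training task $\mathbf{o}\mathbf{s}_{j'}\rightarrow\mathbf{r}_i$ carry $\Theta(1/\lambda)$ mass along $\mathbf{s}_{j'}$ for every $j'$ with $i\in\mathcal{R}_s(j')$, and for $k=\mathcal{I}(\mathbf{a}_{j'})$ analogous mass is spread over $\mathbf{s}_{j'}$ and the $\mathbf{r}_{i'}$ with $i'\in\mathcal{R}_s(j')$. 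Plugging $\mathbf{x}_a\approx\tfrac{1}{2}\mathbf{s}_j+\tfrac{3}{2}\mathbf{p}$ into the pre-activations shows that the ReLU activates essentially the same neurons as on the pre-training input $[\mathbf{o}\:\:\mathbf{s}_j]$ (rescaled by $1/4$), and a direct computation yields $\textnormal{logit}_{\mathcal{I}(\mathbf{r}_i)}=\Theta(1/K)$ when $i\in\mathcal{R}_s(j)$ (and negligible otherwise) and $\textnormal{logit}_{\mathcal{I}(\mathbf{a}_j)}=\Theta(1)$.

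With these pieces in hand the four projections follow from the chain rule
\[\nabla_{\mathbf{w}_{k,l}}\mathcal{L}_{\mathcal{Q}_s}=\frac{\lambda}{m|\mathcal{Q}_s|}\sum_{(\mathbf{X},y)\in\mathcal{Q}_s}\bigl(\textnormal{logit}_k(\mathbf{X})-\mathbb{I}[k=y]\bigr)\,\sigma'\bigl(\mathbf{w}_{k,l}^\top\mathbf{x}_a(\mathbf{X})\bigr)\,\mathbf{x}_a(\mathbf{X}),\]
combined with $\langle\mathbf{x}_a(\mathbf{X}_{j'}),\mathbf{p}\rangle=3/2$, $\langle\mathbf{x}_a(\mathbf{X}_{j'}),\mathbf{s}_j\rangle=\tfrac{1}{2}\mathbb{I}[j'=j]$, and $|\mathcal{Q}_s|=\beta N_f$. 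For \circled{1}, every sample contributes to the $\mathbf{p}$-projection, and averaging $\textnormal{logit}_{\mathcal{I}(\mathbf{r}_i)}$ over the $\beta N_f$ randomly chosen frequent subjects (a fraction $K/|\mathcal{R}|$ of which have $i\in\mathcal{R}_s(j')$, each contributing $\Theta(1/K)$) produces the $1/|\mathcal{R}|$ factor; the hypothesis $\beta N_f\ge C_1|\mathcal{R}|$ is precisely what makes this empirical average concentrate. For \circled{2}, the indicator $\mathbb{I}[k=y]=1$ on only the single sample $(\mathbf{s}_j,\mathbf{p})$, producing the negative $\Theta(1/(\beta N_f))$ rate. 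For \circled{3}--\circled{4}, the $\mathbf{s}_j$-projection kills every sample with $j'\neq j$ by orthogonality, leaving only the single sample with subject $j$ and giving $\pm\Theta(1/(\beta N_f))$ with signs inherited from the logit residual.

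The main obstacle lies in the second step: rigorously tracking the coefficient of each token direction in $\mathbf{w}_{k,l}^{(T_p)}$ after the full three-stage pre-training schedule, and verifying that the ReLU pre-activations on the FT input $[\mathbf{s}_j\:\:\mathbf{p}]$ share the same sign pattern as on the PT input $[\mathbf{o}\:\:\mathbf{s}_j]$ even though the attention output has shrunk fourfold. This requires showing that the $\mathbf{s}_j$-component of every active neuron dominates cross-talk from other-direction components acquired during training, which in turn hinges on the orthogonality of the token embeddings and on concentration of the empirical average over $\mathcal{Q}_s$, the latter being precisely what the threshold $\beta N_f\ge C_1|\mathcal{R}|$ guarantees.
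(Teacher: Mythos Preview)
Your proposal is correct and follows essentially the same approach as the paper: you exploit that $\mathbf{p}$ is orthogonal to every pre-training update (so the attention on $[\mathbf{s}_j\:\:\mathbf{p}]$ is near-uniform and $\mathbf{x}_a\approx\tfrac12\mathbf{s}_j+\tfrac32\mathbf{p}$), invoke the pre-trained weight estimates (the paper's Lemma \ref{lemma: s mag} and Lemma \ref{lemma: pattern 1}) to compute the logits, and then read off the four projections from the chain rule, using the concentration of the empirical average over $\mathcal{Q}_s$ (the paper's Lemma \ref{lemma: select_num}) for \circled{1}. The obstacle you flag---that the ReLU pattern on the FT input coincides with the PT pattern up to the $1/4$ rescaling---is exactly what the paper handles via its activation-pattern lemmas, and your identification of $\beta N_f\ge C_1|\mathcal{R}|$ as the concentration threshold is the same mechanism the paper uses.
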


Proposition \ref{prop: ft gradient} reveals that when the \ac{ft} dataset size $\beta N_f$ is much larger than $|\mathcal{R}|$, the gradient exhibits low-rank structure, with $\mathbf{p}$ components (\circled{1}, \circled{2}) dominating others (\circled{3}, \circled{4}), explaining low-rank FT efficacy. 
As gradients prioritize $\mathbf{p}$, relation neurons swiftly incorporate the format (\circled{1} prevails), enabling generalization on unseen Q\&A pairs.

\begin{remark}
    FT may induce overfitting to the format token $\mathbf{p}$: 
    When the number of \ac{ft} iterations satisfies $T_f = \tilde{\Omega}\left(\frac{\lambda\beta N_f}{|\mathcal{R}|}\right)$, feature increments $\left\langle\Bar{\boldsymbol{\nabla}}_{\mathcal{I}(\mathbf{a}_j)},\mathbf{p}\right\rangle$ accumulate, potentially yielding $\sum_{l=1}^m\left\langle \mathbf{w}_{\mathcal{I}(\mathbf{a}_i),l}^{(T_p+T_f)},\mathbf{p}\right\rangle/m = \Omega\left(\sum_{l=1}^m\left\langle \mathbf{w}_{\mathcal{I}(\mathbf{a}_j),l}^{(T_p+T_f)},\mathbf{s}_j\right\rangle/m\right)$ for some unseen $i \in [N]$. 
    Appendix \ref{app: ft steps} visualizes accuracy versus FT steps.
\end{remark}

\section{Experiments}   \label{sec:experiments}
In this section, we empirically validate our theoretical results through experiments on both synthetic and real-world datasets.
\begin{figure}[h] 
	\centering
\begin{subfigure}{0.49\linewidth}
	\centering
	\includegraphics[width=0.8\linewidth]{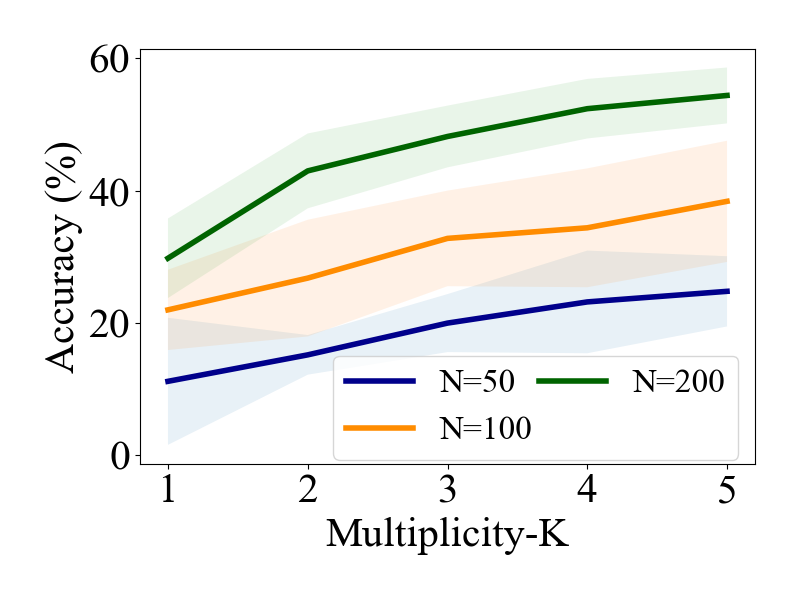}
	\caption{Impact of the number of subject-answer pairs $N$.}
	\label{fig:nks}
\end{subfigure}
	 \hfill
    \begin{subfigure}{0.49\linewidth}
	\centering
	\includegraphics[width=0.8\linewidth]{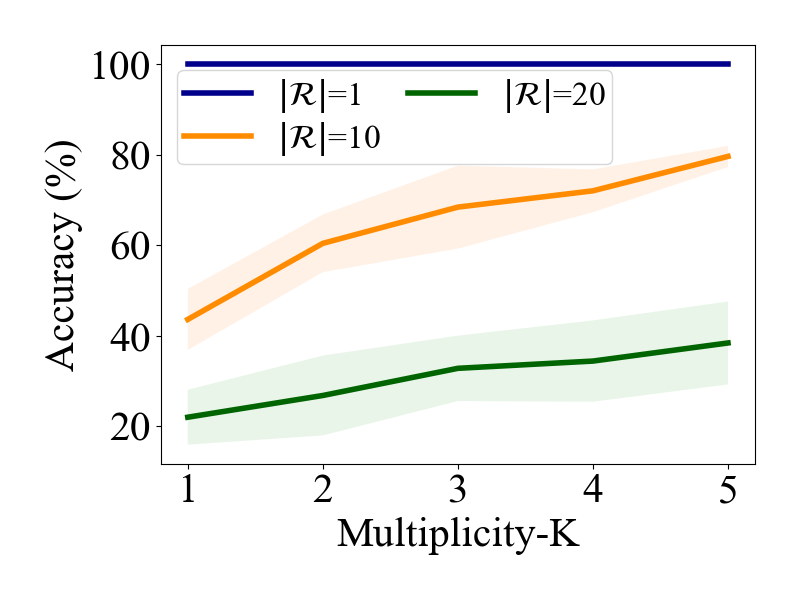}
	\caption{Impact of the size of relation phrases $|\mathcal{R}|$.}
	\label{fig:rks}
\end{subfigure}
\caption{Generalization accuracy of our simplified
one-layer transformers with 5-token data. In panel (b), the \(|\mathcal R|=1\) horizontal line is a reference baseline evaluated at \(K=1\), since sampling \(K\) distinct templates without replacement requires \(K\le |\mathcal R|\).}

\end{figure}
\begin{figure}
	\centering
	\begin{subfigure}{0.49\linewidth}
		\centering
		\includegraphics[width=0.8\linewidth]{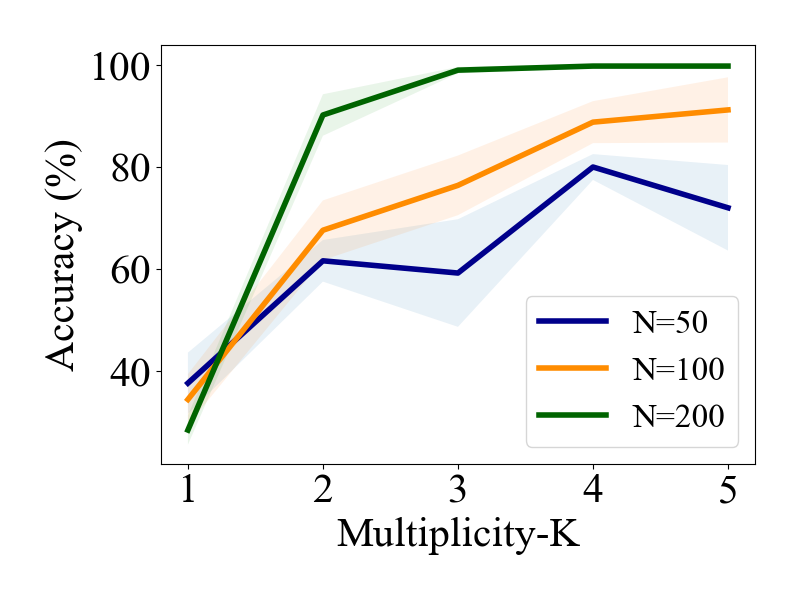} 
		\caption{Impact of the number of subject-answer pairs $N$.}
		\label{fig:nkg}
	\end{subfigure}
    \hfill
	\begin{subfigure}{0.49\linewidth}
		\centering
		\includegraphics[width=0.8\linewidth]{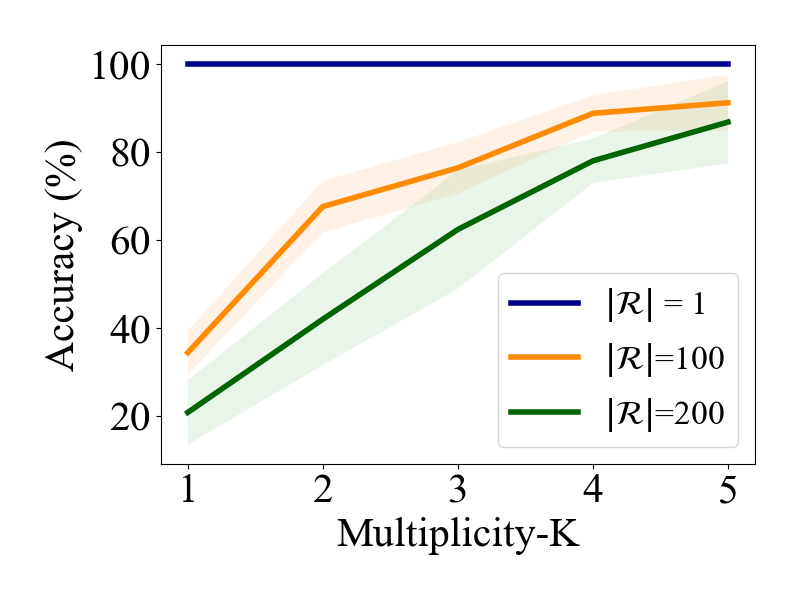}
		\caption{Impact of the size of relation phrases $|\mathcal{R}|$.}
		\label{fig:rkg}
	\end{subfigure}
    \caption{Generalization accuracy of GPT-2 with 5-token data.}
\end{figure}

We generate a synthetic dataset adhering strictly to the process outlined in Section \ref{sec:data}. 
Experiments are conducted using two architectures: 
(1) our simplified one-layer transformer (Section \ref{sec: model}) 
and (2) a standard 12-layer, 12-head GPT-2 model \citep{radford2019language}. 
Additional experimental details are provided in Appendix \ref{app: experiemntal settings}. 
\textit{Notably, both models display similar performance trends, indicating that our simplified framework captures key dynamics of modern large language models (LLMs).}

As illustrated in Figures \ref{fig:nks} and \ref{fig:nkg}, generalization accuracy improves with increasing pre-training multiplicity $K$ and fine-tuning dataset size $\beta N_f$. 
Furthermore, Figures \ref{fig:rks} and \ref{fig:rkg} show that accuracy declines as the size of the universal relation set $|\mathcal{R}|$ grows. 
\textit{These trends align precisely with our theoretical predictions (Theorem \ref{theorem: main}).}

\subsection{Real-World Dataset}
To further assess the applicability of our theory to practical scenarios, we conducted experiments on modern LLMs (GPT-2 and Llama-3.2-1B \citep{grattafiori2024llama}) with PopQA dataset \citep{mallen2022not}.

\paragraph{Pre-Train and Fine-Tune GPT-2.}
We pre-train GPT-2 from scratch and subsequently fine-tune it on a curated subset of PopQA focused on movies and their directors. 
Pre-training data incorporates multiple phrasings per fact (e.g., "[movie] was directed by [director]" or "[movie] hails from [director]"), simulating multiplicity $K$. 
Fine-tuning uses a question-answering format (e.g., "Question: Who is the director of [movie]? Answer:"). 
Data examples and training details are in Appendices \ref{app: director data} and \ref{app: qa details}, respectively.
Results in Figure \ref{fig:qa} show that exact match accuracy on unseen facts rises with both fine-tuning dataset size and multiplicity $K$, \textit{corroborating Theorem \ref{theorem: main}}.

\begin{figure}[H]
    \centering 
    \begin{minipage}{0.48\textwidth}
        \centering
	\includegraphics[width=0.8\linewidth]{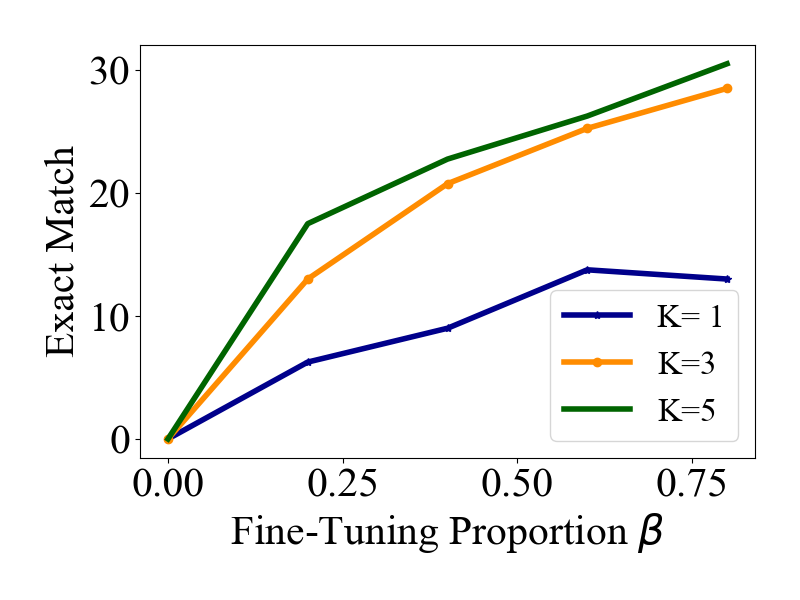}
        \captionof{figure}{Exact match accuracy of pre-trained and fine-tuned GPT-2 on PopQA.}
	\label{fig:qa}
    \end{minipage}
    \hfill 
    \begin{minipage}{0.48\textwidth}
        \centering
        \captionof{table}{Exact match accuracy and F1 scores of Llama-3.2-1B fine-tuned with frequent and rare facts.}
        \label{tab:mytable}
	\begin{tabular}{ccc}
        \toprule
		\textbf{FT Data} & \textbf{Exact Match} & \textbf{F1 score}\\
		\midrule
		Frequent & 26.36 & 28.68\\
		Rare & 20.93 & 22.74\\
        \bottomrule
	\end{tabular}
    	\label{table: performance}
    \end{minipage}
\end{figure}

\paragraph{Fine-Tune Llama-3.2-1B.} 
We further test our insights by fine-tuning the pre-trained Llama-3.2-1B model on a PopQA subset concerning capitals, formatted as questions (e.g., "Question: Where is the capital of [location]? Answer:"). 
Details are in Appendices \ref{app: capital data} and \ref{app: table details}.
Table \ref{table: performance} reveals superior exact match accuracy and F1 scores when fine-tuning on high-frequency data (higher Wikipedia page views) compared to low-frequency data, \textit{affirming our theoretical conditions for effective knowledge extraction.}


\section{Conclusions}\label{sec: conclusions}
In this paper, we develop a theoretical framework to elucidate the mechanisms by which transformers acquire and extract knowledge. 
Our analysis demonstrates that transformers can attain near-optimal training loss, thereby enabling effective knowledge acquisition during pre-training. 
Furthermore, we delineate precise conditions under which transformers exhibit strong generalization following full or low-rank fine-tuning, thus facilitating robust knowledge extraction.
Empirical validations on synthetic datasets, as well as the real-world PopQA dataset using large language models such as GPT-2 and Llama-3.2-1B, corroborate our theoretical insights.

Although this work establishes foundational principles for knowledge acquisition and extraction in transformers, our derivations rely on idealized assumptions, including orthogonal embeddings and a block-diagonal structure for the second MLP layer, to mitigate feature interference. 
Contemporary multi-layer architectures likely encompass more intricate dynamics beyond those examined here. 
As such, investigating the roles of advanced components, such as multiple layers, attention heads, and optimizers like AdamW, in these processes remains a compelling avenue for future research.

\bigskip

\bigskip

\bibliography{ref}

\newpage
\tableofcontents

\appendix
\newpage
\crefalias{section}{appendix}

\allowdisplaybreaks

\section{An \ac{ntp} experiment}\label{app: table}
In this appendix section, we present additional experimental results for one-layer attention-only transformers.
To replicate the realistic next-token prediction and isolate the effect of the contextual noise, we use a 3-token 'subject-relation-answer' dataset as follows.

We conduct experiments with the following data distribution.
We begin with a subject-answer set $\mathcal{A}$ containing $N$ unique subject-answer pairs.
Each pair associates with $K$ relation phrases.
This setup mirrors data augmentation techniques like "\textsl{Multi}-$K$" in \cite{allen2023physics1}, where each fact is presented through multiple templates (relation phrases).
The generation process for sentences in $\mathcal T$ is as follows:

\begin{mdframed}
Let $\mathcal{R}$ be a predefined set of relation phrase embeddings. 
    For each subject-answer pair $(\mathbf{s}_j,\mathbf{a}_j) \in \mathcal{A}$:
\begin{enumerate}
	\item Sample a subset $\tilde{\mathcal{R}}\subseteq\mathcal{R}$ of $K$ relation phrase embeddings uniformly at random without replacement.
	\item Generate a total of $K$ tuples by iterating through each index $i$ in the set $\tilde{\mathcal{R}}$ and constructing a corresponding tuple of the form 
    $( \mathbf{s}_j, \mathbf{r}_i,\mathbf{a}_j)$.
    
    \item Convert each tuple $( \mathbf{s}_j, \mathbf{r}_i,\mathbf{a}_j)$ to \ac{ntp} pre-training samples, $(\mathbf{s}_j,\mathcal{I}(\mathbf{r}_i))$ and $([\mathbf{s}_j\:\: \mathbf{r}_i],\mathcal{I}(\mathbf{a}_j))$.
    \item Randomly select half tuples from $\mathcal{T}$ and convert each of them to fine-tuning samples, $([\mathbf{s}_j\:\: \mathbf{p}],\mathcal{I}(\mathbf{a}_j))$.
\end{enumerate}
\end{mdframed}
Based on the sentence set $\mathcal{T}$, we construct the \textbf{pre-training dataset} as follows:
\begin{mdframed}
    For each $(\mathbf{s}_j, \mathbf{r}_i,\mathbf{a}_j) \in \mathcal{T}$:
\begin{enumerate}
    \item Convert each tuple $( \mathbf{s}_j, \mathbf{r}_i,\mathbf{a}_j)$ to \ac{ntp} pre-training samples, $(\mathbf{s}_j,\mathcal{I}(\mathbf{r}_i))$ and $([\mathbf{s}_j\:\: \mathbf{r}_i],\mathcal{I}(\mathbf{a}_j))$.
\end{enumerate}
\end{mdframed}
We construct the \textbf{fine-tuning dataset} and the test dataset as:
\begin{mdframed}
    For each $(\mathbf{s}_j, \mathbf{r}_i,\mathbf{a}_j) \in \mathcal{T}$:
\begin{enumerate}
    \item Randomly select half tuples from $\mathcal{T}$ and convert each of them to fine-tuning samples, $([\mathbf{s}_j\:\: \mathbf{p}],\mathcal{I}(\mathbf{a}_j))$.
    \item Convert each of the remaining tuples to test samples, $([\mathbf{s}_j\:\: \mathbf{p}],\mathcal{I}(\mathbf{a}_j))$.
\end{enumerate}
\end{mdframed}

We assume all token embeddings—$\mathbf{s}_j, \mathbf{r}_i, \mathbf{a}_j$, for all $j\in[N]$ and $i\in \mathcal{R}$ are generated from random Gaussian distributions $\mathcal{N}(0,\sigma^2\mathbf{I})$.
We set the parameters as $N = 1000, K=5, |\mathcal{R}|=5, \sigma = 1, d= 50$.
We pre-train the transformers (‘standard' and 'Uniform attention + MLP') with AdamW optimizer with learning rate $\eta \in \{1e-5, 1e-4, 1e-3\}$, weight decay parameter 0.1, and full batch. 
We fine-tune the pre-trained transformers (‘standard' and 'Uniform attention + MLP') with Adam optimizer, using a learning rate $\eta \in \{1e-5, 1e-4\}$ and a full batch. 
We pre-train the simplified transformer with the GD optimizer with a learning rate of 0.5. We fine-tune the pre-trained simplified transformer using the GD optimizer with a learning rate of $\eta_f = 8$.

Performance of the trained models (until convergence) is shown in Table \ref{table: ood1}. 
Table \ref{table: ood1} shows that attention-only transformers fail to acquire knowledge (achieve near-optimal training loss), implying that \ac{mlp} is necessary for transformers to acquire knowledge during pre-training. 
Notably, even 'uniform attention + MLP' can acquire and extract knowledge under this three-token dataset.
To explore the function of the self-attention module, we extend the above three-token dataset to a five-token 'context-subject-relation-answer' dataset, described in Section \ref{sec:data}, under which 'uniform attention + MLP' fails to acquire and extract knowledge while our simplified transformer can.
\begin{table}[h]
	\centering
    \caption{Training loss and generalization \textsl{arg max} accuracy on the above three-token 'subject-relation-answer' dataset. We put the cross-entropy loss lower bound, the data entropy, in the bracket. The simplified model details are provided in Sections \ref{sec: model}. We do not report the accuracy when the training loss is large, as the extraction accuracy is somehow meaningless when the model cannot acquire the knowledge.}
	\begin{tabular}{ccc}
        \toprule
		\textbf{Architecture} & \textbf{Training loss} & \textbf{Accuracy} \\
		\midrule
		Self-attention + MLP (Standard) &  0.8064 (0.8047) & 100\% \\
		Uniform attention + MLP &  0.8068 (0.8047) & 100\% \\
		Attention-only & 5.5875 (0.8047) & - \\
        \rowcolor{gray!20}
        Simplified self-attention + MLP (Ours) &  0.8200 (0.8047) & 99\% \\
        \bottomrule
	\end{tabular}
	\label{table: ood1}
\end{table}

This capability derives from the extensive parameter count of the MLP, which offers an advantage of order $m$ over attention-only transformers \citep{nichani2024transformers}.

\section{Additional related works}\label{app: related_work}
\begin{table}[h]
	\footnotesize
	\centering
	\caption{Comparison with existing theoretical works on transformer training dynamics. 
		Given a sequence '$a,b,c$', NTP constructs a training pair for every adjacent position, $a\rightarrow b$ and $ab\rightarrow c$, whereas "seq"-prefix prediction treats the entire prefix as context, producing a single pair $ab\rightarrow c$. 
	}
	\begin{tabular}{cccccccc}
		\toprule
		&\textbf{Task}&\textbf{Characterization} & \textbf{Obj.}& \textbf{PT}& \textbf{FT} & \textbf{Generalization}\\
		\midrule
		\rowcolor{gray!20}
		Our Paper& Knowledge & \!\!Full dynamics\!\! &NTP &$\usym{2713}$&$\usym{2713}$ &$\usym{2713}$ \\
		\midrule
		\cite{nichani2024understanding,zhu2024towards}& \multirow{2}{*}{Knowledge} & \multirow{2}{*}{\!\!Full dynamics\!\!} &\multirow{2}{*}{seq} &\multirow{2}{*}{$\usym{2713}$}&\multirow{2}{*}{-}& \multirow{2}{*}{-}\\
		\cite{cabannes2024learning,tian2023scan}&  & & & & & \\
		\midrule
		\cite{ghosal2024understanding}& Knowledge & \!\!Gradients\!\! & seq &-&$\usym{2713}$&-\\
		\midrule
		\cite{wang2025learning}& In-context &\!\!Layerwise dynamics\!\!&seq&$\usym{2713}$&-&-\\
		\midrule
		\cite{nichani2024transformers}& In-context &\!\!Layerwise dynamics\!\!&seq&$\usym{2713}$&-&$\usym{2713}$\\
		\midrule
		\cite{ahn2023transformers,mahankali2023one}& In-context & \!\!Critical points\!\! & seq &$\usym{2713}$&-& - \\
		\midrule
		\cite{bietti2023birth}& In-context & \!\!Gradients\!\! & seq &$\usym{2713}$&-&- \\
		\midrule
		\cite{zhang2024trained}& In-context & \!\!Full dynamics\!\! & seq &$\usym{2713}$&-&$\usym{2713}$ \\
		\midrule
		\cite{huang2023context,zhang2024trained}& In-context & \!\!Gradients\!\! & seq &$\usym{2713}$&-&- \\
		\midrule
		\cite{jelassi2022vision}& Img. cls. & \!\!Full dynamics\!\! & - &$\usym{2713}$&$\usym{2713}$&- \\
		\midrule
		\cite{sakamoto2024benign}& Img. cls. & \!\!Full dynamics\!\! & - &$\usym{2713}$&-&-\\
		\midrule
		\cite{wang2024transformers}& Others & \!\!Full dynamics\!\! & - &$\usym{2713}$&-&- \\
		\midrule
		\cite{tian2023joma}& Others & \!\!Full dynamics\!\! & - &$\usym{2713}$&-&- \\
		\bottomrule
	\end{tabular}
	
	\label{table: comparison}
\end{table}
\begin{table}[h]
	\footnotesize
	\centering
	\caption{Extension of Table \ref{table: comparison}.
	}
	\begin{tabular}{ccccc}
		\toprule
		&\textbf{Task}&\textbf{Characterization} & \textbf{Architectures} \\
		\midrule
		\rowcolor{gray!20}
		Our Paper& Knowledge & \!\!Full dynamics\!\! &\!1-layer self-attention + MLP\\
		\midrule
		\cite{nichani2024understanding,zhu2024towards}& \multirow{2}{*}{Knowledge} & \multirow{2}{*}{\!\!Full dynamics\!\!}  &\multirow{2}{*}{\!1-layer linear or attention-only}\\
		\cite{cabannes2024learning,tian2023scan}&  & &  \\
		\midrule
		\cite{ghosal2024understanding}& Knowledge & \!\!Gradients\!\! & \!1-layer attention-only\!\\
		\midrule
		\cite{wang2025learning}& In-context &\!\!Layerwise dynamics\!\!&\!$L$-layer ($L\ge 2$) attention-only\!\\
		\midrule
		\cite{nichani2024transformers}& In-context &\!\!Layerwise dynamics\!\!&\!2-layer attention-only\!\\
		\midrule
		\cite{ahn2023transformers,mahankali2023one}& In-context & \!\!Critical points\!\!& Linear\!\\
		\midrule
		\cite{bietti2023birth}& In-context & \!\!Gradients\!\! & \!2-layer attention-only\!\\
		\midrule
		\cite{zhang2024trained}& In-context & \!\!Full dynamics\!\! & \!1-layer linear attention-only\!\\
		\midrule
		\cite{huang2023context,zhang2024trained}& In-context & \!\!Gradients\!\!& \!1-layer attention-only\!\\
		\midrule
		\cite{jelassi2022vision}& Img. cls. & \!\!Full dynamics\!\! & \!1-layer attention-only\!\\
		\midrule
		\cite{sakamoto2024benign}& Img. cls. & \!\!Full dynamics\!\! & \!1-layer attention-only\!\\
		\midrule
		\cite{wang2024transformers}& Others & \!\!Full dynamics\!\! & \!1-layer attention-only\!\\
		\midrule
		\cite{tian2023joma}& Others & \!\!Full dynamics\!\! &\! 1-layer uniform attention + MLP\!\\
		\bottomrule
	\end{tabular}
	
	\label{table: comparison2}
\end{table}

\paragraph{Memorization of transformers.}
Many recent works have studied the memorization capability of transformers. 
A line of work studied the memorization capacity of transformers.
\citet{yun2019transformers} demonstrated that transformers are universal approximators.
\citet{kim2023provable} proved that transformers can memorize sequence mappings of length-$n$ $d$-dimensional inputs with $\tilde{O}(d + n +\sqrt{nN})$ parameters.
Later, \citet{kajitsuka2024optimal} proved lower and upper bounds of the memorization capacity of transformers in next token prediction and sequence-to-sequence settings.
Additionally, \citet{mahdavi2023memorization} studied the memorization capacity of multi-head attention-only transformers.
\citet{madden2024next} proved upper and lower bounds of memorization capacity of one-layer transformers in next token prediction.
Another line of work studied associative memories in transformers.
Some works showed that associative memories scale with model sizes for linear models \citep{cabannes2023scaling} and one-layer transformers \citep{nichani2024understanding}.
Despite the powerful capacity shown in the above papers, it is unclear how the transformer learns to store and extract knowledge.
In this paper, we answer this question by analyzing the training dynamics of transformers.

\paragraph{Feature learning.}
A line of work theoretically studied the training dynamics of neural networks by analyzing the feature learning process.
Feature learning demonstrates benign overfitting over several data structures \citep{kou2023benign,xu2025rethinking} and reveals the benefits of knowledge distillation \citep{allen2020towards}, data augmentations \citep{zou2023benefits,shen2022data}, and mixture of experts \citep{chen2022towards}.
However, existing feature learning studies are currently limited to two-layer neural networks and attention-only transformers \citep{jiang2024unveil,jelassi2022vision} in image classification tasks. 
In this paper, we develop a new theoretical framework that incorporates self-attention and \ac{mlp} modules.

\paragraph{Knowledge acquisition and extraction of transformers.}
Many studies explored the mechanistic interpretability of knowledge acquisition and extraction of transformers.
A line of work tries to identify knowledge circuits in pre-trained \acp{llm}.
For example, \citet{geva2020transformer} studied the function of \ac{mlp} and highlighted that \acp{mlp} are key-value memories capturing input patterns.
Based on this, \citet{meng2022locating} localized factual associations in \acp{mlp} and proposed a knowledge editing approach. 
In contrast to knowledge circuits, \citet{ghosal2024understanding} investigated the impact of subject entity frequency on knowledge extraction through fine-tuning, demonstrating that increasing frequency enhances the factuality of downstream tasks.
\citet{allen2023physics1} designed controlled experiments to empirically identify the conditions under which pre-trained transformer models can perform generalizations, highlighting the importance of knowledge augmentations.
In our paper, we \emph{theoretically} prove some of their findings and characterize conditions for transformers to perform generalization.

\section{Attention score visualization}\label{app: attention visual}
We visualize the attention heatmap in Figure \ref{fig:attn_pt}. 
It verifies the characterization of the training dynamics in Section 6, which shows that attention scores on $\mathbf{o}$ are small and attention scores on subject and relation tokens are comparable.
\begin{figure}[H]
    \centering
    \includegraphics[width=0.6\linewidth]{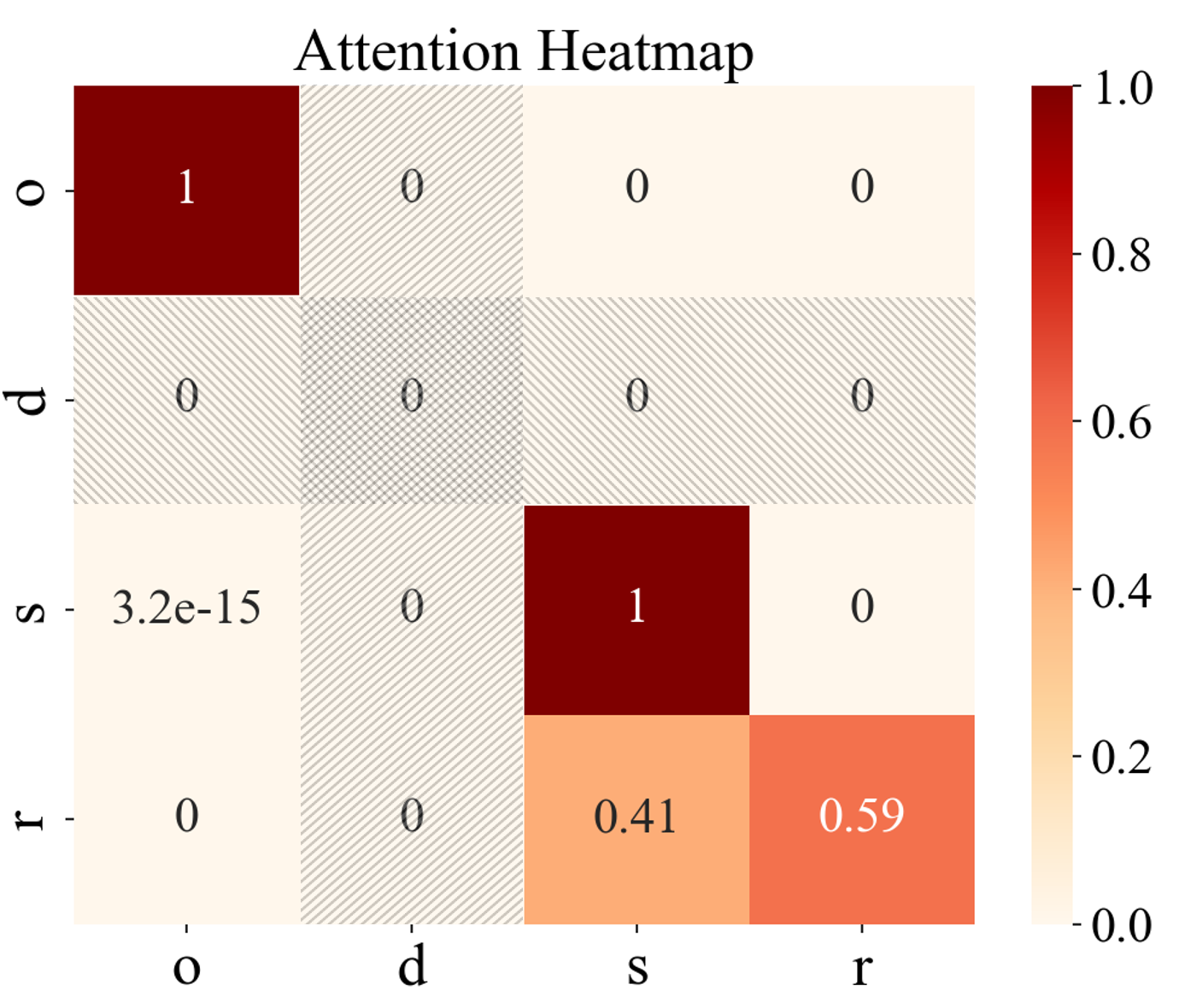}
    \caption{Attention heatmap after pre-training.}
    \label{fig:attn_pt}
\end{figure}

\section{Discussion of the tightness of low-rank FT}\label{app:low-rank}
For low-rank FT, the extra $m|\mathcal{R}|$ factor in the current upper bound comes
from controlling the gap between the full FT gradient and its best rank-1 approximation, so we
believe the dependence is likely loose. This is consistent with the intuition from Proposition 7 and with our additional experiments (Table 5), which show that low-rank FT behaves similarly to full FT.

\begin{table}[h]\label{tab:low-rank}
\centering
\caption{Comparison of full-rank and low-rank fine-tuning with $|\mathcal{R}|=30$. The two methods achieve nearly identical generalization accuracy across fine-tuning data sizes.}
\begin{tabular}{lccccc}
\hline
Fine-tuning data size & 5 & 10 & 15 & 20 & 40 \\
\hline
Full-rank FT generalization accuracy & 20\% & 38\% & 61\% & 91\% & 100\% \\
Low-rank FT generalization accuracy & 21\% & 38\% & 61\% & 91\% & 100\% \\
\hline
\end{tabular}
\label{table: low}
\end{table}

\section{Generalization accuracy and FT steps}\label{app: ft steps}
The visualization results for generalization accuracy and FT steps are shown in Figure \ref{fig:ft steps}. 
The results show that the accuracy after FT may decrease after a large number of training steps due to overfitting, indicating the effectiveness of early stopping in FT.
\begin{figure}[H]
    \centering
    \includegraphics[width=0.6\linewidth]{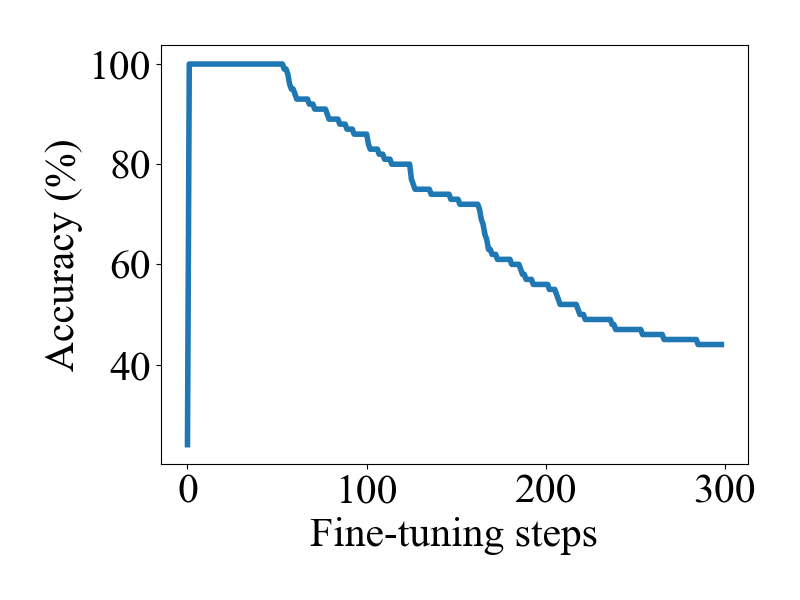}
    \caption{Generalization accuracy versus FT steps.}
    \label{fig:ft steps}
\end{figure}

\section{Preliminaries}\label{app: proof lemma}
This section provides the preliminaries essential for the proofs of our main results. We begin by deriving the explicit form of the model gradients with respect to its parameters and then present several supporting lemmas.

\subsection{Gradient computation}
For simplicity, we first rewrite the model output $\mathbf{x}^\text{output}$ in (\ref{equ: output}) in the following form:
\begin{align}
	x^\text{output}_i(\mathbf{W},\mathbf{Z},\mathbf{X}) = \frac{\lambda}{m}\sum_{k=1}^{m} \sigma(\langle\mathbf{w}_{i,k} , \mathbf{x}_a(\mathbf{Z},\mathbf{X})\rangle),
\end{align}
where $\mathbf{w}_{i,k}$ is the $k^{th}$ neuron associated with the $i^{th}$ element of the final output in the MLP.

We denote 
\begin{align}
    \textbf{logit}^{(t)}(\mathbf{X}) = \text{softmax}(\mathbf{x}^\text{output}(\mathbf{W}^{(t)},\mathbf{Z}^{(t)},\mathbf{X})).
\end{align}

Next, we present the model gradients in the following lemma.
\begin{lemma}\label{lemma: gradient}
	Under the simplified model architecture, the gradients of model parameters $\mathbf{W}$ and $\mathbf{Z}$ with respect to an example $(\mathbf{X}, y)$ with $\left\|\mathbf{X}[-1]\right\|_2 = 1$ are
	\begin{equation}
		\begin{aligned}
			&\nabla_{\mathbf{w}_{i,k}} \mathcal{L}(\mathbf{W},\mathbf{Z},\mathbf{X}, y) = -\frac{\lambda}{m}\left(\mathbb{I}(i=y)-\textnormal{logit}_i(\mathbf{X})\right)\sigma'(\mathbf{w}_{i,k}^\top\mathbf{x}_a)\mathbf{x}_a,
		\end{aligned}
	\end{equation}
	and
	\begin{equation}
		\begin{aligned}
			& \left(\nabla_{\mathbf{Z}} \mathcal{L}(\mathbf{W},\mathbf{Z},\mathbf{X}, y)\mathbf{X}[-1]\right)^\top\\
			=& -\frac{\lambda}{\sqrt{d}}\left(\mathbf{e}_y\!-\textbf{logit}(\mathbf{X})\right)^\top	\!\mathbf{W}_2\!\begin{bmatrix}
				\sigma'(\mathbf{w}_{1,1}^\top\mathbf{x}_a(\mathbf{Z},\mathbf{X}))\mathbf{w}_{1,1}\\
				\vdots\\
				\sigma'(\mathbf{w}_{1,m}^\top\mathbf{x}_a(\mathbf{Z},\mathbf{X}))\mathbf{w}_{1,m}\\
				\vdots\\
				\sigma'(\mathbf{w}_{d,m}^\top\mathbf{x}_a(\mathbf{Z},\mathbf{X}))\mathbf{w}_{d,m}
			\end{bmatrix}\!
			\mathbf{X}\!\left(\textnormal{diag}(\boldsymbol{\alpha}(\mathbf{Z},\mathbf{X}))\!-\!\boldsymbol{\alpha}(\mathbf{Z},\mathbf{X})\boldsymbol{\alpha}(\mathbf{Z},\mathbf{X})^\top\right)\!\mathbf{X}^\top.
		\end{aligned}
	\end{equation}
\end{lemma}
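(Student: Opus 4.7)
The lemma is established by a straightforward chain-rule backpropagation through the simplified architecture described in Section~\ref{sec: model}. The plan is to organize the calculation cleanly, treating the two parameter blocks separately.

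\textbf{Gradient with respect to $\mathbf{w}_{i,k}$.} The fixed block structure of $\mathbf{W}_2$ together with the output scaling $\lambda\mathbf{I}$ lets me rewrite the $i$-th output coordinate in the scalar form $(\mathbf{x}^{\textnormal{output}})_i = (\lambda/m)\sum_{k=1}^{m}\sigma(\mathbf{w}_{i,k}^\top\mathbf{x}_a)$, so each neuron $\mathbf{w}_{i,k}$ affects only the $i$-th output coordinate. The standard softmax/cross-entropy identity supplies $\partial\mathcal{L}/\partial(\mathbf{x}^{\textnormal{output}})_j = \textnormal{logit}_j(\mathbf{X}) - \mathbb{I}(j=y)$, and an elementary differentiation gives $\nabla_{\mathbf{w}_{i,k}}(\mathbf{x}^{\textnormal{output}})_i = (\lambda/m)\sigma'(\mathbf{w}_{i,k}^\top\mathbf{x}_a)\mathbf{x}_a$, with $\sigma'$ interpreted as the ReLU subgradient. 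Multiplying the two factors and absorbing the sign into $(\mathbb{I}(i=y)-\textnormal{logit}_i)$ yields the stated expression.

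\textbf{Gradient with respect to $\mathbf{Z}$.} Introduce the (masked) pre-softmax attention logits $\mathbf{u} = \mathbf{X}^\top\mathbf{Z}\mathbf{X}[-1]/\sqrt{d}$. Three Jacobians suffice: (i) the upstream sensitivity $\partial\mathcal{L}/\partial\mathbf{x}_a$, obtained by chaining back through $\lambda\mathbf{I}$, $\mathbf{W}_2$, and the ReLU MLP, whose transpose is the row vector $\lambda(\textbf{logit}(\mathbf{X})-\mathbf{e}_y)^\top\mathbf{W}_2\,\textnormal{diag}(\sigma'(\mathbf{W}\mathbf{x}_a))\mathbf{W}$; (ii) the attention-level Jacobians $\partial\mathbf{x}_a/\partial\boldsymbol{\alpha} = \mathbf{X}$ and the softmax Jacobian $\partial\boldsymbol{\alpha}/\partial\mathbf{u} = \textnormal{diag}(\boldsymbol{\alpha})-\boldsymbol{\alpha}\boldsymbol{\alpha}^\top$; and (iii) the elementary partial $\partial u_l/\partial Z_{pq} = X_{p,l}X[-1]_q/\sqrt{d}$. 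Assembling these identities writes the full gradient as the rank-one-in-columns outer product $\nabla_{\mathbf{Z}}\mathcal{L} = (\mathbf{X}\mathbf{g})(\mathbf{X}[-1])^\top/\sqrt{d}$, where $\mathbf{g}^\top$ is the explicit row vector obtained by composing (i) and (ii). Right-multiplying by $\mathbf{X}[-1]$ collapses the trailing factor to $\|\mathbf{X}[-1]\|_2^2 = 1$ by hypothesis, leaving $\mathbf{X}\mathbf{g}/\sqrt{d}$; transposing and identifying $\textnormal{diag}(\sigma'(\mathbf{W}\mathbf{x}_a))\mathbf{W}$ with the row-stacked neuron block displayed in the statement finishes the derivation.

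\textbf{Main difficulty.} There is no analytic obstacle -- everything reduces to the chain rule. The work is purely index bookkeeping: matching the $(md)\times d$ stacked-neuron matrix in the statement with the product $\textnormal{diag}(\sigma'(\mathbf{W}\mathbf{x}_a))\mathbf{W}$ written row-wise, ensuring that the softmax Jacobian sits sandwiched between the correct copies of $\mathbf{X}$ and $\mathbf{X}^\top$, using the hypothesis $\|\mathbf{X}[-1]\|_2 = 1$ to eliminate the trailing $\mathbf{X}[-1]$, and handling the ReLU derivative at the kink via the standard subgradient convention used throughout the paper. The mask in $\boldsymbol{\alpha}$ plays no role in the calculus step, since it simply forces the masked softmax entries to $0$ and leaves the Jacobian formula formally unchanged.
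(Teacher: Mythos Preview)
Your proposal is correct and follows essentially the same chain-rule backpropagation as the paper's own proof: both compute the softmax/cross-entropy derivative, propagate through $\lambda\mathbf{W}_2$ and the ReLU layer to obtain the stacked-neuron block, and use the softmax Jacobian $\textnormal{diag}(\boldsymbol{\alpha})-\boldsymbol{\alpha}\boldsymbol{\alpha}^\top$ together with the rank-one structure $\nabla_{\mathbf{Z}}\mathcal{L}=(\cdot)\mathbf{X}[-1]^\top$ (the paper phrases this via the intermediate variable $\mathbf{Z}\mathbf{X}[-1]$, you via the pre-softmax logits $\mathbf{u}$, but these are equivalent). The only cosmetic difference is that you make the use of $\|\mathbf{X}[-1]\|_2=1$ explicit, whereas the paper leaves it implicit in the final ``combining them'' step.
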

\begin{proof}
For notational simplicity, we treat all derivatives in the proof of Lemma 1 as Jacobian matrices. 
Consequently, the gradient of a scalar function is represented as a row vector.
To match the conclusions presented in Lemma \ref{lemma: gradient} (which assumes column vectors), the final gradient vectors derived here must be transposed.

Our goal is to compute the gradients of the loss function $\mathcal{L}(\mathbf{W},\mathbf{Z},\mathbf{X},y)$ with respect to $\mathbf{w}_{i,k}$ and $\mathbf{Z}$.
Using the following transformation: 
    \begin{align}\label{equ: grad_aux}
        \nabla_{\mathbf{Z}} \mathcal{L}(\mathbf{W},\mathbf{Z},\mathbf{X}, y) = \mathbf{X}[-1]\nabla_{\mathbf{Z}\mathbf{X}[-1]} \mathcal{L}(\mathbf{W},\mathbf{Z},\mathbf{X}, y).
    \end{align}
and chain rule, we can break down the gradients into several components:
	\begin{align}\label{equ: gradient1}
		&\nabla_{\mathbf{w}_{i,k}} \mathcal{L}(\mathbf{W},\mathbf{Z},\mathbf{X}, y) = \nabla_{\mathbf{x}^\text{output}} \mathcal{L}(\mathbf{W},\mathbf{Z},\mathbf{X}, y)\nabla_{\mathbf{w}_{i,k}}\mathbf{x}^\text{output},
	\end{align}
	and
	\begin{align}\label{equ: gradient2}
    \nabla_{\mathbf{Z}} \mathcal{L}(\mathbf{W},\mathbf{Z},\mathbf{X}, y)
    =& \mathbf{X}[-1]\nabla_{\mathbf{x}^\text{output}} \mathcal{L}(\mathbf{W},\mathbf{Z},\mathbf{X}, y)\nabla_{\mathbf{R}}\mathbf{x}^\text{output}\nabla_{\mathbf{x}_a}\mathbf{R}\nabla_{\mathbf{ZX}[-1]} \mathbf{x}_a,
	\end{align}
	where $\mathbf{R}\triangleq \sigma(\mathbf{W}_1\mathbf{x}_a(\mathbf{Z},\mathbf{X}))$.
	
	We then compute the components in (\ref{equ: gradient1}) and (\ref{equ: gradient2}) as follows:
	\begin{align}
		\nabla_{\mathbf{x}^\text{output}} \mathcal{L}(\mathbf{W},\mathbf{Z},\mathbf{X}, y) = 
		-\left(\mathbf{e}_{y}-\textbf{logit}(\mathbf{X})\right)^\top,
	\end{align}
	\begin{align}
		\nabla_{\mathbf{w}_{i,k}} x^\text{output}_i = \frac{\lambda}{m}\sigma'(\langle\mathbf{w}_{i,k}, \mathbf{x}_a(\mathbf{Z},\mathbf{X})\rangle)\mathbf{x}_a^\top(\mathbf{Z},\mathbf{X}),
	\end{align}
	\begin{align}
		\nabla_{\mathbf{R}}\mathbf{x}^\text{output} = \lambda\mathbf{W}_2,
	\end{align}
	\begin{align}
		\nabla_{\mathbf{x}_a}\mathbf{R} = 
		\begin{bmatrix}
			\sigma'(\mathbf{w}_{1,1}^\top\mathbf{x}_a(\mathbf{Z},\mathbf{X}))\mathbf{w}_{1,1}\\
			\vdots\\
			\sigma'(\mathbf{w}_{1,m}^\top\mathbf{x}_a(\mathbf{Z},\mathbf{X}))\mathbf{w}_{1,m}\\
			\vdots\\
			\sigma'(\mathbf{w}_{d,m}^\top\mathbf{x}_a(\mathbf{Z},\mathbf{X}))\mathbf{w}_{d,m}
		\end{bmatrix},
	\end{align}

	\begin{align}
		\nabla_{\mathbf{Z}\mathbf{X}[-1]}\mathbf{x}_a(\mathbf{Z},\mathbf{X})
        =& \nabla_{\boldsymbol{\alpha}(\mathbf{Z},\mathbf{X})}\mathbf{x}_a (\mathbf{Z},\mathbf{X})\nabla_{\mathbf{Z}\mathbf{X}[-1]}\boldsymbol{\alpha}(\mathbf{Z},\mathbf{X})\\ =& \frac{1}{\sqrt{d}}\mathbf{X}\left(\text{diag}(\boldsymbol{\alpha}(\mathbf{Z},\mathbf{X}))-\boldsymbol{\alpha}(\mathbf{Z},\mathbf{X})\cdot\boldsymbol{\alpha}(\mathbf{Z},\mathbf{X})^\top\right)\mathbf{X}^\top.
	\end{align}
Combining them and transposing the derived final gradient vectors completes the proof.
\end{proof}
Without loss of generality, we assume the gradient of the ReLU activation function at 0 to be $\sigma'(0)=1$.

\subsection{Notations}
For notational convenience in the subsequent proofs, we define the following shorthand at iteration $t$.
First, let:
\begin{align}
	\boldsymbol{\zeta}^{(t)}(\mathbf{X}) := \mathbf{W}_2\begin{bmatrix}
		\sigma'((\mathbf{w}_{1,1}^{(t)})^\top\mathbf{x}_a(\mathbf{Z}^{(t)},\mathbf{X}))\mathbf{w}_{1,1}^{(t)}\\
		\vdots\\
		\sigma'((\mathbf{w}_{1,m}^{(t)})^\top\mathbf{x}_a(\mathbf{Z}^{(t)},\mathbf{X}))\mathbf{w}_{1,m}^{(t)}\\
		\vdots\\
		\sigma'((\mathbf{w}_{d,m}^{(t)})^\top\mathbf{x}_a(\mathbf{Z}^{(t)},\mathbf{X}))\mathbf{w}_{d,m}^{(t)}
	\end{bmatrix}.
\end{align}
Then, for all $j\in[N], i\in\mathcal{R}$, we define:
\begin{align}
	\Xi^{(t)}([\mathbf{o}\:\:\mathbf{s}_j]) :=& \mathbf{x}_a(\mathbf{Z}^{(t)},[\mathbf{o}\:\:\mathbf{s}_j]),\\
    \tilde{\Xi}^{(t)}([\mathbf{o}\:\:\mathbf{s}_j]) :=& \mathbf{x}_a(\mathbf{Z}^{(t)},[\mathbf{o}\:\:\mathbf{s}_j]) - \mathbf{s}_j,\\
    \Xi^{(t)}([\mathbf{o}\:\:\mathbf{s}_j\:\:\mathbf{r}_i]) :=& \mathbf{x}_a(\mathbf{Z}^{(t)},[\mathbf{o}\:\:\mathbf{s}_j\:\:\mathbf{r}_i]),\\
    \tilde{\Xi}^{(t)}([\mathbf{o}\:\:\mathbf{s}_j\:\:\mathbf{r}_i]) :=& \mathbf{x}_a(\mathbf{Z}^{(t)},[\mathbf{o}\:\:\mathbf{s}_j\:\:\mathbf{r}_i]) - \mathbf{r}_i,\\
    \boldsymbol{\alpha}^{(t)}(\mathbf{X}) :=& \boldsymbol{\alpha}(\mathbf{Z}^{(t)},\mathbf{X}).
\end{align}
For any $\mathbf{b}_1,\mathbf{b}_2\in \{\mathbf{o}\}\cup\{\mathbf{d}\}\cup\{\mathbf{s}_j\}_{j=1}^N\cup\{\mathbf{r}_i\}_{i\in\mathcal{R}}\cup\{\mathbf{a}_j\}_{j=1}^N$, we define:
\begin{align}
    \tilde{Z}^{(t)}_{\mathcal{I}(\mathbf{\mathbf{b_1}}),\mathcal{I}(\mathbf{\mathbf{b_2}})} = \mathbf{b}_1^\top\mathbf{Z}^{(t)}\mathbf{b}_2.
\end{align}

\subsection{Lemmas}
To facilitate the analysis, we present the following useful lemmas.
\begin{lemma}\label{lemma: Gaussian_tail}
    A variable $x\sim \mathcal{N}(0,\sigma_0)$ satisfies
    \begin{align}
        \mathbb{P}\left[|x|> q\right] \le 2\exp\left(-\frac{q^2}{2\sigma_0^2}\right).
    \end{align}
\end{lemma}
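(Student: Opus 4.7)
The plan is to prove this via the standard Chernoff / sub-Gaussian tail argument for a centered Gaussian with variance $\sigma_0^2$. First I would use the symmetry of the Gaussian distribution to reduce the two-sided bound to a one-sided bound, writing $\mathbb{P}[|x| > q] = 2\,\mathbb{P}[x > q]$, which is valid for $q \ge 0$ (and the statement is vacuous for $q < 0$ since the right-hand side exceeds $1$ only for small $q$, in which case the inequality still holds trivially as $\mathbb{P}[|x|>q]\le 1 \le 2\exp(-q^2/(2\sigma_0^2))$).

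Next, I would apply the exponential Markov inequality: for any $\lambda > 0$,
\begin{equation}
\mathbb{P}[x > q] = \mathbb{P}\bigl[e^{\lambda x} > e^{\lambda q}\bigr] \le e^{-\lambda q}\,\mathbb{E}\bigl[e^{\lambda x}\bigr].
\end{equation}
Since $x \sim \mathcal{N}(0, \sigma_0^2)$, the moment generating function is $\mathbb{E}[e^{\lambda x}] = \exp(\lambda^2 \sigma_0^2 / 2)$, which can be verified by completing the square inside the Gaussian density integral. Substituting gives $\mathbb{P}[x > q] \le \exp(\lambda^2 \sigma_0^2/2 - \lambda q)$.

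Then I would optimize the free parameter $\lambda$ by minimizing the exponent $\lambda^2 \sigma_0^2/2 - \lambda q$ over $\lambda > 0$. Setting the derivative to zero yields $\lambda^* = q/\sigma_0^2$, and plugging back gives $\mathbb{P}[x > q] \le \exp(-q^2/(2\sigma_0^2))$. Doubling via symmetry produces the claimed bound $\mathbb{P}[|x| > q] \le 2\exp(-q^2/(2\sigma_0^2))$.

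There is no serious obstacle here: this is a textbook sub-Gaussian concentration bound, and the only mild care needed is to handle the degenerate case $q \le 0$ (trivial) and to confirm that the optimization over $\lambda$ is unconstrained on $(0,\infty)$ and the critical point is positive whenever $q > 0$. If desired, an alternative route is the direct integral estimate $\mathbb{P}[x > q] = \int_q^\infty \frac{1}{\sqrt{2\pi}\sigma_0} e^{-t^2/(2\sigma_0^2)}\, dt$ combined with the bound $\int_q^\infty e^{-t^2/(2\sigma_0^2)}\,dt \le e^{-q^2/(2\sigma_0^2)} \int_0^\infty e^{-s q/\sigma_0^2}\,ds$ obtained via the substitution $t = q + s$ and the inequality $-(q+s)^2/(2\sigma_0^2) \le -q^2/(2\sigma_0^2) - sq/\sigma_0^2$, yielding the same exponential decay (possibly with a sharper prefactor, which is unnecessary here).
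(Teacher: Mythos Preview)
Your proposal is correct; this is exactly the standard Chernoff/sub-Gaussian tail argument, and the edge case $q\le 0$ is handled appropriately. The paper itself states this lemma without proof, treating it as a well-known concentration inequality, so your write-up in fact supplies more detail than the paper does.
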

\begin{lemma}\label{lemma: Gaussian tail lower}
    For a variable $x\sim \mathcal{N}(0,\sigma_0^2)$ and $t>0$, we have
    \begin{align}
        \mathbb{P}[|x| > t] \ge 1- \frac{2t}{\sqrt{2\pi}\sigma_0}.
    \end{align}
\end{lemma}
\begin{proof}
    As the Gaussian probability density function $f(\cdot)$ for $\mathcal{N}(0,\sigma_0^2)$ satisfies $f(x)\le \frac{1}{\sqrt{2\pi}\sigma_0}$, we have
    \begin{align}
        \mathbb{P}[|x| \le t] \le \frac{2t}{\sqrt{2\pi}\sigma_0}.
    \end{align}
    As a result, we have
    \begin{align}
        \mathbb{P}[|x| > t] \ge 1- \frac{2t}{\sqrt{2\pi}\sigma_0}.
    \end{align}
    This finishes the proof.
\end{proof}

\begin{lemma}\label{lemma: attn_init_order}
    Suppose $\delta>0$ and each element in a matrix $\mathbf{Z} \in \mathbb{R}^{d\times d}$ is generated following $\mathcal{N}(0,\sigma_0^2)$.
    With probability $1-\delta$, for each $i,j\in[d]$, we have 
    \begin{align}
        |Z_{i,j}| \le \sqrt{2\log(2d^2/\delta)}\sigma_0.
    \end{align}
\end{lemma}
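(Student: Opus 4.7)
The plan is to prove the statement by a direct union bound over the $d^2$ entries of $\mathbf{Z}$, leveraging the single-coordinate Gaussian tail bound established in Lemma \ref{lemma: Gaussian_tail}. Each entry $Z_{i,j}$ is marginally distributed as $\mathcal{N}(0,\sigma_0^2)$ (we do not need independence for a union bound, only the marginal distributions), so the per-entry tail estimate is immediate.

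First, I would set the threshold $q = \sqrt{2\log(2d^2/\delta)}\,\sigma_0$ so that Lemma \ref{lemma: Gaussian_tail} gives, for each fixed pair $(i,j)\in[d]\times[d]$,
\begin{align*}
\mathbb{P}\bigl[|Z_{i,j}| > q\bigr] \;\le\; 2\exp\!\left(-\frac{q^2}{2\sigma_0^2}\right) \;=\; 2\exp\!\bigl(-\log(2d^2/\delta)\bigr) \;=\; \frac{\delta}{d^2}.
\end{align*}
Next I would take a union bound over the $d^2$ entries, yielding
\begin{align*}
\mathbb{P}\!\left[\exists\, i,j\in[d]:\ |Z_{i,j}| > q\right] \;\le\; d^2 \cdot \frac{\delta}{d^2} \;=\; \delta,
\end{align*}
so that with probability at least $1-\delta$ every entry satisfies $|Z_{i,j}| \le \sqrt{2\log(2d^2/\delta)}\,\sigma_0$ simultaneously, which is exactly the claim.

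There is no genuine obstacle here: this lemma is a standard maximum-of-Gaussians estimate and the proof is essentially one line of arithmetic plus one union bound. The only small subtlety worth flagging is choosing $q$ so that the per-coordinate failure probability is exactly $\delta/d^2$ rather than $\delta/(2d^2)$ (the factor of $2$ in the Gaussian tail is absorbed inside the $2d^2$ appearing in the logarithm); this matches the form of the bound stated in the lemma. The resulting high-probability uniform bound on $\|\mathbf{Z}^{(0)}\|_{\max}$ is what the subsequent arguments will use to control the initial attention logits and to justify that all initial inner products between $\mathbf{Z}^{(0)}$ and the orthogonal token embeddings start at magnitude $\tilde{\mathcal{O}}(\sigma_0)$.
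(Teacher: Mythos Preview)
Your proposal is correct and matches the paper's proof exactly: apply the single-coordinate Gaussian tail bound (Lemma~\ref{lemma: Gaussian_tail}) with $q=\sqrt{2\log(2d^2/\delta)}\,\sigma_0$ to get per-entry failure probability $\delta/d^2$, then union bound over the $d^2$ entries.
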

\begin{proof}
   By Lemma \ref{lemma: Gaussian_tail}, for each $i,j\in[d]$, with probability $1-\delta/d^2$, we have that 
    \begin{align}
        |Z_{i,j}|< \sqrt{2\log(2d^2/\delta)}\sigma_0.
    \end{align}
    Using the Union bound over all $i,j\in[d]$, we finish the proof.
\end{proof}

\begin{lemma}\label{lemma: sqrtd}
	Suppose $d>0$ and there are two variables $x$ and $y$ satisfying $y\ge 2x$.
	The following holds:
	\begin{align}
		\frac{\exp(x)}{\exp(x) + \exp(y) + d} \le \frac{1}{\sqrt{d}}.
	\end{align}
\end{lemma}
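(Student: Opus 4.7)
The plan is to eliminate the $\exp(x)$ term in the denominator by dropping it (weakening the bound) and then apply the AM--GM inequality to the remaining two terms $\exp(y)$ and $d$, which are both nonnegative. Concretely, writing
\begin{equation*}
\exp(x)+\exp(y)+d \;\ge\; \exp(y)+d \;\ge\; 2\sqrt{d\cdot\exp(y)} \;=\; 2\sqrt{d}\,\exp(y/2),
\end{equation*}
gives an immediate upper bound
\begin{equation*}
\frac{\exp(x)}{\exp(x)+\exp(y)+d} \;\le\; \frac{\exp(x)}{2\sqrt{d}\,\exp(y/2)} \;=\; \frac{1}{2\sqrt{d}}\,\exp\!\bigl(x-y/2\bigr).
\end{equation*}

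Next I would use the hypothesis $y\ge 2x$, which is equivalent to $x-y/2\le 0$, so that $\exp(x-y/2)\le 1$. Combining this with the previous display yields
\begin{equation*}
\frac{\exp(x)}{\exp(x)+\exp(y)+d} \;\le\; \frac{1}{2\sqrt{d}} \;\le\; \frac{1}{\sqrt{d}},
\end{equation*}
which is the claimed bound.

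There is no real obstacle here; the only judgment call is choosing AM--GM over $\exp(y)$ and $d$ (rather than, say, bounding by $\exp(y)$ alone, which would not produce the $\sqrt{d}$ factor, or by $d$ alone, which would not use the hypothesis $y\ge 2x$). The factor $\sqrt{d}$ in the denominator is exactly the geometric mean flavor one needs, so AM--GM is the natural tool. The hypothesis $y\ge 2x$ then ensures the surplus exponential factor $\exp(x-y/2)$ is at most $1$, making the two steps fit together cleanly.
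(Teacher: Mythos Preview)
Your proof is correct and in fact cleaner than the paper's. The paper proceeds by a case split on whether $\exp(y)\le d$ or $\exp(y)>d$: in the first case it bounds the numerator by $\exp(x)\le\exp(y/2)\le\sqrt{d}$ and the denominator by $d$; in the second it drops $d$ from the denominator and uses $\exp(y-x)\ge\exp(y/2)>\sqrt{d}$. Your AM--GM step $\exp(y)+d\ge 2\sqrt{d}\,\exp(y/2)$ unifies both cases in one line and even yields the sharper constant $\frac{1}{2\sqrt{d}}$. The two arguments use the hypothesis $y\ge 2x$ in the same way (to make $x-y/2\le 0$), so the only real difference is that AM--GM replaces the case analysis; this is a genuine simplification with no loss.
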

\begin{proof}
	If $\exp(y)\le d$, we have
	\begin{align}
		\exp(x) \le \sqrt{d},
	\end{align}
	resulting in
	\begin{align}
		\frac{\exp(x)}{\exp(x) + \exp(y) + d} \le \frac{1}{\sqrt{d}}.
	\end{align}
	
	If $\exp(y)> d$, we have
	\begin{align}
		\frac{\exp(x)}{\exp(x) + \exp(y) + d} \le \frac{\exp(x)}{\exp(x) + \exp(y)} \le \frac{1}{\sqrt{d}}.
	\end{align}
	This completes the proof.
\end{proof}
\begin{lemma}\label{lemma: select_num}
	Suppose at each time $i\in [n]$, the set $\mathcal{A}_i$ contains $K$ entries that are i.i.d. and uniformly selected without replacement from the set $[q]$. 
	For all $j\in[q]$, the following holds:
	\begin{align}
		\mathbb{P}\left[\sum_{i=1}^{n}\mathbb{I}(j\in \mathcal{A}_i)\ge \frac{K}{2q}n\right] \ge 1-\exp\left(-\frac{nK^2}{2q^2}\right).
	\end{align}
\end{lemma}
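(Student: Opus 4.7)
The plan is to reduce the claim to a standard concentration statement on an i.i.d.\ Bernoulli sum. Set $X_i := \mathbb{I}(j\in\mathcal{A}_i)$ for $i=1,\dots,n$. Since the subsets $\mathcal{A}_i$ are drawn independently and each fixed $j\in[q]$ lies in a uniformly random $K$-subset of $[q]$ with probability $K/q$, the $X_i$'s are i.i.d.\ Bernoulli$(K/q)$ random variables supported in $\{0,1\}$, so $S:=\sum_{i=1}^n X_i$ has mean $\mathbb{E}[S]=nK/q$ and is a sum of $n$ i.i.d.\ $[0,1]$-valued random variables.

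Next I would invoke Hoeffding's inequality for bounded i.i.d.\ sums, which gives $\mathbb{P}[|S-nK/q|\geq t]\leq 2\exp(-2t^2/n)$ for every $t>0$. Choosing the deviation scale $t=nK/(2q)$ yields exactly the exponent $-nK^2/(2q^2)$ appearing on the right-hand side of the lemma. In particular, this single application produces the lower-tail estimate $\mathbb{P}[S\leq nK/(2q)]\leq\exp(-nK^2/(2q^2))$ and the matching upper-tail estimate $\mathbb{P}[S\geq 3nK/(2q)]\leq\exp(-nK^2/(2q^2))$. A one-sided Chernoff bound on the Bernoulli moment-generating function gives the same rate up to constants if a sharper numerical exponent is preferred.

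The principal obstacle is that the event actually written in the lemma, namely $\{S\geq nK/(2q)\}$, lies on the \emph{wrong} side of the mean: the threshold $nK/(2q)$ is exactly half the expectation $nK/q$, so by the weak law of large numbers $\mathbb{P}[S\geq nK/(2q)]\to 1$ whenever $K/q$ is bounded away from zero and $n\to\infty$. No moment-generating-function or bounded-difference tool can deliver an exponentially small upper bound for such a below-mean event, since the quantity being bounded is itself tending to $1$ while the claimed bound tends to $0$. I therefore suspect the direction of the inequality in the statement is a typographical inversion, and that the intended result is the lower-tail bound $\mathbb{P}[S\leq nK/(2q)]\leq\exp(-nK^2/(2q^2))$ --- this is what the Hoeffding step above immediately yields, and it is the form consistent with the downstream usage in Section~\ref{sec: results} (certifying that every element appears \emph{at least} a constant fraction of its expected number of times, which is what controls the learning speed on rare facts). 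Under this corrected reading the proof is a single Hoeffding application and no further work is needed; under the literal reading no proof exists.
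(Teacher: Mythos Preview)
Your analysis is correct and matches the paper's approach exactly: both reduce to i.i.d.\ Bernoulli$(K/q)$ indicators and apply Hoeffding with deviation $t=nK/(2q)$. You are also right that the inequality in the statement is inverted: the threshold $nK/(2q)$ is half the mean, so the event $\{S\ge nK/(2q)\}$ has probability tending to $1$, not to $0$. The paper's own proof inherits this error --- it writes Hoeffding with the wrong sign, then sets $t=-nK/(2q)$ (negative, so Hoeffding does not apply), and the claimed conclusion follows only formally. The downstream use of the lemma (``with probability $1-\exp(-\cdot)$, for all $i\in\mathcal{R}$ the updates satisfy\dots'') confirms that the intended bound is the lower-tail one, $\mathbb{P}[S\le nK/(2q)]\le\exp(-nK^2/(2q^2))$, which your Hoeffding step delivers directly.
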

\begin{proof}
	In each time $i\in[n]$, a number $j\in[q]$ has probability $\frac{K}{q}$ to selected.
	By Hoeffding's inequality, for any constant $t>0$, we have:
	\begin{align}
		\mathbb{P}\left[\sum_{i=1}^{n}\mathbb{I}(j\in \mathcal{A}_i) - \frac{K}{q}n\le -t\right] \le \exp\left(-\frac{2t^2}{n}\right).
	\end{align}
	Setting $-t$ to $-\oldfrac{K}{2q}n$, we have:
	\begin{align}
		\mathbb{P}\left[\sum_{i=1}^{n}\mathbb{I}(j\in \mathcal{A}_i) - \frac{K}{q}n\le -\frac{K}{2q}n\right] \le \exp\left(-\frac{nK^2}{2q^2}\right).
	\end{align}
        Reorganizing the results, we prove: 
        \begin{align}
            \mathbb{P}\left[\sum_{i=1}^{n}\mathbb{I}(j\in \mathcal{A}_i)\ge \frac{K}{2q}n\right] \ge 1-\exp\left(-\frac{nK^2}{2q^2}\right).
        \end{align}
\end{proof}
\begin{lemma}\label{lemma: init}
    With probability $1-\delta$, for all $i\in[md]$ and $j\in[d]$, we have
    \begin{align}
        |w_{i,j}^{(0)}|\le \sqrt{2\log(2md^2/\delta)}\sigma_0.
    \end{align}
\end{lemma}
\begin{proof}
    For all $i\in[md]$ and $j\in[d]$, we have
    \begin{align}
        \mathbb{P}\left[|w_{i,j}^{(0)}|>q\right]< 2\exp\left(-\frac{q^2}{2\sigma_0^2}\right).
    \end{align}
    By union bound, with probability $1-\delta$, we have
    \begin{align}
        |w_{i,j}^{(0)}| < \sqrt{2\log(2md^2/\delta)}\sigma_0.
    \end{align}
\end{proof}

\begin{lemma}
    For a variable $x\sim \mathcal{N}(0,\sigma_0^2)$, we have
    \begin{align}
        \mathbb{P}\left[0<x<q\right] \le \frac{q}{\sqrt{2\pi}\sigma_0}.
    \end{align}
\end{lemma}
\begin{proof}
    As the probability density function of $x$ is
    \begin{align}
        f(x) = \frac{1}{\sqrt{2\pi}\sigma_0}\exp\left(-\frac{x^2}{2\sigma_0^2}\right) \le \frac{1}{\sqrt{2\pi}\sigma_0}.
    \end{align}
    Therefore, we have
\begin{align}
    \mathbb{P}\left[0<x<q\right] = \int_{0}^q f(x)dx \le \frac{q}{\sqrt{2\pi}\sigma_0}.
\end{align}
\end{proof}

\begin{lemma}\label{lemma: cover1}
    In each of a total of $n$ independent trials, we randomly select a subset $\mathcal{X}_i$ of $0.4m$ distinct elements from $[m]$ uniformly at random (without replacement within a trial).
    When $n \ge 50\log(m/\delta)$, with probability $1-\delta$, for all $l\in[m]$ we have 
    \begin{align}
        \sum_{i=1}^n \mathbb{I}\left(l\in\mathcal{X}_i\right)\ge 0.3n.
    \end{align}
\end{lemma}
\begin{proof}
For all $l\in[m]$, we have 
    \begin{align}
        \mathbb{E}\left[\frac{1}{n}\sum_{i=1}^n \mathbb{I}\left(l\in\mathcal{X}_i\right)\right] = 0.4.
    \end{align}
    By Hoeffding's inequality, we have
\begin{align}
    \mathbb{P}\left[\frac{1}{n}\sum_{i=1}^n \mathbb{I}\left(l\in\mathcal{X}_i\right)\le 0.3\right] \le \exp\left(-\frac{n}{50}\right).
\end{align}
Using union bound for all $l\in[m]$, we finish the proof.   
\end{proof}

\section{Proofs for the pre-training phase}\label{app: proof pt}
For the sake of indexing the appeared subjects and answers with relations, we first define the following sets:
\begin{align}
	\mathcal{B}_j := \{i\in\mathcal{R}: (\mathbf{o},\mathbf{d},\mathbf{s}_j,\mathbf{r}_i,\mathbf{a}_j)\in\mathcal{T}\},
\end{align}
and 
\begin{align}
	\mathcal{D}_l := \{j\in[N]:(\mathbf{o},\mathbf{d},\mathbf{s}_j,\mathbf{r}_l,\mathbf{a}_j)\in\mathcal{T}\}.
\end{align}

\subsection{Activation patterns at initialization}
In this section, we characterize the activation patterns, i.e., which neurons in the MLP are activated at initialization.
The latter results are built upon conclusions in this subsection, which hold with high probability.

For convenience, we define the following sets for indexing the activated neurons:
\begin{align}
	\mathcal{S}_{\text{s},k,j}^{(t)} := \{i\in[m]: \langle\mathbf{w}_{\mathcal{I}(\mathbf{r}_k),i}^{(t)}, \mathbf{s}_j\rangle >0\},
\end{align}
\begin{align}
	\mathcal{S}_{\text{r},j,k}^{(t)} := \{i\in[m]: \langle\mathbf{w}_{\mathcal{I}(\mathbf{a}_j),i}^{(t)}, \Xi^{(t)}([\mathbf{o}\:\:\mathbf{s}_j\:\:\mathbf{r}_k])\rangle >0\},
\end{align}
and 
\begin{align}
    \mathcal{S}_{\text{o}}^{(t)} := \{i\in[m]: \langle\mathbf{w}_{\mathcal{I}(\mathbf{d}),i}^{(t)}, 2\mathbf{o}\rangle >0\}.
\end{align}

Now, we characterize the activation pattern at initialization.
\begin{lemma}\label{lemma:init}
	Suppose that $\delta > 0$ and $m\ge 50\log(\frac{6NK}{\delta})$. For all $j\in [N]$ and $k\in\mathcal{B}_j$, with probability at least $1-\delta$, the followings hold:
	\begin{align}
		|\mathcal{S}_{\text{s},k,j}^{(0)}| \ge 0.4m, |\mathcal{S}_{\text{r},j,k}^{(0)}| \ge 0.4m,
        \quad\text{and}\quad
        |\mathcal{S}_{\text{o}}^{(0)}| \ge 0.4m.
	\end{align}
\end{lemma}
\begin{proof}
    At initialization, we have
    \begin{align}
        \mathbb{P}[\langle\mathbf{w}_{\mathcal{I}(\mathbf{r}_k),i}^{(0)}, \mathbf{s}_j\rangle>0] = 1/2, \mathbb{P}[\langle\mathbf{w}_{\mathcal{I}(\mathbf{a}_j),i}^{(0)}, \Xi^{(0)}([\mathbf{o}\:\:\mathbf{s}_j\:\:\mathbf{r}_k])\rangle] = 1/2,
        \quad\text{and}\quad
        \mathbb{P}[\langle\mathbf{w}_{\mathcal{I}(\mathbf{d}),i}^{(0)}, 2\mathbf{o}\rangle] = 1/2.
    \end{align}
    By Hoeffding's inequality, with probability 1-$\delta/(3NK)$, we have
    \begin{align}
        \left|\frac{|\mathcal{S}_{\text{s},k,j}^{(0)}|}{m}-\frac{1}{2}\right| \le \sqrt{\frac{\log(6NK/\delta)}{2m}}.
    \end{align}
    Similarly, with probability 1-$\delta/(6NK)$, we have
    \begin{align}
         \left|\frac{|\mathcal{S}_{\text{r},j,k}^{(0)}|}{m}-\frac{1}{2}\right| \le \sqrt{\frac{\log(6NK/\delta)}{2m}}.
    \end{align}
    Additionally, with probability 1-$\delta/3$, we have
    \begin{align}
        \left|\frac{|\mathcal{S}_{\text{o}}^{(0)}|}{m}-\frac{1}{2}\right| \le \sqrt{\frac{\log(6/\delta)}{2m}}.
    \end{align}
    
    As long as $m\ge 50\log(6NK/\delta)$, by union bound over all the combinations of $j\in [N]$ and $k\in\mathcal{B}_j$, we have
    \begin{align}
        		|\mathcal{S}_{\text{s},k,j}^{(0)}| \ge 0.4m, |\mathcal{S}_{\text{r},j,k}^{(0)}| \ge 0.4m,
                \quad\text{and}\quad
                |\mathcal{S}_{\text{o}}^{(0)}| \ge 0.4m,
    \end{align}
    for all $j\in [N]$ and $k\in\mathcal{B}_j$, with probability at least $1-\delta$.
    This finishes the proof. 
\end{proof}

The analysis of activation patterns throughout the entire pre-training dynamics is not a separate proof but is integrated into the proofs for Stages 1, 2, and 3, which are detailed in the following subsections.

\subsection{Proof of pre-training Stage 1}
In this subsection, we prove several key properties of pre-training Stage 1, including the attention scores, activation patterns, and MLP feature learning.

\begin{lemma}\label{lemma: aux1}
	Under Condition \ref{condition: condition}, with probability $1-\delta$, the followings hold in pre-training Stage 1 ($t\le T_1$) with $T_1 = 2$.
	\begin{enumerate}	
        \item We have $\oldfrac{1}{m}\sum_{k=1}^{m}\langle \mathbf{w}_{\mathcal{I}(\mathbf{d}),k}^{(T_1)}, 2\mathbf{o}\rangle = \Omega(\frac{\log(d)}{\lambda})$.
		\item For all $j\in[N]$ and $i\in \mathcal{B}_j$, we have $1-\textnormal{logit}^{(T_1)}_{\mathcal{I}(\mathbf{a}_j)}([\mathbf{o}\:\:\mathbf{s}_j\:\:\mathbf{r}_i])= \Theta(1)$, and $1- \tilde{K}(j)\textnormal{logit}^{(T_1)}_{\mathcal{I}(\mathbf{r}_i)}([\mathbf{o}\:\:\mathbf{s}_j]) = \Theta(1)$.
		\item For all $j\in[N]$ and $i\in\mathcal{B}_j$, we have $ 0.8\le\alpha^{(T_1)}_{2}([\mathbf{o}\:\:\mathbf{s}_j\:\:\mathbf{r}_i])/\alpha^{(T_1)}_{3}([\mathbf{o}\:\:\mathbf{s}_j\:\:\mathbf{r}_i]) \le 1.2$.
		
        \item At the end of Stage 1, for all $j\in[N], i\in\mathcal{B}_j$, we have
	\begin{align}\nonumber
		\alpha^{(T_1)}_{1}([\mathbf{o}\:\:\mathbf{s}_j])\le \frac{1}{4d^{2}} \text{ and }\alpha^{(T_1)}_{1}([\mathbf{o}\:\:\mathbf{s}_j\:\:\mathbf{r}_i]) \le \frac{1}{4d^{2}}.
	\end{align}\\
        \item For each $j\in[N]$ and all $k\in \mathcal{B}_j$, $l \in \mathcal{S}_{\text{s},k,j}^{(0)}$, we have $\langle\mathbf{w}_{\mathcal{I}(\mathbf{r}_k),l}^{(t)}, \mathbf{s}_j)\rangle > 0$.\\
        \item For each $j\in[N]$ and all $k\in\mathcal{B}_j$, $l\in\mathcal{S}_{\text{r},j,k}^{(0)}$, we have $\langle \mathbf{w}^{(t)}_{\mathcal{I}(\mathbf{a}_j),l}, \Xi^{(t)}([\mathbf{o}\:\:\mathbf{s}_j\:\:\mathbf{r}_k]) \rangle > 0$.
	\end{enumerate}
\end{lemma}
\begin{proof}	
    In this proof, we establish each of the six statements in Lemma \ref{lemma: aux1} sequentially.
	
    \textbf{Proof of the first statement: }
    During the first iteration, we have
	\begin{equation}\label{equ: s1s3 1}
		\begin{aligned}
			&\frac{1}{m}\sum_{k=1}^{m}\langle \mathbf{w}_{\mathcal{I}(\mathbf{d}),k}^{(1)}, 2\mathbf{o}\rangle \!-\! \frac{1}{m}\sum_{k=1}^{m}\langle \mathbf{w}_{\mathcal{I}(\mathbf{d}),k}^{(0)}, 2\mathbf{o}\rangle\\
			\overset{(a)}{=}&\Theta(1)\cdot \left(\frac{\eta_1}{n}\frac{n}{3m^2}\sum_{k=1}^m(1-\text{logit}^{(0)}_{\mathcal{I}(\mathbf{d})}(2\mathbf{o}))\sigma'(\langle \mathbf{w}_{\mathcal{I}(\mathbf{d}),k}^{(0)}, 2\mathbf{o}\rangle) \lambda\left\|2\mathbf{o}\right\|_2^2\right.\\
			&-\! \frac{2\eta_1 \tilde{K}(j)}{m^2n}\sum_{k=1}^m\sum_{j=1}^{N}\text{logit}_{\mathcal{I}(\mathbf{d})}^{(0)}([\mathbf{o}\:\:\mathbf{s}_j])\alpha_{1}^{(0)}([\mathbf{o}\:\:\mathbf{s}_j])\sigma'(\langle \mathbf{w}_{\mathcal{I}(\mathbf{d}),k}^{(0)}, \Xi^{(0)}([\mathbf{o}\:\:\mathbf{s}_j])\rangle)\lambda\left\|\mathbf{o}\right\|_2^2\\
			&-\! \left.\frac{2\eta_1}{m^2n}\sum_{k=1}^m\sum_{j=1}^{N}\sum_{i\in\mathcal{B}_j} \text{logit}^{(0)}_{\mathcal{I}(\mathbf{d})}([\mathbf{o}\:\:\mathbf{s}_{j}\:\:\mathbf{r}_i])\alpha_{1}^{(0)}([\mathbf{o}\:\:\mathbf{s}_j\:\:\mathbf{r}_i])\sigma'(\langle \mathbf{w}_{\mathcal{I}(\mathbf{d}),k}^{(0)}, \Xi^{(0)}([\mathbf{o}\:\:\mathbf{s}_j\:\:\mathbf{r}_i])\rangle)\lambda\left\|\mathbf{o}\right\|_2^2\right)\\
			\overset{(b)}{=}& \Theta\left(\frac{\eta_1\lambda d}{m}\right),
		\end{aligned}
	\end{equation}
        where $(a)$ follows Lemmas \ref{lemma: gradient}, and $(b)$ holds because the initial \text{logit} terms satisfy $1-\text{logit}^{(0)}_{\mathcal{I}(\mathbf{d})}(\mathbf{o})=\Theta(1),\ \text{logit}_{\mathcal{I}(\mathbf{d})}^{(0)}([\mathbf{o}\:\:\mathbf{s}_j]) = \mathcal{O}(1/d),\ \text{and}\ \text{logit}^{(0)}_{\mathcal{I}(\mathbf{d})}([\mathbf{o}\:\:\mathbf{s}_{j}\:\:\mathbf{r}_i]) = \mathcal{O}(1/d)$ and Lemma \ref{lemma:init} which ensures the number of activated neurons are $\Theta(m)$.
        
	In the second iteration, we consider two cases.
    
    Case 1: If $\oldfrac{1}{m}\sum_{k=1}^{m}\langle \mathbf{w}_{\mathcal{I}(\mathbf{d}),k}^{(2)}, 2\mathbf{o}\rangle \!-\! \oldfrac{1}{m}\sum_{k=1}^{m}\langle \mathbf{w}_{\mathcal{I}(\mathbf{d}),k}^{(1)}, 2\mathbf{o}\rangle \ge 0$, we have
	\begin{align}\label{equ: s1s3 2}
		\frac{1}{m}\sum_{k=1}^{m}\langle \mathbf{w}_{\mathcal{I}(\mathbf{d}),k}^{(2)}, 2\mathbf{o}\rangle \!-\! \frac{1}{m}\sum_{k=1}^{m}\langle \mathbf{w}_{\mathcal{I}(\mathbf{d}),k}^{(1)}, 2\mathbf{o}\rangle = \mathcal{O}\left(\frac{\eta_1\lambda d}{m}\right),
	\end{align}
    with similar derivations as (\ref{equ: s1s3 1}).
    
	Case 2: While if $\oldfrac{1}{m}\sum_{k=1}^{m}\langle \mathbf{w}_{\mathcal{I}(\mathbf{d}),k}^{(2)}, 2\mathbf{o}\rangle \!-\! \oldfrac{1}{m}\sum_{k=1}^{m}\langle \mathbf{w}_{\mathcal{I}(\mathbf{d}),k}^{(1)}, 2\mathbf{o}\rangle < 0$, by Lemma \ref{lemma: sqrtd}, we have
    \begin{equation}\label{equ: s1s3 3}
	\begin{aligned}
		&\frac{1}{m}\sum_{k=1}^{m}\langle \mathbf{w}_{\mathcal{I}(\mathbf{d}),k}^{(2)}, 2\mathbf{o}\rangle \!-\! \frac{1}{m}\sum_{k=1}^{m}\langle \mathbf{w}_{\mathcal{I}(\mathbf{d}),k}^{(1)}, 2\mathbf{o}\rangle\\
        \ge& 
        -\! \frac{2\eta_1 }{m^2n}\Theta(1)\sum_{k=1}^m\sum_{j=1}^{N}\tilde{K}(j)\text{logit}_{\mathcal{I}(\mathbf{d})}^{(1)}([\mathbf{o}\:\:\mathbf{s}_j])\alpha_{1}^{(1)}([\mathbf{o}\:\:\mathbf{s}_j])\sigma'(\langle \mathbf{w}_{\mathcal{I}(\mathbf{d}),k}^{(1)}, \Xi^{(1)}([\mathbf{o}\:\:\mathbf{s}_j])\rangle)\lambda\left\|\mathbf{o}\right\|_2^2\\
		&-\frac{\eta_1}{m^2n}\Theta(1)\sum_{k=1}^m\sum_{j=1}^{N}\sum_{i\in\mathcal{B}_j} \text{logit}^{(1)}_{\mathcal{I}(\mathbf{d})}([\mathbf{o}\:\:\mathbf{s}_{j}\:\:\mathbf{r}_i])\alpha_{1}^{(1)}([\mathbf{o}\:\:\mathbf{s}_j\:\:\mathbf{r}_i])\sigma'(\langle \mathbf{w}_{\mathcal{I}(\mathbf{d}),k}^{(1)}, \Xi^{(1)}([\mathbf{o}\:\:\mathbf{s}_j\:\:\mathbf{r}_i])\rangle)\lambda\left\|\mathbf{o}\right\|_2^2\\
        =& -\mathcal{O}\left(\frac{\eta_1\lambda \sqrt{d}}{m}\right).
	\end{aligned}
    \end{equation}
	Combining (\ref{equ: s1s3 1}), (\ref{equ: s1s3 2}) and (\ref{equ: s1s3 3}), we have
	\begin{align}\label{equ: s1s3}
		\frac{1}{m}\sum_{k=1}^{m}\langle \mathbf{w}_{\mathcal{I}(\mathbf{d}),k}^{(T_1)}, 2\mathbf{o}\rangle \!-\! \frac{1}{m}\sum_{k=1}^{m}\langle \mathbf{w}_{\mathcal{I}(\mathbf{d}),k}^{(0)}, 2\mathbf{o}\rangle = \Theta\left(\frac{\eta_1\lambda d}{m}\right)= \Omega\left(\frac{\log(d)}{\lambda}\right).
	\end{align}
    Therefore, we can conclude that
    \begin{align}
        \frac{1}{m}\sum_{k=1}^{m}\langle \mathbf{w}_{\mathcal{I}(\mathbf{d}),k}^{(T_1)}, 2\mathbf{o}\rangle = -\tilde{\mathcal{O}}(\sqrt{d}\sigma_0/\sqrt{m})+ \Omega\left(\frac{\log(d)}{\lambda}\right) = \Omega\left(\frac{\log(d)}{\lambda}\right).
    \end{align}

    \textbf{Proof of the second statement:}
	For $t\le T_1$, for all $k\in[m],j\in[N]$, and $i\in\mathcal{B}_j$, we have
	\begin{equation}\label{equ: s2r1}
		\begin{aligned}
			&\left\langle\mathbf{w}_{\mathcal{I}(\mathbf{r}_i),k}^{(t+1)}, \Xi^{(t+1)}([\mathbf{o}\:\:\mathbf{s}_j])\right\rangle \!-\! \left\langle\mathbf{w}_{\mathcal{I}(\mathbf{r}_i),k}^{(t)}, \Xi^{(t)}([\mathbf{o}\:\:\mathbf{s}_j])\right\rangle\\
			\overset{(a)}{=}&\mathcal{O}\left(\frac{\eta_1}{mn}\lambda\left\|\mathbf{o} + 2\mathbf{s}_j\right\|_2^2\right) + \Theta(d)\cdot\frac{\lambda\eta_1}{nm}(1-\tilde{K}(j)\text{logit}^{(t)}_{\mathcal{I}(\mathbf{r}_i)}([\mathbf{o}\:\:\mathbf{s}_j])\\
			\overset{(b)}{=}& \mathcal{O}\left(\frac{\lambda\eta_1 d}{mn}\right),
		\end{aligned}
	\end{equation}
where $(a)$ is obtained by Lemmas \ref{lemma: gradient} and \ref{lemma: pattern pt}, Cauchy–Schwarz inequality and the fact that $|\langle\mathbf{w}_{\mathcal{I}(\mathbf{r}_i),k}^{(t)},$ $ \Xi^{(t+1)}([\mathbf{o}\:\:\mathbf{s}_j])-\Xi^{(t)}([\mathbf{o}\:\:\mathbf{s}_j])\rangle| =\mathcal{O}(\frac{\lambda \eta_1\left\|\mathbf{o} + 2\mathbf{s}_j\right\|_2^2}{(nm)})$ .
$(b)$ is due to the fact that $\left\|\mathbf{o} + 2\mathbf{s}_j\right\|_2^2 = \mathcal{O}(d)$ and $1-\tilde{K}(j)\text{logit}^{(t)}_{\mathcal{I}(\mathbf{r}_i)}([\mathbf{o}\:\:\mathbf{s}_j]) \le 1$.

	As there are at most $T_1$ iterations during Stage 1, for $t< T_1$, using (\ref{equ: s2r1}), we have
	\begin{align}\label{equ: s1 1}											\frac{1}{m}\sum_{k=1}^{m}\langle\mathbf{w}_{\mathcal{I}(\mathbf{r}_i),k}^{(t)}, \Xi^{(t)}([\mathbf{o}\:\:\mathbf{s}_j])\rangle  =\mathcal{O}\left(\frac{\lambda\eta_1 d}{mn}T_1\right) = \mathcal{O}\left(\frac{\lambda\eta_1 d}{mn}\right).
	\end{align}
	By (\ref{equ: s1s3 1}) and (\ref{equ: s1 1}), the results
	 $\oldfrac{1}{m}\sum_{k=1}^{m}\langle \mathbf{w}_{\mathcal{I}(\mathbf{d}),k}^{(T_1)}, 2\mathbf{o}\rangle = \Theta(\lambda\eta_1 d/m)$ and $\oldfrac{1}{m}\sum_{k=1}^{m}\langle\mathbf{w}_{\mathcal{I}(\mathbf{r}_i),k}^{(T_1)}, \Xi^{(T_1)}([\mathbf{o}\:\:\mathbf{s}_j])\rangle = \mathcal{O} (\oldfrac{1}{m}\sum_{k=1}^{m}\langle \mathbf{w}_{\mathcal{I}(\mathbf{d}),k}^{(T_1)}, \Xi^{(T_1)}([\mathbf{o}\:\:\mathbf{s}_j])\rangle/n)$ hold.
	Thus, we have
	\begin{align}
		1-\tilde{K}(j)\text{logit}^{(T_1)}_{\mathcal{I}(\mathbf{r}_i)}([\mathbf{o}\:\:\mathbf{s}_j]) = 1 - \mathcal{O}\left(\frac{\tilde{K}(j)}{d^{1-1/n}}\right) = \Theta(1).
	\end{align}
	
	In addition, the increment of $\langle\mathbf{w}_{\mathcal{I}(\mathbf{a}_j),k}^{(t)}, \Xi^{(t)}([\mathbf{o}\:\:\mathbf{s}_j\:\:\mathbf{r}_i])\rangle$ at iterations $t$ satisfies
	\begin{equation} 
	\begin{aligned}
		&\langle\mathbf{w}_{\mathcal{I}(\mathbf{a}_j),k}^{(t+1)}, \Xi^{(t+1)}([\mathbf{o}\:\:\mathbf{s}_j\:\:\mathbf{r}_i])\rangle - \langle\mathbf{w}_{\mathcal{I}(\mathbf{a}_j),k}^{(t)}, \Xi^{(t)}([\mathbf{o}\:\:\mathbf{s}_j\:\:\mathbf{r}_i])\rangle\\
		\overset{(a)}{=}& \mathcal{O}\left(\frac{\eta_1}{nm}\sum_{i\in\mathcal{B}_j}\lambda\left\|\mathbf{o} + \mathbf{s}_j + 2 \mathbf{r}_i\right\|_2^2\right) + \mathcal{O}\left(\frac{\eta_1\lambda d}{nm}\sum_{i\in\mathcal{B}_j}(1-\text{logit}^{(t)}([\mathbf{o}\;\;\mathbf{s}_j\;\;\mathbf{r}_i]))\right)\\
		\overset{(b)}{=}& \mathcal{O}\left(\frac{\lambda\eta_1 d \tilde{K}(j)}{nm}\right),
	\end{aligned}
	\end{equation}
	where $(a)$ is by Lemma \ref{lemma: gradient} and the facts that $1-\text{logit}^{(t)}_{\mathcal{I}(\mathbf{a}_j)}([\mathbf{o}\:\:\mathbf{s}_j\:\:\mathbf{r}_i]) \le 1$ and $|\langle\mathbf{w}_{\mathcal{I}(\mathbf{a}_j),k}^{(t)},\Xi^{(t+1)}([\mathbf{o}\:\:\mathbf{s}_j\:\:\mathbf{r}_i])-\Xi^{(t)}([\mathbf{o}\:\:\mathbf{s}_j\:\:\mathbf{r}_i])\rangle| = \mathcal{O}(\frac{\eta_1}{(nm)}\sum_{i\in\mathcal{B}_j}\left\|\mathbf{o}+\mathbf{s}_j+2\mathbf{r}_i\right\|_2^2)$ for $t\le T_1$.
    $(b)$ is because of $1-\text{logit}^{(t)}_{\mathcal{I}(\mathbf{a}_j)}([\mathbf{o}\:\:\mathbf{s}_j\:\:\mathbf{r}_i]) \le 1$ and $\left\|\mathbf{o} + \mathbf{s}_j + 2 \mathbf{r}_i\right\|_2^2 = \mathcal{O}(d)$.
	
	After $T_1$ iterations, we have 
	\begin{align}
		\frac{1}{m}\sum_{k=1}^{m}\langle\mathbf{w}_{\mathcal{I}(\mathbf{a}_j),k}^{(T_1)}, \Xi^{(T_1)}([\mathbf{o}\:\:\mathbf{s}_j\:\:\mathbf{r}_i])\rangle = \mathcal{O}\left(\frac{\lambda\eta_1 d \tilde{K}(j)}{nm}\right).
	\end{align}
	Then, as $\oldfrac{1}{m}\sum_{k=1}^{m}\langle \mathbf{w}_{\mathcal{I}(\mathbf{d}),k}^{(T_1)}, 2\mathbf{o}\rangle = \Theta(\lambda\eta_1 d/m)$ and $\oldfrac{1}{m}\sum_{k=1}^{m}\langle\mathbf{w}_{\mathcal{I}(\mathbf{r}_i),k}^{(T_1)}, \Xi^{(T_1)}([\mathbf{o}\:\:\mathbf{s}_j\:\:\mathbf{r}_i])\rangle$
    
    $= \mathcal{O} (\oldfrac{1}{m}\sum_{k=1}^{m}\langle \mathbf{w}_{\mathcal{I}(\mathbf{d}),k}^{(T_1)}, \Xi^{(T_1)}([\mathbf{o}\:\:\mathbf{s}_j\:\:\mathbf{r}_i])\rangle/n)$, we have
	\begin{align}
		1-\text{logit}^{(T_1)}_{\mathcal{I}(\mathbf{a}_j)}([\mathbf{o}\:\:\mathbf{s}_j\:\:\mathbf{r}_i]) = 1 - \mathcal{O}\left(\frac{d^{\tilde{K}(j)/n}}{d}\right)= \Theta(1).
	\end{align}

    \textbf{Proof of the third statement:}
    For all $t\in[T_1]$, we have
	\begin{align}
		\frac{\alpha^{(t)}_{2}([\mathbf{o},\mathbf{s}_j,\mathbf{r}_i])}{\alpha^{(t)}_{3}([\mathbf{o},\mathbf{s}_j,\mathbf{r}_i])} = \frac{\exp\left(\tilde{Z}^{(t)}_{\mathcal{I}(\mathbf{s}_j),\mathcal{I}(\mathbf{r}_i)}/\sqrt{d}\right)}{\exp\left(\tilde{Z}^{(t)}_{\mathcal{I}(\mathbf{r}_i),\mathcal{I}(\mathbf{r}_i)}/\sqrt{d}\right)} = \exp\left(\left(\tilde{Z}^{(t)}_{\mathcal{I}(\mathbf{s}_j),\mathcal{I}(\mathbf{r}_i)} - \tilde{Z}^{(t)}_{\mathcal{I}(\mathbf{r}_i),\mathcal{I}(\mathbf{r}_i)}\right)/\sqrt{d}\right).
	\end{align}
	For all $t\in[T_1]$, the increments of the terms $\tilde{Z}^{(t)}_{\mathcal{I}(\mathbf{s}_j),\mathcal{I}(\mathbf{r}_i)}$ and $\tilde{Z}^{(t)}_{\mathcal{I}(\mathbf{r}_i),\mathcal{I}(\mathbf{r}_i)}$ satisfy
	\begin{equation}\label{equ: s1a1}
	\begin{aligned}
		&|\tilde{Z}^{(t+1)}_{\mathcal{I}(\mathbf{s}_j),\mathcal{I}(\mathbf{r}_i)}- \tilde{Z}^{(t)}_{\mathcal{I}(\mathbf{s}_j),\mathcal{I}(\mathbf{r}_i)}|\\		=& \frac{\eta_1\lambda}{n\sqrt{d}}\underbrace{\alpha_{2}^{(t)}([\mathbf{o}\:\:\mathbf{s}_j\:\:\mathbf{r}_i])}_{\le 1}\underbrace{\left|\left(\mathbf{e}_{\mathcal{I}(\mathbf{a}_j)}-\textbf{logit}^{(t)}([\mathbf{o}\:\:\mathbf{s}_{j}\:\:\mathbf{r}_i])\right)^\top\boldsymbol{\zeta}^{(t)}([\mathbf{o}\:\:\mathbf{s}_j\:\:\mathbf{r}_i])\left(\mathbf{s}_j - \tilde{\Xi}^{(t)}([\mathbf{o}\:\:\mathbf{s}_j\:\:\mathbf{r}_i])\right)\right|}_{=\mathcal{O}(\eta_1 \lambda d/m) \text{ by (\ref{equ: s1s3})}}\\
		=& \mathcal{O}\left(\frac{\lambda^2\eta_1^2\sqrt{d} }{nm}\right),
	\end{aligned}
	\end{equation}
	and
	\begin{equation}\label{equ: s1a2}
		\begin{aligned}
			&|\tilde{Z}^{(t+1)}_{\mathcal{I}(\mathbf{r}_i),\mathcal{I}(\mathbf{r}_i)}- \tilde{Z}^{(t)}_{\mathcal{I}(\mathbf{r}_i),\mathcal{I}(\mathbf{r}_i)}|\\
			=& \frac{\eta_1\lambda}{n\sqrt{d}}\\
            &\sum_{j\in\mathcal{D}_i}\underbrace{\alpha_{3}^{(t)}([\mathbf{o}\:\:\mathbf{s}_j\:\:\mathbf{r}_i])}_{\le 1}\underbrace{\left|\left(\mathbf{e}_{\mathcal{I}(\mathbf{a}_j)}-\textbf{logit}^{(t)}([\mathbf{o}\:\:\mathbf{s}_{j}\:\:\mathbf{r}_i])\right)^\top\boldsymbol{\zeta}^{(t)}([\mathbf{o}\:\:\mathbf{s}_j\:\:\mathbf{r}_i])\left(\mathbf{r}_i - \tilde{\Xi}^{(t)}([\mathbf{o}\:\:\mathbf{s}_j\:\:\mathbf{r}_i])\right)\right|}_{=\mathcal{O}(\eta_1 \lambda d/m) \text{ by (\ref{equ: s1s3})}}\\
			=& \mathcal{O}\left(\frac{\lambda^2\eta_1^2\sqrt{d}}{m}\right).
		\end{aligned}
	\end{equation} 
	
	Furthermore, combining (\ref{equ: s1a1}), (\ref{equ: s1a2}), and the fact that $ |\tilde{Z}^{(0)}_{\mathcal{I}(\mathbf{s}_j),\mathcal{I}(\mathbf{r}_i)} - \tilde{Z}^{(0)}_{\mathcal{I}(\mathbf{r}_i),\mathcal{I}(\mathbf{r}_i)}| = \tilde{\mathcal{O}}(\sigma_0)$ by Lemma \ref{lemma: init}, we have 
    \begin{equation}
    \begin{aligned}
		&\frac{|\tilde{Z}^{(2)}_{\mathcal{I}(\mathbf{s}_j),\mathcal{I}(\mathbf{r}_i)} - \tilde{Z}^{(2)}_{\mathcal{I}(\mathbf{r}_i),\mathcal{I}(\mathbf{r}_i)}|}{\sqrt{d}}\\
        \le& \frac{\sum_{t=1}^2|\tilde{Z}^{(t)}_{\mathcal{I}(\mathbf{s}_j),\mathcal{I}(\mathbf{r}_i)} - \tilde{Z}^{(t-1)}_{\mathcal{I}(\mathbf{s}_j),\mathcal{I}(\mathbf{r}_i)}| + |\tilde{Z}^{(0)}_{\mathcal{I}(\mathbf{s}_j),\mathcal{I}(\mathbf{r}_i)} - \tilde{Z}^{(0)}_{\mathcal{I}(\mathbf{r}_i),\mathcal{I}(\mathbf{r}_i)}| + \sum_{t=1}^2|\tilde{Z}^{(t)}_{\mathcal{I}(\mathbf{r}_i),\mathcal{I}(\mathbf{r}_i)}-\tilde{Z}^{(t-1)}_{\mathcal{I}(\mathbf{r}_i),\mathcal{I}(\mathbf{r}_i)}|}{\sqrt{d}}\\
        =& \mathcal{O}\left(\eta_1^2\lambda^2/m\right).
    \end{aligned}
    \end{equation}
	Under Condition \ref{condition: condition}, we have
	\begin{align}\label{equ: s1s31}
		\exp\left(\left(\tilde{Z}^{(T_1)}_{\mathcal{I}(\mathbf{s}_j),\mathcal{I}(\mathbf{r}_i)} - \tilde{Z}^{(T_1)}_{\mathcal{I}(\mathbf{r}_i),\mathcal{I}(\mathbf{r}_i)}\right)/\sqrt{d}\right) = \exp\left(\mathcal{O}\left(\eta_1^2\lambda^2T_1/m\right)\right) \le 1.2.
	\end{align}
	Using similar proof, we also have 
	\begin{align}\label{equ: s1s32}
		\exp\left(\left(\tilde{Z}^{(T_1)}_{\mathcal{I}(\mathbf{s}_j),\mathcal{I}(\mathbf{r}_i)} - \tilde{Z}^{(T_1)}_{\mathcal{I}(\mathbf{r}_i),\mathcal{I}(\mathbf{r}_i)}\right)/\sqrt{d}\right)\ge 0.8.
	\end{align}

    \textbf{Proof of the fourth statement:}
    First, we prove the first part that $\alpha^{(T_1)}_{1}([\mathbf{o}\:\:\mathbf{s}_j])\le \frac{1}{4d^{2}}$.
    
    The increments of $ \tilde{Z}^{(t)}_{\mathcal{I}(\mathbf{o}),\mathcal{I}(\mathbf{s}_j)}$ in the first and the second iterations satisfy
\begin{equation}\label{equ: s1s41}
	\begin{aligned}
		&|\tilde{Z}^{(1)}_{\mathcal{I}(\mathbf{o}),\mathcal{I}(\mathbf{s}_j)}- \tilde{Z}^{(0)}_{\mathcal{I}(\mathbf{o}),\mathcal{I}(\mathbf{s}_j)}|\\
        		\le& \frac{\eta_1\lambda}{n\sqrt{d}}\underbrace{\alpha_{1}^{(0)}([\mathbf{o},\mathbf{s}_j])}_{\le 1}\sum_{i\in\mathcal{B}_j}\underbrace{\left|\left(\mathbf{e}_{\mathcal{I}(\mathbf{r}_i)}-\textbf{logit}^{(0)}([\mathbf{o}\:\:\mathbf{s}_{j}])\right)^\top\boldsymbol{\zeta}^{(0)}([\mathbf{o}\:\:\mathbf{s}_j])\left(\mathbf{o} - \tilde{\Xi}^{(0)}([\mathbf{o}\:\:\mathbf{s}_j])\right)\right|}_{=\mathcal{O}(\sigma_0\sqrt{d}) \text{ by Lemma \ref{lemma: init}}}\cdot\Theta(\sqrt{d})\\
		=& \mathcal{O}\left(\frac{\lambda\eta_1 \sigma_0\sqrt{d}K }{n}\right),
	\end{aligned}
\end{equation}
and
	\begin{equation}\label{equ: s1s42}
	\begin{aligned}
		&\tilde{Z}^{(2)}_{\mathcal{I}(\mathbf{o}),\mathcal{I}(\mathbf{s}_j)}- \tilde{Z}^{(1)}_{\mathcal{I}(\mathbf{o}),\mathcal{I}(\mathbf{s}_j)}\\
		=& \frac{\eta_1\lambda}{n\sqrt{d}}\underbrace{\alpha_{1}^{(1)}([\mathbf{o},\mathbf{s}_j])}_{=\Theta(1)}\sum_{i\in\mathcal{B}_j}\left(\mathbf{e}_{\mathcal{I}(\mathbf{r}_i)}-\textbf{logit}^{(1)}([\mathbf{o}\:\:\mathbf{s}_{j}])\right)^\top\boldsymbol{\zeta}^{(1)}([\mathbf{o}\:\:\mathbf{s}_j])\left(\mathbf{o} - \tilde{\Xi}^{(1)}([\mathbf{o}\:\:\mathbf{s}_j])\right)\cdot\Theta(\sqrt{d})\\
		\overset{(a)}{=}& -\Theta\left(\frac{\lambda^2\eta_1^2dK }{nm}\right),
	\end{aligned}
\end{equation}
where $(a)$ holds because according to (\ref{equ: s1s3}), $\zeta^{(1)}_{\mathcal{I}(\mathbf{d})}([\mathbf{o}\;\;\mathbf{s}_j]) = \Omega(\log(d))$ holds, resulting in $\text{logit}^{(1)}_{\mathcal{I}(\mathbf{d})}([\mathbf{o}\;\;\mathbf{s}_j]) = \Theta(1)$, which inturn yields $\left(\mathbf{e}_{\mathcal{I}(\mathbf{r}_i)}-\textbf{logit}^{(1)}([\mathbf{o}\:\:\mathbf{s}_{j}])\right)^\top\boldsymbol{\zeta}^{(1)}([\mathbf{o}\:\:\mathbf{s}_j])\left(\mathbf{o} - \tilde{\Xi}^{(1)}([\mathbf{o}\:\:\mathbf{s}_j])\right) = -\Theta(\eta_1\lambda d/m)$.

Additionally, the increment of $\tilde{Z}^{(t)}_{\mathcal{I}(\mathbf{s}_j),\mathcal{I}(\mathbf{s}_j)}$ for $t\in[T_1]$ satisfies
	\begin{equation}\label{equ: s1s43}
	\begin{aligned}
		&|\tilde{Z}^{(t+1)}_{\mathcal{I}(\mathbf{s}_j),\mathcal{I}(\mathbf{s}_j)}- \tilde{Z}^{(t)}_{\mathcal{I}(\mathbf{s}_j),\mathcal{I}(\mathbf{s}_j)}|\\
		\le& \frac{\eta_1\lambda}{n\sqrt{d}}\underbrace{\alpha_{2}^{(t)}([\mathbf{o},\mathbf{s}_j])}_{=\Theta(1)}\sum_{i\in\mathcal{B}_j}\underbrace{\left|\left(\mathbf{e}_{\mathcal{I}(\mathbf{r}_i)}-\textbf{logit}^{(t)}([\mathbf{o}\:\:\mathbf{s}_{j}])\right)^\top\boldsymbol{\zeta}^{(t)}([\mathbf{o}\:\:\mathbf{s}_j])\left(\mathbf{s}_j - \tilde{\Xi}^{(t)}([\mathbf{o}\:\:\mathbf{s}_j])\right)\right|}_{=\mathcal{O}(\eta_1\lambda d/m) \text{ by (\ref{equ: s1s3})}}\\
		=& \mathcal{O}\left(\frac{\lambda^2\eta_1^2\sqrt{d}K}{nm}\right).
	\end{aligned}
\end{equation}

Combining (\ref{equ: s1s41}), (\ref{equ: s1s42}) (\ref{equ: s1s43}) and the fact that $ |\tilde{Z}^{(0)}_{\mathcal{I}(\mathbf{o}),\mathcal{I}(\mathbf{s}_j)} - \tilde{Z}^{(0)}_{\mathcal{I}(\mathbf{s}_j),\mathcal{I}(\mathbf{s}_j)}| = \tilde{\mathcal{O}}(\sigma_0)$ by Lemma \ref{lemma: init}, we have
 \begin{equation}
    \begin{aligned}
		&\frac{\tilde{Z}^{(2)}_{\mathcal{I}(\mathbf{o}),\mathcal{I}(\mathbf{s}_j)} - \tilde{Z}^{(2)}_{\mathcal{I}(\mathbf{s}_j),\mathcal{I}(\mathbf{s}_j)}}{\sqrt{d}}\\
        \le& \frac{\sum_{t=1}^2\tilde{Z}^{(t)}_{\mathcal{I}(\mathbf{o}),\mathcal{I}(\mathbf{s}_j)} - \tilde{Z}^{(t-1)}_{\mathcal{I}(\mathbf{o}),\mathcal{I}(\mathbf{s}_j)} + |\tilde{Z}^{(0)}_{\mathcal{I}(\mathbf{o}),\mathcal{I}(\mathbf{s}_j)} - \tilde{Z}^{(0)}_{\mathcal{I}(\mathbf{s}_j),\mathcal{I}(\mathbf{s}_j)}| + \sum_{t=1}^2|\tilde{Z}^{(t)}_{\mathcal{I}(\mathbf{s}_j),\mathcal{I}(\mathbf{s}_j)}-\tilde{Z}^{(t-1)}_{\mathcal{I}(\mathbf{s}_j),\mathcal{I}(\mathbf{s}_j)}|}{\sqrt{d}}\\
        \le& -\Theta\left(\frac{\lambda^2\eta^2_1\sqrt{d}K}{nm}\right).
    \end{aligned}
    \end{equation}
Under Condition \ref{condition: condition}, we have
\begin{align}
	\exp\left(\left(\tilde{Z}^{(2)}_{\mathcal{I}(\mathbf{o}),\mathcal{I}(\mathbf{s}_j)} - \tilde{Z}^{(2)}_{\mathcal{I}(\mathbf{s}_j),\mathcal{I}(\mathbf{s}_j)}\right)/\sqrt{d}\right) \le\exp\left(-\Theta\left(\frac{\eta_1^2\lambda^2\sqrt{d}K}{nm}\right)\right)\le \frac{1}{4d^{2}}.
\end{align}

Next, we prove the second conclusion that $\alpha^{(T_1)}_{1}([\mathbf{o}\:\:\mathbf{s}_j\:\:\mathbf{r}_i]) \le \frac{1}{4d^{2}}$.
The increments of $\tilde{Z}_{\mathcal{I}(\mathbf{o}),\mathcal{I}(\mathbf{r}_i)}^{(t)}$ in the first and the second iterations satisfy
\begin{equation}\label{equ: s1r1}
	\begin{aligned}
		&|\tilde{Z}^{(1)}_{\mathcal{I}(\mathbf{o}),\mathcal{I}(\mathbf{r}_i)}- \tilde{Z}^{(0)}_{\mathcal{I}(\mathbf{o}),\mathcal{I}(\mathbf{r}_i)}|\\
		=& \frac{\eta_1\lambda}{n\sqrt{d}}\sum_{j\in\mathcal{D}_i}\underbrace{\alpha_{1}^{(0)}([\mathbf{o},\mathbf{s}_j,\mathbf{r}_i])}_{=\Theta (1)}\\
        &\cdot\underbrace{\left|\left(\mathbf{e}_{\mathcal{I}(\mathbf{a}_j)}-\textbf{logit}^{(0)}([\mathbf{o}\:\:\mathbf{s}_{j}\:\:\mathbf{r}_i])\right)^\top\boldsymbol{\zeta}^{(0)}([\mathbf{o}\:\:\mathbf{s}_j\:\:\mathbf{r}_i])\cdot\left(\mathbf{o} - \tilde{\Xi}^{(0)}([\mathbf{o}\:\:\mathbf{s}_j\:\:\mathbf{r}_i])\right)\right|}_{=\mathcal{O}(\sigma_0\sqrt{d}) \text{ by Lemma \ref{lemma: init}}}\cdot\Theta(\sqrt{d})\\
		=& \mathcal{O}\left(\frac{\lambda\eta_1 \sigma_0 \sqrt{d}}{n\sqrt{m}}\right),
	\end{aligned}
\end{equation}
and
\begin{equation}\label{equ: s1r2}
	\begin{aligned}
		&\tilde{Z}^{(2)}_{\mathcal{I}(\mathbf{o}),\mathcal{I}(\mathbf{r}_i)}- \tilde{Z}^{(1)}_{\mathcal{I}(\mathbf{o}),\mathcal{I}(\mathbf{r}_i)}\\
		=& \frac{\eta_1\lambda}{n\sqrt{d}}\sum_{j\in\mathcal{D}_i}\underbrace{\alpha_{1}^{(1)}([\mathbf{o}\:\:\mathbf{s}_{j}\:\:\mathbf{r}_i])}_{=\Theta(1)}\\
        &\cdot\left(\mathbf{e}_{\mathcal{I}(\mathbf{a}_j)}-\textbf{logit}^{(1)}([\mathbf{o}\:\:\mathbf{s}_{j}\:\:\mathbf{r}_i])\right)^\top\boldsymbol{\zeta}^{(1)}([\mathbf{o}\:\:\mathbf{s}_j\:\:\mathbf{r}_i])\cdot\left(\mathbf{o} - \tilde{\Xi}^{(1)}([\mathbf{o}\:\:\mathbf{s}_j\:\:\mathbf{r}_i])\right)\cdot\Theta(\sqrt{d})\\
		\overset{(a)}{=}& -\Omega\left(\frac{\lambda^2\eta_1^2d}{nm}\right),
	\end{aligned}
\end{equation}
where $(a)$ holds because according to (\ref{equ: s1s3}), $\zeta^{(1)}_{\mathcal{I}(\mathbf{d})}([\mathbf{o}\;\;\mathbf{s}_j\;\;\mathbf{r}_i]) = \Omega(\log(d))$ holds, resulting in $\text{logit}^{(1)}_{\mathcal{I}(\mathbf{d})}([\mathbf{o}\;\;\mathbf{s}_j\;\;\mathbf{r}_i]) = \Theta(1)$, which in turn yields $\left(\mathbf{e}_{\mathcal{I}(\mathbf{a}_j)}-\textbf{logit}^{(1)}([\mathbf{o}\:\:\mathbf{s}_{j}\:\:\mathbf{r}_i])\right)^\top\boldsymbol{\zeta}^{(1)}([\mathbf{o}\:\:\mathbf{s}_j\:\:\mathbf{r}_i])\cdot\left(\mathbf{o} - \tilde{\Xi}^{(1)}([\mathbf{o}\:\:\mathbf{s}_j\:\:\mathbf{r}_i])\right) = -\Theta(\eta_1\lambda d/m)$.

Consequently, combining (\ref{equ: s1r1}), (\ref{equ: s1r2}), (\ref{equ: s1a1}), and the fact that $|\tilde{Z}^{(0)}_{\mathcal{I}(\mathbf{o}),\mathcal{I}(\mathbf{r}_j)} - \tilde{Z}^{(0)}_{\mathcal{I}(\mathbf{s}_j),\mathcal{I}(\mathbf{r}_i)}| = \tilde{\mathcal{O}}(\sigma_0)$ by Lemma \ref{lemma: init}, we have
 \begin{equation}
    \begin{aligned}
		&\frac{\tilde{Z}^{(2)}_{\mathcal{I}(\mathbf{o}),\mathcal{I}(\mathbf{r}_i)} - \tilde{Z}^{(2)}_{\mathcal{I}(\mathbf{s}_j),\mathcal{I}(\mathbf{r}_i)}}{\sqrt{d}}\\
        \le& \frac{\sum_{t=1}^2\tilde{Z}^{(t)}_{\mathcal{I}(\mathbf{o}),\mathcal{I}(\mathbf{r}_i)} - \tilde{Z}^{(t-1)}_{\mathcal{I}(\mathbf{o}),\mathcal{I}(\mathbf{r}_i)} + |\tilde{Z}^{(0)}_{\mathcal{I}(\mathbf{o}),\mathcal{I}(\mathbf{r}_i)} - \tilde{Z}^{(0)}_{\mathcal{I}(\mathbf{s}_j),\mathcal{I}(\mathbf{r}_i)}| + \sum_{t=1}^2|\tilde{Z}^{(t)}_{\mathcal{I}(\mathbf{s}_j),\mathcal{I}(\mathbf{r}_i)}-\tilde{Z}^{(t-1)}_{\mathcal{I}(\mathbf{s}_j),\mathcal{I}(\mathbf{r}_i)}|}{\sqrt{d}}\\
        \le& -\Theta\left(\frac{\lambda^2\eta^2_1\sqrt{d}K}{nm}\right).
    \end{aligned}
\end{equation}

Under condition \ref{condition: condition}, we have
\begin{align}\label{s1s4}
	\exp\left(\tilde{Z}^{(2)}_{\mathcal{I}(\mathbf{o}),\mathcal{I}(\mathbf{r}_i)}- \tilde{Z}^{(2)}_{\mathcal{I}(\mathbf{s}_j),\mathcal{I}(\mathbf{r}_i)}/\sqrt{d}\right) = \exp\left(-\Theta\left(\frac{\eta_1^2\lambda^2\sqrt{d}}{nm}\right)\right) \le \frac{1}{4d^{2}}.
\end{align}
Combining (\ref{s1s4}) with (\ref{equ: s1s31}), (\ref{equ: s1s32}) yields the conclusion.

\textbf{Proof of the fifth statement:}
    By Lemma \ref{lemma: Gaussian tail lower}, for $k\in \mathcal{B}_j$ and $i \in \mathcal{S}_{\text{s},k,j}^{(0)}$, we have
    \begin{align}
        \mathbb{P}\left[\langle\mathbf{w}_{\mathcal{I}(\mathbf{r}_k),i}^{(0)}, \mathbf{s}_j\rangle \ge \Theta\left(\frac{\lambda}{nmd}\right)\right] \ge 1-\Theta\left(\frac{\lambda}{nmd\sigma_0}\right).
    \end{align}
    In Stage 1, by the gradient update, if a neuron $i\in[m]$ satisfies 
    \begin{align}
        \langle \mathbf{w}_{\mathcal{I}(\mathbf{r}_k),i}^{(0)}, \Xi^{(0)}\left([\mathbf{o}\;\;\mathbf{s}_j]\right) \rangle \le 0 \text{ or } \langle \mathbf{w}_{\mathcal{I}(\mathbf{r}_k),i}^{(0)},\Xi^{(0)}\left([\mathbf{o}\;\;\mathbf{s}_j\;\; \mathbf{r}_i]\right) \rangle \ge 0,
    \end{align}
    we have
    \begin{align}\label{equ: s1s51}
        \langle \mathbf{w}_{\mathcal{I}(\mathbf{r}_k),i}^{(t+1)}, \mathbf{s}_j \rangle - \langle \mathbf{w}_{\mathcal{I}(\mathbf{r}_k),i}^{(t)}, \mathbf{s}_j \rangle = -\mathcal{O}\left(\frac{\lambda\eta_1}{nmd}\right).
    \end{align}
    Using the union bound over $i \in \mathcal{S}_{\text{s},k,j}^{(0)}$, with probability $1-\Theta(\frac{\lambda}{(n\sqrt{d})})$, we have
        \begin{align}\label{equ: s1s52}
            \langle\mathbf{w}_{\mathcal{I}(\mathbf{r}_k),i}^{(0)}, \mathbf{s}_j\rangle \ge \Theta\left(\frac{\lambda}{nmd}\right).
        \end{align}
    Combining (\ref{equ: s1s51}) and (\ref{equ: s1s52}), for all $t\in[T_1]$, we have
\begin{align}
    \langle\mathbf{w}_{\mathcal{I}(\mathbf{r}_k),i}^{(t)}, \mathbf{s}_j\rangle \ge 0.
\end{align}

\textbf{Proof of the sixth statement:}
Based on Lemma \ref{lemma:init}, for all $l\in\mathcal{S}_{\text{r},j,k}^{(0)}$, the updates of $\langle \mathbf{w}^{(t)}_{\mathcal{I}(\mathbf{a}_j),l},\mathbf{r}_k\rangle$ and $ \langle \mathbf{w}^{(t)}_{\mathcal{I}(\mathbf{a}_j),l},\mathbf{s}_j\rangle$ satisfy:
\begin{equation}
	\begin{aligned}
		\langle \mathbf{w}^{(t+1)}_{\mathcal{I}(\mathbf{a}_j),l},\mathbf{r}_k \rangle
		\ge& \langle \mathbf{w}^{(t)}_{\mathcal{I}(\mathbf{a}_j),l},\mathbf{r}_k\rangle\\
		&+ \underbrace{\lambda\frac{\eta^{(t)}}{nm}(1-\text{logit}^{(t)}_{\mathcal{I}(\mathbf{a}_j)}([\mathbf{o}\:\:\mathbf{s}_j\:\:\mathbf{r}_k])) - 2\lambda\sum_{i\neq j,i\in\mathcal{D}_k}\frac{\eta^{(t)}}{nm}\text{logit}^{(t)}_{\mathcal{I}(\mathbf{a}_j)}([\mathbf{o}\:\:\mathbf{s}_i\:\:\mathbf{r}_k])}_{\Delta_{1,t}},
	\end{aligned}
\end{equation}
and 
\begin{equation}
	\begin{aligned}
		\langle \mathbf{w}^{(t+1)}_{\mathcal{I}(\mathbf{a}_j),l},\mathbf{s}_j \rangle
		\ge& \langle \mathbf{w}^{(t)}_{\mathcal{I}(\mathbf{a}_j),l},\mathbf{s}_j\rangle\\
		&\underbrace{-\lambda \frac{\eta^{(t)} \tilde{K}(j)}{nm}\text{logit}_{\mathcal{I}(\mathbf{a}_j)}^{(t)}([\mathbf{o}\:\:\mathbf{s}_j])\!+\!\lambda\frac{\Theta(1)\cdot\eta^{(t)}}{nm}\sum_{k\in \mathcal{B}_j}(1\!-\!\text{logit}^{(t)}_{\mathcal{I}(\mathbf{a}_j)}([\mathbf{o}\:\:\mathbf{s}_j\:\:\mathbf{r}_k]))}_{\Delta_{2,t}},
	\end{aligned}
\end{equation}
by the fact that $\left\|\mathbf{r}_k\right\|_2^2 \le \langle\Xi^{(t)}([\mathbf{o}\:\:\mathbf{s}_j\:\:\mathbf{r}_k]),\mathbf{r}_k\rangle \le 2\left\|\mathbf{r}_k\right\|_2^2$.

Similarly, for $\langle \mathbf{w}^{(t+1)}_{\mathcal{I}(\mathbf{a}_j),l},\mathbf{o} \rangle$, we have
\begin{equation}
\begin{aligned}
    &\langle \mathbf{w}^{(t+1)}_{\mathcal{I}(\mathbf{a}_j),l},\mathbf{o} \rangle\ge \langle \mathbf{w}^{(t)}_{\mathcal{I}(\mathbf{a}_j),l},\mathbf{o} \rangle \\
    &\underbrace{-\lambda \frac{\eta^{(t)} \tilde{K}(j)\sqrt{d}}{nm}\text{logit}_{\mathcal{I}(\mathbf{a}_j)}^{(t)}([\mathbf{o}\:\:\mathbf{s}_j])\!+\!\lambda\frac{\Theta(1)\cdot\eta^{(t)}\sqrt{d}}{nm}\sum_{k\in \mathcal{B}_j}(1\!-\!\text{logit}^{(t)}_{\mathcal{I}(\mathbf{a}_j)}([\mathbf{o}\:\:\mathbf{s}_j\:\:\mathbf{r}_k]))-\mathcal{O}\left(\frac{\lambda\eta^{(t)}}{md^{1/2}}\right)}_{\Delta^o_{t}},
\end{aligned}
\end{equation}
When $1\le t \le T_1$, $\langle \mathbf{w}^{(t)}_{\mathcal{I}(\mathbf{a}_j),l},\mathbf{r}_k\rangle$, since $1-\text{logit}^{(t)}_{\mathcal{I}(\mathbf{a}_j)}([\mathbf{o}\:\:\mathbf{s}_j\:\:\mathbf{r}_k]) = \Theta(1)$, we have$\Delta_{1,t}>0$, $\Delta_{2,t}>0$, and $\Delta^o_{t}>0$. 
Therefore, the inner products $\langle \mathbf{w}^{(t)}_{\mathcal{I}(\mathbf{a}_j),l},\mathbf{s}_j \rangle$ and $\langle \mathbf{w}^{(t)}_{\mathcal{I}(\mathbf{a}_j),l},\mathbf{o} \rangle$ remain positive. 
As a result, for all $t\in[T_1]$ and $l\in\mathcal{S}_{\text{r},j,k}^{(0)}$, we have
\begin{align}
    \langle \mathbf{w}^{(t)}_{\mathcal{I}(\mathbf{a}_j),l}, \Xi^{(t)}([\mathbf{o}\:\:\mathbf{s}_j\:\:\mathbf{r}_k]) \rangle > 0.
\end{align}
	This completes the proof.
\end{proof}

\subsection{Proof of pre-training Stage 2}
In this subsection, we will prove several key properties of pre-training Stage 1, including the attention scores, activation patterns, and MLP feature learning.
First, we prove the following lemma.
\begin{lemma}\label{lemma: mag_n}
    For all $t\in[0,T_p]$, the following holds:
    \begin{itemize}
	\item For all $j\in [N], l\in[m]$ and $k\neq j$, we have
    \begin{align}
		\frac{1}{m}\sum_{l=1}^m\langle\mathbf{w}^{(t)}_{\mathcal{I}(\mathbf{a}_j),l}, \mathbf{s}_k\rangle =\tilde{\mathcal{O}}\left(\sigma_0\right).
	\end{align}
    \item For all $j\in[N],l\in[m]$ and $i\notin\mathcal{B}_j$, we have
    \begin{align}
        \frac{1}{m}\sum_{l=1}^m\langle\mathbf{w}^{(t)}_{\mathcal{I}(\mathbf{a}_j),l}, \mathbf{r}_i\rangle =\tilde{\mathcal{O}}\left(\sigma_0\right).
    \end{align}
    \item For all $i,k\in\mathcal{R}$, $l\in[m]$, we have
    \begin{align}
        \frac{1}{m}\sum_{l=1}^m\langle\mathbf{w}^{(t)}_{\mathcal{I}(\mathbf{r}_i),l}, \mathbf{r}_k\rangle =\tilde{\mathcal{O}}\left(\sigma_0\right).
    \end{align}
    \end{itemize}
\end{lemma}
\begin{proof}
    We prove these three statements sequentially.

    \textbf{Proof of the first statement:}
    During pre-training $t\in[0,T_p-1]$, for all $l\in[m]$, $j\in[N]$, and $k\neq j$, we have
    \begin{equation}
    \begin{aligned}
        &\langle\mathbf{w}^{(t+1)}_{\mathcal{I}(\mathbf{a}_j),l},\mathbf{s}_k\rangle - \langle\mathbf{w}^{(t)}_{\mathcal{I}(\mathbf{a}_j),l},\mathbf{s}_k\rangle\\
        =& - \sum_{i\in\mathcal{B}_k}\alpha^{(t)}_2([\mathbf{o}\;\;\mathbf{s}_k\;\;\mathbf{r}_i])\frac{\eta^{(t)}\lambda}{mn}\mathbb{I}(\langle\mathbf{w}^{(t)}_{\mathcal{I}(\mathbf{a}_j),l},\Xi^{(t)}([\mathbf{o}\;\;\mathbf{s}_k\;\;\mathbf{r}_i])\rangle>0)\text{logit}_{\mathcal{I}(\mathbf{a}_j)}^{(t)}([\mathbf{o}\;\;\mathbf{s}_k\;\;\mathbf{r}_i])\\
        &-\sum_{i\in\mathcal{B}_k}\alpha_{2}^{(t)}([\mathbf{o}\;\;\mathbf{s}_k])\frac{\eta^{(t)}\lambda}{mn}\mathbb{I}(\langle\mathbf{w}^{(t)}_{\mathcal{I}(\mathbf{a}_j),l},\Xi^{(t)}([\mathbf{o}\;\;\mathbf{s}_k])\rangle>0)\text{logit}_{\mathcal{I}(\mathbf{a}_j)}^{(t)}([\mathbf{o}\;\;\mathbf{s}_k])\\
        \le& 0.
    \end{aligned}
    \end{equation}
    Therefore, for all $l\in[m]$, $t \in [T_p]$ and $j\in[N]$, $k\neq j$, by Lemma \ref{lemma: init}, we have
\begin{equation}
    \begin{aligned}
         \langle\mathbf{w}^{(t)}_{\mathcal{I}(\mathbf{a}_j),l},\mathbf{s}_k\rangle \le  \langle\mathbf{w}^{(0)}_{\mathcal{I}(\mathbf{a}_j),l},\mathbf{s}_k\rangle = \tilde{\mathcal{O}}(\sigma_0).
    \end{aligned}
    \end{equation}

    \textbf{Proof of the second statement:}
    During pre-training $t\in[0,T_p-1]$, for all $l\in[m]$, $j\in[N]$, and $i\notin\mathcal{B}_j$, we have
    \begin{equation}
    \begin{aligned}
        &\langle\mathbf{w}^{(t+1)}_{\mathcal{I}(\mathbf{a}_j),l},\mathbf{r}_i\rangle - \langle\mathbf{w}^{(t)}_{\mathcal{I}(\mathbf{a}_j),l},\mathbf{r}_i\rangle\\
        =& - \sum_{j\in\mathcal{D}_i}\alpha_{3}^{(t)}([\mathbf{o}\;\;\mathbf{s}_j\;\;\mathbf{r}_i])\frac{\eta^{(t)}\lambda}{mn}\mathbb{I}(\langle\mathbf{w}^{(t)}_{\mathcal{I}(\mathbf{a}_j),l},\Xi^{(t)}([\mathbf{o}\;\;\mathbf{s}_j\;\;\mathbf{r}_i])\rangle>0)\text{logit}_{\mathcal{I}(\mathbf{a}_j)}^{(t)}([\mathbf{o}\;\;\mathbf{s}_j\;\;\mathbf{r}_i])\\
        \le& 0.
    \end{aligned}
    \end{equation}
    Hence, for all $l\in[m],j\in[N]$ and $i\notin \mathcal{B}_j$, by Lemma \ref{lemma: init}, we have
    \begin{equation}
    \begin{aligned}
         \langle\mathbf{w}^{(t)}_{\mathcal{I}(\mathbf{a}_j),l},\mathbf{r}_i\rangle \le  \langle\mathbf{w}^{(0)}_{\mathcal{I}(\mathbf{a}_j),l},\mathbf{r}_i\rangle = \tilde{\mathcal{O}}(\sigma_0).
    \end{aligned}
    \end{equation}

    \textbf{Proof of the third statement:}
    During Pre-training $t\in[0,T_p-1]$, for all $i,k\in\mathcal{R}$ and $l\in[m]$, we have
    \begin{equation}
    \begin{aligned}
        &\langle\mathbf{w}^{(t+1)}_{\mathcal{I}(\mathbf{r}_i),l},\mathbf{r}_k\rangle - \langle\mathbf{w}^{(t)}_{\mathcal{I}(\mathbf{r}_i),l},\mathbf{r}_k\rangle\\
        =& - \sum_{j\in\mathcal{D}_k}\alpha_{3}^{(t)}([\mathbf{o}\;\;\mathbf{s}_j\;\;\mathbf{r}_k])\frac{\eta^{(t)}\lambda}{mn}\mathbb{I}(\langle\mathbf{w}^{(t)}_{\mathcal{I}(\mathbf{r}_i),l},\Xi^{(t)}([\mathbf{o}\;\;\mathbf{s}_j\;\;\mathbf{r}_k])\rangle>0)\text{logit}_{\mathcal{I}(\mathbf{r}_i)}^{(t)}([\mathbf{o}\;\;\mathbf{s}_j\;\;\mathbf{r}_k])\\
        \le& 0.
    \end{aligned}
    \end{equation}
    Hence, for all $l\in[m]$ and $i,k \in \mathcal{R}$, by Lemma \ref{lemma: init}, we have
    \begin{equation}
    \begin{aligned}
         \langle\mathbf{w}^{(t)}_{\mathcal{I}(\mathbf{r}_i),l},\mathbf{r}_k\rangle \le  \langle\mathbf{w}^{(0)}_{\mathcal{I}(\mathbf{r}_i),l},\mathbf{r}_k\rangle = \tilde{\mathcal{O}}(\sigma_0).
    \end{aligned}
    \end{equation}
    
This completes the proof.
\end{proof}

Now, we proceed to derive the key properties in Stage 2. 
\begin{lemma}\label{lemma: pt_stage2}
Under Condition \ref{condition: condition}, with probability $1-\delta$, in pre-training Stage 2 ($T_1\le t\le T_2 = \Theta(nm\log(d)/(\lambda^2\eta_2 K))$), the followings hold:
\begin{enumerate}
	\item For all $j\in[N]$ and $i\in\mathcal{B}_j$, we have $1-\textnormal{logit}^{(t)}_{\mathcal{I}(\mathbf{a}_j)}([\mathbf{o}\:\:\mathbf{s}_j\:\:\mathbf{r}_i]) = \Theta(1)$ and $1-\tilde{K}(j)\textnormal{logit}^{(t)}_{\mathcal{I}(\mathbf{r}_i)}([\mathbf{o}\:\:\mathbf{s}_j]) = \Theta(1)$.	\item For all $j\in[N]$ and $i\in\mathcal{B}_j$, we have $0.7 \le \frac{\alpha^{(t)}_{2}([\mathbf{o}\:\:\mathbf{s}_j\:\:\mathbf{r}_i])}{\alpha^{(t)}_{3}([\mathbf{o}\:\:\mathbf{s}_j\:\:\mathbf{r}_i])} \le 1.6$.
    \item For all $j\in[N]$ and $i\in\mathcal{B}_j$, we have
	\begin{align}
		\alpha^{(t)}_{1}([\mathbf{o}\:\:\mathbf{s}_j])\le \frac{2}{3d^{2}} \text{ and }\alpha^{(t)}_{1}([\mathbf{o}\:\:\mathbf{s}_j\:\:\mathbf{r}_i]) \le \frac{2}{3d^{2}}.
	\end{align}
    \item We have $\oldfrac{1}{m}\sum_{k=1}^{m}\langle \mathbf{w}_{\mathcal{I}(\mathbf{d}),k}^{(t)}, 2\mathbf{o}\rangle = \Omega(\frac{\log(d)}{\lambda})$ and $\oldfrac{1}{m}\sum_{k=1}^{m}\langle \mathbf{w}_{\mathcal{I}(\mathbf{d}),k}^{(t)}, 2\mathbf{o}\rangle = \mathcal{O}(\frac{d\log(d)}{\lambda})$.
    \item For each $j\in[N]$ and all $k\in \mathcal{B}_j$, $l \in \mathcal{S}_{\text{s},k,j}^{(0)}$, we have $\langle\mathbf{w}_{\mathcal{I}(\mathbf{r}_k),l}^{(t)}, \Xi^{(t)}([\mathbf{o}\;\;\mathbf{s}_j]))\rangle > 0$.\\
    \item For each $j\in[N]$ and all $k\in\mathcal{B}_j$, $l\in\mathcal{S}_{\text{r},j,k}^{(0)}$. we have $\langle \mathbf{w}^{(t)}_{\mathcal{I}(\mathbf{a}_j),l}, \Xi^{(t)}([\mathbf{o}\:\:\mathbf{s}_j\:\:\mathbf{r}_k]) \rangle > 0$.
    \item For all $j\in[N]$, $l\in[m]$, we have 
    \begin{align}
    \langle\mathbf{w}^{(t)}_{\mathcal{I}(\mathbf{r}_i),l},\mathbf{o}\rangle \le \mathcal{O}\left(\frac{\log(d)}{\lambda}\right),
        \langle\mathbf{w}^{(t)}_{\mathcal{I}(\mathbf{a}_j),l},\mathbf{o}\rangle \le \mathcal{O}\left(\frac{\log(d)}{n\lambda}\right), 
    \end{align}
\end{enumerate}
\end{lemma}
\begin{proof}
    In this proof, we establish each of the seven statements in Lemma \ref{lemma: pt_stage2} by induction.
    It is straightforward to demonstrate that the conclusions hold at $T_1$ using Lemma \ref{lemma: aux1}.
    Suppose that the conclusions hold at iteration $T_1\le t\le T_2-1$. 
    We now prove that the conclusions hold at iteration $t+1$ sequentially.
    
    \textbf{Proof of the first statement:}
	For any $j\in[N]$, $i\in\mathcal{B}_j$, the model updates satisfy
	\begin{equation}
	\begin{aligned}
		&\frac{1}{m}\sum_{k=1}^{m}\langle \mathbf{w}^{(t+1)}_{\mathcal{I}(\mathbf{r}_i),k},\Xi^{(t+1)}([\mathbf{o}\:\:\mathbf{s}_j])\rangle- \frac{1}{m}\sum_{k=1}^{m}\langle \mathbf{w}^{(t)}_{\mathcal{I}(\mathbf{r}_i),k}, \Xi^{(t)}([\mathbf{o}\:\:\mathbf{s}_j])\rangle\\ \overset{(a)}{\le}& \Theta(1)\cdot\frac{\lambda\eta_2}{nm}(1-\tilde{K}(j)\text{logit}^{(t)}_{\mathcal{I}(\mathbf{r}_i)}([\mathbf{o}\:\:\mathbf{s}_j])\\
		\overset{(b)}{=}& \mathcal{O} \left(\frac{\lambda\eta_2}{nm}\right),
	\end{aligned}
	\end{equation}
	and
	\begin{equation}
		\begin{aligned}
			&\frac{1}{m}\sum_{k=1}^{m}\langle \mathbf{w}^{(t+1)}_{\mathcal{I}(\mathbf{a}_j),k}, \Xi^{(t+1)}([\mathbf{o}\:\:\mathbf{s}_j\:\:\mathbf{r}_i])\rangle - \frac{1}{m}\sum_{k=1}^{m}\langle \mathbf{w}^{(t)}_{\mathcal{I}(\mathbf{a}_j),k}, \Xi^{(t)}([\mathbf{o}\:\:\mathbf{s}_j\:\:\mathbf{r}_i])\rangle\\
			\overset{(c)}{=}&\Theta(1)\cdot \left(\frac{\lambda\eta_2}{nm}(1-\text{logit}_{\mathcal{I}(\mathbf{a}_j)}^{(t)}([\mathbf{o}\:\:\mathbf{s}_j\:\:\mathbf{r}_i]))\langle \Xi^{(t)}([\mathbf{o}\:\:\mathbf{s}_j\:\:\mathbf{r}_i]), \Xi^{(t+1)}([\mathbf{o}\:\:\mathbf{s}_j\:\:\mathbf{r}_i])\rangle\right.\\
			&+ \sum_{k\neq i}\frac{\lambda\eta_2}{nm}(1-\text{logit}_{\mathcal{I}(\mathbf{a}_j)}^{(t)}([\mathbf{o}\:\:\mathbf{s}_j\:\:\mathbf{r}_k]))\langle \Xi^{(t)}([\mathbf{o}\:\:\mathbf{s}_j\:\:\mathbf{r}_k]), \Xi^{(t+1)}([\mathbf{o}\:\:\mathbf{s}_j\:\:\mathbf{r}_i])\rangle\\
			&- \left.\frac{\lambda\eta_2}{nm} \text{logit}^{(t)}_{\mathcal{I}(\mathbf{a}_j)}([\mathbf{o}\:\:\mathbf{s}_j])\langle\Xi^{(t)}([\mathbf{o}\:\:\mathbf{s}_j]), \Xi^{(t+1)}([\mathbf{o}\:\:\mathbf{s}_j\:\:\mathbf{r}_i])\rangle\right)\\
			=& \mathcal{O} \left(\frac{\lambda\eta_2 K}{nm}\right),
		\end{aligned}
	\end{equation}
    where $(a)$ is because of the gradient form in Lemma \ref{lemma: gradient}, the fifth statement and Lemma \ref{lemma:init}, $(b)$ is due to $1-\tilde{K}(j)\text{logit}^{(t)}_{\mathcal{I}(\mathbf{r}_i)}([\mathbf{o}\:\:\mathbf{s}_j] = \Theta(1)$, and $(c)$ is because of Lemmas \ref{lemma: gradient},  the fifth statement, Lemma \ref{lemma:init} and the fact that under Condition \ref{condition: condition}, $\Xi^{(t+1)}([\mathbf{o}\:\:\mathbf{s}_j\:\:\mathbf{r}_i])$ is close to $\Xi^{(t)}([\mathbf{o}\:\:\mathbf{s}_j\:\:\mathbf{r}_i])$, i.e.,
    \begin{align}\label{equ: s2s1diff}
        \left\|\Xi^{(t+1)}([\mathbf{o}\:\:\mathbf{s}_j\:\:\mathbf{r}_i])-\Xi^{(t)}([\mathbf{o}\:\:\mathbf{s}_j\:\:\mathbf{r}_i])\right\|_2 \le 4\left(\exp\left(\mathcal{O}\left(\frac{\eta_2\log(d)}{d}\right)\right)-1\right)= \mathcal{O}\left(\frac{\eta_2\log(d)}{d}\right).
    \end{align}
    Inequality (\ref{equ: s2s1diff}) is derived using the change of the attention score at iteration $t$, which is formally given in a later proof (\ref{equ: s1 update}).
    
	At iteration $t+1$, we have
        \begin{equation}
            \begin{aligned}
                \frac{1}{m}\sum_{k=1}^{m}\langle \mathbf{w}^{(t+1)}_{\mathcal{I}(\mathbf{r}_i),k},\Xi^{(t+1)}([\mathbf{o}\:\:\mathbf{s}_j])\rangle- \frac{1}{m}\sum_{k=1}^{m}\langle \mathbf{w}^{(T_1)}_{\mathcal{I}(\mathbf{r}_i),k}, \Xi^{(T_1)}([\mathbf{o}\:\:\mathbf{s}_j])\rangle = \mathcal{O}\left(\frac{\lambda\eta_2 }{nm}T_2\right)= \mathcal{O}\left(\!\frac{\log(d)}{\lambda \tilde{K}(j)}\!\right),
            \end{aligned}
        \end{equation}
    and
        \begin{equation}
	\begin{aligned}
		\frac{1}{m}\sum_{k=1}^{m}\langle \mathbf{w}^{(t+1)}_{\mathcal{I}(\mathbf{a}_j),k}, \Xi^{(t+1)}([\mathbf{o}\:\:\mathbf{s}_j\:\:\mathbf{r}_i])\rangle - \frac{1}{m}\sum_{k=1}^{m}\langle \mathbf{w}^{(T_1)}_{\mathcal{I}(\mathbf{a}_j),k}, \Xi^{(T_1)}([\mathbf{o}\:\:\mathbf{s}_j\:\:\mathbf{r}_i])\rangle =&\mathcal{O}\left(\frac{\lambda\eta_2 \tilde{K}(j)}{nm}T_2\right)\\
        =& \mathcal{O}\left(\frac{\log(d)}{\lambda}\right).
	\end{aligned}
        \end{equation}
	Therefore, we have
	\begin{align}
		1- \text{logit}_{\mathcal{I}(\mathbf{r}_i)}^{(t+1)}([\mathbf{o}\:\:\mathbf{s}_j\:\:\mathbf{r}_i]) = \Theta(1).
	\end{align}
Similarly, we have 
\begin{align}
	1- \tilde{K}(j) \text{logit}_{\mathcal{I}(\mathbf{r}_i)}^{(t+1)}([\mathbf{o}\:\:\mathbf{s}_j]) = \Theta(1).
\end{align}   
The first statement holds at iteration $t+1$.
    
   \textbf{Proof of the second statement:}
	We have
	\begin{align}
		\frac{\alpha^{(t)}_{2}([\mathbf{o}\:\:\mathbf{s}_j\:\:\mathbf{r}_i])}{\alpha^{(t)}_{3}([\mathbf{o}\:\:\mathbf{s}_j\:\:\mathbf{r}_i])} = \frac{\exp(\tilde{Z}^{(t)}_{\mathcal{I}(\mathbf{s}_j),\mathcal{I}(\mathbf{r}_i)}/\sqrt{d})}{\exp(\tilde{Z}^{(t)}_{\mathcal{I}(\mathbf{r}_i),\mathcal{I}(\mathbf{r}_i)}/\sqrt{d})} = \exp\left(\frac{\tilde{Z}^{(t)}_{\mathcal{I}(\mathbf{s}_j),\mathcal{I}(\mathbf{r}_i)} - \tilde{Z}^{(t)}_{\mathcal{I}(\mathbf{r}_i),\mathcal{I}(\mathbf{r}_i)}}{\sqrt{d}}\right).
	\end{align}
	The terms $\tilde{Z}^{(t)}_{\mathcal{I}(\mathbf{s}_j),\mathcal{I}(\mathbf{r}_i)}$ and $\tilde{Z}^{(t)}_{\mathcal{I}(\mathbf{r}_i),\mathcal{I}(\mathbf{r}_i)}$  satisfy
	\begin{equation}
	\begin{aligned}
		&|\tilde{Z}^{(t+1)}_{\mathcal{I}(\mathbf{s}_j),\mathcal{I}(\mathbf{r}_i)}- \tilde{Z}^{(t)}_{\mathcal{I}(\mathbf{s}_j),\mathcal{I}(\mathbf{r}_i)}|\\
		=& \frac{\eta_2\lambda}{n\sqrt{d}}\underbrace{\alpha_{2}^{(t)}([\mathbf{o}\:\:\mathbf{s}_j\:\:\mathbf{r}_i])}_{\le 1}\underbrace{\left|\left(\mathbf{e}_{\mathcal{I}(\mathbf{a}_j)}-\textbf{logit}^{(t)}([\mathbf{o}\:\:\mathbf{s}_{j}\:\:\mathbf{r}_i])\right)^\top\boldsymbol{\zeta}^{(t)}([\mathbf{o}\:\:\mathbf{s}_j\:\:\mathbf{r}_i])\left(\mathbf{s}_j - \tilde{\Xi}^{(t)}([\mathbf{o}\:\:\mathbf{s}_j\:\:\mathbf{r}_i])\right)\right|}_{=\mathcal{O}(\log(d)/\lambda)}\\
		\overset{(a)}{=}& \mathcal{O}\left(\frac{\eta_2\log(d) }{nd^{1/2}}\right),
	\end{aligned}
\end{equation}
and
\begin{equation}
	\begin{aligned}
		&|\tilde{Z}^{(t+1)}_{\mathcal{I}(\mathbf{r}_i),\mathcal{I}(\mathbf{r}_i)}- \tilde{Z}^{(t)}_{\mathcal{I}(\mathbf{r}_i),\mathcal{I}(\mathbf{r}_i)}|\\
		=& \frac{\eta_2\lambda}{n\sqrt{d}}\sum_{j\in\mathcal{D}_i}\underbrace{\alpha_{3}^{(t)}([\mathbf{o}\:\:\mathbf{s}_j\:\:\mathbf{r}_i])}_{\le 1}\underbrace{\left|\left(\mathbf{e}_{\mathcal{I}(\mathbf{a}_j)}-\textbf{logit}^{(t)}([\mathbf{o}\:\:\mathbf{s}_{j}\:\:\mathbf{r}_i])\right)^\top\boldsymbol{\zeta}^{(t)}([\mathbf{o}\:\:\mathbf{s}_j\:\:\mathbf{r}_i])\left(\mathbf{r}_i - \tilde{\Xi}^{(t)}([\mathbf{o}\:\:\mathbf{s}_j\:\:\mathbf{r}_i])\right)\right|}_{=\mathcal{O}(\log(d)/\lambda)}\\
		\overset{(b)}{=}& \mathcal{O}\left(\frac{\eta_2\log(d)}{d^{1/2}}\right).
	\end{aligned}
\end{equation}
where $(a)$ follows the conclusion from the third statement 

$\alpha_1^{(t)}([\mathbf{o}\;\;\mathbf{s}_j\;\;\mathbf{r}_i]) \le \frac{2}{(3d^{2})}$ and the conclusion from the first statement which implies that 
$\left|\left(\mathbf{e}_{\mathcal{I}(\mathbf{a}_j)}-\textbf{logit}^{(t)}([\mathbf{o}\:\:\mathbf{s}_{j}\:\:\mathbf{r}_i])\right)^\top\boldsymbol{\zeta}^{(t)}([\mathbf{o}\:\:\mathbf{s}_j\:\:\mathbf{r}_i])\right.$ $\left.\left(\mathbf{s}_j - \tilde{\Xi}^{(t)}([\mathbf{o}\:\:\mathbf{s}_j\:\:\mathbf{r}_i])\right)\right| = \mathcal{O}(\log(d)/\lambda)$.
$(b)$ follows the conclusion from the third statement $\alpha_1^{(t)}([\mathbf{o}\;\;\mathbf{s}_j\;\;\mathbf{r}_i]) \le \frac{2}{(3d^{2})}$ and the conclusion from the first statement which implies that 
$\left|\left(\mathbf{e}_{\mathcal{I}(\mathbf{a}_j)}-\textbf{logit}^{(t)}([\mathbf{o}\:\:\mathbf{s}_{j}\:\:\mathbf{r}_i])\right)^\top\boldsymbol{\zeta}^{(t)}([\mathbf{o}\:\:\mathbf{s}_j\:\:\mathbf{r}_i])\right.$ $\left.\left(\mathbf{r}_i - \tilde{\Xi}^{(t)}([\mathbf{o}\:\:\mathbf{s}_j\:\:\mathbf{r}_i])\right)\right| = \mathcal{O}(\log(d)/\lambda)$.

Therefore, we have
\begin{align}\label{equ: s1 update}
	\frac{\alpha^{(t+1)}_{2}([\mathbf{o}\:\:\mathbf{s}_j\:\:\mathbf{r}_i])}{\alpha^{(t+1)}_{3}([\mathbf{o}\:\:\mathbf{s}_j\:\:\mathbf{r}_i])} 	\frac{\alpha^{(t)}_{3}([\mathbf{o}\:\:\mathbf{s}_j\:\:\mathbf{r}_i])}{\alpha^{(t)}_{2}([\mathbf{o}\:\:\mathbf{s}_j\:\:\mathbf{r}_i])} = \exp\left(\mathcal{O}\left(\frac{\eta_2\log(d)}{d}\right)\right).
\end{align}
Recursively using (\ref{equ: s1 update}) from $T_1$ to $t$, we have
\begin{equation}
\begin{aligned}
	\frac{\alpha^{(t+1)}_{2}([\mathbf{o}\:\:\mathbf{s}_j\:\:\mathbf{r}_i])}{\alpha^{(t+1)}_{3}([\mathbf{o}\:\:\mathbf{s}_j\:\:\mathbf{r}_i])} 	\frac{\alpha^{(T_1)}_{3}([\mathbf{o}\:\:\mathbf{s}_j\:\:\mathbf{r}_i])}{\alpha^{(T_1)}_{2}([\mathbf{o}\:\:\mathbf{s}_j\:\:\mathbf{r}_i])}
    =& \exp\left(\mathcal{O}\left((t+1-T_1)\frac{\eta_2\log(d)}{d}\right)\right)\\
	=& \exp\left(\mathcal{O}\left((T_2-T_1)\frac{\eta_2\log(d)}{d}\right)\right)\\
	=& \exp\left(\mathcal{O}\left(\frac{\eta_2\log(d)}{d}\frac{nm\log(d)}{\lambda^2\eta_2 K}\right)\right)\\
	=& \exp\left(\mathcal{O}\left(\frac{nm\log^2(d)}{\lambda^2 dK}\right)\right).
\end{aligned}
\end{equation}

Under Condition \ref{condition: condition}, by $0.8\le\alpha^{(T_1)}_{2}([\mathbf{o}\:\:\mathbf{s}_j\:\:\mathbf{r}_i])/\alpha^{(T_1)}_{3}([\mathbf{o}\:\:\mathbf{s}_j\:\:\mathbf{r}_i]) \le 1.2$, we have
\begin{align}
	\frac{\alpha^{(t+1)}_{2}([\mathbf{o}\:\:\mathbf{s}_j\:\:\mathbf{r}_i])}{\alpha^{(t+1)}_{3}([\mathbf{o}\:\:\mathbf{s}_j\:\:\mathbf{r}_i])} \le 1.6.
\end{align}
Similarly, we have
\begin{align}
	\frac{\alpha^{(t+1)}_{2}([\mathbf{o}\:\:\mathbf{s}_j\:\:\mathbf{r}_i])}{\alpha^{(t+1)}_{3}([\mathbf{o}\:\:\mathbf{s}_j\:\:\mathbf{r}_i])} \ge 0.7.
\end{align}
Hence, the second statement holds at iteration $t+1$.

\textbf{Proof of the third statement:}
First, we prove the first conclusion in the third statement.

For $T_1\le t \le T_2$, the increment of the attention matrix satisfies
\begin{equation}
	\begin{aligned}
		&|\tilde{Z}^{(t+1)}_{\mathcal{I}(\mathbf{o}),\mathcal{I}(\mathbf{s}_j)}- \tilde{Z}^{(t)}_{\mathcal{I}(\mathbf{o}),\mathcal{I}(\mathbf{s}_j)}|\\
		\le& \frac{\eta_2\lambda}{n\sqrt{d}}\underbrace{\alpha_{1}^{(t)}([\mathbf{o},\mathbf{s}_j])}_{\le 1/d^{2}}\sum_{i\in\mathcal{B}_j}\underbrace{\left|\left(\mathbf{e}_{\mathcal{I}(\mathbf{r}_i)}-\textbf{logit}^{(t)}([\mathbf{o}\:\:\mathbf{s}_{j}])\right)^\top\boldsymbol{\zeta}^{(t)}([\mathbf{o}\:\:\mathbf{s}_j])\left(\mathbf{o} - \tilde{\Xi}^{(t)}([\mathbf{o}\:\:\mathbf{s}_j])\right)\right|
        }_{=\mathcal{O}(d\log(d)/\lambda)}\cdot\Theta(\sqrt{d})\\
		\overset{(a)}{=}& \mathcal{O}\left(\frac{\eta_2K\log(d)}{nd}\right), 
	\end{aligned}
\end{equation}
and 
	\begin{equation}
	\begin{aligned}
		&|\tilde{Z}^{(t+1)}_{\mathcal{I}(\mathbf{s}_j),\mathcal{I}(\mathbf{s}_j)}- \tilde{Z}^{(t)}_{\mathcal{I}(\mathbf{s}_j),\mathcal{I}(\mathbf{s}_j)}|\\
		=& \frac{\eta_2\lambda}{n\sqrt{d}}\underbrace{\alpha_{2}^{(t)}([\mathbf{o},\mathbf{s}_j])}_{=\Theta(1)}\sum_{i\in\mathcal{B}_j}\underbrace{\left|\left(\mathbf{e}_{\mathcal{I}(\mathbf{r}_i)}-\textbf{logit}^{(t)}([\mathbf{o}\:\:\mathbf{s}_{j}])\right)^\top\boldsymbol{\zeta}^{(t)}([\mathbf{o}\:\:\mathbf{s}_j])\left(\mathbf{s}_j - \tilde{\Xi}^{(t)}([\mathbf{o}\:\:\mathbf{s}_j])\right)\right|}_{=\mathcal{O}(\log(d)/\lambda)}\\
		\overset{(b)}{=}& \mathcal{O}\left(\frac{\eta_2K\log(d)}{nd^{1/2}}\right),
	\end{aligned}
\end{equation}
where $(a)$ follows the conclusion from the third statement $\alpha_1^{(t)}([\mathbf{o}\;\;\mathbf{s}_j]) \le \frac{2}{(3d^{2})}$ and the conclusions from the first and the fourth statements which imply that 
$\left|\left(\mathbf{e}_{\mathcal{I}(\mathbf{r}_i)}-\textbf{logit}^{(t)}([\mathbf{o}\:\:\mathbf{s}_{j}])\right)^\top\boldsymbol{\zeta}^{(t)}([\mathbf{o}\:\:\mathbf{s}_j])\right.$ $\left.\left(\mathbf{o} - \tilde{\Xi}^{(t)}([\mathbf{o}\:\:\mathbf{s}_j])\right)\right| = \mathcal{O}(d\log(d)/\lambda)$.
$(b)$ follows the conclusion from the third statement $\alpha_1^{(t)}([\mathbf{o}\;\;\mathbf{s}_j]) \le \frac{2}{(3d^{2})}$ and the conclusion from the first statement which implies that 

$\left|\left(\mathbf{e}_{\mathcal{I}(\mathbf{r}_i)}-\textbf{logit}^{(t)}([\mathbf{o}\:\:\mathbf{s}_{j}])\right)^\top\boldsymbol{\zeta}^{(t)}([\mathbf{o}\:\:\mathbf{s}_j])\left(\mathbf{s}_j - \tilde{\Xi}^{(t)}([\mathbf{o}\:\:\mathbf{s}_j])\right)\right| = \mathcal{O}(\log(d)/\lambda)$.

    Therefore, we have 
	\begin{align}
		\frac{\alpha^{(t+1)}_{1}([\mathbf{o}\:\:\mathbf{s}_j])}{\alpha^{(t+1)}_{2}([\mathbf{o}\:\:\mathbf{s}_j])}\frac{\alpha^{(t)}_{2}([\mathbf{o}\:\:\mathbf{s}_j])}{\alpha^{(t)}_{1}([\mathbf{o}\:\:\mathbf{s}_j])} =\exp\left(\mathcal{O}\left(\frac{ \eta_2K \log(d)}{nd}\right)\right).
	\end{align}
    As a result, we have
    \begin{align}
		\frac{\alpha^{(t+1)}_{1}([\mathbf{o}\:\:\mathbf{s}_j])}{\alpha^{(t+1)}_{2}([\mathbf{o}\:\:\mathbf{s}_j])}\frac{\alpha^{(T_1)}_{2}([\mathbf{o}\:\:\mathbf{s}_j])}{\alpha^{(T_1)}_{1}([\mathbf{o}\:\:\mathbf{s}_j])} =\exp\left(\mathcal{O}\left(\frac{ \eta_2K \log(d)}{nd}(T_2-T_1)\right)\right) = \exp\left(\mathcal{O}\left(\frac{m\log^2(d)
        }{d\lambda^2}\right)\right).
	\end{align}
Under Condition \ref{condition: condition}, using $ \alpha^{(T_1)}_{1}([\mathbf{o}\:\:\mathbf{s}_j])/\alpha^{(T_1)}_{2}([\mathbf{o}\:\:\mathbf{s}_j]) \le \frac{1}{(1.8d^{2})}$, we have
\begin{align}
    \frac{\alpha^{(t)}_{1}([\mathbf{o}\:\:\mathbf{s}_j])}{\alpha^{(t)}_{2}([\mathbf{o}\:\:\mathbf{s}_j])} \le \frac{2}{3d^{2}}.
\end{align}

Next, we prove the second conclusion in the third statement.
For $T_1\le t\le T_2$, we have
\begin{equation}
	\begin{aligned}
		&|\tilde{Z}^{(t+1)}_{\mathcal{I}(\mathbf{o}),\mathcal{I}(\mathbf{r}_i)}- \tilde{Z}^{(t)}_{\mathcal{I}(\mathbf{o}),\mathcal{I}(\mathbf{r}_i)}|\\
		=& \frac{\eta_2\lambda}{n\sqrt{d}}\sum_{j\in\mathcal{D}_i}\underbrace{\alpha_{1}^{(t)}([\mathbf{o}\:\:\mathbf{s}_{j}\:\:\mathbf{r}_i])}_{\le 1/d^{2}}\\
        &\cdot\underbrace{\left|\left(\mathbf{e}_{\mathcal{I}(\mathbf{a}_j)}-\textbf{logit}^{(t)}([\mathbf{o}\:\:\mathbf{s}_{j}\:\:\mathbf{r}_i])\right)^\top\boldsymbol{\zeta}^{(t)}([\mathbf{o}\:\:\mathbf{s}_j\:\:\mathbf{r}_i])\cdot\left(\mathbf{o} - \tilde{\Xi}^{(t)}([\mathbf{o}\:\:\mathbf{s}_j\:\:\mathbf{r}_i])\right)\right|}_{\Theta(d\log(d)/\lambda)}\cdot\Theta(\sqrt{d})\\
		\overset{(a)}{=}& \mathcal{O}\left(\frac{\eta_2\log(d)}{d}\right),
	\end{aligned}
\end{equation}
where $(a)$ follows the conclusion from the third statement $\alpha_1^{(t)}([\mathbf{o}\;\;\mathbf{s}_j\;\;\mathbf{r}_i]) \le \frac{2}{(3d^{2})}$ and the conclusions from the first and the fourth statements which imply that 
$\left|\left(\mathbf{e}_{\mathcal{I}(\mathbf{a}_j)}-\textbf{logit}^{(t)}([\mathbf{o}\:\:\mathbf{s}_{j}\:\:\mathbf{r}_i])\right)^\top\right.$ $\left.\boldsymbol{\zeta}^{(t)}([\mathbf{o}\:\:\mathbf{s}_j\:\:\mathbf{r}_i])\cdot\left(\mathbf{o} - \tilde{\Xi}^{(t)}([\mathbf{o}\:\:\mathbf{s}_j\:\:\mathbf{r}_i])\right)\right| = \mathcal{O}(d\log(d)/\lambda)$.

Therefore, we have 
	\begin{align}
		\frac{\alpha^{(t+1)}_{1}([\mathbf{o}\:\:\mathbf{s}_j\:\:\mathbf{r}_i])}{\alpha^{(t+1)}_{2}([\mathbf{o}\:\:\mathbf{s}_j\:\:\mathbf{r}_i])}\frac{\alpha^{(t)}_{2}([\mathbf{o}\:\:\mathbf{s}_j\:\:\mathbf{r}_i])}{\alpha^{(t)}_{1}([\mathbf{o}\:\:\mathbf{s}_j\:\:\mathbf{r}_i])} =\exp\left(\mathcal{O}\left(\frac{ \eta_2 \log(d)}{d}\right)\right).
	\end{align}
    As a result, we have
    \begin{equation}
    \begin{aligned}
		\frac{\alpha^{(t+1)}_{1}([\mathbf{o}\:\:\mathbf{s}_j\:\:\mathbf{r}_i])}{\alpha^{(t+1)}_{2}([\mathbf{o}\:\:\mathbf{s}_j\:\:\mathbf{r}_i])}\frac{\alpha^{(T_1)}_{2}([\mathbf{o}\:\:\mathbf{s}_j\:\:\mathbf{r}_i])}{\alpha^{(T_1)}_{1}([\mathbf{o}\:\:\mathbf{s}_j\:\:\mathbf{r}_i])} =&\exp\left(\mathcal{O}\left(\frac{ \eta_2\log(d)}{d}(T_2-T_1)\right)\right)\\ =&\exp\left(\mathcal{O}\left(\frac{mn\log^2(d)
        }{d\lambda^2K}\right)\right).
    \end{aligned}
    \end{equation}
Under Condition \ref{condition: condition}, using $ \alpha^{(T_1)}_{1}([\mathbf{o}\:\:\mathbf{s}_j\;\;\mathbf{r}_i])/\alpha^{(T_1)}_{2}([\mathbf{o}\:\:\mathbf{s}_j\;\;\mathbf{r}_i]) \le \frac{0.625}{d^{2}}$, we have
\begin{align}
    \frac{\alpha^{(t+1)}_{1}([\mathbf{o}\:\:\mathbf{s}_j\:\:\mathbf{r}_i])}{\alpha^{(t+1)}_{2}([\mathbf{o}\:\:\mathbf{s}_j\:\:\mathbf{r}_i])} \le \frac{0.8}{d^{2}}.
\end{align}
Therefore, we have
\begin{align}
    \alpha^{(t+1)}_{1}([\mathbf{o}\:\:\mathbf{s}_j]) \le \frac{2}{3d^{2}}, \alpha^{(t+1)}_{1}([\mathbf{o}\:\:\mathbf{s}_j\:\:\mathbf{r}_i]) \le \frac{2}{3d^{2}}.
\end{align}

The third statement holds for iteration $t+1$.

\textbf{Proof of the fourth statement:}
For $t'\in[T_1,T_2]$, by the gradient form in Lemma \ref{lemma: gradient}, we have
    \begin{equation}\label{equ: s1s3 4}
	\begin{aligned}
		&\frac{1}{m}\sum_{k=1}^{m}\langle \mathbf{w}_{\mathcal{I}(\mathbf{d}),k}^{(t'+1)}, 2\mathbf{o}\rangle \!-\! \frac{1}{m}\sum_{k=1}^{m}\langle \mathbf{w}_{\mathcal{I}(\mathbf{d}),k}^{(t')}, 2\mathbf{o}\rangle\\
        \ge& 
        -\! \frac{2\eta_2 \tilde{K}(j)}{m^2n}\Theta(1)\sum_{k=1}^m\sum_{j=1}^{N}\text{logit}_{\mathcal{I}(\mathbf{d})}^{(t')}([\mathbf{o}\:\:\mathbf{s}_j])\alpha_{1}^{(t')}([\mathbf{o}\:\:\mathbf{s}_j])\sigma'(\langle \mathbf{w}_{\mathcal{I}(\mathbf{d}),k}^{(t')}, \Xi^{(t')}([\mathbf{o}\:\:\mathbf{s}_j])\rangle)\lambda\left\|\mathbf{o}\right\|_2^2\\
		&-\frac{\eta_2}{m^2n}\Theta(1)\sum_{k=1}^m\sum_{j=1}^{N}\sum_{i\in\mathcal{B}_j} \text{logit}^{(t')}_{\mathcal{I}(\mathbf{d})}([\mathbf{o}\:\:\mathbf{s}_{j}\:\:\mathbf{r}_i])\alpha_{1}^{(t')}([\mathbf{o}\:\:\mathbf{s}_j\:\:\mathbf{r}_i])\sigma'(\langle \mathbf{w}_{\mathcal{I}(\mathbf{d}),k}^{(t')}, \Xi^{(t')}([\mathbf{o}\:\:\mathbf{s}_j\:\:\mathbf{r}_i])\rangle)\lambda\left\|\mathbf{o}\right\|_2^2\\
        \overset{(a)}{=}& -\mathcal{O}\left(\frac{\eta_2\lambda }{md^{3/4}}\right),
	\end{aligned}
    \end{equation}
where $(a)$ is obtained by the third statement, $\text{logit}_{\mathcal{I}(\mathbf{d})}^{(t')}([\mathbf{o}\:\:\mathbf{s}_j])\le 1$, $\text{logit}^{(t')}_{\mathcal{I}(\mathbf{d})}([\mathbf{o}\:\:\mathbf{s}_{j}\:\:\mathbf{r}_i])\le 1$, and $\sigma'(\cdot) \le 1$.

As a result, we have
\begin{equation}
\begin{aligned}
    \frac{1}{m}\sum_{k=1}^{m}\langle \mathbf{w}_{\mathcal{I}(\mathbf{d}),k}^{(t+1)}, 2\mathbf{o}\rangle - \frac{1}{m}\sum_{k=1}^{m}\langle \mathbf{w}_{\mathcal{I}(\mathbf{d}),k}^{(T_1)}, 2\mathbf{o}\rangle \ge A,
\end{aligned}
\end{equation}
where $A \le0$ and
\begin{equation}
\begin{aligned}
    |A| = \mathcal{O}\left(\frac{\eta_2\lambda }{md^{3/4}}(T_2-T_1)\right)
    = \mathcal{O}\left(\frac{\eta_2\lambda }{md^{3/4}}\frac{nm\log(d)}{\lambda^2\eta_2 K}\right)
    = \mathcal{O}\left(\frac{n\log(d)}{\lambda Kd^{3/4}}\right).
\end{aligned}
\end{equation}
As $\oldfrac{1}{m}\sum_{k=1}^{m}\langle \mathbf{w}_{\mathcal{I}(\mathbf{d}),k}^{(T_1)}, 2\mathbf{o}\rangle = \Theta(\frac{\eta_1\lambda d}{m})$, we have
\begin{align}
    \frac{1}{m}\sum_{k=1}^{m}\langle \mathbf{w}_{\mathcal{I}(\mathbf{d}),k}^{(t+1)}, 2\mathbf{o}\rangle = \Omega\left(\frac{\log(d)}{\lambda}\right).
\end{align}

Furthermore, for $t'\in[T_1,T_2]$, by the gradient form in Lemma \ref{lemma: gradient}, we also have
    \begin{equation}\label{equ: s1s3 5}
	\begin{aligned}
		&\frac{1}{m}\sum_{k=1}^{m}\langle \mathbf{w}_{\mathcal{I}(\mathbf{d}),k}^{(t'+1)}, 2\mathbf{o}\rangle \!-\! \frac{1}{m}\sum_{k=1}^{m}\langle \mathbf{w}_{\mathcal{I}(\mathbf{d}),k}^{(t')}, 2\mathbf{o}\rangle\\
        \le& \Theta(1)\cdot\frac{\eta_2}{n}\frac{n}{3m^2}\sum_{k=1}^m\sum_{i=1}^{n}(1-\text{logit}^{(t')}_{\mathcal{I}(\mathbf{d})}(2\mathbf{o}))\sigma'(\langle \mathbf{w}_{\mathcal{I}(\mathbf{d}),k}^{(t')}, 2\mathbf{o}\rangle) \lambda\left\|2\mathbf{o}\right\|_2^2\\
        \overset{(a)}{=}& \mathcal{O}\left(\frac{\eta_2\lambda }{md}\right),
	\end{aligned}
    \end{equation}
where $(a)$ holds because $1-\text{logit}^{(t')}_{\mathcal{I}(\mathbf{d})}(2\mathbf{o}) \le d^{-2}$ as $\frac{1}{m}\sum_{k=1}^{m}\langle \mathbf{w}_{\mathcal{I}(\mathbf{d}),k}^{(t')}, 2\mathbf{o}\rangle =\Omega(\log(d)/\lambda)$.

Hence, we have
\begin{align}
    \frac{1}{m}\sum_{k=1}^{m}\langle \mathbf{w}_{\mathcal{I}(\mathbf{d}),k}^{(t+1)}, 2\mathbf{o}\rangle = \mathcal{O}\left(\frac{d\log(d)}{\lambda}\right).
\end{align}
Thus, the fourth statement holds at iteration $t+1$.

    \textbf{Proof of the fifth statement:}
    After Stage 1 with $t\ge T_1$, for $l\in[m]$ satisfying $\langle \mathbf{w}^{(t)}_{\mathcal{I}(\mathbf{r}_k),l},\mathbf{s}_j \rangle > 0$, the update of $\langle \mathbf{w}^{(t)}_{\mathcal{I}(\mathbf{r}_k),l},\mathbf{s}_j \rangle$ satisfies
    \begin{equation}
	\begin{aligned}
		&\langle \mathbf{w}^{(t+1)}_{\mathcal{I}(\mathbf{r}_k),l},\mathbf{s}_j \rangle\\
        \!\ge& \langle \mathbf{w}^{(t)}_{\mathcal{I}(\mathbf{r}_k),l},\mathbf{s}_j \rangle \!+\! \lambda\frac{\eta_2}{nm}(1-\tilde{K}(j)\text{logit}^{(t)}_{\mathcal{I}(\mathbf{r}_k)}([\mathbf{o}\:\:\mathbf{s}_j])) - \lambda\frac{\eta_2}{nm}\sum_{i\in\mathcal{B}_j}\text{logit}^{(t)}_{\mathcal{I}(\mathbf{r}_k)}([\mathbf{o}\:\:\mathbf{s}_j\:\:\mathbf{r}_i]),
	\end{aligned}
    \end{equation}
	by the fact that $\langle\Xi^{(t)}([\mathbf{o}\:\:\mathbf{s}_j]),\mathbf{s}_j\rangle \ge \left\|\mathbf{s}_j\right\|_2^2=1$ and $\langle\Xi^{(t)}([\mathbf{o}\:\:\mathbf{s}_j\:\:\mathbf{r}_i]),\mathbf{s}_j\rangle \le \left\|\mathbf{s}_j\right\|_2^2=1$.
	Similarly, for all $l\in[m]$, the update of $\langle \mathbf{w}^{(t)}_{\mathcal{I}(\mathbf{r}_k),l},\mathbf{o} \rangle$ satisfies
\begin{equation}
	\begin{aligned}
		\langle \mathbf{w}^{(t+1)}_{\mathcal{I}(\mathbf{r}_k),l},\mathbf{o} \rangle
        \!\ge& \langle \mathbf{w}^{(t)}_{\mathcal{I}(\mathbf{r}_k),l},\mathbf{o} \rangle  - \mathcal{O}\left(\frac{\lambda\eta_2}{nmd}\right)\sum_{i\in\mathcal{B}_j}\text{logit}^{(t)}_{\mathcal{I}(\mathbf{r}_k)}([\mathbf{o}\:\:\mathbf{s}_j\:\:\mathbf{r}_i])\\
 &- \mathcal{O}\left(\frac{\lambda\eta_2}{nmd}\right)\sum_{j\in\mathcal{D}_k}\text{logit}_{\mathcal{I}(\mathbf{r}_k)}^{(t)}([\mathbf{o}\;\;\mathbf{s}_j])
 -\mathcal{O}\left(\frac{\lambda\eta_2}{m}\right)\text{logit}_{\mathcal{I}(\mathbf{r}_k)}^{(t)}(2\mathbf{o})\\
 =& \langle \mathbf{w}^{(t)}_{\mathcal{I}(\mathbf{r}_k),l},\mathbf{o} \rangle - \mathcal{O}\left(\frac{\lambda\eta_2K}{nmd}\right)-\mathcal{O}\left(\frac{\lambda\eta_2}{md}\right) - \mathcal{O}\left(\frac{\lambda\eta_2}{md^{3/2}}\right),
	\end{aligned}
    \end{equation}
As a result, by the third statement, we have
 \begin{equation}
	\begin{aligned}
		&\langle \mathbf{w}^{(t+1)}_{\mathcal{I}(\mathbf{r}_k),l},\Xi^{(t+1)}([\mathbf{o}\;\;\mathbf{s}_j]) \rangle -\langle \mathbf{w}^{(t)}_{\mathcal{I}(\mathbf{r}_k),l},\Xi^{(t)}([\mathbf{o}\;\;\mathbf{s}_j]) \rangle\\
        \!\ge&  0.9\lambda\frac{\eta_2}{nm}(1-\tilde{K}(j)\text{logit}^{(t)}_{\mathcal{I}(\mathbf{r}_k)}([\mathbf{o}\:\:\mathbf{s}_j])) - 1.1\lambda\frac{\eta_2}{nm}\sum_{i\in\mathcal{B}_j}\text{logit}^{(t)}_{\mathcal{I}(\mathbf{r}_k)}([\mathbf{o}\:\:\mathbf{s}_j\:\:\mathbf{r}_i])\\
        & - \mathcal{O}\left(\frac{\lambda\eta_2}{md^{3/2}}\right)- \mathcal{O}\left(\frac{\eta_2\log^2(d)}{\lambda d}\right).
	\end{aligned}
    \end{equation}
    Case 1: If $\oldfrac{1}{m}\sum_{l=1}^m\sigma(\langle \mathbf{w}^{(t)}_{\mathcal{I}(\mathbf{r}_k),l},\Xi^{(t)}([\mathbf{o}\;\;\mathbf{s}_j]) \rangle) \le 2\log(d)/(3\lambda)$, for all $k\in\mathcal{B}_j$, we have
    \begin{align}
        1-\tilde{K}(j)\text{logit}^{(t)}_{\mathcal{I}(\mathbf{r}_k)}([\mathbf{o}\:\:\mathbf{s}_j]) = \Theta(1),\text{logit}^{(t)}_{\mathcal{I}(\mathbf{r}_k)}([\mathbf{o}\:\:\mathbf{s}_j\:\:\mathbf{r}_i]) \le \frac{1}{\sqrt{d}},
    \end{align}
by Lemma \ref{lemma: mag_n}.
As a result, $\oldfrac{1}{m}\sum_{l=1}^m\sigma(\langle \mathbf{w}^{(t+1)}_{\mathcal{I}(\mathbf{r}_k),l},\Xi^{(t+1)}([\mathbf{o}\;\;\mathbf{s}_j]) \rangle) -\oldfrac{1}{m}\sum_{l=1}^m\sigma(\langle \mathbf{w}^{(t)}_{\mathcal{I}(\mathbf{r}_k),l},\Xi^{(t)}([\mathbf{o}\;\;\mathbf{s}_j]) \rangle) = \Theta(1)$ holds.

Case 2: If $\oldfrac{1}{m}\sum_{l=1}^m\sigma(\langle \mathbf{w}^{(t)}_{\mathcal{I}(\mathbf{r}_k),l},\Xi^{(t)}([\mathbf{o}\;\;\mathbf{s}_j]) \rangle) > 2\log(d)/(3\lambda)$, for all $l\in[m]$, we have
\begin{align}
    \langle \mathbf{w}^{(t+1)}_{\mathcal{I}(\mathbf{r}_k),l},\Xi^{(t+1)}([\mathbf{o}\;\;\mathbf{s}_j]) \rangle -\langle \mathbf{w}^{(t)}_{\mathcal{I}(\mathbf{r}_k),l},\Xi^{(t)}([\mathbf{o}\;\;\mathbf{s}_j]) \rangle \ge -\mathcal{O}\left(\frac{\eta_2\lambda}{nm}\right).
\end{align}

Then, we have
\begin{align}
    \frac{1}{m}\sum_{l=1}^m\sigma(\langle \mathbf{w}^{(t+1)}_{\mathcal{I}(\mathbf{r}_k),l},\Xi^{(t+1)}([\mathbf{o}\;\;\mathbf{s}_j]) \rangle) \ge \frac{\log(d)}{2\lambda}.
\end{align}

Therefore, the fifth statement holds for iteration $t+1$.
    
    \textbf{Proof of the sixth statement:}
We first prove that during $t\in[T_1, T_2-1]$, we have
    \begin{align}\label{equ: s2s61}
        &\frac{1}{m}\sum_{l=1}^m\sigma(\langle \mathbf{w}^{(t)}_{\mathcal{I}(\mathbf{a}_j),l},\mathbf{s}_j \rangle)\}
        \ge  0.
    \end{align}
We prove (\ref{equ: s2s61}) by considering the following cases.

Case 1: $\oldfrac{1}{m}\sum_{l=1}^m\sigma(\langle \mathbf{w}^{(t)}_{\mathcal{I}(\mathbf{a}_j),l},\mathbf{s}_j \rangle)\}\le \frac{\log(d)}{(3\lambda)}$.
For all $l\in\cup_{i\in\mathcal{B}_j}\mathcal{S}_{\mathbf{r},j,i}^{(t)}$, we have
\begin{equation}
\begin{aligned}
    \langle \mathbf{w}^{(t+1)}_{\mathcal{I}(\mathbf{a}_j),l},\mathbf{s}_j \rangle \ge& \langle \mathbf{w}^{(t)}_{\mathcal{I}(\mathbf{a}_j),l},\mathbf{s}_j \rangle + \frac{\lambda\eta_2}{nm}\sum_{i\in\mathcal{B}_j}\mathbb{I}(l\in\mathcal{S}_{\mathbf{r},j,i}^{(t)})(1-\text{logit}^{(t)}_{\mathcal{I}(\mathbf{a}_j)}([\mathbf{o}\;\;\mathbf{s}_j\;\;\mathbf{r}_i])) - \frac{\lambda\eta_2}{mn}\tilde{K}(j)\text{logit}^{(t)}([\mathbf{o}\;\;\mathbf{s}_j])\\
    \ge& \langle \mathbf{w}^{(t)}_{\mathcal{I}(\mathbf{a}_j),l},\mathbf{s}_j \rangle + \Theta\left(\frac{\eta_2\lambda}{nm}\right)-\mathcal{O}\left(\frac{\eta_2\lambda}{nm\sqrt{d}}\right)\\
    \ge& \langle \mathbf{w}^{(t)}_{\mathcal{I}(\mathbf{a}_j),l},\mathbf{s}_j \rangle.
\end{aligned}
\end{equation}

Case 2: $\oldfrac{1}{m}\sum_{l=1}^m\sigma(\langle \mathbf{w}^{(t)}_{\mathcal{I}(\mathbf{a}_j),l},\mathbf{s}_j \rangle)\}> \frac{\log(d)}{(3\lambda)}$.
We have
\begin{align}
    \langle \mathbf{w}^{(t+1)}_{\mathcal{I}(\mathbf{a}_j),l},\mathbf{s}_j \rangle \ge& \langle \mathbf{w}^{(t)}_{\mathcal{I}(\mathbf{a}_j),l},\mathbf{s}_j \rangle - \mathcal{O}\left(\frac{\eta_2\lambda K}{mn}\right)
    \ge \frac{\log(d)}{4\lambda}. 
\end{align}
As a result, we have
\begin{align}\label{equ: s2s62}
    \frac{1}{m}\sum_{l=1}^m\sigma(\langle \mathbf{w}^{(t)}_{\mathcal{I}(\mathbf{a}_j),l},\mathbf{s}_j \rangle) \ge 0.
\end{align}

Based on (\ref{equ: s2s62}), we now prove the conclusion in the sixth statement.
We have
\begin{equation}
	\begin{aligned}
		&\langle \mathbf{w}^{(t+1)}_{\mathcal{I}(\mathbf{a}_j),l},\Xi^{(t+1)}([\mathbf{o}\;\;\mathbf{s}_j\;\;\mathbf{r}_i]) \rangle
		\ge \langle \mathbf{w}^{(t)}_{\mathcal{I}(\mathbf{a}_j),l},\Xi^{(t)}([\mathbf{o}\;\;\mathbf{s}_j\;\;\mathbf{r}_i])\rangle\\
		&-1.1\lambda \frac{\eta^{(t)} \tilde{K}(j)}{nm}\text{logit}_{\mathcal{I}(\mathbf{a}_j)}^{(t)}([\mathbf{o}\:\:\mathbf{s}_j])\!+\!\lambda\frac{\Theta(1)\cdot\eta^{(t)}}{nm}\sum_{k\in \mathcal{B}_j}(1\!-\!\text{logit}^{(t)}_{\mathcal{I}(\mathbf{a}_j)}([\mathbf{o}\:\:\mathbf{s}_j\:\:\mathbf{r}_k]))\\
        &-2\lambda\sum_{k\neq j,k\in\mathcal{D}_i}\frac{\eta^{(t)}}{nm}\text{logit}^{(t)}_{\mathcal{I}(\mathbf{a}_j)}([\mathbf{o}\:\:\mathbf{s}_k\:\:\mathbf{r}_i])-\mathcal{O}\left(\frac{\lambda\eta^{(t)}}{md^{3/2}}\right) - \mathcal{O}\left(\frac{\eta^{(t)}\log^2(d)}{\lambda d}\right),
	\end{aligned}
\end{equation}

We consider two cases.

Case 1: $\oldfrac{1}{m}\sum_{l=1}^m\sigma\left(\langle \mathbf{w}_{\mathcal{I}(\mathbf{a}_j),l}^{(t)},\Xi^{(t)}([\mathbf{o}\;\;\mathbf{s}_j\;\;\mathbf{r}_i])\rangle\right) \le \frac{\log(d)}{6\lambda}$.
By (\ref{equ: s2s62}), we also have 
\begin{align}
    \frac{1}{m}\sum_{l=1}^m\sigma\left(\langle \mathbf{w}_{\mathcal{I}(\mathbf{a}_j),l}^{(t)},\mathbf{s}_j\rangle\right) \le \frac{\log(d)}{2\lambda}, \frac{1}{m}\sum_{l=1}^m\sigma\left(\langle \mathbf{w}_{\mathcal{I}(\mathbf{a}_j),l}^{(t)},\mathbf{r}_i\rangle\right) \le \frac{\log(d)}{6\lambda}.
\end{align}
A direct result is
\begin{equation}
\begin{aligned}
    \langle \mathbf{w}^{(t+1)}_{\mathcal{I}(\mathbf{a}_j),l},\Xi^{(t+1)}([\mathbf{o}\;\;\mathbf{s}_j\;\;\mathbf{r}_i]) \rangle
		\ge& \langle \mathbf{w}^{(t)}_{\mathcal{I}(\mathbf{a}_j),l},\Xi^{(t)}([\mathbf{o}\;\;\mathbf{s}_j\;\;\mathbf{r}_i])\rangle + \Theta\left(\frac{\lambda\eta_2}{mn}\right)-\mathcal{O}\left(\frac{\eta_2\lambda}{nm\sqrt{d}}\right)\\
        \ge& \langle \mathbf{w}^{(t)}_{\mathcal{I}(\mathbf{a}_j),l},\Xi^{(t)}([\mathbf{o}\;\;\mathbf{s}_j\;\;\mathbf{r}_i])\rangle.
\end{aligned}
\end{equation}

Case 2: $\oldfrac{1}{m}\sum_{l=1}^m\sigma\left(\langle \mathbf{w}_{\mathcal{I}(\mathbf{a}_j),l}^{(t)},\Xi^{(t)}([\mathbf{o}\;\;\mathbf{s}_j\;\;\mathbf{r}_i])\rangle\right) \ge \frac{\log(d)}{6\lambda}$.
In this case, we have
\begin{equation}
\begin{aligned}
    \langle \mathbf{w}^{(t+1)}_{\mathcal{I}(\mathbf{a}_j),l},\Xi^{(t+1)}([\mathbf{o}\;\;\mathbf{s}_j\;\;\mathbf{r}_i]) \rangle
		\ge& \langle \mathbf{w}^{(t)}_{\mathcal{I}(\mathbf{a}_j),l},\Xi^{(t)}([\mathbf{o}\;\;\mathbf{s}_j\;\;\mathbf{r}_i])\rangle -\mathcal{O}\left(\frac{\eta_2\lambda}{nm}\right)\\
        \ge& \frac{\log(d)}{12\lambda}.
\end{aligned}
\end{equation}
Thus, the sixth statement holds.

\textbf{Proof of the seventh statement:}
First, prove the first part.
  For iteration $t\in[0,T_p-1]$, we have
    \begin{align}
        \langle\mathbf{w}^{(t+1)}_{\mathcal{I}(\mathbf{a}_j),l},\mathbf{o}\rangle - \langle\mathbf{w}^{(t)}_{\mathcal{I}(\mathbf{a}_j),l},\mathbf{o}\rangle \le \frac{\eta^{(t)}\lambda}{nm}\sum_{i\in\mathcal{B}_j}\alpha_1^{(t)}([\mathbf{o}\;\;\mathbf{s}_j\;\;\mathbf{r}_i])(1-\text{logit}^{(t)}_{\mathcal{I}(\mathbf{a}_j)})([\mathbf{o}\;\;\mathbf{s}_j\;\;\mathbf{r}_i])\left\|\mathbf{o}\right\|_2^2
    \end{align}
    After pre-training Stage 1, for all $j\in[N]$ and $l\in[m]$, we have
    \begin{align}
         \langle\mathbf{w}^{(T_1)}_{\mathcal{I}(\mathbf{a}_j),l},\mathbf{o}\rangle - \langle\mathbf{w}^{(0)}_{\mathcal{I}(\mathbf{a}_j),l},\mathbf{o}\rangle \le \mathcal{O}\left(\frac{\eta_1\lambda K d}{nm}\right).
    \end{align}
    Then, at pre-training Stage 2 with iteration $t$, we have
    \begin{align}
         \langle\mathbf{w}^{(t+1)}_{\mathcal{I}(\mathbf{a}_j),l},\mathbf{o}\rangle - \langle\mathbf{w}^{(t)}_{\mathcal{I}(\mathbf{a}_j),l},\mathbf{o}\rangle \le \mathcal{O}\left(\frac{\eta_2\lambda K}{nmd}\right),
    \end{align}
    due to the attention score bound in the third statement.
    Then, we have
    \begin{equation}
    \begin{aligned}
        \langle\mathbf{w}^{(t+1)}_{\mathcal{I}(\mathbf{a}_j),l},\mathbf{o}\rangle - \langle\mathbf{w}^{(0)}_{\mathcal{I}(\mathbf{a}_j),l},\mathbf{o}\rangle \le& \mathcal{O}\left(\frac{\eta_1\lambda K d}{nm}\right)+\mathcal{O}\left(\frac{\eta_2\lambda K}{nmd}T_2\right)\\
        =& \mathcal{O}\left(\frac{\eta_1\lambda Kd}{nm} + \frac{\log(d)\lambda}{d}\right)\\
        =&\mathcal{O}\left(\frac{\log(d)}{n\lambda}\right).
    \end{aligned}
    \end{equation}

Next, we prove the second part.
For iteration $t\in[0,T_p-1]$, we have
    \begin{align}
        \langle\mathbf{w}^{(t+1)}_{\mathcal{I}(\mathbf{r}_i),l},\mathbf{o}\rangle - \langle\mathbf{w}^{(t)}_{\mathcal{I}(\mathbf{r}_i),l},\mathbf{o}\rangle \le \frac{\eta^{(t)}\lambda}{nm}\sum_{j\in\mathcal{D}_i}\alpha_1^{(t)}([\mathbf{o}\;\;\mathbf{s}_j])(1-\text{logit}^{(t)}_{\mathcal{I}(\mathbf{r}_i)}([\mathbf{o}\;\;\mathbf{s}_j]))\left\|\mathbf{o}\right\|_2^2
    \end{align}
 During pre-training Stage 1, we have
    \begin{align}
         \langle\mathbf{w}^{(T_1)}_{\mathcal{I}(\mathbf{a}_j),l},\mathbf{o}\rangle - \langle\mathbf{w}^{(0)}_{\mathcal{I}(\mathbf{a}_j),l},\mathbf{o}\rangle \le \mathcal{O}\left(\frac{\eta_1\lambda d}{m}\right),
    \end{align}
Then, at pre-training Stage 2 with iteration $t$, we have
    \begin{align}
         \langle\mathbf{w}^{(t+1)}_{\mathcal{I}(\mathbf{r}_i),l},\mathbf{o}\rangle - \langle\mathbf{w}^{(t)}_{\mathcal{I}(\mathbf{r}_i),l},\mathbf{o}\rangle \le \mathcal{O}\left(\frac{\eta_2\lambda}{md}\right),
    \end{align}
Then, we have
    \begin{equation}
    \begin{aligned}
        \langle\mathbf{w}^{(t+1)}_{\mathcal{I}(\mathbf{r}_i),l},\mathbf{o}\rangle - \langle\mathbf{w}^{(0)}_{\mathcal{I}(\mathbf{r}_i),l},\mathbf{o}\rangle \le& \mathcal{O}\left(\frac{\eta_1\lambda K d}{nm}\right)+\mathcal{O}\left(\frac{\eta_2\lambda}{md}T_2\right)\\
        =& \mathcal{O}\left(\frac{\eta_1\lambda d}{m} + n\log(d)/(\lambda d)\right)\\
        =&\mathcal{O}\left(\log(d)/(\lambda)\right).
    \end{aligned}
    \end{equation}
Thus the seventh statement holds at iteration $t+1$.

This finishes the proof of the induction.
    \end{proof}

\subsection{Proof of pre-training Stage 3}
In this subsection, we prove the convergence of the model pre-training.
We first derive the following key properties during Stage 3.
\begin{lemma}\label{lemma: pt_stage3}
	Under Condition \ref{condition: condition}, during pre-training Stage 3 ($T_2 < t \le T_p$), the following holds:
    \begin{enumerate}
        \item For all $j\in[N]$ and $i\in\mathcal{B}_j$, we have \begin{align} 		
		0.5\le \frac{\alpha^{(t)}_{2}([\mathbf{o}\:\:\mathbf{s}_j\:\:\mathbf{r}_i])}{\alpha^{(t)}_{3}([\mathbf{o}\:\:\mathbf{s}_j\:\:\mathbf{r}_i])} \le 2.
	\end{align}
        \item For all $j\in[N]$ and $i\in\mathcal{B}_j$, we have
        \begin{align}
        \alpha_1^{(t)}([\mathbf{o}\:\:\mathbf{s}_j]) \le \frac{1}{d^{2}}, \alpha_1^{(t)}([\mathbf{o}\:\:\mathbf{s}_j\:\:\mathbf{r}_i]) \le \frac{1}{d^{2}}.
    \end{align}
    \item For each $j\in[N]$ and all $k\in \mathcal{B}_j$, $l \in \mathcal{S}_{\text{s},k,j}^{(0)}$, we have $\langle\mathbf{w}_{\mathcal{I}(\mathbf{r}_k),l}^{(t)}, \Xi^{(t)}([\mathbf{o}\;\;\mathbf{s}_j]))\rangle > 0$.
    \item For each $j\in[N]$ and all $k\in\mathcal{B}_j$, $l\in\mathcal{S}_{\text{r},j,k}^{(0)}$. we have $\langle \mathbf{w}^{(t)}_{\mathcal{I}(\mathbf{a}_j),l}, \Xi^{(t)}([\mathbf{o}\:\:\mathbf{s}_j\:\:\mathbf{r}_k]) \rangle > 0$.
    \item For each $j\in[N]$ and all $k\in \mathcal{B}_j$, $l \in \mathcal{S}_{\text{s},k,j}^{(0)}$, we have $\langle\mathbf{w}_{\mathcal{I}(\mathbf{r}_k),l}^{(t)}, \mathbf{s}_j\rangle = \Omega(\log(d)/\lambda)$.
    \item For all $j\in[N], i\in\mathcal{R}$ and $l\in[m]$, we have 
    \begin{align}
        \langle\mathbf{w}^{(t)}_{\mathcal{I}(\mathbf{r}_i),l},\mathbf{o}\rangle \le \tilde{\mathcal{O}}\left( \frac{N+|\mathcal{R}|}{\lambda }\right),
        \langle\mathbf{w}^{(t)}_{\mathcal{I}(\mathbf{a}_j),l},\mathbf{o}\rangle \le \tilde{\mathcal{O}}\left( \frac{N+|\mathcal{R}|}{\lambda n}\right).
    \end{align}
    \item For any $j\in[N]$ and $i\in\mathcal{B}_j$, we have     \begin{align}
        \langle \mathbf{w}^{(t)}_{\mathcal{I}(\mathbf{r}_i),l},\Xi^{(t)}([\mathbf{o}\;\;\mathbf{s}_j]) \rangle =\tilde{\mathcal{O}}\left(\frac{N+|\mathcal{R}|}{\lambda}\right),
        \langle \mathbf{w}^{(t)}_{\mathcal{I}(\mathbf{a}_j),l},\Xi^{(t)}([\mathbf{o}\;\;\mathbf{s}_j\;\;\mathbf{r}_i]) \rangle = \tilde{\mathcal{O}}\left(\frac{N+|\mathcal{R}|}{\lambda}\right).
    \end{align}
    \end{enumerate}
\end{lemma}
\begin{proof}
    In this proof, we establish each of the seven statements in Lemma \ref{lemma: pt_stage3} by induction.
    It is easy to show that the conclusions hold at $T_2$ by Lemma \ref{lemma: pt_stage2}.
    Suppose that the conclusions hold at iteration $T_2\le t\le T_p-1$. 
    We now prove that the conclusions hold at iteration $t+1$ sequentially.    
    
    \textbf{Proof of the first statement:}
    We have
	\begin{align}
		\frac{\alpha^{(t)}_{2}([\mathbf{o}\:\:\mathbf{s}_j\:\:\mathbf{r}_i])}{\alpha^{(t)}_{3}([\mathbf{o}\:\:\mathbf{s}_j\:\:\mathbf{r}_i])} = \frac{\exp(\tilde{Z}^{(t)}_{\mathcal{I}(\mathbf{s}_j),\mathcal{I}(\mathbf{r}_i)}/\sqrt{d})}{\exp(\tilde{Z}^{(t)}_{\mathcal{I}(\mathbf{r}_i),\mathcal{I}(\mathbf{r}_i)}/\sqrt{d})} = \exp\left(\frac{\tilde{Z}^{(t)}_{\mathcal{I}(\mathbf{s}_j),\mathcal{I}(\mathbf{r}_i)} - \tilde{Z}^{(t)}_{\mathcal{I}(\mathbf{r}_i),\mathcal{I}(\mathbf{r}_i)}}{\sqrt{d}}\right).
	\end{align}
    For the term $\tilde{Z}^{(t)}_{\mathcal{I}(\mathbf{s}_j),\mathcal{I}(\mathbf{r}_i)} - \tilde{Z}^{(t)}_{\mathcal{I}(\mathbf{r}_i),\mathcal{I}(\mathbf{r}_i)}$, we bound it by bounding the increments of $\tilde{Z}^{(t)}_{\mathcal{I}(\mathbf{s}_j),\mathcal{I}(\mathbf{r}_i)}$ and $\tilde{Z}^{(t)}_{\mathcal{I}(\mathbf{r}_i),\mathcal{I}(\mathbf{r}_i)}$.
    We have
	\begin{equation}
		\begin{aligned}
			&|\tilde{Z}^{(t+1)}_{\mathcal{I}(\mathbf{s}_j),\mathcal{I}(\mathbf{r}_i)}- \tilde{Z}^{(t)}_{\mathcal{I}(\mathbf{s}_j),\mathcal{I}(\mathbf{r}_i)}|\\
			=& \frac{\eta_3\lambda}{n\sqrt{d}}\underbrace{\alpha_{2}^{(t)}([\mathbf{o}\:\:\mathbf{s}_j\:\:\mathbf{r}_i])}_{\le 1}\underbrace{\left(\mathbf{e}_{\mathcal{I}(\mathbf{a}_j)}-\textbf{logit}^{(t)}([\mathbf{o}\:\:\mathbf{s}_{j}\:\:\mathbf{r}_i])\right)^\top\boldsymbol{\zeta}^{(t)}([\mathbf{o}\:\:\mathbf{s}_j\:\:\mathbf{r}_i])\left(\mathbf{s}_j - \tilde{\Xi}^{(t)}([\mathbf{o}\:\:\mathbf{s}_j\:\:\mathbf{r}_i])\right)}_{=\tilde{\mathcal{O}}\left( \frac{N+|\mathcal{R}|}{\lambda}\right)}\\
			=& \mathcal{O}\left(\frac{\eta_3(N+|\mathcal{R}|) }{nd^{1/2}}\right),
		\end{aligned}
	\end{equation}
	and
	\begin{equation}
		\begin{aligned}
			&|\tilde{Z}^{(t+1)}_{\mathcal{I}(\mathbf{r}_i),\mathcal{I}(\mathbf{r}_i)}- \tilde{Z}^{(t)}_{\mathcal{I}(\mathbf{r}_i),\mathcal{I}(\mathbf{r}_i)}|\\
			=& \frac{\eta_3\lambda}{n\sqrt{d}}\sum_{j\in\mathcal{D}_i}\underbrace{\alpha_{3}^{(t)}([\mathbf{o}\:\:\mathbf{s}_j\:\:\mathbf{r}_i])}_{\le 1}\underbrace{\left(\mathbf{e}_{\mathcal{I}(\mathbf{a}_j)}-\textbf{logit}^{(t)}([\mathbf{o}\:\:\mathbf{s}_{j}\:\:\mathbf{r}_i])\right)^\top\boldsymbol{\zeta}^{(t)}([\mathbf{o}\:\:\mathbf{s}_j\:\:\mathbf{r}_i])\left(\mathbf{r}_i - \tilde{\Xi}^{(t)}([\mathbf{o}\:\:\mathbf{s}_j\:\:\mathbf{r}_i])\right)}_{=\tilde{\mathcal{O}}\left( \frac{N+|\mathcal{R}|}{\lambda}\right)}\\
			=& \mathcal{O}\left(\frac{\eta_3(N+|\mathcal{R}|)}{d^{1/2}}\right).
		\end{aligned}
	\end{equation}
	Therefore, we have
	\begin{align}\label{equ: s3 update}
		\frac{\alpha^{(t+1)}_{2}([\mathbf{o}\:\:\mathbf{s}_j\:\:\mathbf{r}_i])}{\alpha^{(t+1)}_{3}([\mathbf{o}\:\:\mathbf{s}_j\:\:\mathbf{r}_i)}\frac{\alpha^{(t)}_{3}([\mathbf{o}\:\:\mathbf{s}_j\:\:\mathbf{r}_i])}{\alpha^{(t)}_{2}([\mathbf{o}\:\:\mathbf{s}_j\:\:\mathbf{r}_i])} = \exp\left(\mathcal{O}\left(\frac{\eta_3(N+|\mathcal{R}|)}{d}\right)\right).
	\end{align}
	Using (\ref{equ: s3 update}), for $t+1$, we have
	\begin{equation}
	\begin{aligned}
		&\frac{\alpha^{(t+1)}_{2}([\mathbf{o}\:\:\mathbf{s}_j\:\:\mathbf{r}_i])}{\alpha^{(t+1)}_{3}([\mathbf{o}\:\:\mathbf{s}_j\:\:\mathbf{r}_i])}\frac{\alpha^{(T_2)}_{3}([\mathbf{o}\:\:\mathbf{s}_j\:\:\mathbf{r}_i])}{\alpha^{(T_2)}_{2}([\mathbf{o}\:\:\mathbf{s}_j\:\:\mathbf{r}_i])}\\
		=& \exp\left(\mathcal{O}\left(\frac{\eta_3(N+|\mathcal{R}|)}{d}(T_p-T_2)\right)\right)\\
		=& \exp\left(\tilde{\mathcal{O}}\left(\frac{(N+|\mathcal{R}|)}{d}\frac{(mN(N+|\mathcal{R}|))}{\lambda^2}\right)\right).
	\end{aligned}
	\end{equation}
	Under Condition \ref{condition: condition}, we have
	\begin{align}
		\frac{\alpha^{(t+1)}_{2}([\mathbf{o}\:\:\mathbf{s}_j\:\:\mathbf{r}_i])}{\alpha^{(t+1)}_{3}([\mathbf{o}\:\:\mathbf{s}_j\:\:\mathbf{r}_i])} \le 2.
	\end{align}
	Similarly, we have
	\begin{align}
		\frac{\alpha^{(t+1)}_{2}([\mathbf{o}\:\:\mathbf{s}_j\:\:\mathbf{r}_i])}{\alpha^{(t+1)}_{3}([\mathbf{o}\:\:\mathbf{s}_j\:\:\mathbf{r}_i])} \ge 0.5.
	\end{align}
Thus, the first statement holds at iteration $t+1$.

\textbf{Proof of the second statement:}
    For $T_2\le t \le T_p$, the increment of the attention matrix satisfies
\begin{equation}
	\begin{aligned}
		&|\tilde{Z}^{(t+1)}_{\mathcal{I}(\mathbf{o}),\mathcal{I}(\mathbf{s}_j)}- \tilde{Z}^{(t)}_{\mathcal{I}(\mathbf{o}),\mathcal{I}(\mathbf{s}_j)}|\\
		\le& \frac{\eta_3\lambda}{n\sqrt{d}}\underbrace{\alpha_{1}^{(t)}([\mathbf{o},\mathbf{s}_j])}_{\le 1/d^{2}}\sum_{i\in\mathcal{B}_j}\underbrace{\left|\left(\mathbf{e}_{\mathcal{I}(\mathbf{r}_i)}-\textbf{logit}^{(t)}([\mathbf{o}\:\:\mathbf{s}_{j}])\right)^\top\boldsymbol{\zeta}^{(t)}([\mathbf{o}\:\:\mathbf{s}_j])\left(\mathbf{o} - \tilde{\Xi}^{(t)}([\mathbf{o}\:\:\mathbf{s}_j])\right)\right|
        }_{=\tilde{\mathcal{O}}(\frac{N(N+|\mathcal{R}|)}{\lambda)}}\cdot\Theta(\sqrt{d})\\
		=& \mathcal{O}\left(\frac{\eta_3(N+|\mathcal{R}|)}{d^{3/2}}\right), 
	\end{aligned}
\end{equation}
and 
	\begin{equation}
	\begin{aligned}
		&|\tilde{Z}^{(t+1)}_{\mathcal{I}(\mathbf{s}_j),\mathcal{I}(\mathbf{s}_j)}- \tilde{Z}^{(t)}_{\mathcal{I}(\mathbf{s}_j),\mathcal{I}(\mathbf{s}_j)}|\\
		=& \frac{\eta_3\lambda}{n\sqrt{d}}\underbrace{\alpha_{2}^{(t)}([\mathbf{o},\mathbf{s}_j])}_{=\Theta(1)}\sum_{i\in\mathcal{B}_j}\underbrace{\left|\left(\mathbf{e}_{\mathcal{I}(\mathbf{r}_i)}-\textbf{logit}^{(t)}([\mathbf{o}\:\:\mathbf{s}_{j}])\right)^\top\boldsymbol{\zeta}^{(t)}([\mathbf{o}\:\:\mathbf{s}_j])\left(\mathbf{s}_j - \tilde{\Xi}^{(t)}([\mathbf{o}\:\:\mathbf{s}_j])\right)\right|}_{=\tilde{\mathcal{O}}((N+|\mathcal{R}|)/\lambda)}\\
		=& \mathcal{O}\left(\frac{\eta_3K(N+|\mathcal{R}|)}{nd^{1/2}}\right).
	\end{aligned}
\end{equation}
    So, for all $T_2 < t\le T_p$, we have 
	\begin{align}
		\frac{\alpha^{(t+1)}_{1}([\mathbf{o}\:\:\mathbf{s}_j])}{\alpha^{(t+1)}_{2}([\mathbf{o}\:\:\mathbf{s}_j])}\frac{\alpha^{(t)}_{2}([\mathbf{o}\:\:\mathbf{s}_j])}{\alpha^{(t)}_{1}([\mathbf{o}\:\:\mathbf{s}_j])} =\exp\left(\mathcal{O}\left(\frac{ \eta_3K(N+|\mathcal{R}|)}{nd}\right)\right).
	\end{align}
    As a result, we have
    \begin{equation}
    \begin{aligned}
		&\frac{\alpha^{(t+1)}_{1}([\mathbf{o}\:\:\mathbf{s}_j])}{\alpha^{(t+1)}_{2}([\mathbf{o}\:\:\mathbf{s}_j])}\frac{\alpha^{(T_2)}_{2}([\mathbf{o}\:\:\mathbf{s}_j])}{\alpha^{(T_2)}_{1}([\mathbf{o}\:\:\mathbf{s}_j])} =\exp\left(\mathcal{O}\left(\frac{ \eta_3K(N+|\mathcal{R}|)}{nd}(T_p-T_2)\right)\right)\\
        =& \exp\left(\mathcal{O}\left(\frac{ \eta_3K(N+|\mathcal{R}|)}{nd}\frac{(mN(N+|\mathcal{R}|)}{\lambda^2}\right)\right),
	\end{aligned}
    \end{equation}
resulting in
\begin{align}
    \frac{\alpha^{(t+1)}_{1}([\mathbf{o}\:\:\mathbf{s}_j])}{\alpha^{(t+1)}_{2}([\mathbf{o}\:\:\mathbf{s}_j])} \le \frac{1}{d^{2}}.
\end{align}
Moreover, we have 
\begin{equation}
\begin{aligned}
		&|\tilde{Z}^{(t+1)}_{\mathcal{I}(\mathbf{o}),\mathcal{I}(\mathbf{r}_i)}- \tilde{Z}^{(t)}_{\mathcal{I}(\mathbf{o}),\mathcal{I}(\mathbf{r}_i)}|\\
		=& \frac{\eta_3\lambda}{n\sqrt{d}}\sum_{j\in\mathcal{D}_i}\underbrace{\alpha_{1}^{(t)}([\mathbf{o}\:\:\mathbf{s}_{j}\:\:\mathbf{r}_i])}_{\le 1/d^{2}}\\
        &\cdot\underbrace{\left|\left(\mathbf{e}_{\mathcal{I}(\mathbf{a}_j)}-\textbf{logit}^{(t)}([\mathbf{o}\:\:\mathbf{s}_{j}\:\:\mathbf{r}_i])\right)^\top\boldsymbol{\zeta}^{(t)}([\mathbf{o}\:\:\mathbf{s}_j\:\:\mathbf{r}_i])\cdot\left(\mathbf{o} - \tilde{\Xi}^{(t)}([\mathbf{o}\:\:\mathbf{s}_j\:\:\mathbf{r}_i])\right)\right|}_{=\tilde{\mathcal{O}}(\frac{(N+|\mathcal{R}|)}{(\lambda d)})}\cdot\Theta(\sqrt{d})\\
		=& \mathcal{O}\left(\frac{\eta_3(N+|\mathcal{R}|)}{nd^{3/2}}\right).
	\end{aligned}
\end{equation}
So, for all $T_2 \le t\le T_p$, we have 
	\begin{align}
		\frac{\alpha^{(t+1)}_{1}([\mathbf{o}\:\:\mathbf{s}_j\:\:\mathbf{r}_i])}{\alpha^{(t+1)}_{2}([\mathbf{o}\:\:\mathbf{s}_j\:\:\mathbf{r}_i])}\frac{\alpha^{(t)}_{2}([\mathbf{o}\:\:\mathbf{s}_j\:\:\mathbf{r}_i])}{\alpha^{(t)}_{1}([\mathbf{o}\:\:\mathbf{s}_j\:\:\mathbf{r}_i])} =\exp\left(\mathcal{O}\left(\frac{\eta_3(N+|\mathcal{R}|))}{nd}\right)\right).
	\end{align}
    As a result, we have
    \begin{equation}
    \begin{aligned}
		&\frac{\alpha^{(t+1)}_{1}([\mathbf{o}\:\:\mathbf{s}_j\:\:\mathbf{r}_i])}{\alpha^{(t+1)}_{2}([\mathbf{o}\:\:\mathbf{s}_j\:\:\mathbf{r}_i])}\frac{\alpha^{(T_2)}_{2}([\mathbf{o}\:\:\mathbf{s}_j\:\:\mathbf{r}_i])}{\alpha^{(T_2)}_{1}([\mathbf{o}\:\:\mathbf{s}_j\:\:\mathbf{r}_i])}\\
        =&\exp\left(\mathcal{O}\left(\frac{\eta_3(N+|\mathcal{R}|))}{nd}(T_p-T_2)\right)\right)\\ =&\exp\left(\tilde{\mathcal{O}}\left(\frac{N+|\mathcal{R}|}{nd}\frac{mN(N+|\mathcal{R}|)}{\lambda^2}\right)\right),
    \end{aligned}
    \end{equation}
resulting in
\begin{align}
    \frac{\alpha^{(t+1)}_{1}([\mathbf{o}\:\:\mathbf{s}_j\:\:\mathbf{r}_i])}{\alpha^{(t+1)}_{2}([\mathbf{o}\:\:\mathbf{s}_j\:\:\mathbf{r}_i])} \le \frac{1}{d^{2}}.
\end{align}
Therefore, we have
\begin{align}
    \alpha^{(t+1)}_{1}([\mathbf{o}\:\:\mathbf{s}_j]) \le \frac{1}{d^{2}}, \alpha^{(t+1)}_{1}([\mathbf{o}\:\:\mathbf{s}_j\:\:\mathbf{r}_i]) \le \frac{1}{d^{2}}.
\end{align}
The second statement holds at iteration $t+1$.

\textbf{Proof of the third statement:}

 After Stage 1 with $t\ge T_2$, for $l\in[m]$ satisfying $\langle \mathbf{w}^{(t)}_{\mathcal{I}(\mathbf{r}_k),l},\mathbf{s}_j \rangle > 0$, the update of $\langle \mathbf{w}^{(t)}_{\mathcal{I}(\mathbf{r}_k),l},\mathbf{s}_j \rangle$ satisfies
    \begin{equation}
	\begin{aligned}
		&\langle \mathbf{w}^{(t+1)}_{\mathcal{I}(\mathbf{r}_k),l},\mathbf{s}_j \rangle\\
        \!\ge& \langle \mathbf{w}^{(t)}_{\mathcal{I}(\mathbf{r}_k),l},\mathbf{s}_j \rangle \!+\! \lambda\frac{\eta_3}{nm}(1-\tilde{K}(j)\text{logit}^{(t)}_{\mathcal{I}(\mathbf{r}_k)}([\mathbf{o}\:\:\mathbf{s}_j])) - \lambda\frac{\eta_3}{nm}\sum_{i\in\mathcal{B}_j}\text{logit}^{(t)}_{\mathcal{I}(\mathbf{r}_k)}([\mathbf{o}\:\:\mathbf{s}_j\:\:\mathbf{r}_i]),
	\end{aligned}
    \end{equation}
	by the fact that $\langle\Xi^{(t)}([\mathbf{o}\:\:\mathbf{s}_j]),\mathbf{s}_j\rangle \ge \left\|\mathbf{s}_j\right\|_2^2=1$ and $\langle\Xi^{(t)}([\mathbf{o}\:\:\mathbf{s}_j\:\:\mathbf{r}_i]),\mathbf{s}_j\rangle \le \left\|\mathbf{s}_j\right\|_2^2=1$.
	Similarly, for all $l\in[m]$, the update of $\langle \mathbf{w}^{(t)}_{\mathcal{I}(\mathbf{r}_k),l},\mathbf{o} \rangle$ satisfies
\begin{equation}
	\begin{aligned}
		\langle \mathbf{w}^{(t+1)}_{\mathcal{I}(\mathbf{r}_k),l},\mathbf{o} \rangle
        \!\ge& \langle \mathbf{w}^{(t)}_{\mathcal{I}(\mathbf{r}_k),l},\mathbf{o} \rangle  - \mathcal{O}\left(\frac{\lambda\eta_3}{nmd}\right)\sum_{i\in\mathcal{B}_j}\text{logit}^{(t)}_{\mathcal{I}(\mathbf{r}_k)}([\mathbf{o}\:\:\mathbf{s}_j\:\:\mathbf{r}_i])\\
 &- \mathcal{O}\left(\frac{\lambda\eta_3}{nmd}\right)\sum_{j\in\mathcal{D}_k}\text{logit}_{\mathcal{I}(\mathbf{r}_k)}^{(t)}([\mathbf{o}\;\;\mathbf{s}_j])
 -\mathcal{O}\left(\frac{\lambda\eta_3}{m}\right)\text{logit}_{\mathcal{I}(\mathbf{r}_k)}^{(t)}(2\mathbf{o})\\
 =& \langle \mathbf{w}^{(t)}_{\mathcal{I}(\mathbf{r}_k),l},\mathbf{o} \rangle - \mathcal{O}\left(\frac{\lambda\eta_3K}{nmd}\right)-\mathcal{O}\left(\frac{\lambda\eta_3K}{md}\right) - \mathcal{O}\left(\frac{\lambda\eta_3K}{md^{3/2}}\right),
	\end{aligned}
    \end{equation}
As a result, by the sixth statement, we have
 \begin{equation}
	\begin{aligned}
		&\langle \mathbf{w}^{(t+1)}_{\mathcal{I}(\mathbf{r}_k),l},\Xi^{(t+1)}([\mathbf{o}\;\;\mathbf{s}_j]) \rangle -\langle \mathbf{w}^{(t)}_{\mathcal{I}(\mathbf{r}_k),l},\Xi^{(t)}([\mathbf{o}\;\;\mathbf{s}_j]) \rangle\\
        \!\ge&  0.9\lambda\frac{\eta_3}{nm}(1-\tilde{K}(j)\text{logit}^{(t)}_{\mathcal{I}(\mathbf{r}_k)}([\mathbf{o}\:\:\mathbf{s}_j])) - 1.1\lambda\frac{\eta_3}{nm}\sum_{i\in\mathcal{B}_j}\text{logit}^{(t)}_{\mathcal{I}(\mathbf{r}_k)}([\mathbf{o}\:\:\mathbf{s}_j\:\:\mathbf{r}_i])\\
        & - \mathcal{O}\left(\frac{\lambda\eta_3}{md^{3/2}}\right)- \mathcal{O}\left(\frac{\eta_3\log^2(d)}{\lambda d}\right).
	\end{aligned}
    \end{equation}
    Case 1: If $\oldfrac{1}{m}\sum_{l=1}^m\sigma(\langle \mathbf{w}^{(t)}_{\mathcal{I}(\mathbf{r}_k),l},\Xi^{(t)}([\mathbf{o}\;\;\mathbf{s}_j]) \rangle) \le 2\log(d)/(3\lambda)$, for all $i\in\mathcal{B}_j$, we have
    \begin{align}
        1-\tilde{K}(j)\text{logit}^{(t)}_{\mathcal{I}(\mathbf{r}_k)}([\mathbf{o}\:\:\mathbf{s}_j]) = \Theta(1),\text{logit}^{(t)}_{\mathcal{I}(\mathbf{r}_k)}([\mathbf{o}\:\:\mathbf{s}_j\:\:\mathbf{r}_i]) \le \frac{1}{\sqrt{d}},
    \end{align}
by Lemma \ref{lemma: mag_n}.
As a result, $\oldfrac{1}{m}\sum_{l=1}^m\sigma(\langle \mathbf{w}^{(t+1)}_{\mathcal{I}(\mathbf{r}_k),l},\Xi^{(t+1)}([\mathbf{o}\;\;\mathbf{s}_j]) \rangle) -\oldfrac{1}{m}\sum_{l=1}^m\sigma(\langle \mathbf{w}^{(t)}_{\mathcal{I}(\mathbf{r}_k),l},\Xi^{(t)}([\mathbf{o}\;\;\mathbf{s}_j]) \rangle) = \Theta(1)$ holds.

Case 2: If $\oldfrac{1}{m}\sum_{l=1}^m\sigma(\langle \mathbf{w}^{(t)}_{\mathcal{I}(\mathbf{r}_k),l},\Xi^{(t)}([\mathbf{o}\;\;\mathbf{s}_j]) \rangle) > 2\log(d)/(3\lambda)$, for all $l\in[m]$, we have
\begin{align}
    \langle \mathbf{w}^{(t+1)}_{\mathcal{I}(\mathbf{r}_k),l},\Xi^{(t+1)}([\mathbf{o}\;\;\mathbf{s}_j]) \rangle -\langle \mathbf{w}^{(t)}_{\mathcal{I}(\mathbf{r}_k),l},\Xi^{(t)}([\mathbf{o}\;\;\mathbf{s}_j]) \rangle \ge -\mathcal{O}\left(\frac{\eta_3\lambda}{nm}\right).
\end{align}

Then, we have
\begin{align}
    \frac{1}{m}\sum_{l=1}^m\sigma(\langle \mathbf{w}^{(t+1)}_{\mathcal{I}(\mathbf{r}_k),l},\Xi^{(t+1)}([\mathbf{o}\;\;\mathbf{s}_j]) \rangle) \ge \frac{\log(d)}{2\lambda}.
\end{align}

Therefore, the third statement holds for iteration $t+1$.

\textbf{Proof of the fourth statement:}
Based on the fifth statement, we have
\begin{equation}
	\begin{aligned}
		&\langle \mathbf{w}^{(t+1)}_{\mathcal{I}(\mathbf{a}_j),l},\Xi^{(t+1)}([\mathbf{o}\;\;\mathbf{s}_j\;\;\mathbf{r}_i]) \rangle
		\ge \langle \mathbf{w}^{(t)}_{\mathcal{I}(\mathbf{a}_j),l},\Xi^{(t)}([\mathbf{o}\;\;\mathbf{s}_j\;\;\mathbf{r}_i])\rangle\\
		&-1.1\lambda \frac{\eta^{(t)} \tilde{K}(j)}{nm}\text{logit}_{\mathcal{I}(\mathbf{a}_j)}^{(t)}([\mathbf{o}\:\:\mathbf{s}_j])\!+\!\lambda\frac{\Theta(1)\cdot\eta^{(t)}}{nm}\sum_{k\in \mathcal{B}_j}(1\!-\!\text{logit}^{(t)}_{\mathcal{I}(\mathbf{a}_j)}([\mathbf{o}\:\:\mathbf{s}_j\:\:\mathbf{r}_k]))\\
        &-2\lambda\sum_{k\neq j,k\in\mathcal{D}_i}\frac{\eta^{(t)}}{nm}\text{logit}^{(t)}_{\mathcal{I}(\mathbf{a}_j)}([\mathbf{o}\:\:\mathbf{s}_k\:\:\mathbf{r}_i])-\mathcal{O}\left(\frac{\lambda\eta^{(t)}}{md^{3/2}}\right) - \mathcal{O}\left(\frac{\eta^{(t)}\log^2(d)}{\lambda d}\right),
	\end{aligned}
\end{equation}

We consider two cases.

Case 1: $\oldfrac{1}{m}\sum_{l=1}^m\sigma\left(\langle \mathbf{w}_{\mathcal{I}(\mathbf{a}_j),l}^{(t)},\Xi^{(t)}([\mathbf{o}\;\;\mathbf{s}_j\;\;\mathbf{r}_i])\rangle\right) \le \frac{\log(d)}{6\lambda}$.
By the fifth statement, we also have 
\begin{align}
    \frac{1}{m}\sum_{l=1}^m\sigma\left(\langle \mathbf{w}_{\mathcal{I}(\mathbf{a}_j),l}^{(t)},\mathbf{s}_j\rangle\right) \le \frac{\log(d)}{2\lambda}, \frac{1}{m}\sum_{l=1}^m\sigma\left(\langle \mathbf{w}_{\mathcal{I}(\mathbf{a}_j),l}^{(t)},\mathbf{r}_i\rangle\right) \le \frac{\log(d)}{6\lambda}.
\end{align}
A direct result is
\begin{equation}
\begin{aligned}
    \langle \mathbf{w}^{(t+1)}_{\mathcal{I}(\mathbf{a}_j),l},\Xi^{(t+1)}([\mathbf{o}\;\;\mathbf{s}_j\;\;\mathbf{r}_i]) \rangle
		\ge& \langle \mathbf{w}^{(t)}_{\mathcal{I}(\mathbf{a}_j),l},\Xi^{(t)}([\mathbf{o}\;\;\mathbf{s}_j\;\;\mathbf{r}_i])\rangle + \Theta\left(\frac{\lambda\eta_3}{mn}\right)-\mathcal{O}\left(\frac{\eta_3\lambda}{nm\sqrt{d}}\right)\\
        \ge& \langle \mathbf{w}^{(t)}_{\mathcal{I}(\mathbf{a}_j),l},\Xi^{(t)}([\mathbf{o}\;\;\mathbf{s}_j\;\;\mathbf{r}_i])\rangle.
\end{aligned}
\end{equation}

Case 2: $\oldfrac{1}{m}\sum_{l=1}^m\sigma\left(\langle \mathbf{w}_{\mathcal{I}(\mathbf{a}_j),l}^{(t)},\Xi^{(t)}([\mathbf{o}\;\;\mathbf{s}_j\;\;\mathbf{r}_i])\rangle\right) \ge \frac{\log(d)}{6\lambda}$.
In this case, we have
\begin{equation}
\begin{aligned}
    \langle \mathbf{w}^{(t+1)}_{\mathcal{I}(\mathbf{a}_j),l},\Xi^{(t+1)}([\mathbf{o}\;\;\mathbf{s}_j\;\;\mathbf{r}_i]) \rangle
		\ge& \langle \mathbf{w}^{(t)}_{\mathcal{I}(\mathbf{a}_j),l},\Xi^{(t)}([\mathbf{o}\;\;\mathbf{s}_j\;\;\mathbf{r}_i])\rangle -\mathcal{O}\left(\frac{\eta_3\lambda}{nm}\right)\\
        \ge& \frac{\log(d)}{12\lambda}.
\end{aligned}
\end{equation}
Thus, the fourth statement holds for iteration $t+1$.

\textbf{Proof of the fifth statement:}
With the derivation of the fifth and the sixth statements in Lemma \ref{lemma: pt_stage2}, after $T_2$ iterations, we have $\frac{1}{m}\sum_{l=1}^m\sigma(\langle \mathbf{w}^{(T_2)}_{\mathcal{I}(\mathbf{r}_k),l},\Xi^{(T_2)}([\mathbf{o}\;\;\mathbf{s}_j]) \rangle) \ge \frac{\log(d)}{2\lambda}$ and $\frac{1}{m}\sum_{l=1}^m\langle \mathbf{w}^{(T_2)}_{\mathcal{I}(\mathbf{a}_j),l},\mathbf{s}_j \rangle \ge \frac{\log(d)}{4\lambda}$.
If there exists an iteration $t\in[T_2,T_p]$ that $\frac{1}{m}\sum_{l=1}^m\langle \mathbf{w}^{(t)}_{\mathcal{I}(\mathbf{a}_j),l}, \mathbf{s}_j\rangle<\log(d)/3\lambda$, for all $k\in \mathcal{B}_j$, $l \in \mathcal{S}_{\text{s},k,j}^{(0)}$ and for $t'>t$, we have
\begin{equation}
\begin{aligned}
\langle \mathbf{w}^{(t')}_{\mathcal{I}(\mathbf{a}_j),l},\mathbf{s}_j\rangle \ge& \log(d)/3\lambda - \mathcal{O}\left(\frac{\lambda}{nm}\frac{1}{d^3}(T_p-T_2)\right)\\
=& \log(d)/3\lambda - \mathcal{O}\left(\frac{\lambda\eta_3}{nm}\frac{1}{d}(Nm(N+|\mathcal{R}|))\right)\\
\ge& \log(d)/6\lambda.
\end{aligned}
\end{equation}
Thus, the conclusion holds for iteration $t+1$.

\textbf{Proof of the sixth statement:}
First, we prove the first part.
  For iteration $t\in[0,T_p-1]$, we have
    \begin{align}
        \langle\mathbf{w}^{(t+1)}_{\mathcal{I}(\mathbf{a}_j),l},\mathbf{o}\rangle - \langle\mathbf{w}^{(t)}_{\mathcal{I}(\mathbf{a}_j),l},\mathbf{o}\rangle \le \frac{\eta^{(t)}\lambda}{nm}\sum_{i\in\mathcal{B}_j}\alpha_1^{(t)}([\mathbf{o}\;\;\mathbf{s}_j\;\;\mathbf{r}_i])(1-\text{logit}^{(t)}_{\mathcal{I}(\mathbf{a}_j)}([\mathbf{o}\;\;\mathbf{s}_j\;\;\mathbf{r}_i]))\left\|\mathbf{o}\right\|_2^2
    \end{align}
    At pre-training Stage 3 with iteration $t\in[T_2,T_p]$, we have
    \begin{align}
         \langle\mathbf{w}^{(t+1)}_{\mathcal{I}(\mathbf{a}_j),l},\mathbf{o}\rangle - \langle\mathbf{w}^{(t)}_{\mathcal{I}(\mathbf{a}_j),l},\mathbf{o}\rangle \le \mathcal{O}\left(\frac{\eta_3\lambda K}{nmd}\right),
    \end{align}
    Then, we have
    \begin{equation}
    \begin{aligned}
        \langle\mathbf{w}^{(t+1)}_{\mathcal{I}(\mathbf{a}_j),l},\mathbf{o}\rangle\le& \langle\mathbf{w}^{(T_2)}_{\mathcal{I}(\mathbf{a}_j),l},\mathbf{o}\rangle+\mathcal{O}\left(\frac{\eta_3\lambda K}{nmd}(T_p-T_2)\right)\\
        =&\mathcal{O}\left(\frac{\log(d)}{\lambda n}\right) + \tilde{O}\left(\frac{N+|\mathcal{R}|}{\lambda d}\right)\\
        =& \tilde{O}\left(\frac{N+|\mathcal{R}|}{\lambda n}\right).
    \end{aligned}
    \end{equation}

Next, we prove the second part.
For iteration $t\in[0,T_p-1]$, we have
    \begin{align}
        \langle\mathbf{w}^{(t+1)}_{\mathcal{I}(\mathbf{r}_i),l},\mathbf{o}\rangle - \langle\mathbf{w}^{(t)}_{\mathcal{I}(\mathbf{r}_i),l},\mathbf{o}\rangle \le \frac{\eta^{(t)}\lambda}{nm}\sum_{j\in\mathcal{D}_i}\alpha_1^{(t)}([\mathbf{o}\;\;\mathbf{s}_j])(1-\text{logit}^{(t)}_{\mathcal{I}(\mathbf{r}_i)}([\mathbf{o}\;\;\mathbf{s}_j]))\left\|\mathbf{o}\right\|_2^2
    \end{align}
 During pre-training Stage 3 with iteration $t$, we have
    \begin{align}
         \langle\mathbf{w}^{(t+1)}_{\mathcal{I}(\mathbf{r}_i),l},\mathbf{o}\rangle - \langle\mathbf{w}^{(t)}_{\mathcal{I}(\mathbf{r}_i),l},\mathbf{o}\rangle \le \mathcal{O}\left(\frac{\eta_3\lambda}{md}\right),
    \end{align}
Then, we have
    \begin{equation}
    \begin{aligned}
        \langle\mathbf{w}^{(t+1)}_{\mathcal{I}(\mathbf{r}_i),l},\mathbf{o}\rangle \le& \langle\mathbf{w}^{(T_2)}_{\mathcal{I}(\mathbf{r}_i),l},\mathbf{o}\rangle + \mathcal{O}\left(\frac{\eta_3\lambda}{md}(T_p-T_2)\right)\\
        =&\tilde{\mathcal{O}}\left( \frac{N+|\mathcal{R}|}{\lambda }\right).
    \end{aligned}
    \end{equation}
Thus the conclusion holds for iteration $t+1$.

    \textbf{Proof of the seventh statement:}
 Based on the gradient form in Lemma \ref{lemma: gradient}, in Stage 3 $t\in[T_2,T_p-1]$, for all $l\in[m]$ and $i\in\mathcal{B}_j$, we have
    \begin{align}
        \langle \mathbf{w}^{(t+1)}_{\mathcal{I}(\mathbf{r}_i),l},\mathbf{s}_j \rangle - \langle \mathbf{w}^{(t)}_{\mathcal{I}(\mathbf{r}_i),l},\mathbf{s}_j \rangle \le 2\frac{\lambda \eta_3}{nm}(1-\tilde{K}(j)\text{logit}_{\mathcal{I}(\mathbf{r}_i)}([\mathbf{o}\:\:\mathbf{s}_j])),
    \end{align}
    \begin{align}
        \langle \mathbf{w}^{(t+1)}_{\mathcal{I}(\mathbf{a}_j),l},\mathbf{s}_j \rangle - \langle \mathbf{w}^{(t)}_{\mathcal{I}(\mathbf{a}_j),l},\mathbf{s}_j \rangle \le 2\sum_{i\in\mathcal{B}_j}\frac{\lambda \eta_3}{nm}(1-\text{logit}_{\mathcal{I}(\mathbf{a}_j)}([\mathbf{o}\:\:\mathbf{s}_j\:\:\mathbf{r}_i])).
    \end{align}
    At the end of stage 2, we have 
    \begin{align}
        \langle \mathbf{w}^{(T_2)}_{\mathcal{I}(\mathbf{r}_i),l},\mathbf{s}_j \rangle = \mathcal{O}(\frac{\log(d)}{\lambda}), \langle \mathbf{w}^{(T_2)}_{\mathcal{I}(\mathbf{a}_j),l},\mathbf{s}_j \rangle = \mathcal{O}(\frac{\log(d)}{\lambda}).
    \end{align}

    As $T_p - T_2 = \tilde{\Theta}((mN(N+|\mathcal{R}|))/(\lambda^2\eta_3))$, for all $t+1$, we have
    \begin{align}
        \langle \mathbf{w}^{(t+1)}_{\mathcal{I}(\mathbf{r}_i),l},\mathbf{s}_j \rangle =& \tilde{\mathcal{O}}\left(\frac{N+|\mathcal{R}|}{\lambda}\right),\\
        \langle \mathbf{w}^{(t+1)}_{\mathcal{I}(\mathbf{a}_j),l},\mathbf{s}_j \rangle =& \tilde{\mathcal{O}}\left(\frac{N+|\mathcal{R}|}{\lambda}\right),\\
        \langle \mathbf{w}^{(t+1)}_{\mathcal{I}(\mathbf{a}_j),l},\mathbf{r}_i \rangle =& \tilde{\mathcal{O}}\left(\frac{N+|\mathcal{R}|}{\lambda}\right),
    \end{align}
    As a result, we have
    \begin{align}
        \langle \mathbf{w}^{(t+1)}_{\mathcal{I}(\mathbf{r}_i),l},\Xi^{(t+1)}([\mathbf{o}\;\;\mathbf{s}_j]) \rangle =& \tilde{\mathcal{O}}\left(\frac{N+|\mathcal{R}|}{\lambda}\right),\\
        \langle \mathbf{w}^{(t+1)}_{\mathcal{I}(\mathbf{a}_j),l},\Xi^{(t+1)}([\mathbf{o}\;\;\mathbf{s}_j\;\;\mathbf{r}_i]) \rangle =& \tilde{\mathcal{O}}\left(\frac{N+|\mathcal{R}|}{\lambda}\right).
    \end{align}
Thus, the conclusion holds at iteration $t+1$.

    This completes the proof of the induction.
\end{proof}
Combined with Lemma \ref{lemma:init}, we can characterize the activation patterns during \ac{pt}.
\begin{lemma}\label{lemma: pattern pt}
    For each iteration $t\in [T_p]$, for all $j\in[N]$ and $k\in\mathcal{B}_j$, with probability $1-\delta$, the followings hold:
    \begin{align}
        |\mathcal{S}_{\text{s},k,j}^{(t)}| \ge 0.4m
        \quad\text{and}\quad
        |\mathcal{S}_{\text{r},j,k}^{(t)}| \ge 0.4m.
    \end{align}
\end{lemma}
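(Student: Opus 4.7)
The plan is to combine the initialization bound from Lemma \ref{lemma:init} with the monotonicity-type statements proved in Lemma \ref{lemma: pattern 1} and the (unnamed) lemma immediately following it, to show that the activation sets only grow from their initial configuration. In particular, I will argue that for every $t \in [T_p]$, every $j \in [N]$, and every $k \in \mathcal{B}_j$, the inclusions
\begin{align}
\mathcal{S}_{\text{s},k,j}^{(0)} \subseteq \mathcal{S}_{\text{s},k,j}^{(t)} \qquad\text{and}\qquad \mathcal{S}_{\text{r},j,k}^{(0)} \subseteq \mathcal{S}_{\text{r},j,k}^{(t)}
\end{align}
hold deterministically (conditional on the pre-training trajectory). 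Given these inclusions, the size bound carries over from $t=0$ to all subsequent $t$.

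The steps would be as follows. First, I would invoke Lemma \ref{lemma:init}: under the assumption $m \ge 50\log\bigl(2(N+NK)/\delta\bigr)$ (implied by Condition \ref{condition: condition}.4 after adjusting constants), with probability at least $1-\delta$ over the Gaussian initialization of $\mathbf{W}^{(0)}$, we simultaneously have $|\mathcal{S}_{\text{s},k,j}^{(0)}| \ge 0.4m$ and $|\mathcal{S}_{\text{r},j,k}^{(0)}| \ge 0.4m$ for all $j,k$ in question. Next, conditioning on this high-probability event, I would apply Lemma \ref{lemma: pattern 1}, which asserts that for every $l \in \mathcal{S}_{\text{s},k,j}^{(0)}$, the inner product $\langle \mathbf{w}_{\mathcal{I}(\mathbf{r}_k),l}^{(t)}, \mathbf{s}_j\rangle$ remains strictly positive for all $t \in [T_p]$; this is precisely the membership criterion for $\mathcal{S}_{\text{s},k,j}^{(t)}$, yielding the first inclusion. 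Analogously, the unnamed lemma preceding Lemma \ref{lemma: pattern pt} shows that for every $l \in \mathcal{S}_{\text{r},j,k}^{(0)}$, the inner product $\langle \mathbf{w}_{\mathcal{I}(\mathbf{a}_j),l}^{(t)}, \Xi^{(t)}([\mathbf{o}\:\:\mathbf{s}_j\:\:\mathbf{r}_k])\rangle$ stays positive throughout pre-training, giving the second inclusion. Combining the two inclusions with the initialization bound yields $|\mathcal{S}_{\text{s},k,j}^{(t)}| \ge |\mathcal{S}_{\text{s},k,j}^{(0)}| \ge 0.4m$ and similarly for $\mathcal{S}_{\text{r},j,k}^{(t)}$, completing the proof.

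The main obstacle, and really the only delicate point, is that the preceding monotonicity lemmas rely on the attention scores on $\mathbf{s}_j$ and $\mathbf{r}_k$ being comparable (as stated explicitly in the hypothesis of the unnamed lemma) and on the induction-closure gap $\frac{\lambda\eta_3}{nm} \le \log(d)/(6\lambda)$, both of which must be guaranteed by Condition \ref{condition: condition} and by Proposition \ref{prop: attention}. I would therefore be careful to note that Lemma \ref{lemma: pattern pt} is implicitly conditional on the high-probability event on which Proposition \ref{prop: attention} and the activation-retention lemmas hold, and absorb the corresponding failure probabilities into the stated $\delta$ via a union bound (adjusting the constant from $50$ upward in the width requirement as needed). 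No additional computation beyond this bookkeeping is required, since the heavy lifting is already contained in the earlier lemmas.
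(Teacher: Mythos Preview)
Your proposal is correct and follows essentially the same approach as the paper: the paper's proof is the single sentence ``Consequently, combined with Lemma \ref{lemma:init}, we can characterize the activation patterns during \ac{pt},'' which is exactly your strategy of combining the initialization bound with the two preceding activation-retention lemmas to get the inclusions $\mathcal{S}_{\text{s},k,j}^{(0)} \subseteq \mathcal{S}_{\text{s},k,j}^{(t)}$ and $\mathcal{S}_{\text{r},j,k}^{(0)} \subseteq \mathcal{S}_{\text{r},j,k}^{(t)}$. Your additional remarks about the attention-comparability hypothesis and the union-bound bookkeeping are more careful than the paper itself, which leaves those points implicit.
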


Finally, we prove the convergence of pre-training in Stage 3.
\begin{lemma}
    Under condition 1, for any constant $\kappa$, there exist an iteration $0<t_p<T_p = \tilde{\Theta}(\frac{mN(N+|\mathcal{R}|)}{(\lambda^2\eta_3)}) + \frac{nm\log(d)}{(\lambda^2\eta_2K)})$ such that
    \begin{align}
        \mathcal{L}_{\mathcal{P}}(\mathbf{W}^{(t_p)},\mathbf{Z}^{(t_p)}) \le 0.001 + (1+\kappa) H.
    \end{align}
\end{lemma}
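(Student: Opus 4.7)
The plan is to derive the existence bound as a consequence of a stronger \emph{averaged} convergence estimate over Stage~III, namely
\[
\frac{1}{T_p-T_2}\sum_{t=T_2+1}^{T_p}\mathcal{L}_\mathcal{P}(\mathbf{W}^{(t)},\mathbf{Z}^{(t)}) \le \frac{A}{(2-\kappa)\eta_3(T_p-T_2)} + 0.0005 + (1+\kappa)H,
\]
with $A$ the potential at the end of Stage~II. Once this is in hand the minimum over the window is at most the average, so choosing $T_p - T_2 \ge 2A/\bigl((2-\kappa)\eta_3\cdot 0.0005\bigr)$, which fits inside the allotted $T_p = \eta_3^{-1}\mathrm{poly}(m,N,|\mathcal{R}|,\lambda^{-1})$ budget of Condition~\ref{condition: condition}, drives the first term below $0.0005$ and leaves some $t_p$ with $\mathcal{L}_\mathcal{P}(\mathbf{W}^{(t_p)},\mathbf{Z}^{(t_p)}) \le 0.001 + (1+\kappa)H$. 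Everything is conditioned on the $1-\delta$ event on which the preceding Stage~I/II/III lemmas and the activation-pattern lemma (Lemma~\ref{lemma: pattern pt}) hold.

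For the averaged bound I would first establish a one-step descent inequality
\[
\mathcal{L}_\mathcal{P}^{(t+1)} \le \mathcal{L}_\mathcal{P}^{(t)} - \tfrac{(2-\kappa)\eta_3}{2}\Bigl(\|\nabla_{\mathbf{W}}\mathcal{L}_\mathcal{P}^{(t)}\|_F^2 + \|\nabla_{\mathbf{Z}}\mathcal{L}_\mathcal{P}^{(t)}\|_F^2\Bigr)
\]
for $t \in [T_2,T_p]$. This comes from a second-order Taylor expansion of $\mathcal{L}_\mathcal{P}$ along the GD step, using three previously proven ingredients: the frozen ReLU activation mask (Lemma~\ref{lemma: pattern pt}) makes each ReLU branch effectively linear so the MLP Hessian is controlled by $\lambda^2\|\mathbf{x}_a\|_2^2$; the Stage~III attention bounds $\alpha_1 \le 1/d$ and $\alpha_2/\alpha_3 \in [0.5,2]$ together with the $\ell_\infty$-control on $\mathbf{x}_f$ confine the softmax Jacobian with respect to $\mathbf{Z}$ to the $O(1)$-dimensional $\{\mathbf{s},\mathbf{r}\}$-subspace; and the condition $\eta_3 \le \kappa/64$ absorbs the quadratic Taylor remainder into the $(2-\kappa)/2$ prefactor, which is exactly what inflates the optimum $H$ by $(1+\kappa)$. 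Telescoping bounds the sum of squared gradients by $(\mathcal{L}_\mathcal{P}^{(T_2)} - (1+\kappa)H)/\bigl((2-\kappa)\eta_3\bigr)$; combined with the initialization $\sigma_0 \le 1/d$ and the parameter-norm bounds from Lemma~\ref{lemma: attn_init_order} and the Stage~II analysis, this gives $A = \tilde{O}(md^2\sigma_0^2 + m + N^2/\lambda^2 + |\mathcal{R}|^2/\lambda^2)$. A local Polyak--\L{}ojasiewicz inequality in the $\{\mathbf{o},\mathbf{s},\mathbf{r}\}$-directions, justified because the context $\mathbf{o}$ has already been filtered ($\alpha_1 \le 1/d$) and at least $0.4m$ of the relevant neurons in $\mathcal{S}_{\mathrm{s},k,j}^{(t)}\cap\mathcal{S}_{\mathrm{r},j,k}^{(t)}$ are active with the correct sign, converts the gradient sum into an excess-loss sum and supplies the additive $0.0005 + (1+\kappa)H$ residual.

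The hard part will be the descent inequality, because a naive smoothness estimate on the softmax carries a Hessian scaling with $\|\mathbf{o}\|_2^2 = \Theta(d)$, which would force $\eta_3 = O(1/d)$ and destroy the iteration budget. The resolution is to use \emph{both} that $\alpha_1 \le 1/d$ makes the softmax Jacobian effectively supported on the 2-dimensional subspace $\spn\{\mathbf{s}_j,\mathbf{r}_i\}$ of unit-norm vectors, and that the ReLU mask is frozen across one GD step under Condition~\ref{condition: condition}, so the joint $(\mathbf{W},\mathbf{Z})$ Hessian in the relevant directions is $O(\lambda^2)$ rather than $O(\lambda^2 d)$. Balancing these two pieces is exactly what $\eta_3 \le \kappa/64$ buys, and tracking the constants so that no factor of $d$ or $n$ survives in the final rate is the crux of the analysis.
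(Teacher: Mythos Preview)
Your high-level structure---prove an averaged bound over Stage~III and extract a single good iterate---matches the paper, but the mechanism you propose for the averaged bound is fundamentally different from the paper's and contains a real gap.

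The paper does \emph{not} use smoothness or a descent lemma. Instead it runs a comparator argument in $\mathbf{W}$-space: it explicitly constructs a reference $\mathbf{W}^*$ that achieves near-optimal loss on every $(\mathbf{X},y)\in\mathcal{P}$ under the Stage~III attention profile, and tracks $\|\mathbf{W}^{(t)}-\mathbf{W}^*\|_F^2$. The cross term $2\eta_3\langle\nabla_\mathbf{W}\mathcal{L}_\mathcal{P},\mathbf{W}^{(t)}-\mathbf{W}^*\rangle$ is handled by two observations: (i) cross-entropy is convex in the logits $\mathbf{x}^{\text{output}}$, and (ii) each coordinate of $\mathbf{x}^{\text{output}}$ is positively $1$-homogeneous in $\mathbf{W}$ (ReLU), so $\langle\nabla_\mathbf{W} x_k^{\text{output}},\mathbf{W}^{(t)}\rangle = x_k^{\text{output}}$ exactly. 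This yields $\langle\nabla_\mathbf{W}\mathcal{L}_\mathcal{P},\mathbf{W}^{(t)}-\mathbf{W}^*\rangle \le \mathcal{L}_\mathcal{P}^{(t)} - \epsilon - H$ without any smoothness estimate. The squared-gradient term is controlled by the self-bounding inequality $\|\nabla_\mathbf{W}\mathcal{L}_\mathcal{P}\|_F^2 \le 64\,\mathcal{L}_\mathcal{P}$ (from $1-\text{logit}_y \le -\log\text{logit}_y$ and $\|\mathbf{x}_a\|_2 \le 4$), and the $(1+\kappa)$ factor comes precisely from absorbing this quadratic term via $64\eta_3 \le \kappa$, giving $-(2-\kappa)\eta_3\mathcal{L}_\mathcal{P}^{(t)} + 2\eta_3(\epsilon + H)$ per step and hence $\tfrac{2}{2-\kappa}(\epsilon + H) \le (1+\kappa)(\epsilon + H)$ after telescoping.

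Your route---descent via second-order Taylor, then PL to pass from a gradient-norm sum to an excess-loss sum---runs into trouble at the PL step. What you need is a \emph{lower} bound $\|\nabla_\mathbf{W}\mathcal{L}_\mathcal{P}\|_F^2 \ge c\bigl(\mathcal{L}_\mathcal{P} - L^\star\bigr)$, and the justification you offer (filtered attention plus $\ge 0.4m$ active neurons) does not by itself give that: the sample-wise gradients can cancel across $(\mathbf{X},y)$ pairs, and the softmax residuals $1-\text{logit}_y$ are only \emph{upper}-bounded by the loss, not lower-bounded. Moreover, the $(1+\kappa)H$ floor in the statement is not the PL minimum $L^\star$; in the paper it arises from the ratio $\|\nabla\|^2/\mathcal{L}$ (an upper bound, the opposite direction from PL) combined with $\eta_3 \le \kappa/64$, so your explanation of where $(1+\kappa)$ comes from is off. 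The paper's comparator argument sidesteps both the $\Theta(d)$ Hessian issue you flagged and the need for PL entirely; the convexity-plus-homogeneity trick is the key idea you are missing.
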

\begin{proof}
Let $\mathbf{W}^*$ be
\begin{equation}\label{equ: w*}
	\begin{aligned}
		\mathbf{w}^*_{j,k} =& \mathbf{w}^{(0)}_{j,k} + \frac{6}{\lambda}\log\left(\frac{d\epsilon}{(1-\epsilon)}\right)\sum_{i=1}^{N}\left(\sum_{l\in\mathcal{B}_i}\mathbf{r}_l - 6\sum_{l\neq i}\mathbf{s}_l \right)\mathbb{I}(j = \mathcal{I}(\mathbf{a}_i))\\
		&+ \frac{5}{\lambda}\log\left(\frac{d\epsilon}{(1-\epsilon)}\right)\sum_{l=1}^{K}\left(\sum_{i\in \mathcal{D}_l}\mathbf{s}_i - \sum_{l=1}^{K}\mathbf{r}_l\right)\mathbb{I}(j = \mathcal{I}(\mathbf{r}_l))\\
		&+ \frac{15}{2\sqrt{d}\lambda}\log\left(\frac{d\epsilon}{(1-\epsilon)}\right)\left(\mathbf{o}-\sum_{i=1}^{N}\mathbf{s}_i - \sum_{i=1}^{K}\mathbf{r}_i\right)\mathbb{I}\left(j=\mathcal{I}(\mathbf{d})\right),
	\end{aligned}
\end{equation}
where $\epsilon = 0.0001$.
With $\mathbf{W}^*$, we have
\begin{equation}\label{equ: descent}
	\begin{aligned}
		\|\mathbf{W}^{(t+1)}-\mathbf{W}^*\|_F^2 =& \|\mathbf{W}^{(t)}-\mathbf{W}^*\|_F^2 - \underbrace{2\eta_3\langle \nabla_{\mathbf{W}^{(t)}} \mathcal{L}_\mathcal{P}(\mathbf{W}^{(t)},\mathbf{Z}^{(t)}), \mathbf{W}^{(t)}-\mathbf{W}^*\rangle}_{A}\\
		&+ \eta_3^2\|\nabla_{\mathbf{W}^{(t)}} \mathcal{L}_\mathcal{P}(\mathbf{W}^{(t)},\mathbf{Z}^{(t)})\|_F^2.
	\end{aligned}
\end{equation}
For the term $\langle \nabla_{\mathbf{W}^{(t)}} \mathcal{L}_\mathcal{P}(\mathbf{W}^{(t)},\mathbf{Z}^{(t)}), \mathbf{W}^{(t)}-\mathbf{W}^*\rangle$, by the homogeneity of the ReLU function and the convexity of the cross-entropy loss, we have
\begin{equation}\label{equ: A}
	\begin{aligned}
		A=&2\eta_3\left\langle \nabla_{\mathbf{W}^{(t)}} \mathcal{L}_\mathcal{P}(\mathbf{W}^{(t)},\mathbf{Z}^{(t)}), \mathbf{W}^{(t)}-\mathbf{W}^*\right\rangle\\
		=& \frac{2\eta_3}{n}\sum_{(\mathbf{X},y)\in \mathcal{P}}  \left\langle\nabla_{\mathbf{W}^{(t)}} \mathcal{L}(\mathbf{W}^{(t)},\mathbf{Z}^{(t)},\mathbf{X},y), \mathbf{W}^{(t)}-\mathbf{W}^*\right\rangle\\
		=& \frac{2\eta_3}{n} \sum_{(\mathbf{X},y)\in \mathcal{P}} \sum_{k\in[d]}\left\langle \frac{\partial \mathcal{L}(\mathbf{W}^{(t)},\mathbf{Z}^{(t)},\mathbf{X},y)}{\partial x^\text{output}_k(\mathbf{W}^{(t)},\mathbf{Z}^{(t)}, \mathbf{X})} \nabla_{\mathbf{W}^{(t)}}x^\text{output}_k(\mathbf{W}^{(t)},\mathbf{Z}^{(t)}, \mathbf{X}), \mathbf{W}^{(t)} - \mathbf{W}^*\right\rangle\\
		=& \frac{2\eta_3}{n} \sum_{(\mathbf{X},y)\in \mathcal{P}}\sum_{k\in[d]}\frac{\partial \mathcal{L}(\mathbf{W}^{(t)},\mathbf{Z}^{(t)},\mathbf{X},y)}{\partial x^\text{output}_k(\mathbf{W}^{(t)},\mathbf{Z}^{(t)}, \mathbf{X})}\left\langle\nabla_{\mathbf{W}^{(t)}} x^\text{output}_k(\mathbf{W}^{(t)},\mathbf{Z}^{(t)},\mathbf{X}), \mathbf{W}^{(t)}\right\rangle\\
		&- \frac{2\eta_3}{n} \sum_{(\mathbf{X},y)\in \mathcal{P}}\sum_{k\in [d]}\frac{\partial \mathcal{L}(\mathbf{W}^{(t)},\mathbf{Z}^{(t)},\mathbf{X},y)}{\partial x^\text{output}_k(\mathbf{W}^{(t)},\mathbf{Z}^{(t)}, \mathbf{X})}\left\langle\nabla_{\mathbf{W}^{(t)}} x^\text{output}_k(\mathbf{W}^{(t)},\mathbf{Z}^{(t)},\mathbf{X}), \mathbf{W}^*\right\rangle\\
		=& \frac{2\eta_3}{n}\sum_{(\mathbf{X},y)\in \mathcal{P}}\sum_{k\in [d]} \frac{\partial \mathcal{L}(\mathbf{W}^{(t)},\mathbf{Z}^{(t)},\mathbf{X},y)}{\partial x^\text{output}_k(\mathbf{W}^{(t)},\mathbf{Z}^{(t)}, \mathbf{X})} \\
        &\left(x^\text{output}_k(\mathbf{W}^{(t)},\mathbf{Z}^{(t)},\mathbf{X}) - \underbrace{\left\langle \nabla_{\mathbf{W}^{(t)}} x^\text{output}_k(\mathbf{W}^{(t)},\mathbf{Z}^{(t)}, \mathbf{X}) , \mathbf{W}^* \right\rangle}_{B}\right),
	\end{aligned}
\end{equation}
Before deriving the bound of the term $B$, we first recall that by Lemma \ref{lemma: pattern pt}, in each iteration, there is $\Theta(m)$ amount of neurons are activated and by Lemma \ref{lemma: pt_stage3} statements 1 and 2, the attention scores of tokens $\mathbf{s}_j$ and $\mathbf{r}$ are at least 0.3 for all $j\in[N]$ and $i\in\mathcal{B}_j$.
Additionally, by the form of the gradient, the gradient of data $(\mathbf{X},y)$ for the MLP given by
\begin{align}\label{equ: grad_mlp}
    \nabla_{\mathbf{w}^{(t)}_{i,l}} x^\text{output}_k(\mathbf{W}^{(t)}, \mathbf{Z}^{(t)},\mathbf{X}) = 
    \begin{cases}
        -\oldfrac{\lambda}{m}\sigma'(\langle\mathbf{w}^{(t)}_{i,l},\mathbf{x}_a(\mathbf{Z}^{(t)},\mathbf{X}) \rangle)\mathbf{x}_a(\mathbf{Z}^{(t)},\mathbf{X}), &\text{ if }i = y,\\
        0 &\text{ if }i\neq y.
    \end{cases}
\end{align}

For the term $B$, we consider three cases.

Case 1: $(\mathbf{X},y)$ is in the form of $([\mathbf{o}\:\:\mathbf{s}_j\:\:\mathbf{r}_i],\mathcal{I}(\mathbf{a}))$.
By the form of the gradient for the MLP in (\ref{equ: grad_mlp}), the construction of $\mathbf{W}^*$ in (\ref{equ: w*}) and the number of activated neurons is at least $0.4m$, we have
\begin{align}\label{equ: s3c1}
	B \begin{cases}
		\le-1.2\log(\frac{\epsilon (d-1)}{(1-\epsilon)}), & \text{if } k = \mathcal{I}(\mathbf{a}_j),\\
		=\tilde{\mathcal{O}}(\sigma_0), & \text{if } k\neq \mathcal{I}(\mathbf{a}_j).
	\end{cases}
\end{align}

Case 2: $(\mathbf{X},y)$ is in the form of $([\mathbf{o}\:\:\mathbf{s}_j],\mathcal{I}(\mathbf{r}_i))$, we have
\begin{align}\label{equ: s3c2}
	B \begin{cases}
		=\tilde{\mathcal{O}}(\sigma_0), & \text{if } k = \mathcal{I}(\mathbf{a}_j),\\
		\le -1.2\log(\frac{\epsilon (d-1)}{(1-\epsilon)}), & \text{if } k= \mathcal{I}(\mathbf{r}_i), \forall i\in \mathcal{D}_j,\\
		=\tilde{\mathcal{O}}(\sigma_0),&\text{if } k\neq \mathcal{I}(\mathbf{a}_j), \mathcal{I}(\mathbf{r}_i), \forall i\in  \mathcal{D}_j.
	\end{cases}
\end{align}
Case 3: If $(\mathbf{X},y)$ is in the form of 
$(\mathbf{o},\mathcal{I}(\mathbf{d}))$, we have
\begin{align}\label{equ: s3c3}
	B \begin{cases}
		\le-1.2\sqrt{d}\log(\frac{\epsilon (d-1))}{(1-\epsilon)}), & \text{if } k = \mathcal{I}(\mathbf{d}),\\
		=\tilde{\mathcal{O}}(\sqrt{d}\sigma_{0}), & \text{if } k\neq \mathcal{I}(\mathbf{d}).
	\end{cases}
\end{align}

Combining the three cases (\ref{equ: s3c1}), (\ref{equ: s3c2}), (\ref{equ: s3c3}) and (\ref{equ: A}), we have
\begin{align}\label{equ: A2}
	A \ge 2\eta_3 \mathcal{L}_\mathcal{P}(\mathbf{W}^{(t)},\mathbf{Z}^{(t)})-2\eta_3 \epsilon-\frac{2\eta_3}{3}\frac{K}{K+1}\log(K).
\end{align}

Additionally, for the term $\eta_3^2\|\nabla_{\mathbf{W}^{(t)}} \mathcal{L}_\mathcal{P}(\mathbf{W}^{(t)},\mathbf{Z}^{(t)})\|_F^2$ in (\ref{equ: descent}), we have
\begin{equation}\label{equ: B}
	\begin{aligned}
		\eta_3^2\|\nabla_{\mathbf{W}^{(t)}} \mathcal{L}_\mathcal{P}(\mathbf{W}^{(t)},\mathbf{Z}^{(t)})\|_F^2
		\le&\eta_3^2\! \left[\! \frac{1}{n} \sum_{i=1}^{n}\sum_{k\in[d]}\left|\frac{\partial \mathcal{L}(\mathbf{W}^{(t)},\mathbf{Z}^{(t)},\mathbf{x},y)}{\partial x^\text{output}_k(\mathbf{W}^{(t)},\mathbf{Z}^{(t)},\mathbf{X})}\right| \!\left\| \nabla x^\text{output}_k(\mathbf{W}^{(t)},\mathbf{Z}^{(t)},\mathbf{X})\right\|_F\!\right]^2\\
		\le& \eta_3^2 4^2\cdot 4\left[\frac{1}{n}\sum_{i=1}^{n} \left(1-\text{logit}_y^{(t)}\left(\mathbf{X}\right)\right)\right]^2\\
		\le& 64\eta_3^2 \mathcal{L}_\mathcal{P}(\mathbf{W}^{(t)},\mathbf{Z}^{(t)}).
	\end{aligned}
\end{equation}
Substituting (\ref{equ: A2}) and (\ref{equ: B}) into (\ref{equ: descent}), we have
\begin{equation}
	\begin{aligned}
		\|\mathbf{W}^{(t+1)}-\mathbf{W}^*\|_F^2 \le&  \|\mathbf{W}^{(t)}-\mathbf{W}^*\|_F^2 - 2\eta_3\left(\mathcal{L}_\mathcal{P}(\mathbf{W}^{(t)},\mathbf{Z}^{(t)}) - \epsilon - \frac{K}{K+1}\log(K)/3\right)\\
		&+ 64\eta_3^2\mathcal{L}_\mathcal{P}(\mathbf{W}^{(t)},\mathbf{Z}^{(t)}).
	\end{aligned}
\end{equation}
Telescoping over $t$ and take $\eta_3 \le \frac{\kappa}{64} (\kappa\le 1/2)$, we have
\begin{equation}\label{equ: s3aux1}
	\begin{aligned}
		\|\mathbf{W}^{(T_p)}-\mathbf{W}^*\|_F^2 \le& \|\mathbf{W}^{(T_2+1)}-\mathbf{W}^*\|_F^2 - (2-\kappa)\eta_3\sum_{t=T_2+1}^{T_p}\mathcal{L}_\mathcal{P}(\mathbf{W}^{(t)},\mathbf{Z}^{(t)})+2\eta_3 (T_p-T_2)\epsilon\\
		&+ 2\eta_3 (T_p-T_2)\underbrace{\frac{K}{K+1}\frac{\log(K)}{3}}_{H}.
	\end{aligned}
\end{equation}
Rearranging (\ref{equ: s3aux1}) , we have
\begin{align}
	\frac{1}{T_p-T_2}\sum_{t=T_2+1}^{T_p}\mathcal{L}_\mathcal{P}(\mathbf{W}^{(t)},\mathbf{Z}^{(t)}) \le \frac{\|\mathbf{W}^{(T_2)}-\mathbf{W}^*\|_F^2}{(2-\kappa)\eta_3 (T_p-T_2)} + 2(1+\kappa) \epsilon + (1+\kappa)H.
\end{align}

For the term $\|\mathbf{W}^{(T_2)}-\mathbf{W}^*\|_F^2$, we have
\begin{equation}
\begin{aligned}
	\|\mathbf{W}^{(T_2)}-\mathbf{W}^*\|_F^2 \le&  \|\mathbf{W}^{(T_2)}-\mathbf{W}^{(0)}\|_F^2 + \|\mathbf{W}^{(0)}-\mathbf{W}^*\|_F^2\\
	=& \tilde{\mathcal{O}}(m/\lambda^2) +  \tilde{\mathcal{O}}(mN(N+|\mathcal{R}|)/\lambda^2)\\
    =& \tilde{\mathcal{O}}(mN(N+|\mathcal{R}|)/\lambda^2).
\end{aligned}
\end{equation}
Setting $T_p - T_2 = \tilde{\Theta}(\frac{mN(N+|\mathcal{R}|)}{(\lambda^2\eta_3)})$ completes the proof.
\end{proof}

\section{Proofs of full fine-tuning}\label{app: proof ft}
In this part, we provide the proofs of the results for full fine-tuning.
We show some key properties of the activation patterns, attention scores during full \ac{ft} in \ref{appendix: activation_pattern} and \ref{appendix: attention_score}, and prove the results for full \ac{ft} in statements of Theorem \ref{theorem: main} in \ref{appendix: proof statement 2} and \ref{appendix: proof statement 3}.
For each \ac{ft} iteration $t_f\in[0,T_f]$, we represent the iteration index with $T_p+t_f$.
\subsection{Attention scores during full FT}\label{appendix: attention_score}
\begin{lemma}\label{lemma: attn_fft}
    Under Condition \ref{condition: condition}, during full FT, the following holds:
    \begin{align}
       \frac{1}{2}\le \frac{\alpha_{1}^{(T_p+0)}([\mathbf{s}_j\:\:\mathbf{p}])}{\alpha_2^{(T_p+0)}([\mathbf{s}_j\:\:\mathbf{p}])} \le 2.
    \end{align}
\end{lemma}
\begin{proof}
    As the attention matrix $\mathbf{Z}$ has never been updated during PT with respect to the query of $\mathbf{p}$, 
    \begin{align}
        \frac{\alpha_{1}^{(T_p+0)}([\mathbf{s}_j\:\:\mathbf{p}])}{\alpha_2^{(T_p+0)}([\mathbf{s}_j\:\:\mathbf{p}])} = \frac{\alpha_{1}^{(0)}([\mathbf{s}_j\:\:\mathbf{p}])}{\alpha_2^{(0)}([\mathbf{s}_j\:\:\mathbf{p]}])} = \frac{\exp(\tilde{\mathcal{O}}(\sigma_0))}{\exp(\tilde{\mathcal{O}}(\sigma_0))},
    \end{align}
    by Lemma \ref{lemma: attn_init_order}.
    As a result, we have 
    \begin{align}
         0.95 \le\frac{\alpha_{1}^{(T_p+0)}([\mathbf{s}_j\:\:\mathbf{p}])}{\alpha_2^{(T_p+0)}([\mathbf{s}_j\:\:\mathbf{p}])} \le 1.05.
    \end{align}
    This finishes the proof.
\end{proof}

\subsection{Key property of pre-trained transformers on the fine-tuning data}
We highlight the following key property of the transformer after pre-training.
\begin{lemma}\label{lemma: s mag}
	With probability $1-\delta$, the following holds:
    \begin{itemize}
    \item For all $j\in[N]$, we have
	\begin{align}
		\frac{1}{m}\sum_{l=1}^{m}\sigma(\langle \mathbf{w}^{(T_p+0)}_{\mathcal{I}(\mathbf{a}_j),l},\Xi^{(T_p+0)}[\mathbf{s}_j\;\;\mathbf{p}]\rangle) =\Omega\left(\frac{\log(d)}{\lambda}\right).
	\end{align}
        \item For all $j\in[N]$ and $i\in\mathcal{B}_j$, we have 
        \begin{align}
            \max_{i\in\mathcal{B}_j}\!\frac{1}{m}\!\sum_{l=1}^m \!\sigma(\langle \mathbf{w}^{(T_p+0)}_{\mathcal{I}(\mathbf{r}_i),l},\Xi^{(T_p+0)}([\mathbf{s}_j\;\;\!\mathbf{p}])\rangle) \!-\! \frac{1}{m}\!\sum_{l=1}^m \!\sigma(\langle \mathbf{w}^{(T_p+0)}_{\mathcal{I}(\mathbf{a}_j),l},\Xi^{(T_p+0)}([\mathbf{s}_j\;\;\!\mathbf{p}])\rangle) \!=\! \mathcal{O}\!\left(\!\frac{\log(d)}{\lambda}\!\right).
        \end{align}
    \end{itemize}
\end{lemma}
    
\begin{proof}
    By the fifth statement of Lemma \ref{lemma: pt_stage3} and the fact that for all $l\in[m]$,
    \begin{align}\label{equ: aux_p}
         \langle \mathbf{w}^{(T_p+0)}_{\mathcal{I}(\mathbf{a}_j),l},\mathbf{p}\rangle = \tilde{\mathcal{O}}(\sigma_0).      
    \end{align}
    the first statement holds.
    
    Next, prove the second statement.
    We first derive the increment properties for iteration $t'$ satisfying 
    \begin{align}\label{equ: pt_aux}
        \frac{1}{m}\sum_{l=1}^m \sigma(\langle \mathbf{w}^{(t')}_{\mathcal{I}(\mathbf{r}_i),l},\mathbf{s}_j\rangle) - \frac{1}{m}\sum_{l=1}^m \sigma(\langle \mathbf{w}^{(t')}_{\mathcal{I}(\mathbf{a}_j),l},\mathbf{s}_j\rangle) \ge \frac{\log(d)}{\lambda}.
    \end{align}
   For all $i\in\mathcal{B}_j$ and arbitrary $l_i\in[m]$, we have
    \begin{align}
        \max_{i\in\mathcal{B}_j} \left(\langle \mathbf{w}_{\mathcal{I}(\mathbf{r}_i),l_i}^{(t'+1)},\mathbf{s}_j \rangle -  \langle \mathbf{w}_{\mathcal{I}(\mathbf{r}_i),l_i}^{(t')},\mathbf{s}_j \rangle\right) \le 2\sum_{i\in\mathcal{B}_j}(1-\tilde{K}(j)\text{logit}_{\mathcal{I}(\mathbf{r}_i)}^{(t')}([\mathbf{o}\;\;\mathbf{s}_j])) \le 2\tilde{K}(j)\frac{\eta^{(t')}\lambda}{mnd}.
    \end{align}
    We also have
    \begin{equation}
    \begin{aligned}
        \langle \mathbf{w}_{\mathcal{I}(\mathbf{a}_j),l}^{(t'+1)},\mathbf{s}_j \rangle -  \langle \mathbf{w}_{\mathcal{I}(\mathbf{a}_j),l}^{(t')},\mathbf{s}_j \rangle \ge - 2\tilde{K}(j)\frac{\eta^{(t')}\lambda}{mn}\text{logit}_{\mathcal{I}(\mathbf{a}_j)}^{(t')}([\mathbf{o}\;\;\mathbf{s}_j]) \ge - 2\tilde{K}(j) \frac{\eta^{(t')}\lambda}{mnd},
    \end{aligned}
    \end{equation}
    for all $l\in[m]$.
    Therefore, we have
    \begin{equation}
    \begin{aligned}
    &\frac{1}{m}\sum_{i\in\mathcal{B}_j}\sum_{l=1}^m\left(\langle \mathbf{w}_{\mathcal{I}(\mathbf{r}_i),l_i}^{(T_p)},\mathbf{s}_j \rangle - \langle \mathbf{w}_{\mathcal{I}(\mathbf{r}_i),l_i}^{(t')},\mathbf{s}_j \rangle\right) - \frac{1}{m}\sum_{l=1}^m\left(\langle \mathbf{w}_{\mathcal{I}(\mathbf{a}_j),l}^{(T_p)},\mathbf{s}_j \rangle -  \langle \mathbf{w}_{\mathcal{I}(\mathbf{a}_j),l}^{(t')},\mathbf{s}_j \rangle\right)\\
    \le&   2\tilde{K}(j) \frac{\eta_2\lambda}{mnd}T_2 - 2\tilde{K}(j) \frac{\eta_3\lambda}{mnd}(T_p-T_2)\\
    =& \mathcal{O}\left(\frac{\log(d)}{\lambda}\right).
    \end{aligned}
    \end{equation}
    Combing with (\ref{equ: aux_p}) and (\ref{equ: pt_aux}), we can conclude that
        \begin{align}
            \max_{i\in\mathcal{B}_j}\!\frac{1}{m}\!\sum_{l=1}^m \!\sigma(\langle \mathbf{w}^{(T_p+0)}_{\mathcal{I}(\mathbf{r}_i),l},\Xi^{(T_p+0)}([\mathbf{s}_j\;\;\!\mathbf{p}])\rangle) \!-\! \frac{1}{m}\!\sum_{l=1}^m \!\sigma(\langle \mathbf{w}^{(T_p+0)}_{\mathcal{I}(\mathbf{a}_j),l},\Xi^{(T_p+0)}([\mathbf{s}_j\;\;\!\mathbf{p}])\rangle) \!=\! \mathcal{O}\!\left(\!\frac{\log(d)}{\lambda}\!\right).
        \end{align}
        This finishes the proof.
\end{proof}

\subsection{Useful lemmas in full \ac{ft}}\label{appendix: activation_pattern}
In this subsection, we characterize the feature learning during full \ac{ft}.
We first define the following set for indexing the subjects in $\mathcal{Q}_s$:
\begin{align}
    \mathcal{J} := \{j\in[N] :([\mathbf{s}_j\:\:\mathbf{p}],\mathcal{I}(\mathbf{a}_j))\in\mathcal{Q}_s\}.
\end{align}
\begin{lemma}\label{lemma: mag_n1}
	During full \ac{ft}, $t_f\in[T_f]$, the following holds:
    \begin{itemize}
        \item For all $k\notin \{\mathcal{I}(\mathbf{a}_j)\}_{j\in\mathcal{J}}$ and $l\in[m]$,
        \begin{align}
		\langle\mathbf{w}^{(T_p+t_f)}_{k,l}, \mathbf{p}\rangle =\tilde{\mathcal{O}}\left(\sigma_0\right).
	\end{align}
    \item For all $j \in [N]$, $k\neq j$ and $l\in[m]$, we have
        \begin{align}
		\langle\mathbf{w}^{(T_p+t_f)}_{\mathcal{I}(\mathbf{a}_j),l}, \mathbf{s}_k\rangle =\tilde{\mathcal{O}}\left(\sigma_0\right).
	\end{align}
    \end{itemize}
\end{lemma}
\begin{proof}
We prove the statements sequentially.

\textbf{Proof of the first statement:}
For all $k\notin \{\mathcal{I}(\mathbf{a}_j)\}_{j\in\mathcal{J}}$ and $l\in[m]$, we have
\begin{equation}
\begin{aligned}
    &\langle\mathbf{w}^{(T_p+t_f+1)}_{k,l},\mathbf{p}\rangle - \langle\mathbf{w}^{(T_p+t_f)}_{k,l},\mathbf{p}\rangle\\
    =& 
        -\sum_{q\in\mathcal{J}}\Theta(1)\frac{\eta_f\lambda}{mn}\mathbb{I}(\langle\mathbf{w}^{(T_p+t_f)}_{k,l},\Xi^{(T_p+t_f)}([\mathbf{s}_q\;\;\mathbf{p}])\rangle>0)\text{logit}_{k}^{(T_p+t_f)}([\mathbf{s}_q\;\;\mathbf{p}])\\
        \le& 0.
\end{aligned}
\end{equation}
Therefore, we have
\begin{equation}
    \begin{aligned}
         \langle\mathbf{w}^{(T_p+t_f)}_{k,l},\mathbf{p}\rangle \le  \langle\mathbf{w}^{(T_p+0)}_{k,l},\mathbf{p}\rangle = \tilde{\mathcal{O}}(\sigma_0).
    \end{aligned}
    \end{equation}
This completes the proof.

\textbf{Proof of the second statement:}
 During fine-tuning $t_f\in[0,T_f-1]$, for all $l\in[m]$, $j\in[N]$, and $k\neq j$, we have
    \begin{equation}
    \begin{aligned}
        &\langle\mathbf{w}^{(T_p+t_f+1)}_{\mathcal{I}(\mathbf{a}_j),l},\mathbf{s}_k\rangle - \langle\mathbf{w}^{(T_p+t_f)}_{\mathcal{I}(\mathbf{a}_j),l},\mathbf{s}_k\rangle\\
 =& 
        -\Theta(1)\frac{\eta_f\lambda}{mn}\mathbb{I}(\langle\mathbf{w}^{(T_p+t_f)}_{\mathcal{I}(\mathbf{a}_j),l},\Xi^{(T_p+t_f)}([\mathbf{s}_k\;\;\mathbf{p}])\rangle>0)\text{logit}_{\mathcal{I}(\mathbf{a}_j)}^{(T_p+t_f)}([\mathbf{s}_k\;\;\mathbf{p}])\\
        \le& 0.
    \end{aligned}
    \end{equation}
    Therefore, by Lemma \ref{lemma: mag_n}, for all $t_f \in [T_f]$, $l\in[m]$, and $j\in[N]$, $k\neq j$, we have
\begin{equation}
    \begin{aligned}
         \langle\mathbf{w}^{(T_p+t_f)}_{\mathcal{I}(\mathbf{a}_j),l},\mathbf{s}_k\rangle \le  \langle\mathbf{w}^{(T_p+0)}_{\mathcal{I}(\mathbf{a}_j),l},\mathbf{s}_k\rangle = \tilde{\mathcal{O}}(\sigma_0).
    \end{aligned}
    \end{equation}
This completes the proof.
\end{proof}
\begin{lemma}
    For all $l\in\mathcal{S}^{(0)}_{\text{s},i,j}$, we have
    \begin{align}
        \langle \mathbf{w}^{(T_p+0)}_{\mathcal{I}(\mathbf{r}_k),l},\Xi^{(T_p+0)}([\mathbf{s}_j\;\;\mathbf{p}]) \rangle = \Omega\left(\frac{\log(d)}{\lambda}\right),
    \end{align}
    and for all $l\in\cup_{i\in\mathcal{B}_j}\mathcal{S}^{(0)}_{\text{r},j,i}$,
    \begin{align}
        \langle \mathbf{w}^{(T_p+0)}_{\mathcal{I}(\mathbf{a}_j),l},\Xi^{(T_p+0)}([\mathbf{s}_j\;\;\mathbf{p}]) \rangle = \Omega\left(\frac{\log(d)}{\lambda}\right),
    \end{align}
\end{lemma}
\begin{proof}
    By Lemma \ref{lemma: s mag}, at the end of pre-training, for all $l\in\mathcal{S}^{(0)}_{\text{s},i,j}$, we have
    \begin{align}
        \langle \mathbf{w}^{(T_p+0)}_{\mathcal{I}(\mathbf{r}_k),l},\mathbf{s}_j \rangle = \Omega\left(\frac{\log(d)}{\lambda}\right),
    \end{align}
    and 
     \begin{align}
        \langle \mathbf{w}^{(T_p+0)}_{\mathcal{I}(\mathbf{r}_k),l},\mathbf{p} \rangle =  \tilde{O}(\sigma_0).
    \end{align}
    Then, by Lemma \ref{lemma: attn_fft} we have 
    \begin{align}
        \langle \mathbf{w}^{(T_p+0)}_{\mathcal{I}(\mathbf{r}_k),l},\Xi^{(T_p+0)}[\mathbf{s}_j\;\;\mathbf{p}] \rangle = \Omega\left(\frac{\log(d)}{\lambda}\right),
    \end{align}
    Additionally, under Condition \ref{condition: condition}, for all $l\in\cup_{i\in\mathcal{B}_j}\mathcal{S}^{(0)}_{\text{r},j,i}$,
    \begin{align}
        \langle \mathbf{w}^{(T_p+0)}_{\mathcal{I}(\mathbf{a}_j),l},\mathbf{s}_j \rangle =\Omega\left(\frac{\log(d)}{\lambda}\right),
    \end{align}
    and 
    \begin{align}
        \langle \mathbf{w}^{(T_p+0)}_{\mathcal{I}(\mathbf{a}_j),l},\mathbf{p} \rangle = \tilde{O}(\sigma_0),
    \end{align}
    As a result, we have 
    \begin{align}
        \langle \mathbf{w}^{(T_p+0)}_{\mathcal{I}(\mathbf{a}_j),l},\Xi^{(T_p+0)}([\mathbf{s}_j\;\;\mathbf{p}]) \rangle = \Omega\left(\frac{\log(d)}{\lambda}\right).
    \end{align}
    This completes the proof.
\end{proof}

\begin{lemma}\label{lemma: mag2}
	For all $j\in [N]$ and $l\in\cup_{i\in\mathcal{B}_j}\mathcal{S}_{\text{r},i,j}^{(0)}$, during full \ac{ft}, $t_f\in[T_f]$, the following holds:
	\begin{align}
		\langle\mathbf{w}^{(T_p+t_f)}_{\mathcal{I}(\mathbf{a}_j),l}, \mathbf{s}_j\rangle =\Omega\left(\frac{\log(d)}{\lambda}\right).
	\end{align}
\end{lemma}
\begin{proof}
	For all $t_f\in[0,T_f-1]$, $l\in[m]$ and $j\in [N]$, the update of $\mathbf{w}^{(T_p+t_f)}_{\mathcal{I}(\mathbf{a}_j),l}$ satisfies
    \begin{equation}
	\begin{aligned}
		&\langle\mathbf{w}^{(T_p+t_f+1)}_{\mathcal{I}(\mathbf{a}_j),l},\mathbf{s}_j\rangle - \langle\mathbf{w}^{(T_p+t_f)}_{\mathcal{I}(\mathbf{a}_j),l},\mathbf{s}_j\rangle\\
        =& \Theta(1)\frac{\lambda\eta_f}{\beta N_f}\mathbb{I}(\langle\mathbf{w}^{(T_p+t_f)}_{\mathcal{I}(\mathbf{a}_j),l}, \Xi^{(T_f+t_f)}([\mathbf{s}_j\;\;\mathbf{p}])\rangle >0)\mathbb{I}(j\in\mathcal{J})(1-\text{logit}^{(T_p+t_f)}_{\mathcal{I}(\mathbf{a}_j)}([\mathbf{s}_j\:\:\mathbf{p}]))
        \\
        \ge& 0.
	\end{aligned}
    \end{equation}
Then, for all $t_f\in[0,T_f-1]$, $l\in[m]$ and $j\in [N]$, we have
\begin{align}\label{equ: ft_aux1}
    \langle\mathbf{w}^{(T_p+t_f)}_{\mathcal{I}(\mathbf{a}_j),l},\mathbf{s}_j\rangle \ge \langle\mathbf{w}^{(T_p+0)}_{\mathcal{I}(\mathbf{a}_j),l},\mathbf{s}_j\rangle.
\end{align}
Based on Lemma \ref{lemma: pt_stage3}, we have
	\begin{align}
		\langle\mathbf{w}^{(T_p+t_f)}_{\mathcal{I}(\mathbf{a}_j),l}, \mathbf{s}_j\rangle =\Omega\left(\frac{\log(d)}{\lambda}\right),
	\end{align}
for all $j\in [N]$ and $l\in\cup_{i\in\mathcal{B}_j}\mathcal{S}_{\text{r},i,j}^{(0)}$.

This completes the proof.
\end{proof}

\begin{lemma}\label{lemma: act}
   When $\beta N_f \ge C_1|\mathcal{R}|$, under Condition \ref{condition: condition}, with probability $1-\delta$, for all $l\in[m]$ and $i\in\mathcal{R}$, we have
    \begin{align}
        \mathbb{P}\left[ \sum_{j\in\mathcal{D}_i} \mathbb{I}(l\in \mathcal{S}^{(0)}_{\text{s},i,j})\ge 0.15\frac{\beta N_f K}{|\mathcal{R}|}\right] \ge 1-|\mathcal{R}|\exp\left(-\frac{N_f\beta K^2}{2|\mathcal{R}|^2}\right).
    \end{align}
\end{lemma}
\begin{proof}
    By Lemma \ref{lemma: select_num}, with probability at least $1-|\mathcal{R}|\exp\left(-\frac{N_f\beta K^2}{(2|\mathcal{R}|^2)}\right)$, each $i\in\mathcal{R}$ satisfies
    \begin{align}
        \sum_{j\in\mathcal{J}} \mathbb{I}(i\in\mathcal{B}_j) \ge 0.5 \frac{\beta N_f K}{|\mathcal{R}|}.
    \end{align}
    Then, by Lemma \ref{lemma: cover1}, under Condition \ref{condition: condition}, with probability $1-\delta$, for each $l\in[m]$ and $i\in\mathcal{R}$, we have
    \begin{align}
        \sum_{j\in\mathcal{D}_i} \mathbb{I}(l\in \mathcal{S}^{(0)}_{\text{s},i,j}) \ge 0.15 \frac{\beta N_f K}{|\mathcal{R}|}.
    \end{align}
    Combining with the activation patterns finishes the proof.
\end{proof}

\subsection{Proof of the result for full \ac{ft} in Statement 2 of Theorem \ref{theorem: main}}\label{appendix: proof statement 2}
We denote 
\begin{align}
    \mathcal{J}_k := \{j\in[N] :([\mathbf{s}_j\:\:\mathbf{p}],\mathcal{I}(\mathbf{a}_j))\in\mathcal{Q}_s, k\in\mathcal{B}_j\}.
\end{align}
After pre-training, the first iteration of full \ac{ft} satisfies
\begin{align}
	\sum_{j\in\mathcal{J}_i}\text{logit}_{\mathcal{I}(\mathbf{r}_i)}^{(T_p+0)}([\mathbf{s}_j\:\:\mathbf{p}]) = \Theta\left(\frac{N_f\beta}{K}\cdot \frac{K}{|\mathcal{R}|}\right) = \Theta\left(\frac{N_f\beta}{|\mathcal{R}|}\right),
\end{align}
and 
\begin{align}
	1-\text{logit}^{(T_p+0)}_{\mathcal{I}(\mathbf{a}_j)}([\mathbf{s}_j\:\:\mathbf{p}]) = \Theta(1).
\end{align}

By Lemmas \ref{lemma: select_num} and \ref{lemma: act}, with probability $1-|\mathcal{R}|\exp\left(-\frac{N_f\beta K^2}{(2|\mathcal{R}|^2)}\right)$, for all $i\in\mathcal
{R}$ and $k\in[m]$, the model updates satisfy
\begin{equation}
\begin{aligned}
	&\langle\mathbf{w}^{(T_p+1)}_{\mathcal{I}(\mathbf{r}_i),k}, \mathbf{p}\rangle - \langle\mathbf{w}^{(T_p+0)}_{\mathcal{I}(\mathbf{r}_i),k},\mathbf{p}\rangle\\
    =&-\Theta\left(\frac{\lambda\eta_f}{N_f\beta m}\right)\sum_{j\in\mathcal{J}}\text{logit}^{(T_p+0)}_{\mathcal{I}(\mathbf{r}_i)}([\mathbf{s}_j\;\;\mathbf{p}])  \mathbb{I}(\langle\mathbf{w}^{(T_p+0)}_{\mathcal{I}(\mathbf{r}_i),k},\Xi^{(T_p+0)}([\mathbf{s}_j\;\;\mathbf{p}])  \rangle > 0)\\
    =& -\Theta\left(\frac{\eta_f\lambda}{mN_f\beta}\frac{N_f\beta K}{|\mathcal{R}|}\frac{1}{K}\right)\\
    =& -\Theta\left(\frac{\lambda\eta_f }{|\mathcal{R}|m}\right),
\end{aligned}
\end{equation}
and for all $i\in\mathcal{J}$,
\begin{equation}
\begin{aligned}
    &\frac{1}{m}\sum_{k=1}^{m}\langle\mathbf{w}^{(T_p+1)}_{\mathcal{I}(\mathbf{a}_i),k}, \mathbf{p}\rangle - \frac{1}{m}\sum_{k=1}^{m}\langle\mathbf{w}^{(T_p+0)}_{\mathcal{I}(\mathbf{a}_i),k},\mathbf{p}\rangle\\
    =& \Theta\left(\frac{\lambda\eta_f}{N_f\beta m}\right)(1-\text{logit}_{\mathcal{I}(\mathbf{a}_i)}^{(T_p+0)}([\mathbf{s}_i\;\;\mathbf{p}]))\\
    =& \Theta\left(\frac{\lambda\eta_f}{N_f\beta m}\right).
\end{aligned}
\end{equation}
Similarly, we have
\begin{align}
	\frac{1}{m}\sum_{k=1}^{m}\langle\mathbf{w}^{(T_p+1)}_{\mathcal{I}(\mathbf{a}_i),k}, \mathbf{s}_i\rangle - \frac{1}{m}\sum_{k=1}^{m}\langle\mathbf{w}^{(T_p+0)}_{\mathcal{I}(\mathbf{a}_i),k},\mathbf{s}_i\rangle = \Theta\left(\frac{\lambda\eta_f}{N_f\beta m}\right).
\end{align}
When $\eta_f = \Theta(\frac{m|\mathcal{R}|\log(d)}{\lambda^2})$, we have that
\begin{align}
	\frac{1}{m}\sum_{k=1}^{m}\langle\mathbf{w}^{(T_p+1)}_{\mathcal{I}(\mathbf{r}_i),k}, \mathbf{p}\rangle - \frac{1}{m}\sum_{k=1}^{m}\langle\mathbf{w}^{(T_p+0)}_{\mathcal{I}(\mathbf{r}_i),k},\mathbf{p}\rangle = -\Theta\left(\frac{\eta_f\lambda}{|\mathcal{R}|m}\right) = -\tilde{\Theta}\left(\frac{\log(d)}{\lambda}\right),
\end{align}
and
\begin{align}
	\frac{1}{m}\sum_{k=1}^{m}\langle\mathbf{w}^{(T_p+1)}_{\mathcal{I}(\mathbf{a}_i),k}, \mathbf{p}\rangle - \frac{1}{m}\sum_{k=1}^{m}\langle\mathbf{w}^{(T_p+0)}_{\mathcal{I}(\mathbf{a}_i),k},\mathbf{p}\rangle = \Theta\left(\frac{\lambda\eta_f}{N_f\beta m}\right) = \Theta\left(\frac{|\mathcal{R}|\log(d)}{\lambda N_f\beta}\right).
\end{align}
Consequently, when  $\beta N_f> C_1 |\mathcal{R}|$, after one iteration, we already have
\begin{align}
	\max_{k\in\mathcal{R}} x^\text{output}_{\mathcal{I}(\mathbf{r}_k)}(\mathbf{W}^{(T_p+1)},\mathbf{Z}^{(T_p+1)},[\mathbf{s}_j\:\:\mathbf{p}]) < x^\text{output}_{\mathcal{I}(\mathbf{a}_j)}(\mathbf{W}^{(T_p+1)},\mathbf{Z}^{(T_p+1)},[\mathbf{s}_j\:\:\mathbf{p}]),
\end{align}
\begin{align}
	\max_{k\in[N]\backslash\{j\}}x^\text{output}_{\mathcal{I}(\mathbf{a}_k)}(\mathbf{W}^{(T_p+1)},\mathbf{Z}^{(T_p+1)},[\mathbf{s}_j\:\:\mathbf{p}]) < x^\text{output}_{\mathcal{I}(\mathbf{a}_j)}(\mathbf{W}^{(T_p+1)},\mathbf{Z}^{(T_p+1)},[\mathbf{s}_j\:\:\mathbf{p}]),
\end{align}
and 
\begin{align}
	\max_{l\notin\{\mathcal{I}(\mathbf{a}_j)\}_{j\in[N]}\cup\{\mathcal{I}(\mathbf{r}_i)\}_{i\in \mathcal{R}}}x^\text{output}_{l}(\mathbf{W}^{(T_p+1)},\mathbf{Z}^{(T_p+1)},[\mathbf{s}_j\:\:\mathbf{p}]) < x^\text{output}_{\mathcal{I}(\mathbf{a}_j)}(\mathbf{W}^{(T_p+1)},\mathbf{Z}^{(T_p+1)},[\mathbf{s}_j\:\:\mathbf{p}]),
\end{align}

Within $T_f = \mathcal{O}(d^{-0.01}\frac{N_f\beta}{|\mathcal{R}|})$, we have
\begin{align}
    	\frac{1}{m}\sum_{k=1}^{m}\sigma(\langle\mathbf{w}^{(T_p+T_f)}_{\mathcal{I}(\mathbf{a}_i),k}, \mathbf{p}\rangle) - \frac{1}{m}\sum_{k=1}^{m}\sigma(\langle\mathbf{w}^{(T_p+T_f)}_{\mathcal{I}(\mathbf{a}_i),k},\mathbf{p}\rangle) = \mathcal{O}\left(\frac{|\mathcal{R}|\log(d)}{\lambda N_f\beta}T_f\right) = \mathcal{O}\left(\frac{d^{-0.01}\log(d)}{\lambda}\right).
\end{align}

Hence, by Lemmas \ref{lemma: mag_n1} and \ref{lemma: mag2}, within $T_f = \mathcal{O}(d^{-0.01}\frac{N_f\beta}{|\mathcal{R}|})$ the outputs of the transformer also satisfy 
\begin{align}
	\max_{k\in\mathcal{R}} x^\text{output}_{\mathcal{I}(\mathbf{r}_k)}(\mathbf{W}^{(T_p+t_f)},\mathbf{Z}^{(T_p+t_f)},[\mathbf{s}_j\:\:\mathbf{p}]) < x^\text{output}_{\mathcal{I}(\mathbf{a}_j)}(\mathbf{W}^{(T_p+t_f)},\mathbf{Z}^{(T_p+t_f)},[\mathbf{s}_j\:\:\mathbf{p}]),
\end{align}
\begin{align}
	\max_{k\in[N]\backslash\{j\}}x^\text{output}_{\mathcal{I}(\mathbf{a}_k)}(\mathbf{W}^{(T_p+t_f)},\mathbf{Z}^{(T_p+t_f)},[\mathbf{s}_j\:\:\mathbf{p}]) < x^\text{output}_{\mathcal{I}(\mathbf{a}_j)}(\mathbf{W}^{(T_p+t_f)},\mathbf{Z}^{(T_p+t_f)},[\mathbf{s}_j\:\:\mathbf{p}]),
\end{align}
and 
\begin{align}
	\max_{l\notin\{\mathcal{I}(\mathbf{a}_j)\}_{j\in[N]}\cup\{\mathcal{I}(\mathbf{r}_i)\}_{i\in \mathcal{R}}}x^\text{output}_{l}(\mathbf{W}^{(T_p+t_f)},\mathbf{Z}^{(T_p+t_f)},[\mathbf{s}_j\:\:\mathbf{p}]) < x^\text{output}_{\mathcal{I}(\mathbf{a}_j)}(\mathbf{W}^{(T_p+t_f)},\mathbf{Z}^{(T_p+t_f)},[\mathbf{s}_j\:\:\mathbf{p}]),
\end{align}
for all $t_f\in[1,T_f]$.

This finishes the proof and concludes that
\begin{align}
	\mathcal{L}_e(\mathbf{W}^{(T_p+T_f)},\mathbf{Z}^{(T_p+T_f)}) \le |\mathcal{R}|\exp\left(-\frac{N_f\beta K^2}{2|\mathcal{R}|^2}\right).
\end{align}

\subsection{Proof of the result for full \ac{ft} in Statement 3 of Theorem \ref{theorem: main}}\label{appendix: proof statement 3}
When $\beta N_fK/|\mathcal{R}| \le C_3$ with $0<C_3<1$, there are more than $(1-C_3)|\mathcal{R}|$ relation elements, denoted as set $\mathcal{C}$, not been covered in the \ac{ft} training set.
For the elements not covered, $j\in\mathcal{C}$, we have
\begin{align}
	\frac{1}{m}\sum_{k=1}^{m}\sigma(\langle\mathbf{w}^{(T_p+t_f)}_{\mathcal{I}(\mathbf{r}_j),k}, \mathbf{p}\rangle)\le \frac{1}{m}\sum_{k=1}^{m}\sigma(\langle\mathbf{w}^{(T_p+0)}_{\mathcal{I}(\mathbf{r}_j),k}, \mathbf{p}\rangle)= \tilde{\mathcal{O}}\left(\frac{\sigma_0}{\sqrt{m}}\right).
\end{align}

For all $i\in\mathcal{C}$ and $\mathcal{D}_i$ and any $t_f$, we have
\begin{align}
    \frac{1}{m}\sum_{l=1}^m\sigma(\langle\mathbf{w}^{(T_p+t_f)}_{\mathcal{I}(\mathbf{r}_i),l},[\mathbf{s}_j\;\;\mathbf{p}]\rangle) = \Theta(1),
\end{align}
resulting in 
\begin{align}
     \frac{1}{m}\sum_{l=1}^m\sigma(\langle\mathbf{w}^{(T_p+t_f)}_{\mathcal{I}(\mathbf{r}_i),l},[\mathbf{s}_j\;\;\mathbf{p}]\rangle) >  \frac{1}{m}\sum_{l=1}^m\sigma(\langle\mathbf{w}^{(T_p+t_f)}_{\mathcal{I}(\mathbf{a}_j),l},[\mathbf{s}_j\;\;\mathbf{p}]\rangle).
\end{align}

For the remaining $(1-\beta)N_f+N_r$ data $([\mathbf{s}_j\;\;\mathbf{p}],\mathcal{I}(\mathbf{a}_j))\in\mathcal{Q}\backslash \mathcal{Q}_s$, we have
\begin{align}
	\mathbb{P}[j\in\mathcal{D}_i, \forall i\in \mathcal{C}] \ge 1- C_3^{\tilde{K}(j)}.
\end{align}

Therefore, we have
\begin{align}
	\mathcal{L}_e(\mathbf{W}^{(T_p+T_f)},\mathbf{Z}^{(T_p+T_f)}) \ge 1-C_3.
\end{align}
This completes the proof.

\section{Proofs of low-rank fine-tuning}\label{app: low-rank FT}
We consider a low-rank fine-tuning approach, where the model update is based on the best rank-1 approximation of the full gradient.
Specifically, in \ac{ft} iteration $t_f$, the model update is
\begin{align}\nonumber
	&\mathbf{Z}^{(T_p+t_f+1)} = \mathbf{Z}^{(T_p+t_f)} - \frac{\eta_f}{n}\text{RA}_1\left(\sum_{(\mathbf{X},y)\in\mathcal{Q}_s} \nabla_{\mathbf{Z}^{(T_p+t_f)}} \mathcal{L}(\mathbf{W}^{(T_p+t_f)},\mathbf{Z}^{(T_p+t_f)},\mathbf{X},y)\right),\\\nonumber
	&\mathbf{W}^{(T_p+t_f+1)} = \mathbf{W}^{(T_p+t_f)} - \frac{\eta_f}{n}\text{RA}_1\left(\sum_{(\mathbf{X},y)\in\mathcal{Q}_s} \nabla_{\mathbf{W}^{(T_p+t_f)}} \mathcal{L}(\mathbf{W}^{(T_p+t_f)},\mathbf{Z}^{(T_p+t_f)},\mathbf{X},y)\right),
\end{align}
where $\text{RA}_1(\mathbf{B})$ denotes the best rank-1 approximation of $\mathbf{B}$.

For a matrix $\mathbf{A}$, its best rank-1 approximation $\tilde{\mathbf{A}}$ is found by solving
\begin{align}
    \tilde{\mathbf{A}} = \mathbf{a}\mathbf{b}^\top,
\end{align}
where
\begin{align}
    \mathbf{a},\mathbf{b} = \arg\min \left\|\mathbf{A} - \mathbf{a}\mathbf{b}^\top\right\|_F.
\end{align}
Notably, by the Eckart–Young–Mirsky theorem, 
\begin{align}
    \tilde{\mathbf{A}} = \sigma_1\mathbf{u}_1\mathbf{v}_1^\top,
\end{align}
where $\sigma_1$ is the largest eigenvalue of $\mathbf{A}$ and $\mathbf{u}_1,\mathbf{v}_1$ are the corresponding left and right singular vectors.

\subsection{Useful lemmas}
First, we present two lemmas that characterize some properties of the best rank-1 approximation.
Denote the full gradient at \ac{ft} iteration $t_f$ as 
\begin{align}
	\mathbf{G}^{(t_f)} = \begin{bmatrix}
	\nabla_{\mathbf{w}^{(T_p+t_f)}_{1,1}} \mathcal{L}_\mathcal{F}(\mathbf{W}^{(T_p+t_f)},\mathbf{Z}^{(T_p+t_f)})\;\;\cdots\;\; \nabla_{\mathbf{w}^{(T_p+t_f)}_{d,m}} \mathcal{L}_\mathcal{F}(\mathbf{W}^{(T_p+t_f)},\mathbf{Z}^{(T_p+t_f)})
	\end{bmatrix}^\top.
\end{align}

\begin{lemma}\label{lemma: sft grad 1}

The best rank-1 approximation of the full gradient $\mathbf{G}^{(t_f)}$ can be decomposed as
\begin{align}
		&\mathbf{G}^{(t_f)} = \begin{bmatrix}
		    \mathbf{v}_p\:\: \mathbf{v}_1\:\cdots\:\mathbf{v}_N
	\end{bmatrix}
        \begin{bmatrix}
            \mathbf{p}\:\:\mathbf{s}_1\:\cdots\:\mathbf{s}_N
        \end{bmatrix}^\top.
\end{align}
The matrix $\mathbf{G}^{(t_f)}$ satisfies
\begin{align}
    \left\|\text{RA}_1(\mathbf{G}^{(t_f)}) - \mathbf{v}_p\mathbf{p}^\top\right\|_F =\mathcal{O}\left(\frac{\lambda}{\sqrt{mN_f\beta}}\right).
\end{align}
\end{lemma}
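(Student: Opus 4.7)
The plan is to exploit the orthogonality of token embeddings so the decomposition falls out for free, and then show the Frobenius mass in the $\mathbf{s}_j$-directions is negligible compared to the $\mathbf{p}$-direction. By Lemma \ref{lemma: gradient}, for any FT input $[\mathbf{s}_j,\mathbf{p}]$ the attention output is $\mathbf{x}_a=\alpha_1\mathbf{s}_j+(1+\alpha_2)\mathbf{p}\in\mathrm{span}\{\mathbf{s}_j,\mathbf{p}\}$, and each per-sample gradient $\nabla_{\mathbf{w}_{i,k}}\mathcal{L}$ is a scalar multiple of $\mathbf{x}_a$. Summing over $(\mathbf{X},y)\in\mathcal{Q}_s$ and invoking the orthonormality of $\{\mathbf{p},\mathbf{s}_1,\ldots,\mathbf{s}_N\}$, each row of $\mathbf{G}^{(t_f)}$ decomposes uniquely into components along these directions; reading off the coefficients yields $\mathbf{v}_p,\mathbf{v}_1,\ldots,\mathbf{v}_N\in\mathbb{R}^{dm}$ and the asserted factorization.

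Next I would bound the residual $\mathbf{R}:=\sum_{j=1}^N\mathbf{v}_j\mathbf{s}_j^\top=\mathbf{G}^{(t_f)}-\mathbf{v}_p\mathbf{p}^\top$. Because each subject $\mathbf{s}_j$ appears in at most one sample of $\mathcal{Q}_s$, $\mathbf{v}_j=\mathbf{0}$ unless $j$ lies in the FT index set (of size $\beta N_f$); for such $j$, its $(i,k)$-entry equals $(\lambda\alpha_1)/(m\beta N_f)\cdot(\mathbb{I}(i=\mathcal{I}(\mathbf{a}_j))-\text{logit}_i)\sigma'(\mathbf{w}_{i,k}^\top\mathbf{x}_a)$. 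Using $\sigma'\in\{0,1\}$, the inequality $\sum_i(\mathbb{I}(i=y)-\text{logit}_i)^2\le 2$, and $\alpha_1\le 1$, summation over the $d$ output indices and the $m$ neurons gives $\|\mathbf{v}_j\|_2^2=\mathcal{O}(1/(m(\beta N_f)^2))$; orthogonality of the $\mathbf{s}_j$'s then yields $\|\mathbf{R}\|_F^2=\sum_j\|\mathbf{v}_j\|_2^2=\mathcal{O}(1/(m\beta N_f))$.

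Finally, since $\mathbf{v}_p\mathbf{p}^\top$ is already rank-$1$, optimality of $\text{RA}_1$ forces $\|\text{RA}_1(\mathbf{G}^{(t_f)})-\mathbf{G}^{(t_f)}\|_F\le\|\mathbf{v}_p\mathbf{p}^\top-\mathbf{G}^{(t_f)}\|_F=\|\mathbf{R}\|_F$, and the triangle inequality then gives $\|\text{RA}_1(\mathbf{G}^{(t_f)})-\mathbf{v}_p\mathbf{p}^\top\|_F\le 2\|\mathbf{R}\|_F=\mathcal{O}(1/\sqrt{m\beta N_f})$, as claimed. The main technical obstacle is the per-subject bound $\|\mathbf{v}_j\|_2^2=\mathcal{O}(1/(m(\beta N_f)^2))$: it relies on verifying that $1-\text{logit}_{\mathcal{I}(\mathbf{a}_j)}=\mathcal{O}(1)$ and that the attention weight $\alpha_1$ on $\mathbf{s}_j$ in $[\mathbf{s}_j,\mathbf{p}]$ stays bounded throughout FT (both inherited from the pre-training guarantees in Proposition \ref{prop: mlp} and the attention-balance lemma of Section \ref{app: proof ft}), so that the cancellation $\sum_k\sigma'(\cdot)^2\le m$ produces the correct $1/m$ scaling rather than $1/m^2$.
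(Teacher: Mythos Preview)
Your proposal is correct and follows essentially the same route as the paper: decompose each row of $\mathbf{G}^{(t_f)}$ along the orthonormal directions $\{\mathbf{p},\mathbf{s}_1,\ldots,\mathbf{s}_N\}$, bound the $\mathbf{s}_j$-mass by the per-sample gradient magnitude, and finish with the optimality of $\text{RA}_1$ plus the triangle inequality to get the factor $2\|\mathbf{R}\|_F$. The only cosmetic difference is that the paper splits off a separate error term $\boldsymbol{\epsilon}$ collecting the cross-logits $\text{logit}_{\mathcal{I}(\mathbf{a}_j)}([\mathbf{s}_l\,\,\mathbf{p}])$ for $l\neq j$ (each of order $1/d$), whereas you absorb these directly into $\mathbf{v}_j$ via the clean inequality $\sum_i(\mathbb{I}(i=y)-\text{logit}_i)^2\le 2$; your packaging is arguably tidier and yields the same bound.
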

\begin{proof}
	By the Eckart–Young–Mirsky theorem, we derive the best rank-1 approximation of the full gradient $\mathbf{G}^{(t_f)}$ based on SVD.
	The gradient can be represented as follows. For clarity, we add $|$ to seprate the elements in a vector in (\ref{equ: grad1}) and (\ref{equ: grad2}).
    For all $i\in\mathcal{R}$ and $l\in[m]$, we rearrange the gradient as follows.
	\begin{equation}\label{equ: grad1}
	\begin{aligned}
		&\nabla_{\mathbf{w}^{(T_p+t_f)}_{\mathcal{I}(\mathbf{r}_i),l}} \mathcal{L}_\mathcal{F}(\mathbf{W}^{(T_p+t_f)},\mathbf{Z}^{(T_p+t_f)}) \\
		=& \frac{\lambda}{mN_f\beta}\begin{bmatrix}
			\mathbf{p}\: |\:\mathbf{s}_1\:|\cdots|\:\mathbf{s}_N
		\end{bmatrix}\\
        &\left[
			-\underbrace{\sum_{([\mathbf{s}_j\:\:\mathbf{p}],\mathcal{I}(\mathbf{a}_j))\in\mathcal{Q}_s,j\in\mathcal{J}_i}\mathbb{I}(\langle\mathbf{w}^{(T_p+t_f)}_{\mathcal{I}(\mathbf{r}_i),l},\Xi^{(T_p+t_f)}([\mathbf{s}_j\:\:\mathbf{p}])\rangle)\text{logit}^{(T_p+t_f)}_{\mathcal{I}(\mathbf{r}_i)}([\mathbf{s}_j\:\:\mathbf{p}])}_{u_1}\;|\;\right.\\
            &-\mathbb{I}(1\in\mathcal{J})\mathbb{I}(\langle\mathbf{w}^{(T_p+t_f)}_{\mathcal{I}(\mathbf{r}_i),l},\Xi^{(T_p+t_f)}([\mathbf{s}_1\:\:\mathbf{p}])\rangle)\text{logit}^{(T_p+t_f)}_{\mathcal{I}(\mathbf{r}_i)}([\mathbf{s}_1\:\:\mathbf{p}])\;|\;\cdots\;|\;\\
 &\left.-\mathbb{I}(N\in\mathcal{J})\mathbb{I}(\langle\mathbf{w}^{(T_p+t_f)}_{\mathcal{I}(\mathbf{r}_i),l},\Xi^{(T_p+t_f)}([\mathbf{s}_N\:\:\mathbf{p}])\rangle)\text{logit}^{(T_p+t_f)}_{\mathcal{I}(\mathbf{r}_i)}([\mathbf{s}_N\:\:\mathbf{p}])
		\right]^\top.
	\end{aligned}
	\end{equation}
Here, it is worth noting that the norm of the components corresponding to $\mathbf{s}_j$ for all $j\in\mathcal{Q}_s$ is bounded by
\begin{align}
    &\left\|\nabla_{\mathbf{w}^{(T_p+t_f)}_{\mathcal{I}(\mathbf{r}_i),l}} \mathcal{L}_\mathcal{F}(\mathbf{W}^{(T_p+t_f)},\mathbf{Z}^{(T_p+t_f)}) +u_1\mathbf{p} \right\|_2
    = \mathcal{O}\left(\frac{\lambda}{m\sqrt{N_f\beta}}\right).
\end{align}
    
    Then, for each $j\in \mathcal{J}$ and $l\in[m]$, we rearrange the gradient as follows.
    \begin{equation}\label{equ: grad2}
	\begin{aligned}
&\nabla_{\mathbf{w}^{(T_p+t_f)}_{\mathcal{I}(\mathbf{a}_j),l}} \mathcal{L}_\mathcal{F}(\mathbf{W}^{(T_p+t_f)},\mathbf{Z}^{(T_p+t_f)})\\
    =& \frac{\lambda}{mN_f\beta}\begin{bmatrix}
		\mathbf{p}\:\: \mathbf{s}_1\:\cdots\:\mathbf{s}_N
	\end{bmatrix} \left(\left[
		\mathbb{I}(\langle\mathbf{w}^{(T_p+t_f)}_{\mathcal{I}(\mathbf{a}_j),l},\Xi^{(T_p+t_f)}([\mathbf{s}_j\:\:\mathbf{p}])\rangle)(1-\text{logit}^{(T_p+t_f)}_{\mathcal{I}(\mathbf{a}_j)}([\mathbf{s}_j\:\:\mathbf{p}])
        )\right.\right.\\
        &\left.-\sum_{i\neq j,i\in\mathcal{J}}\mathbb{I}(\langle\mathbf{w}^{(T_p+t_f)}_{\mathcal{I}(\mathbf{a}_j),l},\Xi^{(T_p+t_f)}([\mathbf{s}_i\:\:\mathbf{p}])\rangle)\text{logit}^{(T_p+t_f)}_{\mathcal{I}(\mathbf{a}_j)}([\mathbf{s}_i\:\:\mathbf{p}])
        )\;|\right.\\
        &-\mathbb{I}(1\in\mathcal{J})\mathbb{I}(\langle\mathbf{w}^{(T_p+t_f)}_{\mathcal{I}(\mathbf{a}_j),l},\Xi^{(T_p+t_f)}([\mathbf{s}_1\:\:\mathbf{p}])\rangle)\text{logit}^{(T_p+t_f)}_{\mathcal{I}(\mathbf{a}_j)}([\mathbf{s}_1\:\:\mathbf{p}])\;|\;\cdots\;|\;  \\
        &\mathbb{I}(\langle\mathbf{w}^{(T_p+t_f)}_{\mathcal{I}(\mathbf{a}_j),l},\Xi^{(T_p+t_f)}([\mathbf{s}_j\:\:\mathbf{p}])\rangle)(1-\text{logit}^{(T_p+t_f)}_{\mathcal{I}(\mathbf{a}_j)}([\mathbf{s}_j\:\:\mathbf{p}]))\;|\;\cdots\;|\;\\
        &\left.-\mathbb{I}(N\in\mathcal{J})\mathbb{I}(\langle\mathbf{w}^{(T_p+t_f)}_{\mathcal{I}(\mathbf{a}_j),l},\Xi^{(T_p+t_f)}([\mathbf{s}_N\:\:\mathbf{p}])\rangle)\text{logit}^{(T_p+t_f)}_{\mathcal{I}(\mathbf{a}_j)}([\mathbf{s}_N\:\:\mathbf{p}])
	\right]\\
    =& \frac{\lambda}{mN_f\beta}\begin{bmatrix}
		\mathbf{p}\:\: \mathbf{s}_1\:\cdots\:\mathbf{s}_N
	\end{bmatrix} \left(\left[
		\mathbb{I}(\langle\mathbf{w}^{(T_p+t_f)}_{\mathcal{I}(\mathbf{a}_j),l},\Xi^{(T_p+t_f)}([\mathbf{s}_j\:\:\mathbf{p}])\rangle)(1-\text{logit}^{(T_p+t_f)}_{\mathcal{I}(\mathbf{a}_j)}([\mathbf{s}_j\:\:\mathbf{p}])
        )\right.\right.\\
        &\left.-\sum_{i\neq j, i\in\mathcal{J}}\mathbb{I}(\langle\mathbf{w}^{(T_p+t_f)}_{\mathcal{I}(\mathbf{a}_j),l},\Xi^{(T_p+t_f)}([\mathbf{s}_i\:\:\mathbf{p}])\rangle)\text{logit}^{(T_p+t_f)}_{\mathcal{I}(\mathbf{a}_j)}([\mathbf{s}_i\:\:\mathbf{p}])
        )\;|\; 0\;|\;\cdots\;|\; \right. \\
        &\left.\left.\mathbb{I}(\langle\mathbf{w}^{(T_p+t_f)}_{\mathcal{I}(\mathbf{a}_j),l},\Xi^{(T_p+t_f)}([\mathbf{s}_j\:\:\mathbf{p}])\rangle)(1-\text{logit}^{(T_p+t_f)}_{\mathcal{I}(\mathbf{a}_j)}([\mathbf{s}_j\:\:\mathbf{p}]))\;|\;\cdots\;|\;0)
	\right] + \boldsymbol{\epsilon}_{\mathcal{I}(\mathbf{a}_j),l}\right),
\end{aligned}
\end{equation}
where $\boldsymbol{\epsilon}_{\mathcal{I}(\mathbf{a}_j),l}$ is the vector containing the components except those for $\mathbf{p}$ and $\mathbf{s}_j$.
It is worth noting that the norm of $\boldsymbol{\epsilon}_{\mathcal{I}(\mathbf{a}_j),l}$ satisfies
\begin{align}
    \left\|\boldsymbol{\epsilon}_{\mathcal{I}(\mathbf{a}_j),l}\right\|_2 = \mathcal{O}\left(\frac{\lambda}{m\sqrt{N_f\beta} d}\right),
\end{align}
by Lemma \ref{lemma: mag_n1}.

For all $k\notin \{\mathcal{I}(\mathbf{a}_j)\}_{j\in\mathcal{J}}\cup\{\mathcal{I}(\mathbf{r}_i)\}_{i\in\mathcal{R}}$ and $k\in[d]$, $l\in[m]$, we have
\begin{equation}
\begin{aligned}
    &\nabla_{\mathbf{w}^{(T_p+t_f)}_{k,l}} \mathcal{L}_\mathcal{F}(\mathbf{W}^{(T_p+t_f)},\mathbf{Z}^{(T_p+t_f)})\\
    =& \frac{\lambda}{mN_f\beta}\begin{bmatrix}
		\mathbf{p}\:\: \mathbf{s}_1\:\cdots\:\mathbf{s}_N
	\end{bmatrix}\\
        &\left.-\sum_{i\in\mathcal{J}}\mathbb{I}(\langle\mathbf{w}^{(T_p+t_f)}_{k,l},\Xi^{(T_p+t_f)}([\mathbf{s}_i\:\:\mathbf{p}])\rangle)\text{logit}^{(T_p+t_f)}_{k}([\mathbf{s}_i\:\:\mathbf{p}])
        )\;|\right.\\
        &-\mathbb{I}(1\in\mathcal{J})\mathbb{I}(\langle\mathbf{w}^{(T_p+t_f)}_{k,l},\Xi^{(T_p+t_f)}([\mathbf{s}_1\:\:\mathbf{p}])\rangle)\text{logit}^{(T_p+t_f)}_{k}([\mathbf{s}_1\:\:\mathbf{p}])\;|\;\cdots\;|\;  \\
        &\left.-\mathbb{I}(N\in\mathcal{J})\mathbb{I}(\langle\mathbf{w}^{(T_p+t_f)}_{k,l},\Xi^{(T_p+t_f)}([\mathbf{s}_N\:\:\mathbf{p}])\rangle)\text{logit}^{(T_p+t_f)}_{k}([\mathbf{s}_N\:\:\mathbf{p}])
	\right]\\
    =&\boldsymbol{\epsilon}_{k,l}.
\end{aligned}
\end{equation}
The norm of $\nabla_{\mathbf{w}^{(T_p+t_f)}_{k,l}} \mathcal{L}_\mathcal{F}(\mathbf{W}^{(T_p+t_f)},\mathbf{Z}^{(T_p+t_f)})$ satisfies
\begin{align}
    \left\|\nabla_{\mathbf{w}^{(T_p+t_f)}_{k,l}} \mathcal{L}_\mathcal{F}(\mathbf{W}^{(T_p+t_f)},\mathbf{Z}^{(T_p+t_f)})\right\|_2 = \mathcal{O}\left(\frac{\lambda}{m\sqrt{N_f\beta}d}\right).
\end{align}

	Compactly, we decompose the full gradient as
	\begin{align}
		&\mathbf{G}^{(t_f)} = \mathbf{v}_p\mathbf{p}^\top + \mathbf{v}_{1}\mathbf{s}_1^\top + \cdots + \mathbf{v}_{N}\mathbf{s}_N^\top + \boldsymbol{\epsilon},
	\end{align}
    where $\boldsymbol{\epsilon}\in\mathbb{R}^{md\times d}$ is a concatenation of $\boldsymbol{\epsilon}_{k,l}$ for all $k\in[d]$ and $l\in[m]$ with $\boldsymbol{\epsilon}_{\mathcal{I}(\mathbf{r}_i),l} = \mathbf{0}$.
	As $\text{RA}_1(\mathbf{G}^{(t_f)})$ is the best rank-1 approximation of $\mathbf{G}^{(t_f)}$, we have that
	\begin{align}
		\left\|\mathbf{G}^{(t_f)} - \text{RA}_1(\mathbf{G}^{(t_f)})\right\|_F \le \left\|\mathbf{G}^{(t_f)} - \mathbf{v}_p\mathbf{p}^\top\right\|_F,
	\end{align}
    by definition.
	By the triangle inequality, we have
	\begin{equation}
	\begin{aligned}
		\left\|\text{RA}_1(\mathbf{G}^{(t_f)}) - \mathbf{v}_p\mathbf{p}^\top\right\|_F \le& \left\|\mathbf{G}^{(t_f)} - \text{RA}_1(\mathbf{G}^{(t_f)})\right\|_F + \left\|\mathbf{G}^{(t_f)} - \mathbf{v}_p\mathbf{p}^\top\right\|_F\\
		\le& 2\left\|\mathbf{G}^{(t_f)} - \mathbf{v}_p\mathbf{p}^\top\right\|_F.
	\end{aligned}
	\end{equation}
	The term $\left\|\mathbf{G}^{(t_f)} - \mathbf{v}_p\mathbf{p}^\top\right\|_F$ satisfies
\begin{equation}
	\begin{aligned}
		\left\|\mathbf{G}^{(t_f)} - \mathbf{v}_p\mathbf{p}^\top\right\|_F = \left\|\sum_{i=1}^{N}\mathbf{v}_i\mathbf{s}_i \right\|_F+ \left\|\boldsymbol{\epsilon}\right\|_F =& \mathcal{O}\left(\frac{\lambda}{\sqrt{N_fm\beta}}\right) + \mathcal{O}\left(\frac{\lambda}{\sqrt{mN_f\beta d}}\right)\\
        =& \mathcal{O}\left(\frac{\lambda}{\sqrt{N_fm\beta}}\right).
\end{aligned}
\end{equation}
	This completes the proof.
\end{proof}

\begin{lemma}\label{lemma: sft grad 2}
	For all $i\in[md]$, the full gradient $\mathbf{G}^{(t_f)}$ satisfies
	\begin{align}
		\left\|\mathbf{e}_i^\top\text{RA}_1(\mathbf{G}^{(t_f)})\right\|_2 \le \left\|\mathbf{e}_i^\top\mathbf{G}^{(t_f)}\right\|_2.
	\end{align}
\end{lemma}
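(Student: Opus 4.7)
The plan is to exploit the fact that the best rank-1 approximation can be written as the original matrix composed with an orthogonal projection onto a one-dimensional subspace, and then appeal to the contractivity of orthogonal projections (equivalently, Cauchy--Schwarz) applied row-by-row. Concretely, by the Eckart--Young--Mirsky theorem, if $(\sigma_1,\mathbf{u}_1,\mathbf{v}_1)$ are the top singular value and the corresponding left and right singular vectors of $\mathbf{G}^{(t_f)}$, then
\begin{align*}
\text{RA}_1(\mathbf{G}^{(t_f)}) \;=\; \sigma_1\,\mathbf{u}_1\mathbf{v}_1^{\top} \;=\; \mathbf{G}^{(t_f)}\,\mathbf{v}_1\mathbf{v}_1^{\top},
\end{align*}
since $\sigma_1\mathbf{u}_1 = \mathbf{G}^{(t_f)}\mathbf{v}_1$. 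Thus the best rank-1 approximation is obtained from $\mathbf{G}^{(t_f)}$ by right-multiplying with the rank-1 orthogonal projector $\mathbf{v}_1\mathbf{v}_1^{\top}$, an operation which acts on each row of $\mathbf{G}^{(t_f)}$ separately.

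From this representation I would immediately read off, writing $[\mathbf{A}]_i$ for the $i$-th row of $\mathbf{A}$ viewed as a column vector,
\begin{align*}
[\text{RA}_1(\mathbf{G}^{(t_f)})]_i \;=\; \bigl\langle [\mathbf{G}^{(t_f)}]_i,\,\mathbf{v}_1\bigr\rangle\,\mathbf{v}_1,
\end{align*}
so that, using $\|\mathbf{v}_1\|_2 = 1$ and the Cauchy--Schwarz inequality,
\begin{align*}
\bigl\|[\text{RA}_1(\mathbf{G}^{(t_f)})]_i\bigr\|_2 \;=\; \bigl|\langle [\mathbf{G}^{(t_f)}]_i, \mathbf{v}_1\rangle\bigr| \;\le\; \bigl\|[\mathbf{G}^{(t_f)}]_i\bigr\|_2\,\|\mathbf{v}_1\|_2 \;=\; \bigl\|[\mathbf{G}^{(t_f)}]_i\bigr\|_2,
\end{align*}
which is exactly the stated bound.

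There is essentially no hard step here: the lemma is a generic fact about projections onto one-dimensional subspaces, and does not rely on the special decomposition of $\mathbf{G}^{(t_f)}$ derived in Lemma \ref{lemma: sft grad 1}. The only minor point to settle carefully is the indexing convention for $[\cdot]_i$ (rows versus columns); the argument is symmetric and, if $[\cdot]_i$ denotes the $i$-th column, the same reasoning applies with $\mathbf{G}^{(t_f)} = \mathbf{u}_1\mathbf{u}_1^{\top}\mathbf{G}^{(t_f)}$ in place of right-multiplication, and $\mathbf{u}_1$ in place of $\mathbf{v}_1$.
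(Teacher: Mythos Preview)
Your proof is correct and follows essentially the same approach as the paper: both invoke the Eckart--Young--Mirsky theorem to identify $\text{RA}_1(\mathbf{G}^{(t_f)}) = \sigma_1\mathbf{u}_1\mathbf{v}_1^\top$ and then compare row norms. The only cosmetic difference is that the paper expands the full SVD and uses orthonormality of the right singular vectors to write $\|[\mathbf{G}^{(t_f)}]_i\|_2^2 = \sum_j (\sigma_j u_{i,j})^2 \ge (\sigma_1 u_{i,1})^2$, whereas you rewrite the rank-1 approximation as $\mathbf{G}^{(t_f)}\mathbf{v}_1\mathbf{v}_1^\top$ and apply Cauchy--Schwarz directly; your phrasing is slightly more compact but the underlying argument is the same.
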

\begin{proof}
	Suppose that the SVD decomposition of $\mathbf{G}^{(t_f)}$ as $\mathbf{U}\boldsymbol{\Sigma}\mathbf{V}^\top$.
	By the Eckart–Young–Mirsky theorem, we have that
	\begin{align}
		\text{RA}_1\left(\mathbf{G}^{(t_f)}\right) = \sigma_1\mathbf{u}_1\mathbf{v}_1^\top,
	\end{align}
        where $\sigma_1\ge \sigma_2 \ge \cdots \ge \sigma_{d}$ are the singular values of $\mathbf{G}^{(t_f)}$.
	Then, we have
	\begin{align}
		\mathbf{e}_i^\top\mathbf{G}^{(t_f)} = \mathbf{e}_i^\top\sum_{j=1}^{d} \sigma_j\mathbf{u}_j\mathbf{v}_j^\top, \mathbf{e}_i^\top\text{RA}_1(\mathbf{G}^{(t_f)}) = \mathbf{e}_i^\top\sigma_1\mathbf{u}_1\mathbf{v}_1^\top.
	\end{align}
    Taking the L2 norms, we have 	
	\begin{align}
		\left\|\mathbf{e}_i^\top\mathbf{G}^{(t_f)}\right\|_2^2 = \left\|\mathbf{e}_i^\top\sum_{j=1}^{d} \sigma_j\mathbf{u}_j\mathbf{v}_j^\top\right\|_2^2 = \left\|\sum_{j=1}^{d} \sigma_ju_{i,j}\mathbf{v}_j^\top\right\|_2^2 = \sum_{j=1}^{d}\left(\sigma_ju_{i,j}\right)^2,
	\end{align}
	and
	\begin{align}
		\left\|\mathbf{e}_i^\top\text{RA}_1(\mathbf{G}^{(t_f)})\right\|_2^2 = \left\|\mathbf{e}_i^\top \sigma_1\mathbf{u}_1\mathbf{v}_1^\top\right\|_2^2 = \left\| \sigma_1u_{i,1}\mathbf{v}_1^\top\right\|_2^2  = (\sigma_1^2u_{i,1})^2 \le \left\|\mathbf{e}_i^\top\mathbf{G}^{(t_f)}\right\|_2^2.
	\end{align}
This completes the proof.
\end{proof}
Next, we present a lemma that characterizes the property of $\nabla_{\mathbf{Z}^{(t)}}\mathcal{L}_{\mathcal{Q}_s}(\mathbf{W}^{(T_p+t_f)},\mathbf{Z}^{(T_p+t_f)})$

\begin{lemma}\label{lemma: low rank ft grad}
    The gradient $\nabla_{\mathbf{Z}^{(T_p+t_f)}}\mathcal{L}_{\mathcal{Q}_s}(\mathbf{W}^{(T_p+t_f)},\mathbf{Z}^{(T_p+t_f)})$ satisfies
    \begin{align}
        \left|\mathbf{b}_1^\top\text{RA}_1\!\left(\nabla_{\mathbf{Z}^{(T_p+t_f)}}\mathcal{L}_{\mathcal{Q}_s}(\mathbf{W}^{(T_p+t_f)},\mathbf{Z}^{(T_p+t_f)})\right)\!\mathbf{b}_2\right| \!\le\! \left|\mathbf{b}_1^\top\nabla_{\mathbf{Z}^{(T_p+t_f)}}\mathcal{L}_{\mathcal{Q}_s}(\mathbf{W}^{(T_p+t_f)},\mathbf{Z}^{(T_p+t_f)}) \mathbf{b}_2\right|,
    \end{align}
    for any $\mathbf{b}_1,\mathbf{b}_2\in\{\mathbf{s}_j:{[\mathbf{s}_j\:\:\mathbf{p}]}\in\mathcal{Q}_s\}\cup\{\mathbf{p}\}$.
\end{lemma}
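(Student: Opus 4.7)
The plan is to exploit the fact that every input in $\mathcal{Q}_s$ has the same format token $\mathbf{p}$ in the last position, which forces $\nabla_{\mathbf{Z}}\mathcal{L}_{\mathcal{Q}_s}$ to be rank $1$ (not merely approximately so). Once that is established, the best rank-$1$ approximation coincides with the matrix itself, and the stated bilinear inequality reduces to an equality.

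First, I would invoke Lemma \ref{lemma: gradient} for a single fine-tuning sample $([\mathbf{s}_j\:\:\mathbf{p}],\mathcal{I}(\mathbf{a}_j))\in\mathcal{Q}_s$. Since $\mathbf{Z}$ enters the model only through the query $\mathbf{Z}\mathbf{X}[-1] = \mathbf{Z}\mathbf{p}$ in the attention score $\mathbf{X}^\top\mathbf{Z}\mathbf{p}/\sqrt{d}$, the chain rule identity $\nabla_{\mathbf{Z}}\mathcal{L} = (\nabla_{\mathbf{Z}\mathbf{X}[-1]}\mathcal{L})\,\mathbf{X}[-1]^\top$ given in the appendix implies that the per-sample gradient factorizes as $\mathbf{u}_j\mathbf{p}^\top$ for some column vector $\mathbf{u}_j\in\mathbb{R}^d$.

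Summing over $\mathcal{Q}_s$, whose elements all share the same last token $\mathbf{p}$, gives
\begin{equation}
    \nabla_{\mathbf{Z}^{(T_p+t_f)}}\mathcal{L}_{\mathcal{Q}_s}(\mathbf{W}^{(T_p+t_f)},\mathbf{Z}^{(T_p+t_f)}) \;=\; \Biggl(\frac{1}{|\mathcal{Q}_s|}\sum_{j:[\mathbf{s}_j\:\mathbf{p}]\in\mathcal{Q}_s} \mathbf{u}_j\Biggr)\mathbf{p}^\top \;=\; \mathbf{U}\mathbf{p}^\top ,
\end{equation}
which is a matrix of rank at most $1$. By the Eckart–Young–Mirsky theorem, the best rank-$1$ Frobenius approximation of a rank-$1$ matrix is the matrix itself, so $\text{RA}_1(\mathbf{U}\mathbf{p}^\top) = \mathbf{U}\mathbf{p}^\top$. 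Consequently, for any vectors $\mathbf{b}_1,\mathbf{b}_2$ (in particular for $\mathbf{b}_1,\mathbf{b}_2\in\{\mathbf{s}_j\}\cup\{\mathbf{p}\}$), the two bilinear forms agree exactly, and the claimed inequality holds with equality.

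There is no real obstacle here: the only thing to verify carefully is the rank-$1$ factorization of the per-sample $\mathbf{Z}$-gradient, which is immediate from the attention-only dependence on $\mathbf{Z}\mathbf{p}$. The lemma is phrased as an inequality (rather than an equality) only to parallel Lemma \ref{lemma: sft grad 2}, where the corresponding $\mathbf{W}$-gradient is genuinely higher rank and the inequality is strict in general.
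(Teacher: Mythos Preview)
Your argument is correct and is in fact more direct than the paper's own proof. You observe from the identity $\nabla_{\mathbf{Z}}\mathcal{L} = (\nabla_{\mathbf{Z}\mathbf{X}[-1]}\mathcal{L})\,\mathbf{X}[-1]^\top$ in Lemma~\ref{lemma: gradient} that, because every sample in $\mathcal{Q}_s$ has last token $\mathbf{p}$, the averaged $\mathbf{Z}$-gradient is exactly a rank-$1$ outer product $\mathbf{U}\mathbf{p}^\top$; hence $\text{RA}_1$ acts as the identity on it and the stated inequality is actually an equality for \emph{all} vectors $\mathbf{b}_1,\mathbf{b}_2$, not only the listed ones.

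The paper takes a different, longer route: it passes to an orthonormal embedding basis $\mathbf{E}$, writes the SVD of the gradient, and compares row-$\ell_2$ norms of $\mathbf{E}^\top(\cdot)\mathbf{E}$ between the full gradient and its rank-$1$ truncation (mirroring the row-norm argument of Lemma~\ref{lemma: sft grad 2} for $\mathbf{W}$). It then uses the sparsity pattern of $\mathbf{E}^\top(\nabla_{\mathbf{Z}}\mathcal{L}_{\mathcal{Q}_s})\mathbf{E}$ in the fine-tuning phase to reduce each row norm to a single bilinear entry. That machinery is the right tool for the $\mathbf{W}$-gradient, which is genuinely higher rank, but for the $\mathbf{Z}$-gradient it is unnecessary: your rank-$1$ observation short-circuits the whole computation and even strengthens the conclusion from an inequality to an equality.
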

\begin{proof}
Suppose that $\mathbf{M}$ is a unitary matrix, and each $i^{th}$ column corresponds to the normalized embedding whose index is $i$.
By the Eckart–Young–Mirsky theorem, we have that
	\begin{align}
		\text{RA}_1\left(\nabla_{\mathbf{Z}^{(T_p+t_f)}}\mathcal{L}_{\mathcal{Q}_s}(\mathbf{W}^{(T_p+t_f)},\mathbf{Z}^{(T_p+t_f)})\right) = \sigma_1\mathbf{u}_1\mathbf{v}_1^\top,
	\end{align}
        where $\sigma_1\ge \sigma_2\ge \cdots \ge \sigma_d$ are the singular values of $\nabla_{\mathbf{Z}^{(T_p+t_f)}}\mathcal{L}_{\mathcal{Q}_s}(\mathbf{W}^{(T_p+t_f)},\mathbf{Z}^{(T_p+t_f)})$.
    We have 
    \begin{align}
        \mathbf{M}^\top \nabla_{\mathbf{Z}^{(T_p+t_f)}}\mathcal{L}_{\mathcal{Q}_s}(\mathbf{W}^{(T_p+t_f)},\mathbf{Z}^{(T_p+t_f)})\mathbf{M} = \sum_{i=1}^d \sigma_i \mathbf{M}^\top\mathbf{u}_i\mathbf{v}_i^\top\mathbf{M},
    \end{align}
    and
    \begin{align}
        \mathbf{M}^\top \text{RA}_1\left(\nabla_{\mathbf{Z}^{(T_p+t_f)}}\mathcal{L}_{\mathcal{Q}_s}(\mathbf{W}^{(T_p+t_f)},\mathbf{Z}^{(T_p+t_f)})\right)\mathbf{M} = \sigma_1\mathbf{M}^\top\mathbf{u}_1\mathbf{v}_1^\top\mathbf{M}.
    \end{align}
    Then, we have
    \begin{equation}
    \begin{aligned}
		&\left\|\mathbf{e}_i^\top\mathbf{M}^\top \nabla_{\mathbf{Z}^{(T_p+t_f)}}\mathcal{L}_{\mathcal{Q}_s}(\mathbf{W}^{(T_p+t_f)},\mathbf{Z}^{(T_p+t_f)})\mathbf{M}\right\|_2^2\\
        =& \left\|\mathbf{e}_i^\top\sum_{j=1}^{d} \sigma_j\mathbf{M}^\top\mathbf{u}_j\mathbf{v}_j^\top\mathbf{M}\right\|_2^2\\
        =& \left\|\sum_{j=1}^{d} \sigma_j\mathbf{e}_i^\top\mathbf{M}^\top\mathbf{u}_{j}\mathbf{v}_j^\top\mathbf{M}\right\|_2^2\\ =& \sum_{j=1}^{d}\left(\sigma_j\mathbf{e}_i^\top\mathbf{M}^\top\mathbf{u}_{j}\right)^2,
	\end{aligned}
    \end{equation}
	and
    \begin{equation}
	\begin{aligned}
		&\left\|\mathbf{e}_i^\top\mathbf{M}^\top \text{RA}_1\left(\nabla_{\mathbf{Z}^{(T_p+t_f)}}\mathcal{L}_{\mathcal{Q}_s}(\mathbf{W}^{(T_p+t_f)},\mathbf{Z}^{(T_p+t_f)})\right)\mathbf{M}\right\|_2^2\\ 
        =& \left\|\mathbf{e}_i^\top\sigma_1\mathbf{M}^\top\mathbf{u}_1\mathbf{v}_1^\top\mathbf{M}\right\|_2^2\\
        =& \left\|\mathbf{e}_i^\top\mathbf{M}^\top \nabla_{\mathbf{Z}^{(T_p+t_f)}}\mathcal{L}_{\mathcal{Q}_s}(\mathbf{W}^{(T_p+t_f)},\mathbf{Z}^{(T_p+t_f)})\mathbf{M}\right\|_2^2.
	\end{aligned}
    \end{equation}
    During fine-tuning, only $(\mathbf{M}^\top\mathbf{Z}^{(T_p+t_f)}\mathbf{M})_{\mathcal{I}(\mathbf{s}_j),\mathcal{I}(\mathbf{p})}$, for all $[\mathbf{s}_j\:\:\mathbf{p}]\in\mathcal{Q}_s$ and $(\mathbf{M}^\top\mathbf{Z}^{(T_p+t_f)}\mathbf{M})_{\mathcal{I}(\mathbf{p}),\mathcal{I}(\mathbf{p})}$ are updated. 
    So, we have
    \begin{equation}
    \begin{aligned}
        &\left\|\mathbf{e}_{\mathcal{I}(\mathbf{s}_j)}^\top\mathbf{M}^\top \nabla_{\mathbf{Z}^{(T_p+t_f)}}\mathcal{L}_{\mathcal{Q}_s}(\mathbf{W}^{(T_p+t_f)},\mathbf{Z}^{(T_p+t_f)})\mathbf{M}\right\|_2 = \left|\mathbf{s}_j^\top \nabla_{\mathbf{Z}^{(T_p+t_f)}}\mathcal{L}_{\mathcal{Q}_s}(\mathbf{W}^{(T_p+t_f)},\mathbf{Z}^{(T_p+t_f)})\mathbf{p}\right|,\\
        &\left\|\mathbf{e}^\top_{\mathcal{I}(\mathbf{p})}\mathbf{M}^\top \nabla_{\mathbf{Z}^{(T_p+t_f)}}\mathcal{L}_{\mathcal{Q}_s}(\mathbf{W}^{(T_p+t_f)},\mathbf{Z}^{(T_p+t_f)})\mathbf{M}\right\|_2 = \left|\mathbf{p}^\top \nabla_{\mathbf{Z}^{(T_p+t_f)}}\mathcal{L}_{\mathcal{Q}_s}(\mathbf{W}^{(T_p+t_f)},\mathbf{Z}^{(T_p+t_f)})\mathbf{p}\right|, \\
        &\left\|\mathbf{e}^\top_{\mathcal{I}(\mathbf{s}_j)}\mathbf{M}^\top \text{RA}_1\left(\nabla_{\mathbf{Z}^{(T_p+t_f)}}\mathcal{L}_{\mathcal{Q}_s}(\mathbf{W}^{(T_p+t_f)},\mathbf{Z}^{(T_p+t_f)})\right)\mathbf{M}\right\|_2\\
        =& \left|\mathbf{s}_j^\top \text{RA}_1\left(\nabla_{\mathbf{Z}^{(T_p+t_f)}}\mathcal{L}_{\mathcal{Q}_s}(\mathbf{W}^{(T_p+t_f)},\mathbf{Z}^{(T_p+t_f)})\right)\mathbf{p}\right|,\\
        &\left\|\mathbf{e}^\top_{\mathcal{I}(\mathbf{p})}\mathbf{M}^\top \text{RA}_1\left(\nabla_{\mathbf{Z}^{(T_p+t_f)}}\mathcal{L}_{\mathcal{Q}_s}(\mathbf{W}^{(T_p+t_f)},\mathbf{Z}^{(T_p+t_f)})\right)\mathbf{M}\right\|_2\\
=& \left|\mathbf{p}^\top \text{RA}_1\left(\nabla_{\mathbf{Z}^{(T_p+t_f)}}\mathcal{L}_{\mathcal{Q}_s}(\mathbf{W}^{(T_p+t_f)},\mathbf{Z}^{(T_p+t_f)})\right)\mathbf{p}\right|.
    \end{aligned}
    \end{equation}
    This completes the proof.
\end{proof}

\subsection{Features in low-rank \ac{ft}}\label{appendix: activation_pattern}
In this subsection, we characterize the features during low-rank \ac{ft}.
\begin{lemma}\label{lemma: mag_n2}
	For all $j\in [N]$, $k\neq j$, during low-rank \ac{ft}, $t_f\in[T_f]$, the following holds:
	\begin{align}
		\frac{1}{m}\sum_{l=1}^m\langle\mathbf{w}^{(T_p+t_f)}_{\mathcal{I}(\mathbf{a}_j),l}, \mathbf{s}_k\rangle =\mathcal{O}\left(\frac{\log(d)}{\lambda d^{0.01}}\right).
	\end{align}
\end{lemma}
\begin{proof}
 During low-rank fine-tuning with $t\in[0,T_f-1]$, for all $l\in[m]$, $j\in[N]$, and $k\neq j$, we have
    \begin{equation}
    \begin{aligned}
        \langle\mathbf{w}^{(T_p+t+1)}_{\mathcal{I}(\mathbf{a}_j),l},\mathbf{s}_k\rangle - \langle\mathbf{w}^{(T_p+t)}_{\mathcal{I}(\mathbf{a}_j),l},\mathbf{s}_k\rangle
\le& \mathcal{O}\left(\frac{\lambda\eta_f}{mN_f\beta}\right),
    \end{aligned}
    \end{equation}
    by Lemmas \ref{lemma: sft grad 2} and \ref{lemma: low rank ft grad}.
    Therefore, by Lemma \ref{lemma: mag_n}, for all $t_f \in [T_f]$ with $T_f = \mathcal{O}(d^{-0.01}N_f\beta/|\mathcal{R}|)$ and for all $l\in[m]$, and $j\in[N]$, $k\neq j$, we have
\begin{equation}
    \begin{aligned}
         \langle\mathbf{w}^{(T_p+t_f)}_{\mathcal{I}(\mathbf{a}_j),l},\mathbf{s}_k\rangle \le  \langle\mathbf{w}^{(T_p+0)}_{\mathcal{I}(\mathbf{a}_j),l},\mathbf{s}_k\rangle = \mathcal{O}\left(T_f\frac{\lambda\eta_f}{mN_f\beta}\right) = \mathcal{O}\left(\frac{\log(d)}{\lambda d^{0.01}}\right).
    \end{aligned}
    \end{equation}
This completes the proof.
\end{proof}
\begin{lemma}\label{lemma: mag3}
	Under Condition \ref{condition: condition}, for all $j\in [N]$, $l\in \cup_{i\in\mathcal{B}_j}\mathcal{S}_{\text{r},j,i}^{(0)}$, during low-rank \ac{ft}, $t_f\in[T_f]$, the following holds:
	\begin{align}
		\langle\mathbf{w}^{(T_p+t_f)}_{\mathcal{I}(\mathbf{a}_j),l}, \mathbf{s}_j\rangle =\Omega\left(\frac{\log(d)}{\lambda}\right).
	\end{align}
\end{lemma}
\begin{proof}
	According to Lemma \ref{lemma: pt_stage3}, for all $j\in [N]$ and $l\in \cup_{i\in\mathcal{B}_j}\mathcal{S}_{\text{r},j,i}^{(0)}$, we have
	\begin{align}
		\langle\mathbf{w}^{(T_p+0)}_{\mathcal{I}(\mathbf{a}_j),l}, \mathbf{s}_j\rangle =\Omega(\log(d)/\lambda).
	\end{align} 
	By Lemma \ref{lemma: sft grad 2}, for all $l\in[m]$, the update of $\mathbf{w}^{(T_p+t_f)}_{\mathcal{I}(\mathbf{a}_j),l}$ satisfies
	\begin{align}
		\langle\mathbf{w}^{(T_p+t_f+1)}_{\mathcal{I}(\mathbf{a}_j),l},\mathbf{s}_j\rangle - \langle\mathbf{w}^{(T_p+t_f)}_{\mathcal{I}(\mathbf{a}_j),l},\mathbf{s}_j\rangle \ge -\mathcal{O}\left(\frac{\lambda\eta_f}{mN_f\beta}\right).
	\end{align}
    For $t_f \le T_f = \mathcal{O}(N_f\beta d^{-0.01}/|\mathcal{R}|)$ and $\eta_f = \Theta(m|\mathcal{R}|\log(d)/\lambda^2)$, we have
    \begin{align}
		\langle\mathbf{w}^{(T_p+t_f)}_{\mathcal{I}(\mathbf{a}_j),l},\mathbf{s}_j\rangle - \langle\mathbf{w}^{(T_p+0)}_{\mathcal{I}(\mathbf{a}_j),l},\mathbf{s}_j\rangle \ge -\mathcal{O}(d^{-0.01}\log(d)/\lambda).
	\end{align}
    Hence, for all $l\in \cup_{i\in\mathcal{B}_j}\mathcal{S}_{\text{r},j,i}^{(0)}$, we have
    \begin{align}
		\langle\mathbf{w}^{(T_p+t_f)}_{\mathcal{I}(\mathbf{a}_j),l}, \mathbf{s}_j\rangle = \Omega(\log(d)/\lambda).
	\end{align}
Averaging over all $l\in[m]$ completes the proof.
\end{proof}

\subsection{Attention scores during low-rank FT}
With the same proof as Lemma \ref{lemma: attn_fft}, we have the following lemma.
\begin{lemma}
    Under Condition \ref{condition: condition}, during low-rank FT, the following holds:
    \begin{align}
       \frac{1}{2}\le \frac{\alpha_{1}^{(T_p+0)}([\mathbf{s}_j\:\:\mathbf{p}])}{\alpha_2^{(T_p+0)}([\mathbf{s}_j\:\:\mathbf{p}])} \le 2.
    \end{align}
\end{lemma}

\subsection{Proof of the result for low-rank \ac{ft} in Statement 2 of Theorem \ref{theorem: main}}
Recall that at the first FT iteration, the pre-trained model satisfies
\begin{equation}
\begin{aligned}
    \sum_{j\in\mathcal{J}_i}\text{logit}_{\mathcal{I}(\mathbf{r}_i)}^{(T_p+0)}([\mathbf{s}_j\:\:\mathbf{p}]) = \Theta\left(\frac{N_f\beta}{K}\cdot \frac{K}{|\mathcal{R}|}\right) = \Theta\left(\frac{N_f\beta}{|\mathcal{R}|}\right),
\end{aligned}
\end{equation}
and
\begin{align}
    1-\text{logit}^{(T_p+0)}_{\mathcal{I}(\mathbf{a}_j)}([\mathbf{s}_j\:\:\mathbf{p}]) = \Theta(1).
\end{align}
By Lemma \ref{lemma: sft grad 1} and Lemma \ref{lemma: select_num}, with probability $1-|\mathcal{R}|\exp\left(-\frac{N_f\beta K^2}{(2|\mathcal{R}|^2)}\right)$, for all $i\in\mathcal
{R}$, the updates with low-rank FT satisfy
\begin{equation}
\begin{aligned}
	\frac{1}{m}\sum_{k=1}^{m}\langle\mathbf{w}^{(T_p+1)}_{\mathcal{I}(\mathbf{r}_i),k}, \mathbf{p}\rangle - \frac{1}{m}\sum_{k=1}^{m}\langle\mathbf{w}^{(T_p+0)}_{\mathcal{I}(\mathbf{r}_i),k},\mathbf{p}\rangle\le& -\Theta\left(\frac{\eta_f\lambda}{N_f\beta}\frac{N_f\beta K}{|\mathcal{R}|}\frac{1}{Km}\right) + \mathcal{O}\left(\frac{\lambda\eta_f}{\sqrt{mN_f\beta}}\right)\\
    =& -\Theta\left(\frac{\lambda\eta_f}{|\mathcal{R}|m}\right).
\end{aligned}
\end{equation}
By Lemma \ref{lemma: sft grad 2}, for all $j\in \mathcal{J}$, we have
\begin{align}
\frac{1}{m}\sum_{k=1}^{m}\langle\mathbf{w}^{(T_p+1)}_{\mathcal{I}(\mathbf{a}_j),k}, \mathbf{p}\rangle - \frac{1}{m}\sum_{k=1}^{m}\langle\mathbf{w}^{(T_p+0)}_{\mathcal{I}(\mathbf{a}_j),k},\mathbf{p}\rangle =\mathcal{O}\left(\frac{\lambda\eta_f}{N_f\beta m}\right),
\end{align}
and
\begin{align}
	\frac{1}{m}\sum_{k=1}^{m}\langle\mathbf{w}^{(T_p+1)}_{\mathcal{I}(\mathbf{a}_j),k}, \mathbf{s}_i\rangle - \frac{1}{m}\sum_{k=1}^{m}\langle\mathbf{w}^{(T_p+0)}_{\mathcal{I}(\mathbf{a}_j),k},\mathbf{s}_i\rangle = \mathcal{O}\left(\frac{\lambda\eta_f}{N_f\beta m}\right).
\end{align}

When $\eta_f = \Theta(\frac{m|\mathcal{R}|\log(d)}{\lambda^2})$, within a single low-rank FT iteration, we have

\begin{align}
	\frac{1}{m}\sum_{k=1}^{m}\langle\mathbf{w}^{(T_p+1)}_{\mathcal{I}(\mathbf{r}_i),k}, \mathbf{p}\rangle - \frac{1}{m}\sum_{k=1}^{m}\langle\mathbf{w}^{(T_p+0)}_{\mathcal{I}(\mathbf{r}_i),k},\mathbf{p}\rangle = -\Theta\left(\frac{\log(d)}{\lambda}\right),
\end{align}
and
\begin{align}
	\frac{1}{m}\sum_{k=1}^{m}\langle\mathbf{w}^{(T_p+1)}_{\mathcal{I}(\mathbf{a}_i),k}, \mathbf{p}\rangle - \frac{1}{m}\sum_{k=1}^{m}\langle\mathbf{w}^{(T_p+0)}_{\mathcal{I}(\mathbf{a}_i),k},\mathbf{p}\rangle = \mathcal{O}\left(\frac{\lambda\eta_f}{N_f\beta m}\right) = \mathcal{O}\left(\frac{|\mathcal{R}|\log(d)}{\lambda N_f\beta}\right).
\end{align}
Consequently, when  $\beta N_f> C_2 m|\mathcal{R}|^2$, after one iteration, we already have
\begin{align}
	\max_{k\in\mathcal{R}} x^\text{output}_{\mathcal{I}(\mathbf{r}_k)}(\mathbf{W}^{(T_p+1)},\mathbf{Z}^{(T_p+1)},[\mathbf{s}_j\:\:\mathbf{p}]) < x^\text{output}_{\mathcal{I}(\mathbf{a}_j)}(\mathbf{W}^{(T_p+1)},\mathbf{Z}^{(T_p+1)},[\mathbf{s}_j\:\:\mathbf{p}]),
\end{align}
\begin{align}
	\max_{k\in[N]\backslash\{j\}}x^\text{output}_{\mathcal{I}(\mathbf{a}_k)}(\mathbf{W}^{(T_p+1)},\mathbf{Z}^{(T_p+1)},[\mathbf{s}_j\:\:\mathbf{p}]) < x^\text{output}_{\mathcal{I}(\mathbf{a}_j)}(\mathbf{W}^{(T_p+1)},\mathbf{Z}^{(T_p+1)},[\mathbf{s}_j\:\:\mathbf{p}]),
\end{align}
and 
\begin{align}
	\max_{l\notin\{\mathcal{I}(\mathbf{a}_j)\}_{j\in[N]}\cup\{\mathcal{I}(\mathbf{r}_i)\}_{i\in \mathcal{R}}}x^\text{output}_{l}(\mathbf{W}^{(T_p+1)},\mathbf{Z}^{(T_p+1)},[\mathbf{s}_j\:\:\mathbf{p}]) < x^\text{output}_{\mathcal{I}(\mathbf{a}_j)}(\mathbf{W}^{(T_p+1)},\mathbf{Z}^{(T_p+1)},[\mathbf{s}_j\:\:\mathbf{p}]),
\end{align}

Within $T_f = \mathcal{O}(d^{-0.01}\frac{N_f\beta}{|\mathcal{R}|})$, we have
\begin{align}
    	\frac{1}{m}\sum_{k=1}^{m}\langle\mathbf{w}^{(T_p+T_f)}_{\mathcal{I}(\mathbf{a}_i),k}, \mathbf{p}\rangle - \frac{1}{m}\sum_{k=1}^{m}\langle\mathbf{w}^{(T_p+T_f)}_{\mathcal{I}(\mathbf{a}_i),k},\mathbf{p}\rangle = \mathcal{O}\left(\frac{|\mathcal{R}|\log(d)}{\lambda N_f\beta}T_f\right) = \mathcal{O}\left(\frac{d^{-0.01}\log(d)}{\lambda}\right).
\end{align}
Hence, by Lemmas \ref{lemma: mag_n2} and \ref{lemma: mag3},
within $T_f = \mathcal{O}(d^{-0.01}\frac{N_f\beta}{|\mathcal{R}|})$ the outputs of the transformer also satisfy 
\begin{align}
	\max_{k\in\mathcal{R}} x^\text{output}_{\mathcal{I}(\mathbf{r}_k)}(\mathbf{W}^{(T_p+t_f)},\mathbf{Z}^{(T_p+t_f)},[\mathbf{s}_j\:\:\mathbf{p}]) < x^\text{output}_{\mathcal{I}(\mathbf{a}_j)}(\mathbf{W}^{(T_p+t_f)},\mathbf{Z}^{(T_p+t_f)},[\mathbf{s}_j\:\:\mathbf{p}]),
\end{align}
\begin{align}
	\max_{k\in[N]\backslash\{j\}}x^\text{output}_{\mathcal{I}(\mathbf{a}_k)}(\mathbf{W}^{(T_p+t_f)},\mathbf{Z}^{(T_p+t_f)},[\mathbf{s}_j\:\:\mathbf{p}]) < x^\text{output}_{\mathcal{I}(\mathbf{a}_j)}(\mathbf{W}^{(T_p+t_f)},\mathbf{Z}^{(T_p+t_f)},[\mathbf{s}_j\:\:\mathbf{p}]),
\end{align}
and 
\begin{align}
	\max_{l\notin\{\mathcal{I}(\mathbf{a}_j)\}_{j\in[N]}\cup\{\mathcal{I}(\mathbf{r}_i)\}_{i\in \mathcal{R}}}x^\text{output}_{l}(\mathbf{W}^{(T_p+t_f)},\mathbf{Z}^{(T_p+t_f)},[\mathbf{s}_j\:\:\mathbf{p}]) < x^\text{output}_{\mathcal{I}(\mathbf{a}_j)}(\mathbf{W}^{(T_p+t_f)},\mathbf{Z}^{(T_p+t_f)},[\mathbf{s}_j\:\:\mathbf{p}]),
\end{align}
for all $t_f\in[1,T_f]$.

This finishes the proof and concludes that
\begin{align}
	\mathcal{L}_e(\mathbf{W}^{(T_p+T_f)},\mathbf{Z}^{(T_p+T_f)}) \le |\mathcal{R}|\exp\left(-\frac{N_f\beta K^2}{2|\mathcal{R}|^2}\right).
\end{align}
This completes the proof.

\subsection{Proof of the result for low-rank \ac{ft} in Statement 3 of Theorem \ref{theorem: main}}
When $\beta N_fK/|\mathcal{R}| \le C_3$ with $0<C_3<1$, there are more than $(1-C_3)|\mathcal{R}|$ relation elements, denoted as set $\mathcal{C}$, not been covered in the \ac{ft} training set.
For the elements not covered, $j\in\mathcal{C}$,  by Lemma \ref{lemma: sft grad 2}, with $\eta_f = \Theta(m|\mathcal{R}|\log(d)/\lambda^2)$ and $T_f = \mathcal{O}(d^{-0.01}N_f\beta/|\mathcal{R|})$, we have
\begin{align}
	\frac{1}{m}\sum_{k=1}^{m}\sigma(\langle\mathbf{w}^{(T_p+t_f)}_{\mathcal{I}(\mathbf{r}_j),k}, \mathbf{p}\rangle)\le& \frac{1}{m}\sum_{k=1}^{m}\sigma(\langle\mathbf{w}^{(T_p+0)}_{\mathcal{I}(\mathbf{r}_j),k}, \mathbf{p}\rangle) + \mathcal{O}\left(t_f\frac{\eta_f\lambda}{mN_f\beta d}\right)\\
    =& \tilde{\mathcal{O}}\left(\frac{\sigma_0}{\sqrt{m}}\right) + \mathcal{O}\left(\frac{\log(d)}{d\lambda}\right).
\end{align}

For all $i\in\mathcal{C}$ and $\mathcal{D}_i$ and any $t_f$, we have
\begin{align}
    \frac{1}{m}\sum_{l=1}^m\sigma(\langle\mathbf{w}^{(T_p+t_f)}_{\mathcal{I}(\mathbf{r}_i),l},[\mathbf{s}_j\;\;\mathbf{p}]\rangle) = \Theta(1),
\end{align}
resulting in 
\begin{align}
     \frac{1}{m}\sum_{l=1}^m\sigma(\langle\mathbf{w}^{(T_p+t_f)}_{\mathcal{I}(\mathbf{r}_i),l},[\mathbf{s}_j\;\;\mathbf{p}]\rangle) >  \frac{1}{m}\sum_{l=1}^m\sigma(\langle\mathbf{w}^{(T_p+t_f)}_{\mathcal{I}(\mathbf{a}_j),l},[\mathbf{s}_j\;\;\mathbf{p}]\rangle).
\end{align}

For the remaining $(1-\beta)N_f+N_r$ data $([\mathbf{s}_j\;\;\mathbf{p}],\mathcal{I}(\mathbf{a}_j))\in\mathcal{Q}\backslash \mathcal{Q}_s$, we have
\begin{align}
	\mathbb{P}[j\in\mathcal{D}_i, \forall i\in \mathcal{C}] \ge 1- C_3^{\tilde{K}(j)}.
\end{align}

Therefore, we have
\begin{align}
	\mathcal{L}_e(\mathbf{W}^{(T_p+T_f)},\mathbf{Z}^{(T_p+T_f)}) \ge 1-C_3.
\end{align}
This completes the proof.

\section{Claim of $\delta$}
In total, using union bound, with probability $1-\Theta(\delta')$, the high probability lemmas in the above proofs including Lemmas \ref{lemma: attn_init_order}, \ref{lemma: init}, \ref{lemma: cover1}, \ref{lemma:init}, \ref{lemma: aux1}, \ref{lemma: pt_stage2}, \ref{lemma: s mag} hold.
In the formal Statement of Theorem \ref{theorem: main}, we use $\delta = \Theta(\delta')$.

\section{Experimental settings}\label{app: experiemntal settings}
\subsection{Settings for Figures 2a and 2b}\label{app: 1l details}
We train a one-layer transformer based on the architecture presented in Section 2.
The architecture and training parameters are as follows.
\begin{itemize}
	\item Number of neurons: $m = 100$
	\item Scaling factor: $\lambda = 200$ 
	\item Optimizer: SGD with learning rates $\eta_1 = 0.1,\eta_2 = 0.05$ and $\eta_3 = 0.01$
\end{itemize}
To ensure the model fully converges, during pre-training, we implement $h_1=3$ for training stage 1, $h_2 = 5003$ for training stage 2, and a total of $10003$ iterations.
During fine-tuning, we implement a total of 2000 iterations to determine the optimal number of iterations for generalization. 

To avoid the dominance of the gradient from frequent facts over those of rare facts as discussed in Section \ref{sec: results}, 
we reweight the gradients from \textsf{freq} and \textsf{rare} by multiplying the gradients from \textsf{rare} with multiplicity $K$.

All experiments are run over 5 random seeds.

\subsection{Settings for Figures 3a and 3b}\label{app: gpt2 details}
We train a standard GPT-2 with 12 layers, 12 heads, and 768 hidden dimensions without implementing any positional encoding.
For pre-training, we employ full fine-tuning and the AdamW optimizer with a learning rate of $1\times 10^{-5}$ and a weight decay of $0.1$.

For fine-tuning, we employ the Adam optimizer with learning rate $5\times 10^{-6}$.
We implement a total of 2000 iterations and find the best number of iterations for generalization. 

To avoid the dominance of the gradient from frequent facts over those of rare facts, as discussed in Section \ref{sec: results}, we reweight the gradients from frequent facts and rare facts by multiplying the gradients from rare facts by multiplicity $K$.

All experiments are run over 5 random seeds.

\subsection{Settings for Figure 4}\label{app: qa details}
We train a standard GPT-2 with 12 layers, 12 heads, and 786 hidden dimensions.

We employ the AdamW optimizer for pre-training, using a learning rate of $2\times 10^{-4}$ and a weight decay of $0.1$.
We also employ a learning rate warmup with 50 steps and a cosine learning rate schedule.
We implement a batch size of 128 and a total of $150$ epochs.

For fine-tuning, we employ full fine-tuning with the AdamW optimizer, using a learning rate of $1\times 10^{-4}$ and a weight decay of 0.01.
We employ a learning rate warmup with 20 steps and a constant learning rate schedule.
We implement a batch size of 128 and a total of $50$ epochs.

\subsection{Settings for Table 2}\label{app: table details}
We employ full fine-tuning and the AdamW optimizer with a learning rate of $1\times 10^{-4}$ and weight decay of 0.01.
We employ a constant learning rate schedule.
We implement a batch size of 128 and a total of $50$ epochs.

\section{Pre-training and fine-tuning question answering examples constructed from PopQA 'director' relation}\label{app: director data}
We show some pre-training and fine-tuning examples of the constructed dataset from PopQA 'director' relation.
\begin{table}[h]
	 \small
	\centering
    \caption{Examples of constructed pre-training and fine-tuning data with PopQA under the setting of Figure 2e.}
	\renewcommand{\arraystretch}{1.2}
	\setlength{\tabcolsep}{2pt}
	\begin{tabular}{p{5cm}|p{4cm}|p{3cm}}
		   \toprule
		\textbf{Pre-training examples}&\textbf{Question prompts}&\textbf{Answers}\\
		\midrule
		Atlantic was helmed by Ewald André Dupont [EOS] & Question: Who was the director of Atlantic? Answer: & Ewald André Dupont \\\midrule
		Bhoopathi Ranga was directed by Geethapriya [EOS] & Question: Who was the director of Bhoopathi Ranga? & Geethapriya\\
		\midrule
		The Cell was crafted by Tarsem Singh [EOS] & Question: Who was the director of Atlantic? Answer: & Tarsem Singh\\
		\midrule
		The Good Earth was shaped by Sidney Franklin [EOS] & Question: Who was the director of The Good Earth? Answer: & Sidney Franklin\\
\bottomrule
	\end{tabular}
	
	\label{table: example}
\end{table}

\section{Fine-tuning question answering examples constructed from PopQA 'capital' relation}\label{app: capital data}
We show fine-tuning examples of the constructed dataset from PopQA 'capital' relation.
\begin{table}[h]
	\small
	\centering
    \caption{Examples of constructed fine-tuning data with PopQA under the setting of Table 3.}
	\renewcommand{\arraystretch}{1.2}
	\setlength{\tabcolsep}{2pt}
	\begin{tabular}{p{4cm}|p{3cm}}
		\toprule
		\textbf{Question prompts}&\textbf{Answers}\\
		\midrule
		Question: Where is the capital of Ilocos Region? Answer: & San Fernando\\\midrule
		Question: Where is the capital of Cayman Islands? Answer: & George Town\\
		\midrule
		Question: Where is the capital of Massachusetts? Answer: & Boston\\
		\midrule
		Question: Where is the capital of Cherokee County? Answer: & Canton \\
		\bottomrule
	\end{tabular}
	\label{table: example2}
\end{table}

\section{Existing Assets and Licenses}
\label{app:assets}

We use several existing research assets in our experiments. PopQA is used as a publicly available question-answering benchmark, and we cite its original paper and repository. GPT-2 and Llama-3.2-1B are used as pretrained language models for evaluation and are cited accordingly. We use these assets only for research evaluation and follow their respective licenses and terms of use. We do not redistribute model weights, datasets, or derived high-risk systems.

\end{document}